\documentclass[11pt]{article}
\usepackage[margin=1in]{geometry}
\usepackage{amsmath,amssymb,amsthm}
\usepackage{graphicx}
\usepackage{booktabs}
\usepackage{array}
\usepackage{authblk}
\usepackage{lineno}
\usepackage{natbib}
\usepackage{hyperref}

\graphicspath{{figures/}}

\newtheorem{theorem}{Theorem}

\newtheorem{corollary}[theorem]{Corollary}
\newtheorem{definition}[theorem]{Definition}

\newcommand{\starG}{\star_G}
\newcommand{\starM}{\star_M}
\newcommand{\calA}{\mathcal{A}}
\newcommand{\calB}{\mathcal{B}}
\newcommand{\calC}{\mathcal{C}}
\newcommand{\calI}{\mathcal{I}}
\newcommand{\calS}{\mathcal{S}}
\newcommand{\calT}{\mathcal{T}}
\newcommand{\calU}{\mathcal{U}}
\newcommand{\calV}{\mathcal{V}}
\newcommand{\calX}{\mathcal{X}}
\newcommand{\bbR}{\mathbb{R}}
\newcommand{\bbZ}{\mathbb{Z}}
\newcommand{\Rsq}{R\textsuperscript{2}}

\hypersetup{colorlinks=true,linkcolor=blue,citecolor=blue,urlcolor=blue}

\usepackage{algorithm}
\usepackage{algpseudocode}
\newtheorem{proposition}[theorem]{Proposition}
\newtheorem{remark}[theorem]{Remark}
\newcommand{\starSig}{\star_{\Sigma}}
\newcommand{\calZ}{\mathcal{Z}}
\newcommand{\bbC}{\mathbb{C}}
\newcommand{\bbH}{\mathbb{H}}
\newcommand{\bbK}{\mathbb{K}}

\begin{document}


\title{\textbf{Exact Symmetry as Algebra: A Machine-Verified Tensor Calculus that Enforces Physical Selection Rules}}

\author[4]{Paulina Hoyos}
\author[1]{Shashanka Ubaru}
\author[5]{Dongsung Huh}
\author[1]{Vasileios Kalantzis}
\author[1]{{Kenneth L. Clarkson}}
\author[3]{Misha Kilmer}
\author[2]{Haim Avron}
\author[1]{Lior Horesh}

\affil[1]{IBM Research}
\affil[2]{Tel-Aviv University}
\affil[3]{Tufts University}
\affil[4]{UT Austin}
\affil[5]{Independent}

\date{}
\maketitle

\begin{abstract}
Symmetry is central to the physical sciences, yet machine learning usually
captures it only approximately, leaving a residual per-step equivariance
error $\varepsilon$ that compounds with depth $M$ as $M\varepsilon$, whereas
exact equivariance holds at unbounded
depth; we demonstrate this divergence at fourteen orders of magnitude. We
show that a symmetry can be made exact by construction, as the multiplication
rule of a tensor algebra. In the resulting $\starG$ algebra, defined by any
finite group $G$, the group-Fourier transform block-diagonalizes every tensor
into irreducible-representation blocks, making equivariance intrinsic;
requiring equivariance conversely \emph{forces} this suitably normalized
transform, so the algebra is determined by $G$ rather than chosen. The
standard matrix toolbox, including a Frobenius-optimal low-rank
factorization, transfers blockwise, machine-checked in Lean~4 under an
explicit axiom budget, and extends unchanged to band-limited compact groups
and, under periodic boundary conditions, to all 230 crystallographic space
groups and the compact
little-group fibers of Euclidean and Poincar\'e symmetry. This exactness
is an applied capability: on inorganic-crystal elastic tensors the algebra
enforces point-group selection rules exactly on the output of \emph{any}
predictor, driving a trained graph network's forbidden-channel leakage from
$10^{-2}$ to machine zero, eliminating mechanically unstable predictions, and
recovering viable materials that an unconstrained screen discards;
on molecular data, with no quantum-mechanical input, it exposes octahedral
selection-rule signatures consistent with the Wigner--Eckart theorem. Matched
networks lead on pooled molecular accuracy, which we report plainly: the
contribution is a complementary algebraic calculus, structural and
diagnostic, exact at any depth.
\end{abstract}

\section{Introduction}

Symmetry enters computation in two ways that are usually treated as one:
exactly, as structure the computation cannot violate, or approximately, as
structure learned from data, imposed by a penalty, or left behind as a
residual error. This paper's starting point is that the difference is one of
kind, not degree. Symmetric computations are rarely applied once, and
iteration compounds any approximate symmetry: a single step whose
equivariance error is $\varepsilon$ can accumulate $O(1)$ symmetry violation
by depth $M \sim 1/\varepsilon$, whereas an exactly symmetric computation is
symmetric at every depth (Section~\ref{sec:compounding}). Our thesis is that
exactness is available by construction: a symmetry can be made the
multiplication rule of a tensor algebra, and every guarantee in this paper,
from machine-checked optimal factorizations to the exact enforcement of
physical selection rules on any predictor's output, flows from that single
move.

The obstacle this removes is familiar. Much of the data encountered in science and engineering is inherently
multidimensional: molecular configurations encode three-dimensional atomic
positions, quantum states exist in exponentially large Hilbert spaces, and sensor
arrays sample signals across spatial and temporal domains. Traditional machine
learning methods typically vectorize such data, collapsing its natural tensor
structure into flat feature vectors~\citep{kolda2009tensor,sidiropoulos2017tensor}.
This is akin to unfolding an origami crane into a flat sheet: the operation is
technically lossless, but the geometry that gives the object its meaning is
destroyed. All subsequent processing must then recover, implicitly and at great
computational cost, the structure that was discarded at the outset.

Symmetry sharpens this problem. A molecule exists in three-dimensional space:
it can be rotated without changing its properties (rotational symmetry), and its
identical atoms can be indexed in any order without changing the molecule itself
(permutation symmetry). These symmetries coexist and
interact~\citep{noether1918,bronstein2021geometric}, yet vectorization treats them
as incidental features of the data rather than as fundamental structural
constraints.

The dominant paradigm for incorporating symmetry is through Equivariant Neural
Networks
(ENNs)~\citep{cohen2016group,thomas2018tensor,fuchs2020se3,batzner2022e3}, which
have achieved remarkable success in molecular property
prediction~\citep{schutt2017schnet} and protein
structure~\citep{jumper2021alphafold}. ENNs handle symmetry through architecture:
to respect a rotation, one engineers rotation-equivariant layers; to respect a
permutation, one engineers permutation-equivariant layers. But when multiple
symmetries coexist, as they do in virtually all physical systems, the
architectural approach faces a combinatorial wall. The blueprint must be
redesigned from scratch for each new combination of symmetries, with no guarantee
that the resulting representation is optimal in any rigorous sense. The physics
is hard-coded into the network topology, and changing the physics means rebuilding
the network. And wherever the symmetry is achieved only approximately, the
compounding cost described above is incurred on top of the architectural one.

In this article, we propose a different philosophy. Instead of constraining the
architecture to fit the symmetry, we change the mathematics to fit the geometry
of the data. Building on the $\starM$ algebra of Kilmer and
collaborators~\citep{kilmer2021tensor,kernfeld2015tensor}, we construct a tensor algebra,
$\starG$, where any finite group $G$ defines the multiplication rule. The
resulting algebra inherits equivariance as an intrinsic property of
multiplication, not as an architectural constraint. Composing multiple symmetries
requires only specifying the direct product $G_1 \times G_2 \times \cdots \times
G_d$; no redesign is needed. Because the group-Fourier transform
block-diagonalizes every $\starG$ tensor, the algebra is fully
\emph{matrix-mimetic}: the standard toolbox of matrix linear algebra, the SVD,
QR factorization, symmetric eigendecomposition, least-squares, and functional
calculus, transfers to $\starG$ tensors blockwise, with every operation exactly
equivariant by construction. In particular the $\starG$-SVD carries a provable
Eckart--Young optimality (Theorem~\ref{thm:eckart_young}), and, by the
Peter--Weyl theorem~\citep{serre1977linear,peterweil1927}, the same
decomposition into irreducible-representation channels reveals the symmetry
content of physical observables (Section~\ref{sec:wigner_eckart}). These
operations are machine-checked in Lean~4, conditional on standard finite-group
Fourier and matrix-approximation facts declared as an explicit axiom budget rather
than reproved from foundations (SI~Section~\ref{sec:lean}). The Wigner--Eckart
theorem (1931) states that matrix elements of tensor operators between angular
momentum eigenstates factorize into a geometric part (Clebsch--Gordan
coefficient) and a reduced matrix element independent of magnetic quantum numbers,
implying selection rules: an operator of rank $l$ couples only states whose
angular momenta differ by at most $l$. We show empirically that signatures
consistent with these rules are recoverable from data alone via $\starG$
decomposition.

\emph{Inherited foundations.} The transform-domain reading of low-rank
optimality is inherited: for a fixed invertible transform $M$, the $\starM$/t-SVD
Eckart--Young theorem is due to Kilmer and
collaborators~\citep{kilmer2021tensor,kernfeld2015tensor}, and its tubal
(commutative) specialization has been studied by Mor and
Avron~\citep{mor2025quasitubaltensoralgebraseparable,mor2026sufficientnecessaryconditionseckartyoung}.
Three architecture-side ingredients are likewise inherited: the equivalence between
equivariance to a compact group and group convolution (established for general
groups by Kondor and Trivedi~\citep{kondor2018generalization} and by Cohen, Geiger,
and Weiler~\citep{cohen2019general}); computing directly in the non-abelian
group-Fourier domain, where activations are per-irrep matrices (Clebsch--Gordan
networks on $\mathrm{SO}(3)$~\citep{kondor2018clebsch}); and composing direct-product
symmetries without architectural redesign (steerable-CNN
programs~\citep{cesa2022program}). Each of these builds \emph{layers}.

\emph{What this paper adds.} We build the \emph{algebra} underneath those
layers: an associative
$\starG$ product under which those same transforms become a matrix-mimetic calculus
with its own factorizations, and composition acts on the algebra itself. Moreover
the transform is not a modeling choice: requiring the product to be equivariant
\emph{forces} it to be the group-Fourier transform $F_G$, unique up to a per-irrep
basis change and a relabeling of equal-dimension blocks
(SI~Theorem~\ref{thm:necessity}); the group determines the algebra, not the reverse.
Concretely (Figure~\ref{fig:framework}), what is new is (a)~the exactness
dividing line made quantitative: approximate equivariance
can compound to $O(1)$ symmetry violation under iteration, demonstrated at
every single-step error real approximation strategies achieve, while exact
equivariance holds at unbounded depth (Section~\ref{sec:compounding});
(b)~the specialization to the \emph{group}-Fourier
transform, which makes equivariance intrinsic and gives each block a physical
(angular-momentum) meaning; (c)~the extension to \emph{non-abelian} groups within
this tensor-algebra setting, where transform-domain blocks are matrices rather than
scalar tubes; (d)~the full
matrix-mimetic toolbox (SVD with Eckart--Young optimality, QR, symmetric
eigendecomposition, least-squares, functional calculus), with the algebra laws and
the SVD, QR, eigendecomposition, and least-squares reductions formalized in Lean~4;
(e)~product-group composability at the level of the algebra, by
$F_{G_1}\otimes F_{G_2}$; and (f)~the
per-irrep symmetry diagnostics on molecular and crystalline data, including an
exact selection-rule enforcement layer for any predictor's tensor output.

\begin{figure}[!t]
\centering
\includegraphics[width=\textwidth]{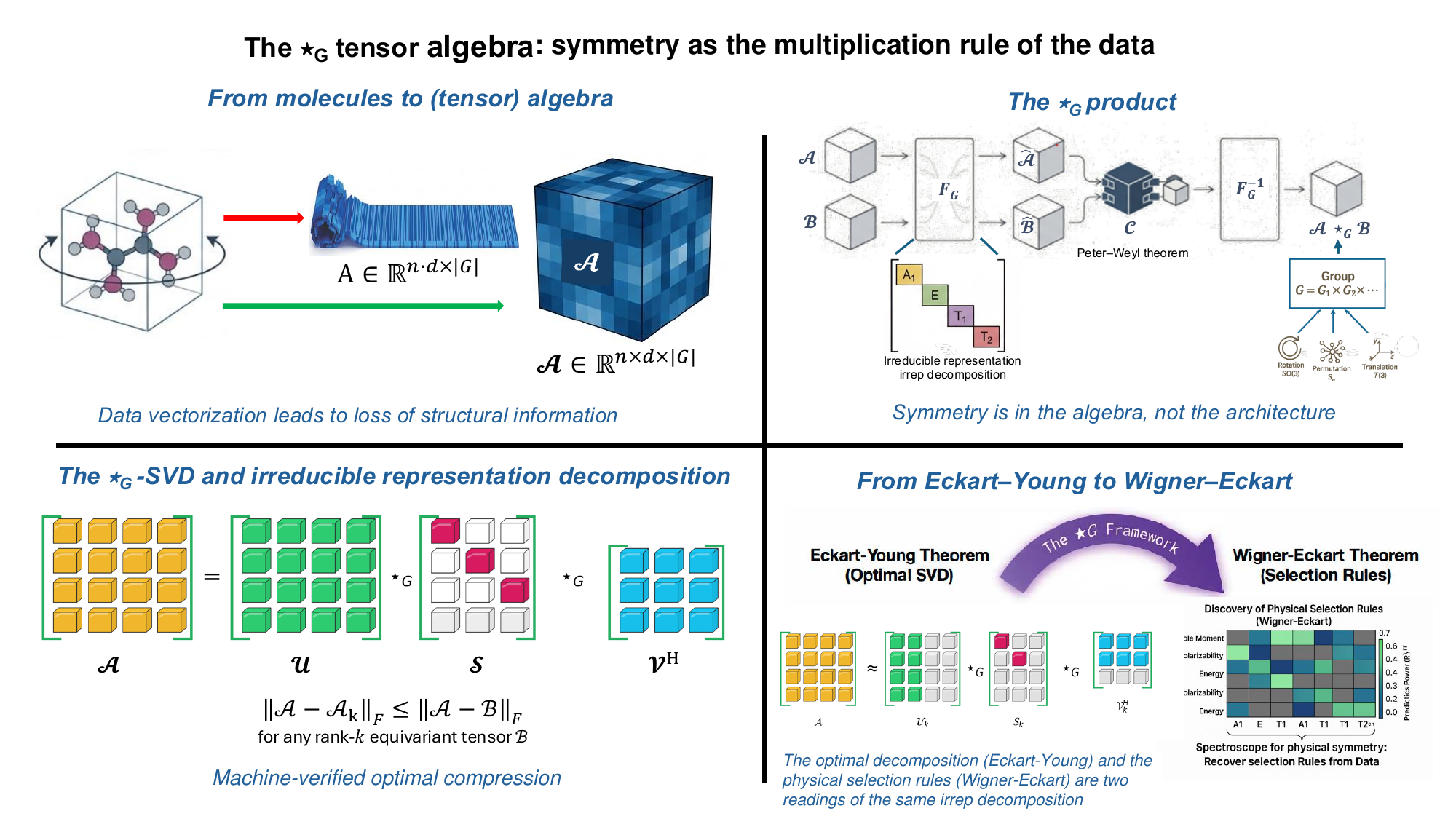}
\caption{\textbf{The $\starG$ tensor algebra: symmetry as the multiplication
rule of the data.}
(\textbf{Top left, From molecules to algebra}) Molecular data measured under all
elements of a symmetry group $G$ form a structured tensor
$\calA \in \bbR^{n \times d \times |G|}$, preserving geometric information that
is destroyed by vectorization into $A \in \bbR^{n \cdot d \times |G|}$.
(\textbf{Top right, The $\starG$ product}) Two tensors are multiplied via group
convolution along the tube dimension, computed efficiently in the Fourier domain
via the Peter--Weyl theorem: $F_G$ transforms each tensor to its block-diagonal
spectral form, standard matrix products are applied per irreducible representation
block, and $F_G^{-1}$ returns the result. The group $G$ can be any finite group
(a single symmetry or a product $G_1 \times G_2 \times \cdots$) with no
architectural changes required.
(\textbf{Bottom left, The $\starG$-SVD}) Every $\starG$-tensor admits a
factorization $\calA = \calU \starG \calS \starG \calV^H$. The rank-$k$
truncation $\calA_k$ is provably optimal:
$\|\calA - \calA_k\|_F \leq \|\calA - \calB\|_F$ for any rank-$k$ equivariant
tensor $\calB$ (Eckart--Young theorem for $\starG$, Theorem~\ref{thm:eckart_young}).
(\textbf{Bottom right, From Eckart--Young to Wigner--Eckart}) The optimal
truncation and the physical selection rules are two readings of the same
irreducible-representation decomposition, so the algebra doubles as a
spectroscope for physical symmetry. Decomposing predictive power by irreducible representation
(irrep) over the octahedral group reads off octahedral selection-rule signatures
consistent
with the Wigner--Eckart theorem directly from molecular geometry data: scalar properties are dominated by the
$l\!=\!0$ (A$_1$) channel, the dipole magnitude draws on the $l\!=\!1$
(T$_1$) channel, and polarizability is uniquely insensitive to $l\!=\!1$.
Every operation in this pipeline is exactly equivariant, the property that
Sections~\ref{sec:compounding} and~\ref{sec:crystals} convert into
depth-stability and enforcement guarantees.}
\label{fig:framework}
\end{figure}

\section{Results}

\subsection{The \texorpdfstring{$\starG$}{starG} Algebra}
\label{sec:theory}

\subsubsection*{Theoretical Foundation}

Let $G$ be a finite group of order $n$ with elements $\{g_1, g_2, \ldots, g_n\}$. We define the \emph{convolution tensor} $\calT \in \bbR^{n \times n \times n}$ by
\begin{equation}
\calT(a, b, c) = \begin{cases} 1 & \text{if } ab = c \\ 0 & \text{otherwise}
\end{cases}
\end{equation}
for all $a, b, c \in G$. This tensor encodes the complete multiplication table of $G$: its frontal slices are permutation matrices, and it satisfies an associativity identity inherited directly from group associativity. By the Peter--Weyl theorem~\citep{serre1977linear,peterweil1927}, $\calT$ admits a spectral decomposition:
\begin{equation}\label{eq:spectral}
\calT(a, b, c) = \sum_{i,j,k} \calC(i, j, k) \, F_G(a, i) \, F_G(b, j) \, F_G^{-1}(c, k),
\end{equation}
where $F_G$ is a generalized Group Fourier transform matrix assembled from the
irreducible unitary representations (irreps) of~$G$, and $\calC$ is a sparse core
tensor encoding the block-diagonal matrix multiplication structure of the irreps
of~$G$. For abelian groups, $F_G$ is a generalized Fourier matrix (for cyclic groups, it reduces to the standard DFT matrix) and $\calC$ is
diagonal; for non-abelian groups, $F_G$ is an invertible matrix. The precise
definition of $F_G$ is provided in the Supplementary Information (SI Section~2).
Crucially, equation (\ref{eq:spectral}) means that group convolution in the
original domain corresponds to \emph{block-diagonal matrix multiplication} in the
Fourier domain: one independent matrix product per irrep block, enabling highly
efficient computation.

We view order-$(2+d)$ tensors $\calA \in \bbR^{\ell \times m \times n_1 \times
\cdots \times n_d}$ as $\ell \times m$ matrices whose entries (which are
$d$-order tensors) lie in the convolutional ring $\mathbb{K}_G$ for
$G = G_1 \times \cdots \times G_d$ and $n_i = |G_i|$ (SI Definition~\ref{def:conv_ring}).
The \emph{$\starG$ product}  of $\cal{A}$ with $\calB \in \bbR^{m \times p \times n_1 \times
\cdots \times n_d}$
is defined via group convolution along the group dimensions:
\begin{equation}
(\calA \starG \calB)_{ij}(c_1, \ldots, c_d) = \sum_{k} \sum_{(a_1, \ldots,
a_d) \in G} \calA_{ik}(a_1, \ldots, a_d) \, \calB_{kj}(a_1^{-1}c_1, \ldots,
a_d^{-1}c_d).
\end{equation}
This product defines a tensor algebra, specializing the $\starM$ construction of
Kernfeld, Kilmer, and Aeron~\citep{kernfeld2015tensor} to the group-Fourier
transform, that generalizes classical matrix algebra while embedding the symmetry
group $G$ directly into the multiplicative structure. The resulting algebraic system
supports a full suite of matrix-mimetic operations (inverses, transposes, norms,
and decompositions), all inheriting equivariance by construction. Equivariance
is thus a property of the algebra, not an imposed constraint. The $\starG$
product can be computed efficiently by (i)~applying $F_G$ to each tensor along
its group dimension, (ii)~performing standard matrix products at each of the
$|\hat{G}|$ Fourier irreps independently in parallel, and
(iii)~applying $F_G^{-1}$ to recover the result. For Abelian groups $G$, the total cost is
$O(n \ell m p + n \log n)$ including the Fourier transforms, matching the
complexity of a single matrix product up to logarithmic factors.

\paragraph{One group or many: the flattening convention.} The multi-group and
single-group views are interchangeable, and we use both. The bijection
$a \leftrightarrow (a_1, \ldots, a_d)$ identifies $G = G_1 \times \cdots
\times G_d$ with a single finite group of order $n = n_1 \cdots n_d$, whose
Fourier matrix is $F_G = F_{G_1} \otimes \cdots \otimes F_{G_d}$ and whose
irreducible representations are the tensor products of the factors' irreps
(Theorem~\ref{thm:product_ring} below). Under this identification an
order-$(2+d)$ tensor in $\bbR^{\ell \times m \times n_1 \times \cdots \times
n_d}$ \emph{is} an order-3 tensor in $\bbR^{\ell \times m \times n}$, so every
definition, factorization, and algorithm in this paper, when stated in the
order-3 single-group form for readability, applies verbatim to product groups
and higher-order tensors. We also retain the t-product term \emph{tube} for
the group-indexed entry $\calA_{ij} = \calA(i,j,:) \in \mathbb{K}_G$
throughout; for $d > 1$ a tube is a $d$-dimensional array rather than a
vector.

The \emph{$\starG$-Hermitian transpose} $\calA^H \in \bbR^{m \times \ell \times
n_1 \times \cdots \times n_d}$ is defined entry-wise by
\begin{equation}\label{eq:transpose}
(\calA^H)_{ij}(g_1,\ldots,g_d) = \overline{\calA_{ji}(g_1^{-1},\ldots,g_d^{-1})},
\end{equation}
where the overline denotes complex conjugation (trivial for real-valued tensors).
Equivalently, in the Fourier domain,
$\widehat{\calA^H}(:,:,\rho) = \hat{\calA}(:,:,\rho)^H$ for every irrep $\rho$
(see SI Section~3 for the precise definition of $\hat{\calA}(:,:,\rho)$),
so the $\starG$-transpose maps to the ordinary matrix Hermitian transpose at each
irrep block. This definition makes $\starG$-unitarity and the SVD factor
conditions below fully analogous to their matrix counterparts.

\subsubsection*{The $\starG$-SVD and Optimality}

Every tensor $\calA$ in the $\starG$ algebra admits a singular value
decomposition $\calA = \calU \starG \calS \starG \calV^H$, where $\calU$ and
$\calV$ are $\starG$-unitary (satisfying $\calU^H \starG \calU = \calI$) and
$\calS$ is f-diagonal (its frontal slices are diagonal matrices) with
non-negative real entries, the \emph{singular tubes} $\mathbf{s}_i$ (elements
of $\mathbb{K}_G$; $d$-dimensional arrays when $G$ is a product of $d$ groups,
per the flattening convention above). This
decomposition is computed exactly by applying the group Fourier transform along
the group dimension, performing standard matrix SVDs independently at each
Fourier irrep, and applying the inverse group Fourier transform, a procedure that
is both exact and computationally efficient.

Two rank notions are in play, and we state the optimality for both. Write the
group-Fourier blocks of $\calA$ as $\hat\calA(\rho)$ with singular values
$\sigma_1(\rho)\ge\sigma_2(\rho)\ge\cdots$ for $\rho\in\hat G$, and Plancherel weight
$d_\rho/|G|$.

\begin{theorem}[Eckart--Young for $\starG$]\label{thm:eckart_young}
\emph{(i) Per-block form (the machine-verified statement).} For any prescribed
per-block rank profile $(k_\rho)_{\rho\in\hat G}$, the blockwise SVD truncation that
keeps the top $k_\rho$ singular values of each $\hat\calA(\rho)$ minimizes
$\|\calA-\calB\|_F^2$ over all tensors $\calB$ whose block at $\rho$ has matrix rank
at most $k_\rho$, with error $\sum_{\rho}\frac{d_\rho}{|G|}\sum_{j>k_\rho}\sigma_j(\rho)^2$.
\emph{(ii) Tubal form.} Ordering the singular tubes $\mathbf{s}_i$ by Frobenius norm,
the truncation $\calA_k$ keeping the leading $k$ tubes minimizes $\|\calA-\calB\|_F^2$
over all $\calB$ of $\starG$-rank at most $k$, with
$\|\calA-\calA_k\|_F^2=\sum_{i>k}\|\mathbf{s}_i\|_F^2$; this is the instance of~(i) at
the coupled profile $k_\rho=k\,d_\rho$.
\end{theorem}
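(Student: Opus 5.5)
The plan is to reduce both statements to the classical matrix Eckart--Young theorem, applied block by block in the group-Fourier domain. The one structural fact that makes this possible is a Plancherel identity for $F_G$.

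\textbf{Step 1: Plancherel.} Write $\hat\calA(\rho)$ for the (block-structured) matrix obtained by applying $F_G$ along the tube dimension and reading off the $\rho$-block. With the normalization of SI Section~2, I would first establish
\[
\|\calA\|_F^2=\sum_{\rho\in\hat G}\frac{d_\rho}{|G|}\,\|\hat\calA(\rho)\|_F^2 .
\]
This follows from Schur orthogonality of the unitary irreps applied entrywise to each tube $\calA_{ij}\in\bbK_G$, then summed over $(i,j)$.

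The delicate point here is the multiplicity convention. In the block-diagonal transform domain each irrep block appears $d_\rho$ times, and the statement's weight $d_\rho/|G|$ has to match exactly how $\hat\calA(\rho)$ is defined. I expect this bookkeeping to be the main obstacle. The safest route is to define $\hat\calA(\rho)$ as the $\ell d_\rho\times m d_\rho$ matrix $\sum_g \calA(:,:,g)\otimes\rho(g)$ and verify the identity directly from $\sum_g \rho(g)_{ab}\overline{\rho(g)_{a'b'}}=\frac{|G|}{d_\rho}\delta_{aa'}\delta_{bb'}$. Since $F_G$ is invertible, every choice of blocks $(\hat\calB(\rho))_\rho$ is then realized by a unique tensor $\calB$, and $\calA-\calB$ corresponds blockwise to $\hat\calA(\rho)-\hat\calB(\rho)$.

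\textbf{Step 2: Part (i).} By Step 1,
\[
\|\calA-\calB\|_F^2=\sum_\rho \frac{d_\rho}{|G|}\|\hat\calA(\rho)-\hat\calB(\rho)\|_F^2 .
\]
The constraint set is a product of independent per-block rank constraints, and the weights are positive, so the minimization decouples. Each term is minimized by the matrix Eckart--Young theorem (taken from the axiom budget), that is, by the rank-$k_\rho$ SVD truncation of $\hat\calA(\rho)$. That truncation attains $\sum_{j>k_\rho}\sigma_j(\rho)^2$, and summing over $\rho$ gives the stated error.

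One remaining check is that the blockwise truncation is a legitimate tensor. This needs the preimage under $F_G$ to be real when $\calA$ is real. For complex irreps it holds because conjugate irreps give conjugate blocks, which are truncated consistently, provided ties are broken consistently. If that consistency cannot be guaranteed, I would instead allow complex $\calB$.

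\textbf{Step 3: Part (ii).} The $\starG$-SVD $\calA=\calU\starG\calS\starG\calV^H$ is built blockwise. The $i$-th singular tube $\mathbf s_i$ has transform carrying $\sigma_i(\rho)$ on each of the $d_\rho$ diagonal copies in block $\rho$. Applying Step 1 to the f-diagonal $\calS$ gives $\|\mathbf s_i\|_F^2=\sum_\rho \frac{d_\rho}{|G|}\sigma_i(\rho)^2$ (with the convention $\sigma_i(\rho)=0$ beyond the block rank). These tubes are automatically ordered by norm, since each $\sigma_i(\rho)$ is nonincreasing in $i$.

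$\starG$-rank at most $k$ means $\calB=\calX\starG\calY^H$ with inner dimension $k$. In the transform domain this forces $\operatorname{rank}\hat\calB(\rho)\le k\,d_\rho$, counted on the $\ell d_\rho\times m d_\rho$ block. Conversely, keeping the top $k$ tubes yields exactly the blockwise truncation at the coupled profile $k_\rho=k\,d_\rho$, where each singular value of the small $\ell\times m$ block is repeated $d_\rho$ times. Part (i) at this profile then gives optimality over a set containing all $\starG$-rank-$\le k$ tensors, and $\calA_k$ itself lies in that set.

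Finally, the error from part (i) is
\[
\sum_\rho\frac{d_\rho}{|G|}\sum_{i>k}\sigma_i(\rho)^2=\sum_{i>k}\|\mathbf s_i\|_F^2
\]
after exchanging the two sums, which is the claimed formula.
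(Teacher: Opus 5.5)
Your overall route is the paper's: Plancherel, then blockwise matrix Eckart--Young, then specialization to $k_\rho=k\,d_\rho$ and regrouping by tube. Steps 1--2 are sound, with one small addition: for Plancherel you also need completeness, $\sum_\rho d_\rho^2=|G|$, since Schur orthogonality alone only gives Bessel's inequality.

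The problem is the bookkeeping in Step 3, which is exactly where you predicted the difficulty. You claim that the $i$-th tube has Fourier block $\sigma_i(\rho)I_{d_\rho}$, and that each singular value of a ``small $\ell\times m$ block'' is repeated $d_\rho$ times. There is no such $\ell\times m$ block. By your own definition, $\hat\calA(\rho)=\sum_g\calA(:,:,g)\otimes\rho(g)$ is an $\ell d_\rho\times m d_\rho$ matrix, and its singular values are in general not $d_\rho$-fold degenerate. A concrete case: take $\ell=m=1$, $G=S_3$ and the two-dimensional irrep. Then $\hat\calA(\rho)=\mathrm{diag}(1,0)$ is attainable by a real tube, and the single singular tube has Fourier block $\mathrm{diag}(1,0)$, not a multiple of $I_2$.

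The $d_\rho$-fold repetition you have in mind belongs to a different object, the convolution operator $\calX\mapsto\calA\starG\calX$. In the Fourier basis that operator is block diagonal with each whole block $\hat\calA(\rho)$ appearing $d_\rho$ times; individual singular values are not repeated. Two things go wrong as a result:
\begin{itemize}
\item Your formula $\|\mathbf s_i\|_F^2=\sum_\rho\frac{d_\rho}{|G|}\sigma_i(\rho)^2$ is incorrect. It is not even consistent with your own picture, since a block $\sigma_i(\rho)I_{d_\rho}$ would carry weight $d_\rho^2/|G|$.
\item Your final error sum runs over $i>k$, whereas part (i) at the profile $k_\rho=k\,d_\rho$ gives $\sum_{j>k d_\rho}\sigma_j(\rho)^2$.
\end{itemize}

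The fix is the window assignment the paper uses. $\Sigma_\rho$ is diagonal, and f-diagonality means the $(i,i)$ sub-block of size $d_\rho\times d_\rho$ of $\hat\calS(\rho)$ is $\hat{\mathbf s}_i(\rho)$. So tube $i$ carries the $d_\rho$ consecutive, generally distinct, values $\sigma_{(i-1)d_\rho+1}(\rho),\ldots,\sigma_{i d_\rho}(\rho)$, and
\[
\|\mathbf s_i\|_F^2=\sum_{\rho}\frac{d_\rho}{|G|}\sum_{t=1}^{d_\rho}\sigma_{(i-1)d_\rho+t}(\rho)^2 .
\]
Three consequences follow directly:
\begin{itemize}
\item These norms are nonincreasing in $i$, because the windows are consecutive segments of nonincreasing sequences.
\item Keeping the first $k$ tubes keeps exactly $\sigma_1(\rho),\ldots,\sigma_{k d_\rho}(\rho)$ in every block, which is the coupled-profile truncation.
\item The discarded indices $j>k d_\rho$ regroup exactly into the tubes $i>k$, giving $\sum_{i>k}\|\mathbf s_i\|_F^2$.
\end{itemize}
With this correction, the rest of your Step 3 goes through and matches the paper's Steps 2--3: the bound $\operatorname{rank}\hat\calB(\rho)\le k\,d_\rho$ for $\starG$-rank at most $k$, the containment of competitor classes, and the appeal to part (i).
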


\noindent The $\starG$-rank of $\calA$ is the number of non-zero singular tubes in
the $\starG$-SVD $\calA=\calU\starG\calS\starG\calV^H$. The two competitor classes,
and hence the two rank notions, coincide when every irrep is one-dimensional (abelian
$G$) and differ otherwise. Full proofs of both forms are in the Supplementary
Information (SI~Section~5), and the per-block form~(i), together with the underlying
matrix Eckart--Young theorem, is machine-checked in Lean~4 (see Code Availability).

This result is a direct analogue of the classical matrix Eckart--Young
theorem~\citep{eckart1936approximation}: for the rank profile specified in the
group-Fourier domain, the blockwise $\starG$ truncation is Frobenius-optimal
among all tensors obeying the same block-rank constraints, and the tubal
statement is the coupled-profile specialization. This instantiates the Eckart--Young optimality of the
transform-domain $\starM$/t-SVD~\citep{kilmer2021tensor,kilmer2011factorization}
at the group-Fourier transform $M = F_G$. This optimality is not special to $F_G$;
it holds for a broad transform family~\citep{mor2025eckart}, while $F_G$ itself is
fixed by equivariance rather than chosen (SI~Section~\ref{sec:necessity}). Two
concurrent results frame that necessity: within the commutative tubal family,
the $\starM$ axioms force the product's \emph{form} without determining the
transform~\citep{avron2025demystifying}, and group-compatibility admits an
if-and-only-if characterization in the abelian scalar case together with a
proof that \emph{no} tubal $\starM$ serves a non-abelian
group~\citep{dunbar2025tensor}; the $\starG$ necessity theorem is the
non-abelian, matrix-block resolution of exactly that impossibility. What is
specific here is the
group-algebraic reading of that transform and its three consequences: the
extension from the scalar-tube (abelian) case to non-abelian groups, where the
irrep blocks have dimension greater than one and the per-block operation is a
matrix rather than a scalar product; the further extension from finite groups to
band-limited compact groups (Theorem~\ref{thm:bandlimited}); and the result that,
on the regular and band-limited representation spaces treated here, every
group-equivariant linear map is itself a $\starG$ operation
(Corollary~\ref{cor:unification}), of which the rank-$k$ $\starG$-SVD truncation is the Frobenius-optimal member at each rank. Among symmetry-preserving approximation schemes, each under its native rank notion, this is the strongest guarantee available: Tucker/HOSVD
gives only quasi-optimal bounds with a $\sqrt{d}$ factor~\citep{desilva2008tensor},
CP decomposition is NP-hard to compute optimally, and tensor-train has no global
optimality guarantee. The error has a closed-form expression in terms of the
discarded singular tube norms, enabling principled rank selection with full
control over approximation quality. In practice, the leading singular tubes give
the smallest Frobenius reconstruction error attainable at the same
$\starG$-rank, an error available in closed form.

\paragraph{Relationship to the tubal Eckart--Young literature.} The transform-domain
matrix-mimetic program originates with Kilmer and
collaborators~\citep{kilmer2021tensor,kilmer2011factorization}, and the precise
conditions under which a tube product admits an Eckart--Young theorem, with an
extension to separable Hilbert-space tubes, have been developed for \emph{tubal}
tensors by Mor and
Avron~\citep{mor2025quasitubaltensoralgebraseparable,mor2026sufficientnecessaryconditionseckartyoung},
whose transform-domain fibers form a \emph{commutative} algebra; that
characterization is established specifically for the commutative case. The present
group-algebraic construction is broader on four axes. (i)~The transform is the
group-Fourier transform of a possibly \emph{non-abelian} group, so a
transform-domain component is a $d_\rho\!\times\!d_\rho$ \emph{matrix} block rather
than a scalar tube; the fiber algebra is non-commutative and lies outside the tubal
(commutative) setting, and the two coincide only when every irreducible
representation is one-dimensional (the abelian case).
(ii)~We establish the full matrix-mimetic toolbox (QR, symmetric
eigendecomposition, least-squares, functional calculus), not the SVD alone.
(iii)~Reading the transform as a group makes equivariance an intrinsic property and
the irrep blocks physically meaningful, which is what produces the Wigner--Eckart
diagnostics (Section~\ref{sec:wigner_eckart}) and the equivariant-network
unification (Corollary~\ref{cor:unification}). (iv)~Several symmetries compose by
$F_{G_1}\!\otimes\!F_{G_2}$ with no redesign (Theorem~\ref{thm:product_ring}). The
Eckart--Young optimality is one shared waypoint; the group-algebraic construction,
its non-abelian reach, and its physical consequences are what is specific here.
The $\starM$ product of Kilmer and collaborators (an arbitrary invertible
transform) and $\starG$ (the group-Fourier transform) are, in this sense,
distinct generalizations of the t-product that agree on the abelian case,
with neither containing the other.
A parallel development in statistical representation learning, arrived at
independently for equivariant conditional-expectation operators, block-diagonalizes
by isotypic component and applies the classical Eckart--Young theorem per block to
reach the same equivariant low-rank optimality (ICML
2026)~\citep{ordonez2025equivariant}; it
targets operator learning with non-asymptotic guarantees rather than a tensor
algebra, and develops neither a $\starG$ product with its $\starG$-SVD algorithm
nor the matrix-mimetic toolbox and formalization given here. Any priority we
claim for Eckart--Young optimality is therefore scoped: first for a
symmetry-encoding \emph{tensor algebra}, and first machine-checked, not first
per-isotypic-block.

\subsubsection*{Composing Multiple Symmetries}

\begin{theorem}[Product Groups]\label{thm:product_ring}
For $G = G_1 \times \cdots \times G_d$, the convolution tensor factorizes as
$\calT_G = \calT_{G_1} \otimes \cdots \otimes \calT_{G_d}$, and the generalized
Fourier matrix is $F_G = F_{G_1} \otimes \cdots \otimes F_{G_d}$.
\end{theorem}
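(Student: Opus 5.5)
The plan has two parts: a direct index computation for the convolution tensor, and the classification of irreps of a direct product for the Fourier matrix. Throughout I use the flattening convention, identifying $a\in G$ with $(a_1,\ldots,a_d)$, $a_i\in G_i$, and ordering this index set lexicographically. Under that ordering the Kronecker product of arrays indexed by $G_1,\ldots,G_d$ is indexed by $G$. By associativity of $\otimes$ it suffices to treat $d=2$ and then induct on $d$.

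\emph{Step 1 (convolution tensor).} The product in $G_1\times G_2$ is componentwise: $(a_1,a_2)(b_1,b_2)=(a_1b_1,a_2b_2)$. Hence $ab=c$ holds if and only if $a_1b_1=c_1$ and $a_2b_2=c_2$. Translating this into indicator functions gives
\[
\calT_G(a,b,c)=\calT_{G_1}(a_1,b_1,c_1)\,\calT_{G_2}(a_2,b_2,c_2).
\]
This is exactly the entrywise definition of the Kronecker product $\calT_{G_1}\otimes\calT_{G_2}$ of order-3 tensors, taken modewise under the lexicographic identification. This step is purely bookkeeping.

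\emph{Step 2 (irreps of the product).} I will invoke the standard fact (Serre, Thm.~10) that the irreps of $G_1\times G_2$ over $\bbC$ are exactly the external tensor products $\rho_1\boxtimes\rho_2$ with $\rho_i\in\hat G_i$, each occurring once. The proof runs as follows.
\begin{itemize}
\item Irreducibility follows from the character inner product: $\langle\chi_{\rho_1}\chi_{\rho_2},\chi_{\rho_1}\chi_{\rho_2}\rangle_G=\langle\chi_{\rho_1},\chi_{\rho_1}\rangle\langle\chi_{\rho_2},\chi_{\rho_2}\rangle=1$.
\item Pairwise inequivalence follows by the same factorization of inner products.
\item Completeness follows from the dimension count $\sum(d_{\rho_1}d_{\rho_2})^2=|G_1||G_2|$.
\end{itemize}
Unitarity of each $\rho_1\boxtimes\rho_2$ is inherited from its factors.

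\emph{Step 3 (Fourier matrix).} Let $F_G(a,(\rho,p,q))$ be the entry assembled from the matrix coefficient $\rho(a)_{pq}$, with whatever normalization (e.g.\ $\sqrt{d_\rho/|G|}$) SI Section~2 fixes. Take $\rho=\rho_1\boxtimes\rho_2$ with row index $p=(p_1,p_2)$ and column index $q=(q_1,q_2)$. Then
\[
\rho(a)_{pq}=\rho_1(a_1)_{p_1q_1}\,\rho_2(a_2)_{p_2q_2},
\qquad
\sqrt{d_\rho/|G|}=\sqrt{d_{\rho_1}/|G_1|}\,\sqrt{d_{\rho_2}/|G_2|}.
\]
Every entry of $F_G$ therefore factors as $F_{G_1}(a_1,\cdot)\,F_{G_2}(a_2,\cdot)$, which gives $F_G=F_{G_1}\otimes F_{G_2}$.

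\emph{Main obstacle: the ordering of the columns.} $F_G$ is only specified up to a per-irrep basis change and a relabeling of blocks, consistent with the uniqueness in SI Theorem~\ref{thm:necessity}. The claimed equality therefore holds once we \emph{choose} two conventions for $G$:
\begin{itemize}
\item the lexicographic ordering of $\hat G\cong\hat G_1\times\hat G_2$;
\item the Kronecker ordering of the index pairs $(p,q)$ within each block.
\end{itemize}
Without these choices the identity holds only up to a column permutation.

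The spectral form~\eqref{eq:spectral} should also factor correctly. To check this, substitute Steps 1 and 3 into it, so that $\calC_G=\calC_{G_1}\otimes\calC_{G_2}$. Then use that the Kronecker product of block-diagonal matrix multiplications is block-diagonal multiplication on the tensor-product blocks, via the mixed-product rule $(A\otimes B)(C\otimes D)=AC\otimes BD$. This confirms that the chosen column ordering is the one compatible with the block structure.

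Over $\bbR$, where some irreps are of real or quaternionic type, I would carry out the argument over $\bbC$, as the paper's unitary irreps suggest, and note the convention explicitly.
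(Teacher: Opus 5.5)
Your route is the same as the paper's. Componentwise multiplication turns $\calT_G$ into a product of indicators, the complex irreps of $G_1\times G_2$ are the tensor products $\rho_1\otimes\rho_2$, and their matrix coefficients factor entrywise. Your character-theoretic proof of the classification, which the paper only cites, is correct, as is your remark that a multiplicative normalization such as $\sqrt{d_\rho/|G|}$ factors.

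The claim that does not hold is your resolution of the ordering issue, which you flagged as the main obstacle. A column of $F_{G_1}\otimes F_{G_2}$ is a lexicographically ordered pair $(c_1,c_2)$ of the factors' column labels, with $c_i=(\rho_i,p_i,q_i)$. For each fixed $c_1$, the label $c_2$ runs through the columns of \emph{every} irrep of $G_2$ before $c_1$ advances. So the $d_{\rho_1}^2d_{\rho_2}^2$ columns of the pair $(\rho_1,\rho_2)$ form $d_{\rho_1}^2$ separate runs, interleaved with those of the pairs $(\rho_1,\rho_2')$, $\rho_2'\neq\rho_2$.

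Suppose $G_1$ has an irrep of dimension greater than one and $G_2$ is nontrivial; $O\times\bbZ_4$ from the composed-symmetry experiment is an example. Then no ordering of $\hat G$, no in-block order, and no per-irrep basis change makes the block-contiguous matrix of SI Section~2 (concatenated $\mathrm{rvec}(\rho(g))$ blocks) literally equal to $F_{G_1}\otimes F_{G_2}$. The reason is that each contiguous block of the former spans the matrix coefficients of a single irrep of $G$, while the Kronecker product scatters those columns. You correctly noticed that inside a block $\mathrm{rvec}$ lists $(p_1,p_2,q_1,q_2)$ whereas the Kronecker product lists $(p_1,q_1,p_2,q_2)$. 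However, the blocks themselves must also be gathered.

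The accurate statement is $F_{G_1}\otimes F_{G_2}=F_G\,\Pi$ for a fixed column permutation $\Pi$. Equivalently, equality holds literally once the columns of $F_G$ are indexed by the product label set in Kronecker order, giving up contiguity. Note that $\Pi$ is a reindexing of transform-domain coordinates, not one of the necessity-theorem freedoms you invoke.

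This permutation is harmless for every blockwise operation. The paper's one-line appeal to the $\mathrm{rvec}$ construction and the mixed-product property passes over it silently, so the substance of your proof stands. For the same reason, your spectral-form check does not single out an ordering. The identity $\calC_G=\calC_{G_1}\otimes\calC_{G_2}$ holds in the Kronecker indexing, with the matrix-multiplication blocks sitting on interleaved index sets.
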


\noindent Full proof is provided in the Supplementary Information (SI Section~6).

This Kronecker structure is the algebraic reason why multiple symmetries compose
without architectural redesign. The factorization of $\calT_G$ implies that the
irreps of a product group are exactly the tensor products of the factors' irreps,
so the Fourier-domain block-diagonal structure is the Kronecker product of the
individual block-diagonal structures. Concretely, for
$G = \bbZ_{n_1} \times \bbZ_{n_2}$, the Group Fourier transform
$F_G = \mathrm{DFT}_{n_1} \otimes \mathrm{DFT}_{n_2}$ computes a 2D DFT,
resolving coupled frequencies $(f_1, f_2)$ that are entirely invisible to either
factor group alone. Adding a new symmetry $G_{d+1}$ to an existing $\starG$
model requires only replacing $F_G$ with $F_G \otimes F_{G_{d+1}}$; no layers
are redesigned and no weights are reinitialized. This theorem is also what
makes the flattening convention above operational: because the irreps of the
product are the Kronecker products $\rho_1 \otimes \cdots \otimes \rho_d$ of
the factors' irreps, running any algorithm of this paper on the flattened
group, with $F_G = F_{G_1} \otimes \cdots \otimes F_{G_d}$, is
block-for-block the same computation as working multi-index, so the
$\starG$-SVD, QR, eigendecomposition, and least-squares apply unchanged to
order-$(2+d)$ tensors over product groups.

\subsection{Exact equivariance does not compound; approximate equivariance does}
\label{sec:compounding}

The distinction between exact and approximate equivariance is often treated as a
matter of degree: a method whose single-step symmetry error is $10^{-2}$ or
$10^{-5}$ is called equivariant ``up to a small error.'' The distinction is in
fact qualitative, and the reason is composition. Symmetric computations are
rarely applied once: deep networks stack layers, dynamical simulations and
rollouts iterate a step map, and self-consistent solvers loop until convergence.
The accumulated symmetry defect then grows without bound. Iterating a
\emph{fixed} step of equivariance error $\varepsilon$, the regime of rollouts
and self-consistent loops and the regime we demonstrate below, accumulates the
defect coherently, at order $M\varepsilon$ in stable norm-preserving
computation (where the growth cannot be attributed to any blow-up of the
dynamics itself), reaching $O(1)$ at depth $M \sim 1/\varepsilon$ before
saturating; a composition of $M$ \emph{distinct}, independently erring layers
accumulates more slowly, at least of order $\sqrt{M}\,\varepsilon$ in
expectation, with $M\varepsilon$ the known worst-case
bound~\citep{assaad2023vntransformer}. Every rate in this range diverges with
depth; only $\varepsilon = 0$ is depth-stable. (Cancellation between steps can
slow the accumulation, and contrived non-generic steps, an orthogonal
involution for instance, return exactly to equivariance at special depths; but
no non-equivariant invertible step is exactly equivariant at all sufficiently
large depths, and generic steps produce no exact zeros.) An operator
whose equivariance is exact, by contrast, commutes with the group at every step,
so its iterates are exactly equivariant at \emph{unbounded} depth. Exactness is
therefore a capability rather than a percentage, and the $\starG$ algebra, in
which every product, factorization, and projection is equivariant identically
in exact arithmetic, sits on the exact side of that line by construction. The
qualitative dichotomy, that exact rollouts hold their symmetry while
unconstrained ones drift away from it, has been observed concurrently by
Wang~\citep{wang2026exactequivariance}; what this section establishes is the
\emph{quantitative} form of the law: the measured single-step error
$\varepsilon$, the $M\varepsilon$ rate and its $1/\varepsilon$ depth horizon,
the fate of a physically forbidden channel, and the placement of real methods'
measured error levels on the resulting curves.

We make the law concrete on the octahedral group $O$
(\texttt{symmetry\_compounding.py}; offline and deterministic, all invariants
verified). A single step is an orthogonal, norm-preserving linear map on the
regular representation, so iteration is a stable flow rather than a collapse to
a dominant eigenmode. The exact step is $T = \exp(S)$ with $S$ an equivariant
antisymmetric generator obtained by Reynolds symmetrization; its measured
single-step equivariance error is at floating-point zero
($2.5\times10^{-16}$), and because
it is block-diagonal across irreps it can never move energy into a
symmetry-forbidden channel. The approximate step is $T_\varepsilon = \exp(S +
\varepsilon B)$ with $B$ a generic non-equivariant antisymmetric generator: an
orthogonal map whose single-step equivariance error is \emph{measured}, not
assumed, and swept over $\varepsilon \approx 7\times10^{-2}$, $7\times10^{-3}$,
$7\times10^{-4}$, the range that finite data, finite augmentation, relaxed
equivariance, or soft symmetry penalties realistically leave behind. This range
is documented, not stipulated: measured equivariance errors of trained models
span it, from augmentation-trained networks at the top of the range down to
architecturally constrained ones~\citep{gruver2023lie,wang2022approximately},
and our own augmented baselines sit near $10^{-2}$
(Section~\ref{sec:enn_comparison}). The sweep is anchored on both sides of the
line by two operators that are \emph{trained} rather than constructed, fitted
to the same exact step from the same $256$ example pairs. An orbit-augmented
multilayer perceptron, the standard augmentation route to approximate
invariance, trains to a relative fit error of $8\times10^{-3}$ with each pair
presented in five group frames, yet its measured single-step equivariance
error is $\varepsilon \approx 1.7\times10^{-2}$, squarely inside the swept
range. An architecturally equivariant linear layer, the least-squares fit over
the commutant of the regular representation and hence, by
Corollary~\ref{cor:unification}, precisely the linear layer of an equivariant
network, recovers the step with measured error $9\times10^{-16}$. The same
data, fitted through two standard routes, lands on opposite sides of the
exact/approximate line.

Figure~\ref{fig:compounding} shows the result of iterating each operator
$M = 1$ to $400$ times. The equivariance drift of the exact iterate stays at
floating-point noise for all depths, $2\times10^{-16}$ at $M{=}1$ and
$1.8\times10^{-14}$ at $M{=}400$, while every approximate operator compounds as
the law predicts: the $\varepsilon \approx 7\times10^{-2}$ operator reaches
$O(1)$ symmetry violation by $M \approx 50$, the $7\times10^{-3}$ one by
$M \approx 400$, and the $7\times10^{-4}$ one is at $10\%$ violation by
$M{=}400$ and still rising. The trained operators obey the same law at
inference: the augmented perceptron, unrolled exactly as a rollout would
unroll it, drifts as $\sim\!M\varepsilon$ from its measured
$\varepsilon = 1.7\times10^{-2}$ to total symmetry loss by $M{=}400$, with the
forbidden-channel fraction of its state growing to $17\%$, while the trained
equivariant layer holds its drift at floating-point noise ($10^{-13}$ at
$M{=}400$). The forbidden-channel energy tells the same story
physically. Here the forbidden channel is the $A_2$ isotypic component, the
one-dimensional pseudoscalar irreducible representation of $O$ (the sign of
the permutation action on the body diagonals; see the irrep table in
Section~\ref{sec:wigner_eckart}), absent from the initial state by
construction: a state initialized in the symmetry-allowed subspace acquires
\emph{zero} $A_2$ energy under the exact iterate at every
depth ($\le 4\times10^{-15}$), whereas each approximate iterate leaks energy
into the forbidden channel and the leak grows with depth. The gap between the
exact and approximate classes is fourteen orders of magnitude at $M{=}400$ and
\emph{widens} with exactly the depth and composition that modern pipelines
scale.

Three readings of this experiment organize the rest of the paper. First, it is
not a strawman: $\varepsilon$ is measured, the sweep covers the error levels
measured for real trained models~\citep{gruver2023lie,wang2022approximately},
two of the curves are themselves trained models rather than constructions,
and the compounding is a mathematical
law, not a weakness of a particular baseline. An architecturally exact
equivariant network is on the exact side of the line; the dividing line is
exact versus approximate equivariance, and $\starG$ is the closed-form,
training-free member of the exact class whose equivariance is additionally
machine-certified (Lean~4, Discussion); Table~\ref{tab:comparison} summarizes
the structural contrast. Second, the forbidden-channel leak is
precisely the failure mode that the crystal physicality layer of
Section~\ref{sec:crystals} repairs on real learned predictors, whose trained
outputs leak $10^{-2}$ of their energy into symmetry-forbidden channels; the
projection restores the exact class post hoc, on any predictor's output.
Third, the capability costs nothing: exactness is a property of the algebra's
multiplication rule, not a constraint traded against expressivity or accuracy.

\begin{figure}[!ht]
\centering
\includegraphics[width=\textwidth]{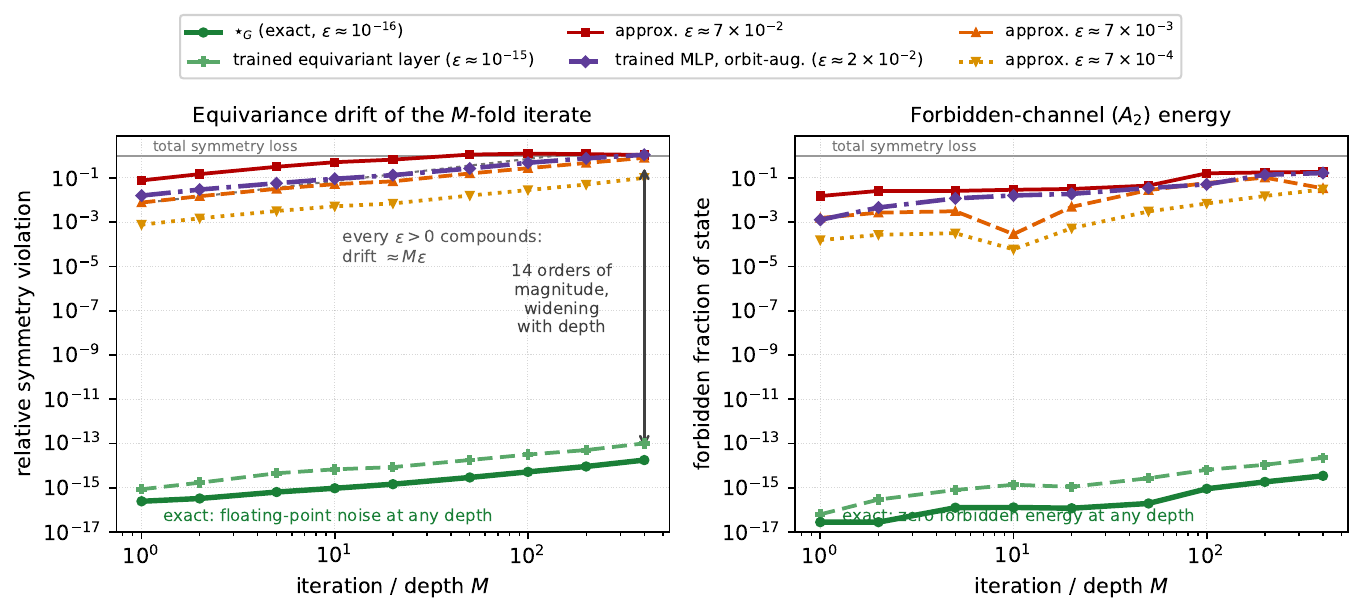}
\caption{\textbf{Exact equivariance does not compound; approximate equivariance
does.} One-step operators on the regular representation of the
octahedral group $O$, iterated $M$ times; the exact operator commutes with the
group to floating-point precision, the approximate operators have measured
single-step equivariance error $\varepsilon$, swept over the range measured
for trained models in
practice~\citep{gruver2023lie,wang2022approximately}. Two operators are
trained on the same step from identical data: an orbit-augmented perceptron
unrolled at inference (approximate class, measured
$\varepsilon \approx 1.7\times10^{-2}$) and an equivariant linear layer
(exact class, $\varepsilon \approx 10^{-15}$). (\textbf{Left}) Relative equivariance violation of the $M$-fold
iterate: the exact operators stay at floating-point noise
($1.8\times10^{-14}$ at $M{=}400$ for the $\starG$ step) while every
$\varepsilon > 0$, trained or synthetic, compounds as
$\sim\!M\varepsilon$ to $O(1)$. (\textbf{Right}) Energy in the
symmetry-forbidden $A_2$ channel of a state initialized in the allowed
subspace: exactly zero under the exact iterates at every depth, growing leakage
under every approximate one. The gap is fourteen orders of magnitude at
$M{=}400$ and widens with depth.}
\label{fig:compounding}
\end{figure}

\begin{table}[h]
\centering
\caption{Structural comparison: ENNs vs.\ $\starG$ algebra.}
\label{tab:comparison}
\small
\begin{tabular}{@{}p{2.4cm}p{5.8cm}p{7.0cm}@{}}
\toprule
\textbf{Feature} & \textbf{ENNs} & \textbf{$\starG$ Algebra} \\
\midrule
Symmetry     & Architectural constraint             & Algebraic property \\
Group domain & Continuous, truncated to finite $l_{\max}$ & Finite, or compact via band-limit (Thm.~\ref{thm:bandlimited}) \\
Composition  & Fixed at architecture design time    & $F_{G_1} \otimes F_{G_2}$, no redesign (Thm.~\ref{thm:product_ring}) \\
Optimality   & None established for learned layers  & Eckart--Young (Thm.~\ref{thm:eckart_young}) \\
Equivariance & Exact, up to float precision         & Exact; machine-verified (Lean~4) \\
Training     & Iterative gradient descent           & Closed form (SVD + ridge) \\
\bottomrule
\end{tabular}
\end{table}

\subsection{Symmetry diagnostics: Wigner--Eckart signatures from data}
\label{sec:wigner_eckart}

A first, distinctive use of the algebra is not prediction but diagnosis: the
$\starG$ decomposition can \emph{surface} physical symmetry structure that was
not provided as input. The Wigner--Eckart
theorem states that scalar observables ($l\!=\!0$) couple only to the trivial
representation, vector observables ($l\!=\!1$) require the fundamental
representation, and rank-2 tensor observables ($l\!=\!2$) require the $l\!=\!0$
and $l\!=\!2$ channels. A generalized Wigner--Eckart theorem has already been
brought into equivariant learning by Lang and Weiler~\citep{lang2021wigner} to
\emph{parameterize} steerable convolution kernels; here we use the same
selection-rule structure in the opposite direction, as a diagnostic read out of
data rather than as a design constraint. We test whether the $\starG$ framework
surfaces signatures consistent with these selection rules from molecular geometry
data alone.

\subsubsection*{Setup}

We work with the chiral octahedral group $O$
(order 24, a subgroup of SO(3)) whose five irreducible representations correspond
directly to angular momentum channels:

\begin{center}
\begin{tabular}{@{}llcl@{}}
\toprule
\textbf{Irrep} & \textbf{Dim} & \textbf{$l$-channel} & \textbf{Physical content} \\
\midrule
A$_1$ & 1 & $l=0$ & Scalar (spherically symmetric) \\
A$_2$ & 1 & $l=0$ & Pseudoscalar \\
E     & 2 & $l=2$ & Quadrupole ($d$-wave) \\
T$_1$ & 3 & $l=1$ & Vector ($p$-wave, dipolar) \\
T$_2$ & 3 & $l=2$ & Quadrupole ($d$-wave) \\
\bottomrule
\end{tabular}
\end{center}

\noindent We study two real datasets. On the full QM9 set (130{,}831 molecules,
3 seeds) we featurise each molecule under all 24 octahedral rotations and take
the rotation-invariant per-irrep Fourier power, reporting the test \Rsq{} of each
irrep's features alone for the scalar properties HOMO--LUMO gap, isotropic
polarizability $\alpha$, and ZPVE, and for the dipole magnitude
$|\boldsymbol{\mu}|$. To test the rank-2 selection rule \emph{directly}, rather
than through a charge-derived proxy, we use the QM7-X
dataset~\citep{hoja2021qm7x}, which provides the
full molecular polarizability \emph{tensor} (a real symmetric $3\times3$ matrix),
and decompose it into octahedral irreps.

\subsubsection*{Results}

Table~\ref{tab:wigner}, Figure~\ref{fig:wigner}, and
Table~\ref{tab:qm7x_polarizability} show five findings.

\textbf{(i) Scalars live in A$_1$.} The gap, ZPVE, and isotropic polarizability
are dominated by the A$_1$ ($l\!=\!0$) channel (\Rsq{} $= 0.32, 0.85, 0.61$),
while the A$_2$ (pseudoscalar) channel contributes essentially nothing
($|\Rsq| < 0.001$). Realised correctly as the sign of the $S_4$ action on the
body diagonals, A$_2$ vanishes for these true-scalar properties, as a genuine
pseudoscalar must.

\textbf{(ii) Directional observables shift to T$_1$.} The dipole magnitude
$|\boldsymbol{\mu}|$, though a scalar, draws most of its predictive signal from
the T$_1$ ($l\!=\!1$) channel (\Rsq{}$_{T_1} = 0.28$ vs.\ A$_1 = 0.13$): it is the
norm of a vector, and the directional content lives in $l\!=\!1$.

\textbf{(iii) Controls confirm the structure is real and group-dependent.}
Permuting the labels collapses every channel to \Rsq{} $\approx 0$ (no
over-fitting). Replacing $F_G$ with a random orthonormal basis of the same block
sizes spreads the signal uniformly across all blocks (each block reaches the
full-model \Rsq{} $\approx 0.34$ for the gap), erasing the selectivity entirely.
It is therefore the octahedral group structure, not generic block features or a
coordinate convention, that produces the A$_1$/T$_1$ pattern.

\textbf{(iv) The exact rank-2 selection rule.} On the real QM7-X polarizability
tensor, the octahedral decomposition is A$_1$ (isotropic) $0.93$, E $0.03$,
T$_2$ $0.04$, and T$_1$ \emph{exactly} $0$ (maximum over 3{,}000 molecules: $0$ to
machine precision; Figure~\ref{fig:wigner}b). A symmetric rank-2 operator has no
$l\!=\!1$ component, and $\starG$ recovers this selection rule \emph{exactly} from
the data, not merely statistically.

\textbf{(v) The irrep index is an explicit, restrictable feature axis.} Because
the decomposition is available before any training, a model can be
\emph{restricted} to a single irrep channel by construction. On the QM7-X
polarizability components, a two-parameter ridge model built from the $E$ irrep
alone predicts the $E$-channel magnitude and is structurally unable to predict
the $T_2$ magnitude, and vice versa ($96$--$97\%$ cross-selectivity;
Table~\ref{tab:qm7x_polarizability}). This is an operational affordance of the
representation rather than a representational edge over trained networks:
given the full input, $\starG$ with all irreps, symmetry-blind baselines, and
three ENN architectures all sit near zero cross-selectivity, and a trained ENN's
internal irrep structure can be recovered with a bespoke post-hoc probe
(SI~Section~\ref{sec:enn_probe}). What the algebra adds is the channel
restriction as an explicit input-side control, with no probe and no training.

\begin{table}[h]
\centering
\caption{\textbf{QM7-X polarizability components: cross-selectivity table.}
For each method we train two single-target models, one predicting the
$E_g$ irrep magnitude $\|\boldsymbol{\alpha}_E\|$ of the molecular
polarizability tensor, the other predicting the $T_{2g}$ magnitude
$\|\boldsymbol{\alpha}_{T_2}\|$, on identical 60/20/20 per-molecule
splits across three seeds (42, 43, 44). Cross-selectivity is
$\mathrm{cs} = (R^2_{\mathrm{on}} - R^2_{\mathrm{off}}) / (R^2_{\mathrm{on}}
+ R^2_{\mathrm{off}} + 10^{-12})$, where the ``on'' model targets the
indicated component and the ``off'' model is the architecturally-
identical model trained on the other component. A method whose two
single-target architectures achieve similar $R^2$ values has
cross-selectivity near zero, indicating that the architecture treats
the two targets symmetrically. A method with very asymmetric $R^2$ has
cross-selectivity near one, indicating a structural prior that aligns
with one component and not the other.}
\label{tab:qm7x_polarizability}
\begin{tabular}{@{}lrccc@{}}
\toprule
\textbf{Method} & \textbf{Params} & $R^2(\|\boldsymbol{\alpha}_E\|)$ & $R^2(\|\boldsymbol{\alpha}_{T_2}\|)$ & \textbf{Cross-selectivity} \\
\midrule
$\starG$ E-only + Ridge          & $2$         & $0.591$ & $0.012$ & $\mathbf{96.2\%}$ \\
$\starG$ T$_2$-only + Ridge      & $2$         & $0.008$ & $0.593$ & $\mathbf{97.3\%}$ \\
$\starG$ all irreps + Ridge      & $6$         & $0.622$ & $0.609$ & $1.1\%$ \\
\midrule
Raw features + Ridge             & $37$        & $0.481$ & $0.512$ & $-3.2\%$ \\
Coulomb matrix + Ridge           & $277$       & $0.442$ & $0.446$ & $-0.5\%$ \\
Raw + MLP$(128, 64, 32)$         & $15{,}105$  & $0.586$ & $0.591$ & $-0.5\%$ \\
Coulomb + MLP$(128, 64, 32)$     & $45{,}825$  & $0.495$ & $0.456$ & $4.2\%$ \\
\midrule
SchNet (PyG)                     & $455{,}809$ & $0.664 \!\pm\! 0.026$ & $0.662 \!\pm\! 0.018$ & $0.15\%$ \\
e3nn (SE(3)-equivariant)         & $794{,}032$ & $0.708 \!\pm\! 0.027$ & $0.698 \!\pm\! 0.006$ & $0.7\%$ \\
MACE                             & $1{,}041{,}808$ & $0.687 \!\pm\! 0.032$ & $0.672 \!\pm\! 0.022$ & $1.1\%$ \\
\bottomrule
\end{tabular}
\medskip

\footnotesize
\noindent\emph{The three ENNs achieve modest predictive $R^2$ in the
$0.66$--$0.71$ range on each polarizability component individually,
matching the $\starG$ all-irreps + Ridge baseline at $6$ parameters.
\textbf{The $\starG$ $E$-only and $T_2$-only models, with $2$ trainable
parameters each, retain $96.2\%$ and $97.3\%$ cross-selectivity} as a
direct consequence of restricting the algebra's per-irrep features to a
single irrep. The three ENN architectures, given the full input rather than a
restricted one, stay near $1\%$, as does $\starG$ itself with all irreps present;
the effect measures input restriction, not a representational separation the
networks cannot make. The operative point of finding (v) is
that $\starG$ exposes the irreducible-representation decomposition as a
controllable input axis that the standard ENN workflow does not.}
\end{table}

By decomposing data over a physically meaningful group, $\starG$ thus acts as a
\emph{spectroscope for symmetry}: it reads off which angular-momentum channel
carries which observable, and it certifies the T$_1$-forbiddenness of the
polarizability exactly. The exact zero is structural rather than statistical:
any symmetric rank-2 input has no $l\!=\!1$ component, so this row certifies
that the pipeline preserves the algebraic identity to machine precision (a
learned model has no such guarantee; see the crystal experiment below), rather
than discovering a contingent property of the molecular data. We frame this as
a diagnostic that surfaces known
selection rules from data, not as a re-derivation of the Wigner--Eckart theorem.

\begin{table}[h]
\centering
\caption{Per-irrep test \Rsq{} on QM9 (full set, 130{,}831 molecules, 3 seeds;
chiral octahedral $O$). Scalars are A$_1$-dominated; the dipole magnitude
$|\boldsymbol{\mu}|$ shifts to T$_1$. The corrected A$_2$ (sign) representation is
$\approx 0$ throughout. Control rows: shuffling labels collapses every channel; a
random orthonormal basis of the same block sizes erases the selectivity (every
block reaches the full-model \Rsq).}
\label{tab:wigner}
\begin{tabular}{@{}lcccccr@{}}
\toprule
\textbf{Property} & \textbf{A$_1$} & \textbf{A$_2$} & \textbf{E} &
\textbf{T$_1$} & \textbf{T$_2$} & \textbf{all} \\
\midrule
HOMO--LUMO gap                   & \textbf{0.319} & 0.000 & 0.009 & 0.058 & 0.002 & 0.344 \\
$\alpha$ (isotropic)             & \textbf{0.614} & 0.000 & 0.052 & 0.005 & 0.066 & 0.632 \\
ZPVE                             & \textbf{0.854} & 0.000 & 0.000 & 0.031 & 0.032 & 0.856 \\
$|\boldsymbol{\mu}|$ (magnitude) & 0.130 & 0.000 & 0.007 & \textbf{0.278} & 0.000 & 0.359 \\
\midrule
\;\;gap, shuffled labels         & 0.000 & 0.000 & 0.000 & 0.000 & 0.000 & 0.000 \\
\;\;gap, random basis            & 0.343 & 0.340 & 0.345 & 0.346 & 0.346 & 0.348 \\
\bottomrule
\end{tabular}
\end{table}

\begin{figure}[!ht]
\centering
\includegraphics[width=\textwidth]{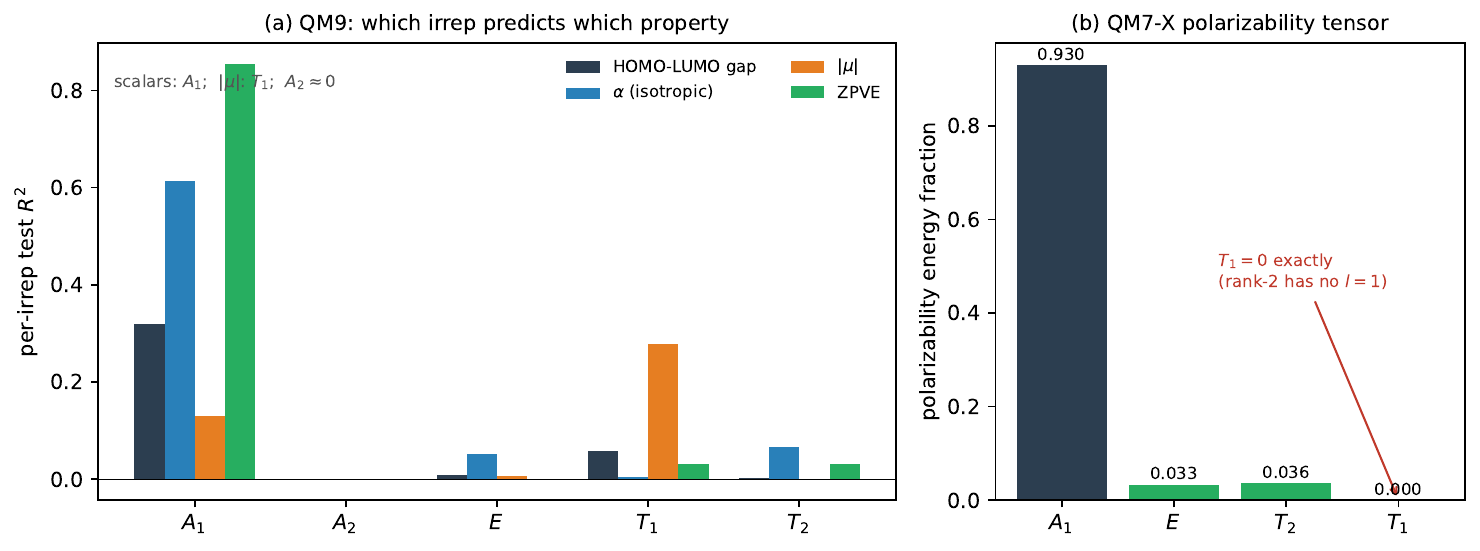}
\caption{\textbf{Symmetry spectroscopy and the exact polarizability selection
rule.} (\textbf{a})~Per-irrep test \Rsq{} on full QM9: the scalar gap, ZPVE, and
isotropic polarizability are A$_1$ ($l\!=\!0$) dominated, while the dipole
magnitude $|\boldsymbol{\mu}|$ shifts to T$_1$ ($l\!=\!1$); the corrected A$_2$
(sign) representation is $\approx 0$. (\textbf{b})~Octahedral irrep energy of the
real QM7-X polarizability tensor: A$_1 \oplus$ E $\oplus$ T$_2$, with T$_1$
\emph{exactly} zero over all 3{,}000 molecules, the exact Wigner--Eckart selection
rule for a symmetric rank-2 operator.}
\label{fig:wigner}
\end{figure}

\subsection{Exact tensorial selection rules in crystals}
\label{sec:crystals}

The experiments so far apply $\starG$ to molecular data whose symmetry is only
\emph{approximate}. Crystalline solids are the setting where the symmetry is exact
and discriminative: the space group is an exact symmetry of the structure, and
tensorial properties decompose by irreducible representation with symmetry-forced
selection rules that are physically central. On the $1{,}181$ inorganic crystals of
the de Jong elastic-tensor dataset~\citep{dejong2015charting}, the exact SO(3)
decomposition of the elastic tensor into its $l=0,2,4$ channels reproduces these
rules from the raw tensors with no symmetry input: the $452$ cubic crystals carry a
median $l=2$ energy fraction of $1.4\times10^{-10}$, seven orders of magnitude
below the mean of $6.6\times10^{-5}$ that imperfect DFT relaxations leave
($99\%$ below $10^{-3}$), their
anisotropy residing entirely in $l=4$, whereas every uniaxial or lower-symmetry
class (hexagonal, trigonal, tetragonal, orthorhombic, monoclinic) carries a nonzero
$l=2$ channel. This is the crystalline counterpart of the Wigner--Eckart signatures
above (Section~\ref{sec:wigner_eckart}), now exact rather than a
discretised diagnostic.

Two prediction experiments locate $\starG$'s advantage precisely where symmetry,
not chemistry, carries the signal. First,
predicting properties from structure with a symmetry-blind gradient-boosted
regressor on a $\starG$ per-site descriptor, the descriptor's advantage over a
composition-only baseline concentrates exactly where symmetry, not chemistry,
carries the signal: on the composition-dominated moduli it is marginal
($\log K_{\mathrm{VRH}}$ $R^2$ $0.84\!\to\!0.86$, $\log G_{\mathrm{VRH}}$
$0.67$, unchanged within run-to-run variation), whereas on the anisotropy channels, which are pure symmetry
content, it reaches $R^2 = 0.27$ ($l=2$, uniaxial) and $0.24$ ($l=4$, cubic) and
lifts the anisotropy \emph{type} $c_4=l_4/(l_2+l_4)$ from $R^2 = 0.10$ to $0.43$.
Second, the $\starG$ decomposition supplies a closed-form \emph{physicality layer}
that enforces the crystal's selection rules exactly on the output of \emph{any}
predictor. Symmetry-aware tensor prediction is an active line, and group
averaging itself is not the contribution: equivariant architectures build
crystal-system constraints into the model (MatTen for the elasticity tensors
of all seven crystal systems~\citep{wen2024matten}; GMTNet, whose space-group
projection is likewise a Reynolds average, applied inside an equivariant
network at numerical tolerance~\citep{yan2024gmtnet}), canonicalization
wrappers orient the crystal before prediction but by their own account do not
enforce selection rules~\citep{hua2024goectp}, inference-time corrections
target the zero elements~\citep{jin2026ceitnet}, and post-hoc symmetrization
of tensors by group averaging is standard practice in materials
software~\citep{ong2013pymatgen}. What the algebra adds is the projector's
provenance and the guarantees that provenance buys: it is the orthogonal
idempotent of the verified $\starG$ irrep calculus, the point-group Reynolds
operator and the $\Gamma$-point instance of the Brillouin-zone
selection-rule projector of SI~Section~\ref{sec:spacegroups}; it acts at
machine precision rather than at solver tolerance; it applies post hoc to an
already-trained model, with a proof (below) that it cannot increase error,
where output averaging imposed during training has been reported to degrade
accuracy~\citep{yan2024gmtnet}; and its effect on downstream screening is
quantified. Generalizing the cubic
case above to all seven crystal systems and applied to the $1{,}181$ real
tensors (which span the six crystal systems represented in the dataset), it
drives the symmetry-forbidden channel energy to machine precision for every system
($\le 10^{-30}$; the cubic $l=2$ fraction from $6.6\times10^{-5}$ to
$1.6\times10^{-16}$) while leaving the allowed channels unchanged. The consequence
for prediction is the point. An unconstrained tensor regressor violates these rules
universally: across three seeds, every prediction of a tuned gradient-boosted model
leaks forbidden energy (mean fraction $5\times10^{-3}$), which the projection removes
to machine zero. The same holds for a trained crystal graph network: across a sweep
of training-set sizes its predicted tensors leak $0.9$--$2.6\times10^{-2}$ of their
energy into forbidden channels, which the projection drives to $1.5\times10^{-32}$ at
every size (\texttt{crystal\_gnn\_data\_efficiency.py}), so the guarantee is genuinely
architecture-agnostic, holding on a deep learned model's output as on a boosted one.
The demonstration deliberately targets predictors that do not build the
constraint in, the class most widely deployed; for architecture-constrained
models such as MatTen~\citep{wen2024matten}, which satisfy the rules to float
tolerance on exactly symmetric inputs, the layer is complementary rather than
corrective, supplying the exact-zero certificate and the error guarantee below.
This is the single-step instance of the compounding law of
Section~\ref{sec:compounding} caught in the wild, and the projection restores the
exact class post hoc: left unprojected, a predictor whose tensors feed further
symmetric computation would compound this leakage with depth.
Composing it with a standard positive-semidefinite projection
(nearest-PSD by eigenvalue clipping) additionally eliminates every violation of Born
mechanical stability the model emits, $1.7$ of $1{,}181$ predictions for the tuned
model and $80$ of $1{,}181$ for a weaker linear one, both driven to zero, whereas
the symmetry projection alone leaves the weak model's stability violations at $63$.
The layer never alters symmetry-invariant screening (bulk-modulus ranking, Spearman
$0.92$, unchanged by construction) and modestly improves tensor reconstruction
($R^2$ by $+0.009$ for the tuned model, $+0.047$ for the weak one). This is a
provable, architecture-agnostic guarantee of physical consistency rather than an
average-accuracy gain: it removes physically impossible predictions,
symmetry-forbidden and mechanically unstable, with a certificate that no
unconstrained model provides. The downstream effect is concentrated where symmetry,
not magnitude, carries the signal: on ranking crystals by anisotropy \emph{type}
($c_4$), the layer lifts the Spearman correlation with the true type from $0.32$ to
$0.75$ (and from $0.16$ to $0.73$ for a weaker model), because the raw predictor's
forbidden $l=2$ leakage corrupts the cubic crystals' type (true $c_4\approx1$) and
the projection restores it exactly, whereas on anisotropy \emph{magnitude}, which a
tuned model already captures, the effect is small.

\begin{figure}[!ht]
\centering
\includegraphics[width=\textwidth]{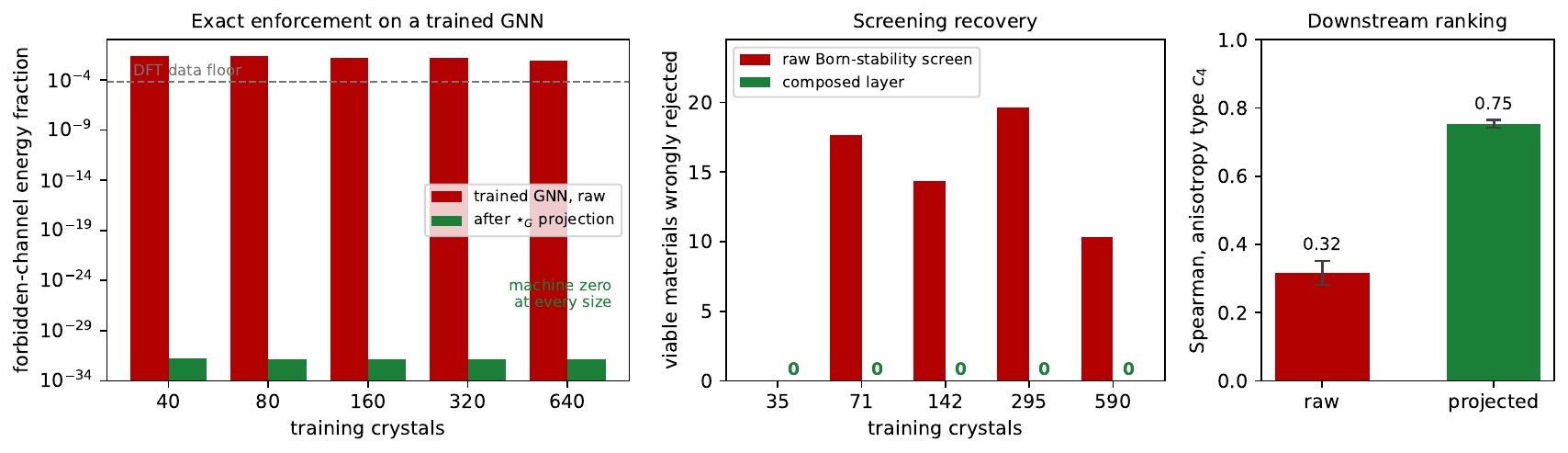}
\caption{\textbf{The physicality layer in action
(\texttt{make\_enforcement\_fig.py}; artifacts committed).}
(\textbf{Left}) Forbidden-channel energy of a trained crystal graph network's
predicted elastic tensors across training sizes: the raw model leaks
$0.9$--$2.6\times10^{-2}$, above even the imperfect-DFT data floor, and the
$\starG$ projection drives every prediction to machine zero
($\sim\!1.5\times10^{-32}$).
(\textbf{Middle}) Discovery consequence: viable materials wrongly rejected by
a raw Born-stability screen at each training size, all recovered by the
composed (symmetry + nearest-PSD) layer; at the smallest size the fully
mean-reverted model rejects nothing to begin with.
(\textbf{Right}) Ranking crystals by anisotropy \emph{type} ($c_4$): the
projection lifts the Spearman correlation from $0.32$ to $0.75$ (3 seeds,
error bars $\pm$1 s.d.) by exactly
restoring the cubic crystals' forbidden-channel zeros
(\texttt{anisotropy\_screening.py}).}
\label{fig:enforcement}
\end{figure}

The guarantee has a simple reason and a sharp regime (Figure~\ref{fig:enforcement}). Because the projector is
orthogonal and the true tensor lies in its range, the squared error splits exactly
into an approximation term and the forbidden-channel energy, so projection can only
reduce the error, by precisely the forbidden mass it removes
(SI~Proposition~\ref{prop:enforce_guarantee}). The benefit is therefore largest in
the data-scarce discovery regime, where a model trained on the few measured tensors
must extrapolate to many candidates. Sweeping the training set down to a few tens of
crystals, the projection reduces the reconstruction error of $98$--$100\%$ of
individual predictions (by $5$--$6\%$ once out of the trivial mean-reverting limit),
and, at training sets of $71$ crystals and larger (below which the fully
mean-reverted model rejects nothing), a raw Born-stability screen wrongly
rejects $10$--$20$ otherwise-viable materials
per held-out set, all of which the composed layer recovers; this false-rejection
count grows linearly with the size of a screen, so at discovery scale it is the
difference between discarding and retaining thousands of viable candidates
(\texttt{enforcement\_screening.py}).

\paragraph{Equal-input data efficiency: the symmetry-structured representation as a
sample-complexity prior.} The value of the symmetry structure can be isolated on
identical information. Both representations of a crystal's local environment are built
from the same chemistry-weighted radial, dipole, and quadrupole moments of each site's
neighbours; the $\starG$ descriptor is their octahedral group-Fourier power spectrum,
exactly invariant under the octahedral group $O$, whereas a symmetry-blind descriptor
reads the same moments in one chosen frame (keeping, in fact, \emph{more} information,
since the power spectrum discards phase). Because the targets are octahedral scalar
invariants, a held-out crystal presented in any octahedral orientation carries the
same label, so testing on a randomly reoriented held-out set isolates the
sample-complexity cost of learning the invariance that $\starG$ has by construction
(\texttt{crystal\_data\_efficiency.py}, Table~\ref{tab:data_efficiency}). On the
symmetry-carried anisotropy channel $c_4$, $\starG$ ranks crystals two to three times
better than the symmetry-blind representation of the same information at every
training size, and better than that representation even when it is given explicit
orbit augmentation over all $24$ frames. On the composition-carried bulk modulus
$\starG$ is the most sample-efficient in the low-data regime (Spearman $0.84$ at $40$
crystals), the symmetry-blind representation catching up only at the largest size
swept, and then only with explicit orbit augmentation ($0.910$ versus $0.894$ at
$640$ crystals), the expected featurized-versus-learned crossover; on the
symmetry-carried channel $c_4$ no crossover occurs anywhere in the sweep. The $\starG$ score is
identical on canonically- and rotated-oriented test sets (an empirical confirmation of
its exact invariance), whereas the symmetry-blind representation degrades under
reorientation. The advantage is therefore the sample-complexity of built-in
invariance, apples-to-apples on identical input, not a difference in information; a
full equivariant-network head-to-head fed the same structures is the natural extension
(\texttt{crystal\_gnn\_headtohead.py}).

\begin{table}[h]
\centering
\caption{Equal-input crystal data efficiency: rotated-frame test ranking (Spearman,
mean over 3 seeds) versus training-set size, for the octahedral-invariant $\starG$
descriptor and a symmetry-blind representation of the \emph{identical} local-environment
moments (read in one frame, and with orbit augmentation over the $24$ octahedral
frames). Anisotropy $c_4$ is symmetry-carried; $\log K$ is composition-carried.
$\starG$'s rotated- and canonical-frame scores coincide (exact invariance).}
\label{tab:data_efficiency}
\begin{tabular}{@{}llccc@{}}
\toprule
\textbf{Target} & \textbf{$N_{\mathrm{train}}$} & \textbf{$\starG$} & \textbf{sym-blind} & \textbf{sym-blind\,+\,aug} \\
\midrule
anisotropy $c_4$ & $40$  & \textbf{0.358} & 0.114 & 0.179 \\
                 & $160$ & \textbf{0.483} & 0.140 & 0.237 \\
                 & $640$ & \textbf{0.614} & 0.217 & 0.306 \\
\midrule
$\log K$         & $40$  & \textbf{0.840} & 0.704 & 0.763 \\
                 & $160$ & 0.873 & 0.825 & 0.876 \\
                 & $640$ & 0.894 & 0.870 & 0.910 \\
\bottomrule
\end{tabular}
\end{table}

\subsection{Optimal symmetry-preserving compression}
\label{sec:compression}

Theorem~\ref{thm:eckart_young} is, at heart, a compression statement: the
$\starG$-SVD gives the best $\starG$-rank-$k$ approximation of group-structured
data. We test it directly on real molecular tensors, against the matrix SVD and
classical tensor decompositions, following the comparison convention of the
transform-domain t-SVD~\citep{kilmer2021tensor}: at matched rank $k$, the matrix
baseline is the SVD of the lateral-slice matricization (each lateral slice
vectorised to a column), and the tensor methods are the $\starG$-SVD and
Tucker/HOSVD. Two real datasets, each arranged as an order-3 tensor over the
chiral octahedral group: (i)~QM9 molecular features sampled over the 24
octahedral frames, and (ii)~QM7-X molecular polarizability tensors $\alpha$,
which transform as $\alpha \mapsto R_g\, \alpha\, R_g^{\!\top}$ and therefore
decompose as $A_1 \oplus E \oplus T_2$.

The advantage is largest at low rank (Figure~\ref{fig:compression}). On both datasets the
$\starG$-SVD reconstructs the data \emph{exactly at rank~3} (relative error
$<10^{-6}$), whereas at the same rank the matrix SVD still has $27\%$ (QM9) and
$21\%$ (QM7-X) error and needs rank~11 and rank~6 respectively to reach exact
recovery; Tucker is uniformly worse. This is not a claim that $\starG$-rank and
matrix rank are the same notion; they count different algebraic objects. The
comparison asks how many degrees of freedom each representation needs, under its
native rank, to reconstruct the same structured tensor. Empirically, at \emph{every} rank $k$ below full the
$\starG$-SVD error is at most the matrix SVD error, and the two nearly coincide
only on i.i.d.\ random tensors (a few percent apart), the worst case predicted by
the transform-domain theory~\citep{kilmer2021tensor}. The reason is structural:
molecular tensors over a symmetry group are intrinsically low $\starG$-rank (the
polarizability is literally rank~3, one tube per irrep $A_1, E, T_2$), and only
the group-Fourier transform exposes this; a matricization that ignores the group
cannot. This is the low-$\starG$-rank structure that Theorem~\ref{thm:eckart_young}
compresses optimally within its rank class, made concrete on real data, and it is a
property no matrix or generic-tensor method shares.

\begin{figure}[!ht]
\centering
\includegraphics[width=\textwidth]{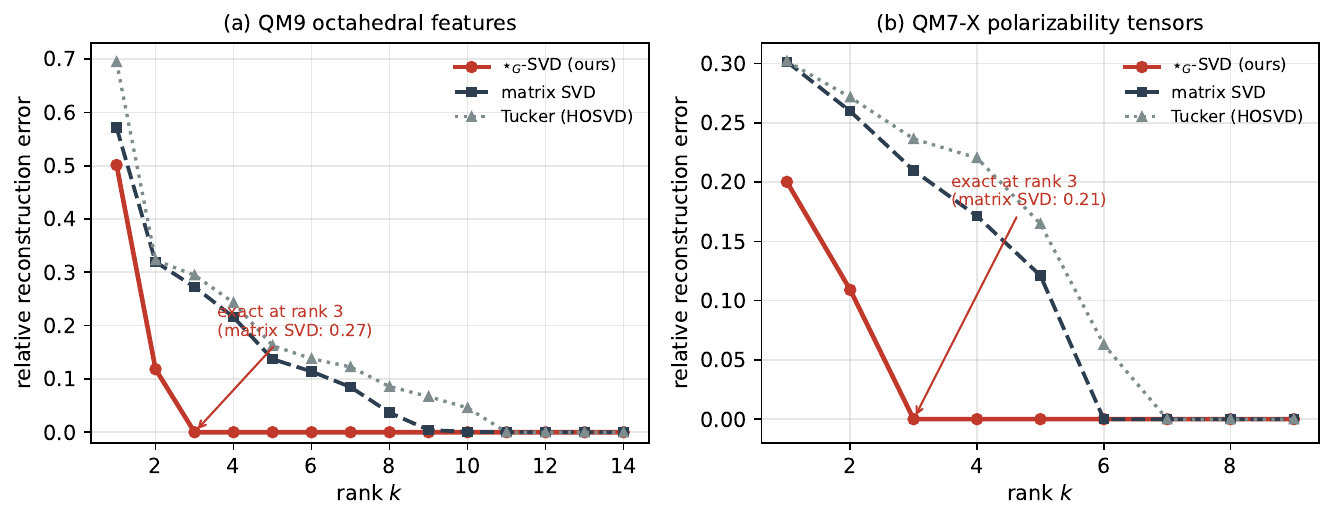}
\caption{\textbf{Optimal symmetry-preserving compression.} Relative
reconstruction error versus rank for the $\starG$-SVD, the matrix SVD (of the
lateral-slice matricization, matched rank), and Tucker/HOSVD, on real molecular
tensors. (\textbf{a})~QM9 features over the 24 octahedral frames;
(\textbf{b})~QM7-X polarizability tensors. The $\starG$-SVD reaches exact
recovery at rank~3 on both, where the matrix SVD retains $21$--$27\%$ error,
because the group structure makes the data intrinsically low $\starG$-rank.
Empirically the $\starG$-SVD error is at most the matrix SVD error at every rank;
Theorem~\ref{thm:eckart_young} establishes the $\starG$-SVD's optimality within its
own $\starG$-rank class.}
\label{fig:compression}
\end{figure}

\subsection{Synthetic validation}
\label{sec:synthetic_exp}

A controlled synthetic setting isolates, statically, the same exactness
dichotomy that Section~\ref{sec:compounding} demonstrated dynamically: when
the data's symmetry is exact, algebraic invariance sits at machine precision
while learned invariance is approximate at best. We generated
1{,}000 synthetic molecules with exact $\bbZ_{12}$ rotational symmetry and
compared $\starG$-SVD with ridge regression against four baselines (Augmented
MLP, Neural $\starG$, Standard MLP, and Invariant MLP) across three metrics:
predictive accuracy (\Rsq), rotational invariance (variance of predictions under
unseen rotations), and parameter efficiency. Cyclic structure was verified at
machine precision ($4.2 \times 10^{-16}$). The target property $\mathbf{y}$ combines mean interatomic distance, distance variance, and atomic number contributions, representing a rotationally invariant scalar analogous to size-dependent molecular properties such as polarizability.

Results are summarized in Table~\ref{tab:synthetic} and Figure~\ref{fig:synthetic}.
The $\starG$-SVD achieves perfect prediction (\Rsq{} $= 1.000 \pm 0.000$) and
exact invariance (rotation variance $5.8 \times 10^{-31}$) using only 101
parameters, compared to 5{,}249--14{,}465 for all neural baselines. The Standard
MLP achieves \Rsq{} $= 0.377$ with rotation variance $0.14$, confirming that
without explicit symmetry handling the model neither learns well nor respects the
symmetry. The Augmented MLP (\Rsq{} $= 0.998$) achieves near-perfect accuracy
but retains residual rotation variance ($3.7 \times 10^{-5}$) five orders of
magnitude larger than $\starG$-SVD, illustrating that augmentation approximates
but does not algebraically guarantee invariance. Figure~\ref{fig:synthetic}b
shows a $26$ to $30$ order-of-magnitude gap in rotation variance between
$\starG$-SVD and the approximately invariant neural baselines (augmented and
standard MLP).

\begin{table}[h]
\centering
\caption{Synthetic validation ($\bbZ_{12}$, 1{,}000 molecules, 3 seeds).}
\label{tab:synthetic}
\begin{tabular}{@{}lccc@{}}
\toprule
\textbf{Method} & \textbf{Test \Rsq} & \textbf{Rot.\ Variance} & \textbf{Params} \\
\midrule
$\starG$-SVD + Ridge & $\mathbf{1.000 \pm 0.000}$ & $5.8 \times 10^{-31}$ & \textbf{101} \\
Augmented MLP       & $0.998 \pm 0.000$           & $3.7 \times 10^{-5}$  & 5{,}249 \\
Neural $\starG$     & $0.697 \pm 0.063$           & $1.0 \times 10^{-30}$ & 8{,}641 \\
Standard MLP        & $0.377 \pm 0.054$           & $1.4 \times 10^{-1}$  & 5{,}249 \\
Invariant MLP       & $0.327 \pm 0.147$           & $\sim 0$              & 14{,}465 \\
\bottomrule
\end{tabular}
\end{table}

\begin{figure}[!ht]
\centering
\includegraphics[width=\textwidth]{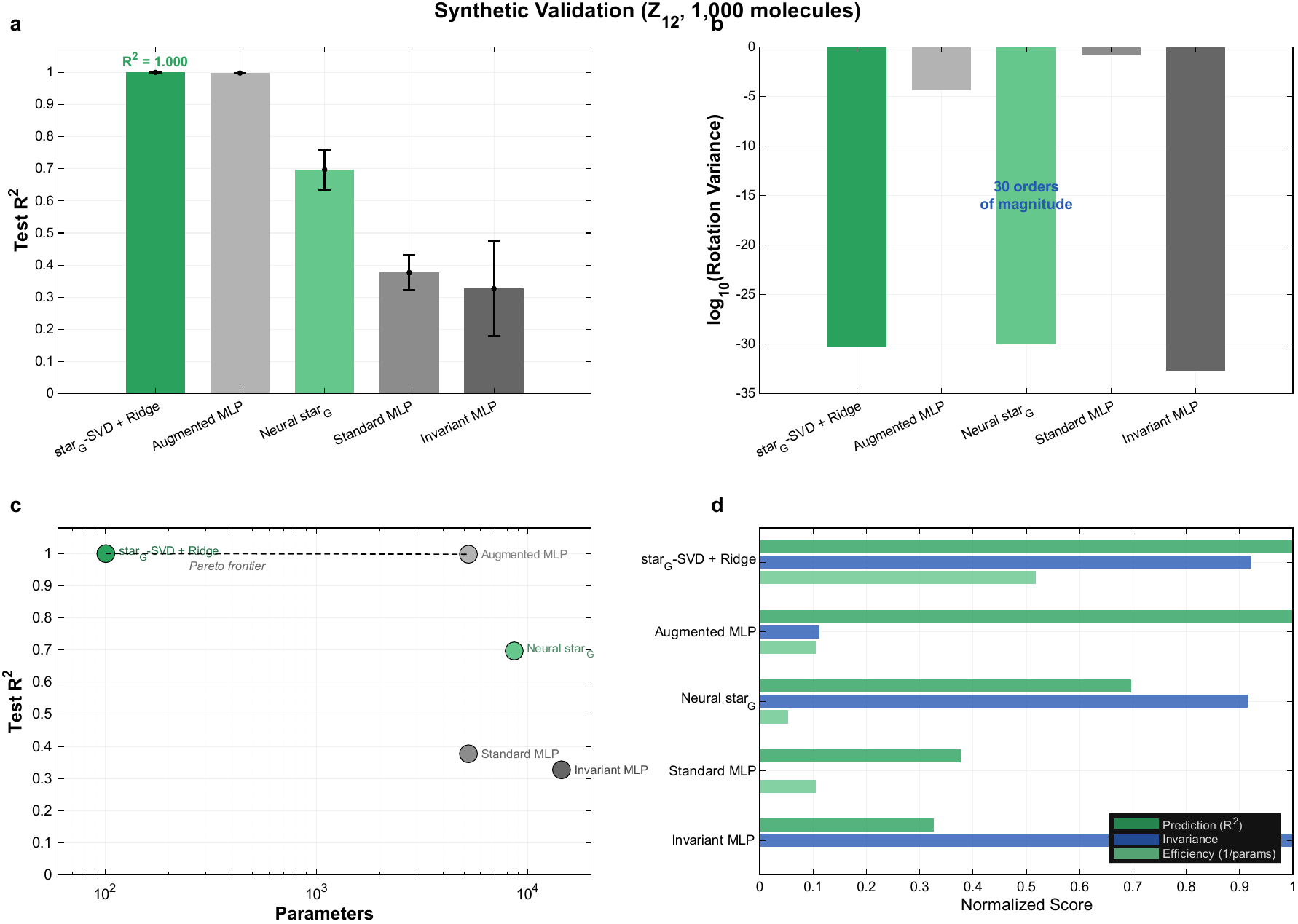}
\caption{\textbf{Synthetic validation ($\bbZ_{12}$).}
(\textbf{a})~Test \Rsq{}.
(\textbf{b})~Rotation variance (log scale): a 26-to-30-orders-of-magnitude gap
between
$\starG$-SVD and the approximately invariant baselines, the static counterpart
of the compounding dichotomy of Section~\ref{sec:compounding}.
(\textbf{c})~Parameter efficiency (Pareto frontier): on this exactly symmetric
synthetic set, $\starG$-SVD dominates all
baselines on both axes simultaneously.
(\textbf{d})~Multi-metric normalized summary scores.}
\label{fig:synthetic}
\end{figure}

\subsection{Composing symmetries: product groups}
\label{sec:product_exp}

To test whether the algebraic composition theorem
(Theorem~\ref{thm:product_ring}) translates into empirical performance, we
constructed a task with two independent, commuting symmetries:
$G_1 = \bbZ_6$ (discrete rotations in the $xy$-plane) and
$G_2 = \bbZ_4$ (periodic translations along $z$). The target quantity was
designed to be dominated by coupled 2D Fourier frequencies $(f_1, f_2)$ that
require both symmetries simultaneously; neither factor alone can resolve these
modes. This setting models the physical situation in which a molecular property
depends on two structural degrees of freedom simultaneously, as is common in
materials with layered or helical symmetry.

Results are shown in Table~\ref{tab:product} and Figure~\ref{fig:product}. The
$\starG$ model over $G_1 \times G_2$ achieves perfect prediction
(\Rsq{} $= 1.000 \pm 0.000$) with just 186 parameters. The single-factor models
capture at most 23\%: $G_2$ alone (\Rsq{} $= 0.229$) and $G_1$ alone
(\Rsq{} $= 0.155$). The 2D frequency map (Figure~\ref{fig:product}b) visualizes
why: the coupled frequency cells (highlighted in red) carry 87\% of the target
energy and are resolved only by $F_{\bbZ_6} \otimes F_{\bbZ_4}$; the individual
transforms see only the axis-aligned marginals. The cyclic approximation
$\bbZ_{24}$ (treating the same 24-dimensional group dimension as a single cyclic
group) reaches \Rsq{} $= 0.986$, close but not exact, because it cannot
distinguish the tensor-product structure of the irreps. The ablation cascade
(Figure~\ref{fig:ablation}) confirms a strict performance hierarchy: product group
$>$ wrong cyclic $>$ single factor $>$ no symmetry, each step removing algebraic
information and reducing performance monotonically. On this task the implementation realizes
Theorem~\ref{thm:product_ring} exactly: only the exact product-group specification
achieves exact recovery, and every coarsening of it degrades performance. Because
the target is constructed to contain coupled product-group frequencies, the
experiment validates that the algebra recovers the intended structure when it is
present, not that such coupled structure is ubiquitous in real materials.

\begin{table}[h]
\centering
\caption{Product group $\bbZ_6 \times \bbZ_4$ (1{,}000 molecules, 3 seeds).}
\label{tab:product}
\begin{tabular}{@{}lcc@{}}
\toprule
\textbf{Method} & \textbf{Test \Rsq} & \textbf{Params} \\
\midrule
$\starG$ over $G_1 \times G_2$ + Ridge & $\mathbf{1.000 \pm 0.000}$ & \textbf{186} \\
$\bbZ_{24}$ cyclic + Ridge             & $0.986 \pm 0.002$           & 157 \\
$\starG$ over $G_1 \times G_2$ + MLP   & $0.826 \pm 0.099$           & 9{,}473 \\
Standard MLP                           & $0.488 \pm 0.116$           & 4{,}481 \\
Invariant MLP                          & $0.324 \pm 0.267$           & 6{,}785 \\
Augmented MLP                          & $0.250 \pm 0.239$           & 4{,}481 \\
$G_2$ only ($\bbZ_4$) + Ridge          & $0.229 \pm 0.191$           & 42 \\
$G_1$ only ($\bbZ_6$) + Ridge          & $0.155 \pm 0.244$           & 55 \\
\bottomrule
\end{tabular}
\end{table}

\begin{figure}[!ht]
\centering
\includegraphics[width=\textwidth]{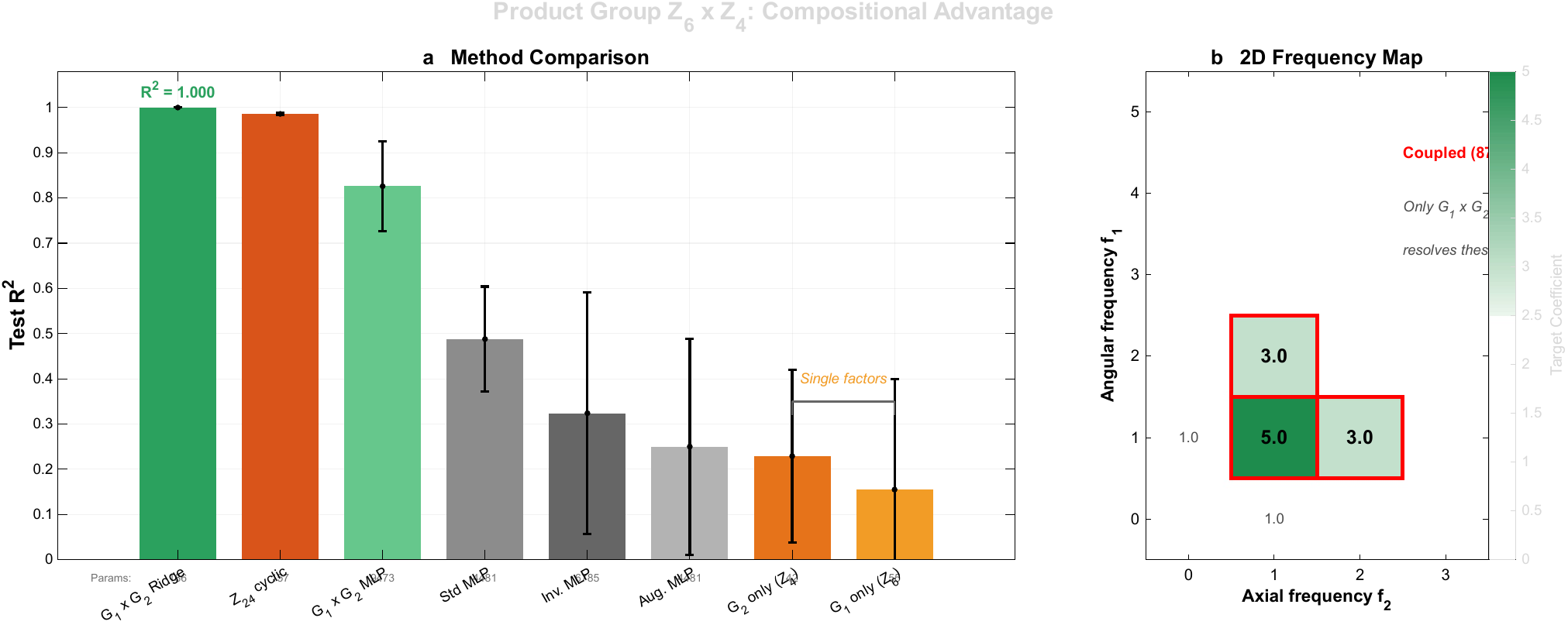}
\caption{\textbf{Product group $\bbZ_6 \times \bbZ_4$: compositional advantage.}
(\textbf{a})~Eight-method comparison. The product group achieves $R^2 = 1.000$;
each factor alone captures $\leq 23$\%.
(\textbf{b})~2D frequency map: coupled cells (red borders) carry 87\% of target
energy and are resolved only by $F_{\bbZ_6} \otimes F_{\bbZ_4}$.}
\label{fig:product}
\end{figure}

\begin{figure}[!ht]
\centering
\includegraphics[width=0.65\textwidth]{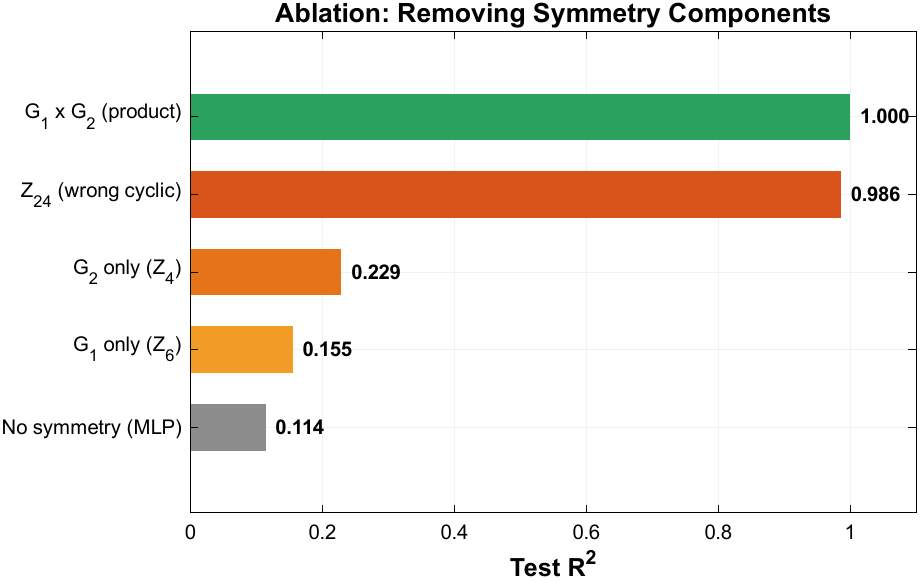}
\caption{\textbf{Ablation cascade.} Progressively removing symmetry components
reveals a strict performance hierarchy: product group $\to$ wrong cyclic
approximation $\to$ single factor $\to$ no symmetry.}
\label{fig:ablation}
\end{figure}

\paragraph{Non-abelian composition and data efficiency against a rotation-only
network.} The $\bbZ_6 \times \bbZ_4$ task above uses two abelian factors.
Composition matters most when one factor is non-abelian, so we built a second task
whose target is a joint invariant under the chiral octahedral group $O$
($|O|\!=\!24$, acting on 3D point clouds by rotation) and a global cyclic shift
$\bbZ_C$ ($C\!=\!4$) of a discrete per-point label. The target is a fixed
$\star_{O \times \bbZ_C}$ power-spectrum readout on the joint-nontrivial product
irreps, exactly invariant under both symmetries and dependent on neither alone
(Methods~\S\ref{sec:composed_methods}). Since it is generated by a product-group
readout, the experiment measures data efficiency and inductive bias rather than
expressive capacity. We compare four models over a sample-size sweep (3 seeds;
Table~\ref{tab:composed}), the three symmetry-aware models matched near
$\sim\!50$k parameters and the unconstrained MLP baseline deliberately given
$1.8$M: the product-group
$\star_{O \times \bbZ_C}$ readout, which carries both symmetries by construction; a
single-symmetry $\star_{O}$ readout built from the identical spatial machinery with
the label folded into ordinary channels, exactly rotation-invariant but required to
learn the color-shift invariance from data; a rotation-equivariant SchNet given the
label as an input feature; and a plain MLP. Validation confirms the intended
structure to floating-point precision: the product readout is invariant to both
symmetries ($\leq 5\times 10^{-5}$), whereas $\star_{O}$ is rotation-invariant but
not color-shift-invariant.

The controlled comparison is between the two $\star_G$ readouts, which share a
hypothesis class and differ only in whether the second symmetry is built in. Their
gap is a clean data-efficiency signature: negligible once both saturate ($+0.001$
at $3{,}000$ samples) and growing as data becomes scarce ($+0.043$ at $300$,
$+0.225$ at $100$), where the built-in color-shift invariance spares the product
model from spending data to discover it. The rotation-only baselines sit far below
across the whole sweep: SchNet rises only from $0.03$ to $0.31$ and the MLP barely
leaves zero, so a network that must absorb the second symmetry by rote does not
catch up even at $3{,}000$ samples. We read the SchNet gap cautiously, since it
also reflects SchNet approximating the $\star_G$ environment kernel through
distance messages and carries large seed variance; the matched $\star_{O}$
comparison is the controlled one. The experiment validates
Theorem~\ref{thm:product_ring} for a non-abelian factor and locates the
compositional advantage in the data-scarce regime.

\begin{table}[h]
\centering
\caption{Composed-symmetry task, chiral octahedral $O$ $\times$ color shift
$\bbZ_4$: test \Rsq{} (mean $\pm$ std, 3 seeds) over a sample-size sweep; the
three symmetry-aware models are matched near $\sim\!50$k parameters, while the
unconstrained MLP baseline has $1.8$M. The product-group readout carries both
symmetries by construction; $\star_{O}$-only is exactly rotation-invariant but must
learn color-shift invariance from data; SchNet is rotation-equivariant with the
label as an input feature. The target is a fixed product-group readout, so the
comparison measures data efficiency, not expressive capacity.}
\label{tab:composed}
\begin{tabular}{@{}lcccc@{}}
\toprule
\textbf{$n_{\mathrm{train}}$} & \textbf{$\star_{O \times \bbZ_4}$} &
\textbf{$\star_{O}$ only} & \textbf{SchNet} & \textbf{MLP} \\
 & \textbf{(59k)} & \textbf{(43k)} & \textbf{(63k)} & \textbf{(1.8M)} \\
\midrule
$100$     & $\mathbf{0.840 \pm 0.043}$ & $0.615 \pm 0.099$ & $0.033 \pm 0.267$ & $-0.003 \pm 0.029$ \\
$300$     & $\mathbf{0.993 \pm 0.000}$ & $0.950 \pm 0.018$ & $0.212 \pm 0.232$ & $0.012 \pm 0.044$ \\
$1{,}000$ & $\mathbf{0.998 \pm 0.001}$ & $0.989 \pm 0.013$ & $0.257 \pm 0.251$ & $0.031 \pm 0.083$ \\
$3{,}000$ & $\mathbf{0.999 \pm 0.000}$ & $0.998 \pm 0.001$ & $0.312 \pm 0.169$ & $0.136 \pm 0.153$ \\
\bottomrule
\end{tabular}
\end{table}


\subsection{Permutation symmetry and simultaneous composition}
\label{sec:permutation}

A recurring claim for $\starG$ is that a new symmetry costs only a change of the
group argument (the cost is computational, set by the group order, rather than an
architectural redesign). We make this concrete for the
symmetric group $S_n$, the permutation symmetry central to molecular learning,
and for the simultaneous handling of two symmetries at once. The $\starG[S_n]$
algebra is built with exactly the machinery used for the cyclic and octahedral
cases: the irreducible representations of $S_n$ (Young's orthogonal form over
standard Young tableaux) assemble the unitary group-Fourier matrix $F_{S_n}$, and
nothing else changes. Three facts, all verified to machine precision for $S_4$:
(i)~$\starG[S_n]$ convolution block-diagonalises under $F_{S_n}$, so the product
is one matrix multiplication per Specht-module irrep (residual
$1.8\times10^{-14}$); (ii)~the decomposition is permutation-equivariant: under
left translation by any $\tau \in S_n$ the per-irrep singular values are
invariant ($2.7\times10^{-15}$); and (iii)~by the product-group theorem
(Theorem~\ref{thm:product_ring}), $F_{\bbZ_k \times S_n} = F_{\bbZ_k} \otimes
F_{S_n}$ handles a combined cyclic-and-permutation symmetry with one transform.
On a signal whose structure is genuinely coupled across both factors, the
product transform compacts it completely (top-$10\%$ energy fraction $1.000$)
while the cyclic factor alone ($0.565$) and the permutation factor alone
($0.445$) each leave most of the energy spread out. A single algebra thus
captures a joint symmetry that neither factor recovers, and adding the
permutation factor required no new architecture, only the Kronecker product of
Theorem~\ref{thm:product_ring}. These are algebraic and numerical demonstrations on a
finite permutation group: they establish that permutation-equivariance and
composition are intrinsic to the algebra, not that $\starG$ matches graph equivariant
networks for full three-dimensional, permutation-invariant molecular prediction,
which is an architectural question outside this algebraic claim.

\subsection{Scope: pooled predictive accuracy}
\label{sec:enn_comparison}

The results above are capabilities, and each is exact: enforcement, diagnosis,
composition, optimality. Pooled predictive accuracy on standard molecular
benchmarks is deliberately not among them, and we state the scope plainly. On
the full QM9 benchmark (130{,}831 molecules; a deliberate stress test whose
symmetry is only approximate and whose discriminating signal is largely
bond-topological), graph-based equivariant networks lead on pooled $R^2$
(SchNet $0.996$, MACE $0.985$ on the HOMO--LUMO gap, at $10^5$--$10^6$
parameters), and on the identical $48$-row molecule-level feature tensor a
well-trained standard MLP ($0.687$) exceeds both $\starG$ variants
($\starG$-SVD~+~Ridge $0.481$ at $144$ parameters). A within-isomer audit shows
that most of the ENN margin is an input-representation gap rather than an
algorithmic one (every molecule-summary method caps near within-isomer
$R^2 \approx 0.35$ versus $\approx 0.96$--$0.99$ for the graph networks, which
consume the bond topology that distinguishes isomers), and a data-scarce sweep
locates the regime boundary explicitly: the closed-form $\starG$-SVD leads
every trained baseline through $316$ training molecules, and the unconstrained
MLPs overtake between $316$ and $1{,}000$, beyond which the sample-complexity
prior should not be extrapolated; both effects are regime- and
input-specific rather than a ranking of methods. The complete accuracy tables,
learning curves, and audit protocol are in SI~Section~\ref{sec:qm9_accuracy};
the contribution of this paper is structural and diagnostic, not an
improvement in predictive accuracy, and where chemistry-aware pooled accuracy
on a single target with generous data and compute is the goal, graph ENNs
remain the right tool.

One accuracy result does belong to the exactness axis of
Section~\ref{sec:compounding}. The augmented MLP, which attempts to
\emph{learn} the invariance from $|G|$-fold orbit-augmented training data and
is the closest non-equivariant analogue of $\starG$, collapses at full QM9
scale ($R^2 \approx 0.002$ on gap, $\alpha$, and $\mu$), and a capacity sweep
to $575{,}489$ parameters lifts it only to $R^2 \approx 0.03$
(SI~\S\ref{sec:aug_capacity}): two orders of magnitude below a plain MLP on
identical features. Orbit augmentation sits on the approximate side of the
exact/approximate line ($\varepsilon \sim 10^{-2}$), and at scale it fails to
deliver even approximate invariance; the algebra builds the invariance in
exactly, at zero training cost.

\subsection{Symmetry in the representation, not the architecture}
\label{sec:representation}

The accuracy comparisons of Section~\ref{sec:enn_comparison} place $\starG$ at
an input disadvantage: it consumes a molecule-level summary while the ENNs
consume the full atomic graph, and the within-isomer audit
(SI~Section~\ref{sec:qm9_accuracy}) showed that every tested model on that
summary plateaus near $R^2_{\mathrm{within}} \approx 0.35$. That plateau is a
property of the featurization, not of the algebra. To test the algebra on even ground we build
a \emph{per-atom} $\starG$ descriptor: for each atom the local environment
(neighbours within a cutoff, resolved by element) is sampled over the $24$
octahedral frames and reduced to its per-irrep power spectrum, a
$\starG$-native readout that is exactly rotation-invariant (to $10^{-16}$), and
the per-atom invariants are pooled over atoms. The symmetry lives entirely in
this representation; the predictor is a symmetry-\emph{blind} gradient-boosted
regressor with no equivariant architecture.

On full QM9 this reaches within-isomer $R^2 = 0.785 \pm 0.003$ and pooled
$R^2 = 0.895 \pm 0.001$ (mean over 3 seeds) on the HOMO--LUMO gap, versus the
$\approx 0.35$ plateau of all tested molecule-level-summary models and $\approx 0.99$
for the graph-based ENNs; even a linear ridge regressor on the same descriptor
clears it ($0.46$). The equivariance benefit therefore does not require an equivariant
network: it can be carried by the data representation and handed to an
arbitrary learner. In this role $\starG$ is a composable equivariant
featuriser, one that decouples symmetry from architecture and composes several
symmetries by the Kronecker rule $F_{G_1}\otimes F_{G_2}$ with no redesign.
The remaining gap to the graph ENNs is honest: message passing still resolves
bond topology that a fixed local descriptor does not fully capture, and closing
it is a matter of richer descriptors rather than of the algebra. No algebraic
rigidity prevents combining the two: the descriptor is already computed per
atom, so its per-irrep spectra can serve directly as node features inside any
message-passing network, and message functions could themselves be built from
$\starG$ products, making each layer's equivariance exact by construction. We
do not pursue that hybrid here because it would re-entangle the two axes this
section deliberately separates (representation versus architecture), but it is
an engineering question, not a mathematical obstruction.
Figure~\ref{fig:paradigm} summarizes the two routes to equivariance this
result separates: the symmetry can live in the layers of the model, or in the
algebra of the data.

\begin{figure}[!ht]
\centering
\includegraphics[width=0.85\textwidth]{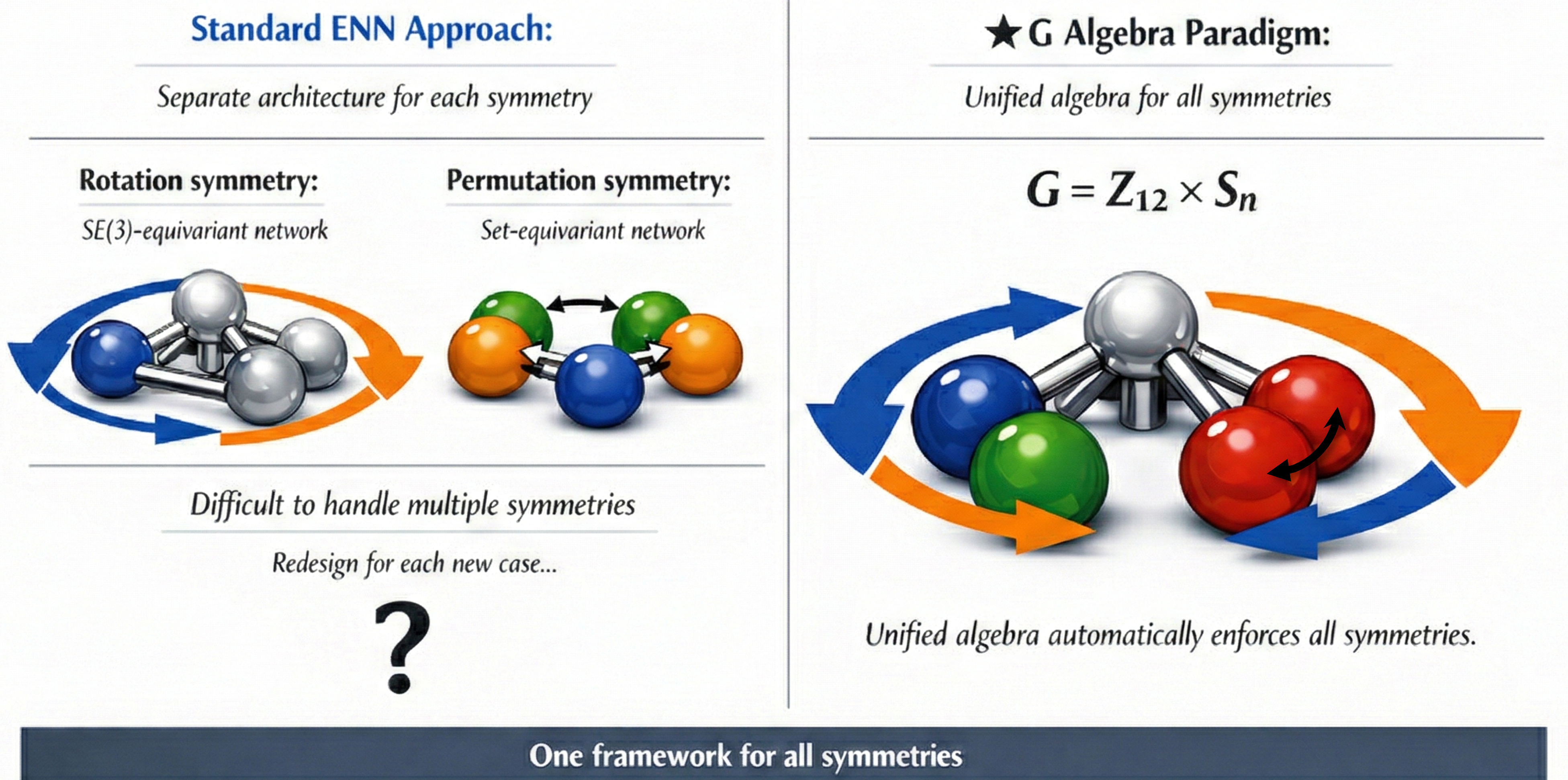}
\caption{\textbf{Two routes to equivariance.}
\textit{Left (architectural equivariance):} the symmetry constraint is built
into the layers, so the set of symmetries is fixed when the architecture is
designed.
\textit{Right (algebraic equivariance):} the same algebra handles any finite
or band-limited compact group; composing symmetries requires only specifying
$G_1 \times G_2$ in the Fourier transform.}
\label{fig:paradigm}
\end{figure}

\subsection{Extension to Compact Groups: the Band-Limited \texorpdfstring{$\starG$}{starG} Algebra}
\label{sec:compact}

The construction above used a finite group only to make the convolution tensor
$\calT$ finite-dimensional; the algebraic content is supplied entirely by the
Peter--Weyl theorem, which holds for \emph{every compact group}. For compact $G$
with normalized Haar measure, $L^2(G)$ decomposes as a Hilbert direct sum over
the countable set $\widehat{G}$ of irreducible unitary representations, the group
Fourier transform sends $f \mapsto (\widehat{f}(\rho))_{\rho \in \widehat G}$ with
$\widehat{f}(\rho) = \int_G f(g)\,\rho(g)\,dg \in \mathbb{C}^{d_\rho \times d_\rho}$,
convolution becomes block-diagonal matrix multiplication exactly as in
Eq.~(\ref{eq:spectral}), and the Plancherel identity reads $\|f\|^2 = \sum_{\rho}
d_\rho \,\|\widehat{f}(\rho)\|_{\mathrm{HS}}^2$. For $G=\mathrm{SO}(3)$ the blocks
are the Wigner $D^{(l)}$ matrices with $d_\rho = 2l+1$, $l=0,1,2,\dots$: the
angular-momentum channels of physics are \emph{literally} the irrep blocks of the
$\starG$ algebra. The $\starG$ product, transpose, norm, and equivariance
therefore carry over verbatim; the only change is that the irrep sum is now
countably infinite. The $\starG$-SVD likewise generalizes to the simultaneous SVD
of the irrep blocks, and optimal low-rank approximation reduces block by block to
the matrix Eckart--Young theorem, with the Plancherel weights $d_\rho$ selecting
which singular values to retain across blocks; for Hilbert--Schmidt operators the
weighted singular-value sequence is summable and the optimum is attained.

The case that matters in practice is finite-dimensional and needs no such
hypothesis. Call a signal \emph{band-limited} to a finite set $S \subset
\widehat{G}$ if $\widehat{f}(\rho)=0$ for $\rho \notin S$ (for $\mathrm{SO}(3)$,
$S = \{l \le l_{\max}\}$).

\begin{theorem}[Band-limited reduction]\label{thm:bandlimited}
Let $G$ be a compact group and $S \subset \widehat{G}$ finite. The space of
signals band-limited to $S$ is closed under the $\starG$ product and carries a
$\starG$ algebra isomorphic to the finite-dimensional $\starG$ algebra over the
irrep set $S$. Consequently every finite-group result of this paper, including
the $\starG$-SVD Eckart--Young optimality (Theorem~\ref{thm:eckart_young}),
holds verbatim for band-limited signals on any compact group.
\end{theorem}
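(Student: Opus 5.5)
The proof works entirely in the Fourier domain through the Peter--Weyl/Plancherel isomorphism. Let $L^2_S(G)$ be the subspace of $L^2(G)$ of functions with $\widehat f(\rho)=0$ for $\rho\notin S$. Peter--Weyl gives $L^2_S(G)=\bigoplus_{\rho\in S}\mathrm{span}\{g\mapsto \rho(g)_{ij}\}$, a finite sum of matrix-coefficient spaces. So it is finite-dimensional of dimension $N_S=\sum_{\rho\in S}d_\rho^2$. The restricted transform $\Phi_S: f\mapsto(\widehat f(\rho))_{\rho\in S}$ is a linear bijection onto $\mathcal{B}_S=\prod_{\rho\in S}\bbC^{d_\rho\times d_\rho}$: injective by Plancherel, surjective by dimension count. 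Plancherel, $\|f\|^2=\sum_{\rho\in S}d_\rho\|\widehat f(\rho)\|_{\mathrm{HS}}^2$, shows that $\Phi_S$ is an isometry once $\mathcal{B}_S$ carries the weighted norm with weights $d_\rho$. These are the same weights that appear, after normalization, in the finite-group Plancherel weight $d_\rho/|G|$ of Theorem~\ref{thm:eckart_young}.

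\textbf{Step 1: closure.} For $f,h\in L^2_S(G)$ I would check $\widehat{f* h}(\rho)=\widehat f(\rho)\widehat h(\rho)$ by the usual Fubini computation with Haar invariance, with the factor order fixed by the convention $\widehat f(\rho)=\int f(g)\rho(g)\,dg$. The product vanishes whenever $\rho\notin S$, so $f*h\in L^2_S$. The same argument applies entrywise to $\ell\times m$ arrays of tubes: $(\calA\starG\calB)_{ij}=\sum_k \calA_{ik}*\calB_{kj}$ has transform $\sum_k\widehat{\calA}_{ik}(\rho)\widehat{\calB}_{kj}(\rho)$. For the transpose, $\widehat{\calA^H}(\rho)=\widehat{\calA}(\rho)^H$ follows from unitarity of $\rho$ and $g\mapsto g^{-1}$ preserving Haar measure. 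Hence the band-limited space is closed under product and $H$.

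\textbf{Step 2: the isomorphism.} The finite-dimensional $\starG$ algebra over the irrep set $S$ is, by definition and by Eq.~(\ref{eq:spectral}) read abstractly, the algebra with underlying space $\mathcal{B}_S$ (tensored with $\ell\times m$ indices), blockwise multiplication, blockwise conjugate transpose, and the $d_\rho$-weighted Frobenius norm. Step 1 shows that $\Phi_S$ intertwines product, transpose and norm, so it is a $*$-algebra isometric isomorphism. For tensors over products of several groups I would invoke Theorem~\ref{thm:product_ring} to reduce to a single group.

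\textbf{Step 3: transfer.} Every finite-group result in the paper is a statement about blockwise matrix operations together with weighted Frobenius norms. The $\starG$-SVD, QR, eigendecomposition, least squares, functional calculus and both forms of Theorem~\ref{thm:eckart_young} therefore transport through $\Phi_S^{-1}$ unchanged. For Eckart--Young form~(i), the competitor class is defined by block ranks, which $\Phi_S$ preserves, and the error formula follows from the isometry. For form~(ii), the tubal rank is defined through the SVD, which is itself transported.

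\textbf{Main obstacle.} The main obstacle is making ``verbatim'' honest. The finite-group proofs use $F_G$ as an explicit invertible $|G|\times|G|$ matrix and the weight $d_\rho/|G|$, whereas no such square matrix exists here. I would avoid this by first restating the finite-group results intrinsically, in terms of the block algebra $\prod_\rho\bbC^{d_\rho\times d_\rho}$ with Plancherel weights, which only requires rescaling the Haar measure (counting measure$/|G|$). I would then observe that those proofs never use completeness of the irrep set. A secondary point is real-valuedness: if real tensors are required, I would restrict to $S$ closed under $\rho\mapsto\bar\rho$, so that the real subspace is preserved.
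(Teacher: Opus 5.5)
Your proposal is correct and takes the same route as the paper's two-line proof. That proof truncates the Peter--Weyl sum to the blocks in $S$ and uses the convolution theorem to show that blockwise multiplication preserves the truncation, so the band-limited algebra is the weighted block algebra $\bigoplus_{\rho\in S}M_{d_\rho}(\bbC)$; your bijectivity and isometry argument, the transpose check, and your remark that the finite-group proofs must be read intrinsically (there is no square $F_G$ and no complete irrep system) fill in what the paper leaves implicit.

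One cosmetic slip: under the convention $\widehat f(\rho)=\int_G f(g)\rho(g)\,dg$, the band-limited space is spanned by the entries of $\rho(g)^H$ (matrix coefficients of $\bar\rho$, matching the paper's inversion formula), not of $\rho(g)$. This leaves your dimension count and all later steps unchanged.
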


The proof is immediate: band-limiting truncates the Peter--Weyl sum to the
finitely many blocks in $S$, and block-diagonal multiplication preserves the
truncation, so the band-limited algebra \emph{is} the finite $\starG$ algebra
over $S$. Its reduction target is therefore the same finite-dimensional algebra
the Lean~4 development formalizes over the irrep set $S$; the Peter--Weyl
truncation that effects the reduction is a pen-and-paper argument and is not
itself separately machine-checked.

We validate this reduction numerically for $G=\mathrm{SO}(3)$ band-limited to
$l \le 3$, applying the \emph{same} $\starG$-product and $\starG$-SVD code used
for the finite groups directly to the Wigner-$D$ Fourier blocks
(\texttt{bandlimited\_so3\_experiment.py}). Every finite-group guarantee holds
to machine precision: per-block SVD agreement $2\times10^{-14}$, Eckart--Young
closed-form tail-sum match $2\times10^{-12}$, exact
$\mathrm{SO}(3)$-equivariance of the per-irrep power and singular values to
$2\times10^{-15}$, and a compression advantage on genuinely
$\mathrm{SO}(3)$-structured operators that vanishes on unstructured controls.
Band-limiting does not reintroduce the approximate side of
Section~\ref{sec:compounding}'s dichotomy: truncation is an equivariant
projection onto a subalgebra closed under the product, so iterating a
band-limited operator remains exactly equivariant at every depth, and the
$O(L^{-s/2})$ tail incurred on merely smooth signals
(SI~Proposition~\ref{prop:bandlimit_error}) is a resolution error confined to
the symmetry-allowed channels, chosen in practice from the observable spectral
tail of the data (SI~Remark~\ref{rem:truncation_exactness}).

The same mechanism reaches the crystallographic space groups, the actual symmetry
of matter. A space group is an extension of a point group by the lattice
translations, and because the translations are dual to the compact Brillouin zone,
imposing periodic boundary conditions replaces the space group by a finite group;
the calculus, its Eckart--Young optimality, and its selection-rule enforcement then
apply verbatim, resolved per Brillouin-zone point through the little-group
irreducible representations (SI~Section~\ref{sec:spacegroups}). This is a reduction
to the finite theory rather than new analysis: it extends the point-group treatment
of crystals used above to all $230$ space groups (and, with a projective twist, the
non-symmorphic ones), and its $\Gamma$-point instance is exactly the point-group
selection-rule projection applied to cubic crystals in
Section~\ref{sec:crystals}. Antiunitary time reversal (magnetic and grey groups)
and band topology are out of scope, as detailed there. Nor does the reach stop
at crystals: the Mackey construction extends the same machinery through
compact little-group fibers to the noncompact Euclidean and Poincar\'e
groups, validated numerically on the Poincar\'e mass shell and on an
$\mathrm{SU}(2)$ lattice-gauge plaquette
(SI~Section~\ref{sec:wignermackey}; Discussion).

This has a direct bearing on equivariant neural networks. Every
$\mathrm{SO}(3)$-equivariant architecture used in practice (e3nn, NequIP,
MACE) represents features in a finite set of angular-momentum channels $l \le
l_{\max}$ (typically $l_{\max}\in\{1,2,3\}$); their feature spaces are precisely
band-limited $\star_{\mathrm{SO}(3)}$ modules. By Theorem~\ref{thm:bandlimited}
the $\starG$-SVD furnishes the \emph{provably optimal} equivariant low-rank
operation on those modules in closed form (Corollary~\ref{cor:unification}), whereas the network fits a map within that same equivariant class by gradient descent, with no optimality characterization.
Architectural and algebraic equivariance thus act on the same band-limited
representation space; what the $\starG$ algebra adds is the Eckart--Young
guarantee and the closed-form per-irrep decomposition
(Table~\ref{tab:wigner}) that standard ENN training does not expose. The finite octahedral
computation of our Wigner--Eckart experiment is one instance of this band-limited
algebra; a continuous $\star_{\mathrm{SO}(3)}$ computation at $l_{\max}=2$ is
another.

This can be stated precisely. Let the feature space be band-limited to a finite
$S \subset \widehat{\mathrm{SO}(3)}$.

\begin{corollary}[Equivariant linear layers are $\starG$ operations]\label{cor:unification}
By Schur's lemma, every $\mathrm{SO}(3)$-equivariant linear map on this space is
block-diagonal in the irrep basis, i.e.\ a $\star_{\mathrm{SO}(3)}$ operation; and
among such maps of $\starG$-rank at most $k$, the $\starG$-SVD truncation
is Frobenius-optimal (Theorem~\ref{thm:eckart_young}).
\end{corollary}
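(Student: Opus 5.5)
The argument has two parts: first identify the equivariant linear maps as $\star_{\mathrm{SO}(3)}$ operations, then apply Theorem~\ref{thm:eckart_young} through the band-limited reduction of Theorem~\ref{thm:bandlimited}.

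\emph{Step 1 (setup).} I read ``this space'' as the band-limited regular-type module
$$V=\bigoplus_{\rho\in S}\mathbb{C}^{d_\rho}\otimes W_\rho .$$
Here $\mathrm{SO}(3)$ acts through $\rho$ on the first factor, and $W_\rho$ is a multiplicity space whose dimension depends on the channel count ($m_\rho$ copies of $\rho$). This is the Peter--Weyl image of band-limited tensors with $m$ channels. A linear map $L:V\to V$ is equivariant if $L\,\pi(g)=\pi(g)\,L$ for all $g$.

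\emph{Step 2 (Schur).} Decompose $L$ into components
$$L_{\rho\rho'}:\mathbb{C}^{d_{\rho'}}\otimes W_{\rho'}\to\mathbb{C}^{d_\rho}\otimes W_\rho .$$
Equivariance makes each component an intertwiner from $\rho'^{\oplus}$ to $\rho^{\oplus}$. By Schur's lemma over $\mathbb{C}$, $L_{\rho\rho'}=0$ when $\rho\neq\rho'$. The irreps of $\mathrm{SO}(3)$ are pairwise inequivalent and labelled by $l$, so there are no distinct isomorphic blocks. When $\rho=\rho'$, Schur gives $L_{\rho\rho}=I_{d_\rho}\otimes \hat{\mathcal{B}}(\rho)$ for some matrix $\hat{\mathcal{B}}(\rho)$ on the multiplicity space.

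Conversely, any family of blocks $(\hat{\mathcal{B}}(\rho))_{\rho\in S}$ defines, via the inverse transform, a band-limited tensor $\mathcal{B}$. By the block-diagonalization in Eq.~(\ref{eq:spectral}), extended by Theorem~\ref{thm:bandlimited}, the map $x\mapsto\mathcal{B}\star_{\mathrm{SO}(3)}x$ acts as exactly this block matrix. So equivariant maps are precisely left $\star_{\mathrm{SO}(3)}$-multiplications by band-limited tensors, which is the first assertion.

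The subtle point is matching conventions. The paper's product multiplies Fourier blocks as $\hat{\mathcal{A}}(\rho)\hat{\mathcal{B}}(\rho)$ on the group index. In the regular-representation picture, the intertwiner acts on the multiplicity (right) factor, and the group acts on the left factor. I would fix the convention so that the commutant of left translation corresponds to right convolution, or equivalently swap the roles. Either way the result is a per-irrep matrix, hence a $\star_{\mathrm{SO}(3)}$ operation. This bookkeeping is the main obstacle; I would verify it first on the finite group $O$, where it is the classical statement that the commutant of the regular representation is the right regular algebra.

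\emph{Step 3 (optimality).} Take the Frobenius norm to be the Plancherel norm, so that the squared norm is $\sum_\rho d_\rho\|\hat{\mathcal{B}}(\rho)\|_F^2$ with the factor $d_\rho$ coming from $I_{d_\rho}\otimes$. Then the set of such maps with $\star_G$-rank at most $k$ is exactly the competitor class in Theorem~\ref{thm:eckart_young}(ii), transported through the isomorphism of Theorem~\ref{thm:bandlimited}. The tubal truncation $\mathcal{A}_k$ is therefore Frobenius-optimal in that class. If a per-block rank profile is intended instead, part~(i) of Theorem~\ref{thm:eckart_young} applies verbatim.
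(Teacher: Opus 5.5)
Your proposal is correct and follows the paper's own route: Schur's lemma forces block-diagonality in the irrep basis (hence a $\star_{\mathrm{SO}(3)}$ operation), and optimality is Theorem~\ref{thm:eckart_young} transported to the band-limited setting by Theorem~\ref{thm:bandlimited}; your identification of the operator Frobenius norm $\sum_\rho d_\rho\|\hat{\mathcal{B}}(\rho)\|_F^2$ with the Plancherel norm is a detail the paper leaves implicit. On the convention you flag, Proposition~\ref{prop:equivariance} settles it with no swap of roles: $(g\cdot\calA)\starG\calB=g\cdot(\calA\starG\calB)$ holds for every $G$, so the equivariant maps are the right multiplications $x\mapsto x\starG\calB$, whereas the left multiplications $x\mapsto\calB\starG x$ you wrote are not equivariant in general for non-abelian $G$.
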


\noindent The \emph{linear} equivariant layers of $\mathrm{SO}(3)$-equivariant
networks (e3nn, NequIP, MACE), their self-interaction and channel-mixing maps, are
therefore particular $\star_{\mathrm{SO}(3)}$ operations, fit by gradient descent;
the Clebsch--Gordan tensor-product layers are \emph{bilinear} couplings between
irreps and lie outside this operation class. The block-diagonality itself is
classical (Schur), and the correspondence between equivariant linear maps and
convolutions is due to Cohen, Geiger, and
Weiler~\citep{cohen2019general}; what the algebra contributes is the
\emph{optimal} member of that linear class in closed form. This is a structural
relationship, not a claim that a trained network reduces to a single $\starG$
layer: such networks compose many layers with nonlinear gates. The optimum is
simply present in the operation class and left unused.

\section{Discussion}

The through-line of this work is exactness. Once symmetry is made an algebraic
property of the data, equivariance holds identically rather than approximately,
and Section~\ref{sec:compounding} showed why that distinction is qualitative
rather than quantitative: approximate equivariance compounds under
iteration and composition to $O(1)$ symmetry violation, in the worst case and
generically in stable norm-preserving computation, while exact
equivariance holds at unbounded depth. On that exact footing, one construction
serves two ends usually pursued apart: optimal
equivariant compression and the read-out of physical symmetry structure. A single
consequence runs through every experiment: because the symmetry lives in the
algebra of the data rather than in the layers of a model, the equivariant structure
can be handed to an arbitrary, symmetry-blind learner (a gradient-boosted regressor
breaks a featurization ceiling that every molecule-summary model hits; a projection
repairs a trained graph network's forbidden-channel leakage), and the guarantees it
carries, optimal low-rank compression, exact selection rules, and per-irrep
separation, are closed-form rather than learned. The most
vivid instance of the second is that the $\starG$ tensor algebra, applied
over a physically meaningful group (the octahedral subgroup of SO(3)), surfaces,
directly from molecular geometry data, octahedral finite-subgroup signatures
consistent with the angular
momentum selection rules of the Wigner--Eckart theorem. The T$_1$/A$_1$ predictive power ratio separates the
vector observable (dipole magnitude, $\approx 2.1$) from the scalar observables
($\leq 0.2$; Table~\ref{tab:wigner}) by an order of magnitude,
and the isotropic polarizability's near-zero T$_1$ dependence confirms the
representation-theoretic absence of $l\!=\!1$ content in symmetric rank-2 tensors.
These patterns emerge without any quantum-mechanical input, demonstrating that the
$\starG$ framework functions as a \emph{spectroscope for physical symmetry}: it
decomposes empirical predictions into irreducible representation channels that
expose the group-representation structure of the observables.

This spectroscopic capability rests on the theoretical results established above: (i)~the Peter--Weyl spectral decomposition of the convolution tensor,
which expresses group structure through the sparse core tensor $\calC$ (equation
(\ref{eq:spectral})); (ii)~the Eckart--Young optimality guarantee for the
$\starG$-SVD (Theorem~\ref{thm:eckart_young}), instantiating the transform-domain
t-SVD optimality~\citep{kilmer2021tensor} at the group-Fourier transform and
extending it to non-abelian groups; (iii)~the product group ring
isomorphism (Theorem~\ref{thm:product_ring}), which composes multiple symmetries
seamlessly (\Rsq{} $= 1.000$ for $\bbZ_6 \times \bbZ_4$ vs.\ $\leq 0.23$ for
single factors). Because the group is a parameter, it can in principle be selected
by explicit model comparison over a candidate library; we do not develop this as a
contribution. Automatic symmetry discovery is a separate, active line of work, for
continuous groups~\citep{dehmamy2021lieconv,yang2023gan}, for symmetry breaking from
an assumed group~\citep{wang2023relaxed}, and for finite
groups~\citep{karjol2023discrete}.

Underlying these is a canonicity result (SI~Theorem~\ref{thm:necessity}): the
$\starG$ algebra is not one arbitrary member of the transform-based
($\starM$/t-product) family but the unique one compatible with the group action.
Requiring equivariance forces the transform, and with it the algebra's entire
matrix-mimetic toolbox (the product, and the QR, symmetric-eigendecomposition,
least-squares, and SVD factorizations built on it), to be determined by $G$ rather
than chosen. The low-rank optimum inherits this canonicity: it is intrinsic to the
data and its symmetry (SI~Corollary~\ref{cor:intrinsic}). In this sense the
construction is a statement about $G$ rather than a design for it.

The practical implications follow the same honest line as the experiments. Where
data is scarcest, the closed-form $\starG$-SVD ridge regressor is the most
sample-efficient method on QM9: at $100$ molecules it leads the unconstrained
networks with two orders of magnitude fewer parameters and no training
(SI~Section~\ref{sec:qm9_accuracy}), a lead that narrows and reverses as data
grows. A compact,
exactly equivariant, closed-form representation is thus most valuable in exactly
the resource-constrained scientific settings where interpretability and parameter
efficiency matter, even if scale eventually wins on aggregate accuracy.

\subsection*{Scope and limitations}

\paragraph{Realizability of the low-rank operator.} It is worth stating precisely
what the $\starG$-SVD certificate does and does not guarantee. The discarded
singular-tube mass is the exact Frobenius error of the rank-$k$ truncation, so it
certifies \emph{approximation quality}; it does not certify that the retained
equivariant subspace admits a \emph{localized} symmetric basis. For finite groups
this distinction is immaterial, since the transform domain is a finite direct sum
of irrep blocks. It becomes substantive in the band-limited continuous-group
setting (Section~\ref{sec:compact}), where the group-Fourier fibers vary over a
continuous parameter, for a crystalline system the Brillouin zone. There the
singular values are gauge-invariant scalars, whereas any topological content
resides in the winding of the singular vectors; consequently two operators with
identical singular-value fields, and hence an identical certificate at every
rank, can carry different Chern number and therefore differ in whether their
retained subspace admits an exponentially-localized symmetric (Wannier) basis.
The certificate is, by construction, blind to this obstruction. More structurally,
wherever $\starG$ is exact its group algebra is semisimple, so the obstruction
vanishes identically in the finite-group regime and can arise only in the
continuous-group extension. We regard this as
a scope statement rather than a defect: $\starG$ delivers provably optimal,
symmetry-preserving approximation together with its closed-form error, but whether
the compressed operator admits a physically-localizable realization is a separate
representation-theoretic question, to be settled with a symmetry-indicator or
curvature diagnostic rather than the approximation certificate alone.

\paragraph{The per-irrep readout is a second-order invariant.} The per-irrep power
spectrum $\|\hat f(\rho)\|^2$ underlying our interpretability analysis, and the
singular-tube energies defining the $\starG$-SVD certificate, are \emph{second-order}
(power-spectral) invariants. They are therefore invariant under a strictly larger
group than $G$: any per-irrep unitary $U$ with $|\hat U(\rho)|=1$ acts as an isometry
of the $\starG$ Frobenius norm, so the readout cannot in general separate distinct
$G$-orbits (it is orbit-\emph{incomplete}). Restoring orbit separation requires a
third-order (bispectral) invariant, as established for compact and finite groups by
the bispectrum-completeness literature~\citep{sanborn2023bispectral,mataigne2024selective}.
Our contribution is not orbit completeness but the closed-form, per-irrep
\emph{attribution} of predictive power to angular-momentum channels; where orbit
discrimination is required, the leading singular vectors (which carry the discarded
phase) or an explicit bispectral invariant should be used.

\paragraph{The compression advantage is a bounded, group-determined constant.} The
gap between $\starG$-SVD rank and matrix-SVD rank on octahedral and QM7-X data (exact
recovery at rank~3 versus rank~6--11) should not be read as an unbounded algebraic
separation. It is controlled by the block structure of the group-Fourier transform:
the achievable rank reduction is bounded by the largest irreducible-representation
dimension $d_{\max}$, is exactly~$1$ for abelian (e.g.\ cyclic) groups where every
irrep is one-dimensional, and can be~$1$ even for a non-abelian group when the data
occupies a single high-dimensional irrep. The effect is the classical
Peter--Weyl/Schur block-diagonalization~\citep{serre1977linear} made quantitative for
low-rank approximation; its value is the closed-form, per-block optimality and error,
not a large or unbounded advantage over matrix methods.

\paragraph{The compact setting is the natural maximal domain.} The restriction
to finite and compact groups is not a modeling convenience; it marks, in a
precise sense, the boundary of the theory. The $\starG$-SVD needs two
ingredients: a decomposition of the group algebra into irreducible blocks, and
a Plancherel weight with which to compare singular values across blocks. Both
survive exactly one step further. For separable unimodular type~I locally
compact groups (the Euclidean motion groups, for instance) the Plancherel
theorem replaces the sum over $\widehat{G}$ by a direct integral, and the
per-irrep Eckart--Young statement becomes a measurable field of blockwise
SVDs~\citep{folland2016course}; the measurable-selection analysis this
requires is available in the literature~\citep{franco2024measurability}, so
the extension is corollary-level rather than new mathematics. For the
semidirect-product groups that carry the physics, the Euclidean motion groups
and the Poincar\'e group, this direct integral is moreover rarely the object one
must compute. The Mackey construction realizes each irreducible as a
\emph{compact} little-group representation over a continuous orbit of momenta, so
the non-compactness lives in the base while the fiber stays finite-dimensional;
linearizing a covariant system about a background hands one exactly that compact
little group, and the boost covariance is then discharged by a symbolic
conjugation along the orbit, under which the singular values are invariant,
rather than by any numerical integration. Carried out on the universal cover
$\bbR^{1,3} \rtimes \mathrm{SL}(2,\mathbb{C})$, whose compact little groups are
$\mathrm{SU}(2)$ and the double cover of $\mathrm{SO}(2)$, the finite
$\starG$-SVD thus reaches
the entire physical one-particle spectrum, massive and massless-helicity,
integer and half-integer spin alike, with continuous-spin representations
(never observed in nature) and genuinely boost-covariant or bootstrap
observables reserved for
the direct integral; we develop this Wigner--Mackey compact-fiber reduction, and delimit
its residual, in Supplementary Information Section~\ref{sec:wignermackey}. Beyond
type~I the construction fails in principle rather than in practice: by Glimm's
theorem~\citep{glimm1961type}, a non-type-I group, a class that includes free
groups and most infinite discrete groups, admits no standard Borel dual, and
direct-integral decompositions into irreducibles, though they exist, are no
longer essentially unique, so ``per irrep'' ceases
to be well-defined; what remains available there, low-rank approximation in the
group von Neumann algebra through generalized $s$-numbers, is classical
functional analysis~\citep{fack1986generalized} and carries no irrep
structure. Genuinely infinite-dimensional groups such as loop groups and
$\mathrm{Diff}(S^1)$ fail earlier still: they are not locally compact and have
no Haar measure, and the von Neumann algebras attached to their
positive-energy representations are type~III, with no trace and hence no
Frobenius norm to optimize. In the other direction, ``beyond semisimple'' is
vacuous where data lives: over $\mathbb{R}$ or $\mathbb{C}$, Maschke's theorem
makes every finite (and compact) group algebra semisimple, and non-semisimple
group algebras arise only in positive characteristic, where no inner product,
and hence no SVD, exists. The band-limited compact setting of
Theorem~\ref{thm:bandlimited} is therefore essentially the largest arena in
which a per-irrep Eckart--Young theorem with a canonical block decomposition
can be stated.

\paragraph{Choice of benchmark.} QM9 is a demanding setting for a symmetry
algebra, and we used it deliberately as a stress test rather than a showcase:
molecular conformers are only \emph{approximately} symmetric, and the
property-discriminating signal is largely bond-topological, which favours graph
message passing over any symmetry decomposition. The framework's home turf is data
whose symmetry is exact and discriminative, which is why the crystalline
selection-rule results (Section~\ref{sec:crystals}), together with
the ENN-class within-isomer discrimination reached even on the unfavourable QM9
turf (Section~\ref{sec:representation}), are where $\starG$ shows its clearest
advantage.

\paragraph{Angular resolution of finite subgroups.} The $\starG$ framework
extends from finite groups to the band-limited setting of any
compact group through
the band-limited reduction of Section~\ref{sec:compact}, so continuous
symmetries such as $\mathrm{SO}(3)$ are covered once a maximum angular-momentum
channel $l_{\max}$ is fixed. The chiral octahedral group $O$ ($|G|=24$) used in
our Wigner--Eckart experiment is instead a discrete \emph{subgroup} of
$\mathrm{SO}(3)$; the qualitative selection-
rule signatures (scalars $A_1$-dominated, dipoles $T_1$-dominated,
polarisability $T_1$-insensitive) survive this discretisation cleanly
because they reflect the irrep content of each tensor rank rather than
the angular resolution. The quantitative accuracy ceiling, by
contrast, is set by the finite-group resolution: a molecule that is
well-resolved by $\mathrm{SO}(3)$ but not by $O$ has its angular
information aliased onto the 24-element orbit, and per-irrep $R^2$
saturates accordingly. Larger polyhedral groups lift this ceiling, and we
confirm it directly: with a machine-validated icosahedral $\starG$ algebra
($I$, $|G|=60$; five irreps $A,T_1,T_2,G,H$ with $\sum_\rho d_\rho^2 = 60$),
random signals band-limited to angular degree $\ell=4$ are aliased by the
octahedral 24-frame orbit (test $R^2 = 0.91$) but resolved exactly by the
icosahedral 60-frame orbit (test $R^2 = 1.00$), consistent with $I$ carrying a
dimension-5 irrep where $O$ tops out at dimension~3
(SI~Section~\ref{sec:ico_resolution}). A natural next step is predicting full
tensor-valued properties (not just scalar invariants), which would enable
complete Wigner--Eckart decomposition including $l\!=\!2$ selection rules for
the quadrupole moment.

\subsection*{Broader Impact}

The framework opens a path from Eckart--Young (optimal low-rank tensor
approximation preserving group structure) to Wigner--Eckart (angular momentum
selection rules for physical observables) through a single algebraic construction,
closing a circle between two theorems that, fittingly, share a common author in
Carl Eckart. By changing only the group $G$, the same machinery that achieves
parameter-efficient molecular property prediction in the data-scarce regime, without
matching graph networks on pooled accuracy, can decompose physical observables
by angular momentum channel and compose multiple symmetries. We anticipate applications in
materials science (crystallographic point groups), particle physics (gauge
symmetries), and drug discovery (molecular chirality), where the ability to
simultaneously predict properties and reveal their symmetry content could
accelerate scientific understanding. In lattice gauge theory in particular the
compact gauge group is truncated per link to a finite band of irreducibles,
precisely the band-limited compact setting in which the $\starG$-SVD and its
selection rules apply verbatim.

\subsection*{Formal Verification}

The $\starG$-specific reductions and algebraic identities have been machine-checked in the
Lean~4 proof assistant~\citep{demoura2021lean4} with
Mathlib~\citep{mathlib2020}, with zero unresolved proof obligations
(\texttt{sorry}), conditional on a short list of standard linear-algebra and
Fourier-analysis facts declared as axioms (enumerated below and in
SI~Section~\ref{sec:lean}). The core
algebraic theorems are proved outright, depending only on Lean's
foundational axioms: $\starG$ associativity, identity, distributivity,
transpose reversal, left equivariance (right equivariance for abelian $G$), the
product-group factorization of the convolution tensor and the Kronecker
construction of product-group irreps (Theorem~\ref{thm:product_ring}), and
invariance of the per-irrep Fourier power under the group action. Beyond the SVD, the matrix-mimetic character of the
algebra is itself machine-checked: the $\starG$-QR factorization (with
$\starG$-orthonormality $Q^{\!H}\!\star_G Q=\mathcal I$), the symmetric
$\starG$ eigendecomposition, and $\starG$ least-squares are each verified
by the same block reduction, resting on the axiom-free bridge lemma that
the group-Fourier transform sends the $\starG$ transpose to the transpose
of each irrep block. The Eckart--Young optimality result
(Theorem~\ref{thm:eckart_young}) rests on
classical facts the formalisation declares explicitly rather than
reproving. The real development carries exactly \emph{six} declared axioms:
the Plancherel identity for finite groups, the injectivity and surjectivity
of the group-Fourier transform (Peter--Weyl theory), two elementary lemmas
behind the matrix Eckart--Young theorem (capture of a low-rank matrix by an
orthogonal projection, and Ky~Fan's 1949 max-trace inequality), from which
the matrix Eckart--Young theorem is itself derived rather than assumed, and
the existence of the matrix QR factorization, pending its addition to
Mathlib. The symmetric
eigendecomposition and least-squares theorems carry no matrix axiom, being
proved from
Mathlib's spectral theorem and the normal equations respectively (their
$\starG$-orthogonality reductions rest on Fourier injectivity), and a
concrete axiom-free irreducible-representation system is constructed at which
the whole optimality suite is instantiated, so the theorems are demonstrably
non-vacuous. A parallel complex-typed module adds three complex-local
analogues of the Fourier axioms, used only to build cyclic-group (DFT)
witnesses that the real development cannot express; no real-development
theorem depends on them, bringing the repository's total to nine
\texttt{axiom} declarations. The octahedral Wigner--Eckart selection rules are not
asserted but \emph{derived} from the character table of the octahedral
group $O \cong S_4$, computed over the group itself: the irreducible
characters are proved orthonormal, and the polarisability rule
$\mathrm{Sym}^2(T_1) = A_1 \oplus E \oplus T_2$ (which carries no $T_1$)
follows by character arithmetic, relying additionally on Lean's compiled
evaluator (\texttt{native\_decide}). Full details, including the per-theorem
axiom certificate, are in
SI~Section~\ref{sec:lean}.

\subsection*{Relation to adjacent literatures}

The relationship to the transform-domain tubal literature, and to the
tubal Eckart--Young results of Mor and Avron (established for commutative
tubes), was set out with the $\starG$-SVD above
(Theorem~\ref{thm:eckart_young}). A second, physics-side lineage deserves the same care. In quantum many-body
computation, truncating in a symmetry-adapted block basis has been standard
practice for over two decades: non-abelian density-matrix renormalization
represents block states by irreducible representations of the symmetry group
and truncates irrep block by irrep block~\citep{mcculloch2002nonabelian}, and
symmetric tensor-network formalisms factor every tensor into degeneracy and
structural (Clebsch--Gordan) parts, with Wigner--Eckart machinery built into
standard
implementations~\citep{singh2010tensor,singh2012tensor,weichselbaum2012nonabelian}.
Per irrep block, the $\starG$-SVD performs the same symmetry-respecting
truncation, and we read that literature as two decades of independent evidence
for the utility of the operation. What it does not contain, and what this
paper contributes, is the operator-level theory of the truncation itself: the
group algebra as the ambient tensor algebra, the Eckart--Young theorem stating
that blockwise truncation under the Plancherel weights is the provably optimal
equivariant low-rank approximation together with its closed-form error, a
machine-checked proof of that optimality, the product-group composition
theorem, and the use of the per-irrep spectrum as a data-analysis and
symmetry-diagnostic instrument rather than as a compression step inside a
variational algorithm.

\subsection*{Outlook}

More broadly, the $\starG$ framework inverts the conventional relationship between
data and mathematics: rather than forcing tensorial data into the algebra of
vectors, we adapt the algebra to the symmetry of the data. The value of doing so
is not higher pooled accuracy. A well-trained network can match or exceed
$\starG$ on aggregate $R^2$, and we do not claim otherwise. The value is
exactness: a closed-form, exactly equivariant, provably optimal
representation (Theorem~\ref{thm:eckart_young}) obtained at roughly a hundred
parameters and no training, from which the per-irrep structure of the physics can
be read directly (Section~\ref{sec:wigner_eckart}, Table~\ref{tab:wigner}),
whose symmetry survives unbounded composition
(Section~\ref{sec:compounding}), and whose guarantees are machine-certified
rather than empirically estimated. The lesson is not that one never needs a
larger model; it is that when the data carries symmetry, the right algebra
supplies guarantees, exact selection rules, exact invariance, certified
optimality, that no amount of training and scale can, because approximation
error, however small, is on the wrong side of a qualitative line.

\section{Materials and Methods}

\subsection{Algorithmic Overview}
\label{sec:methods_overview}

The molecular prediction and diagnostics experiments share a common
$\starG$ pipeline of
four stages: (i)~\emph{group selection}, in which a finite group $G$ is fixed
and its convolution tensor $\calT_G$ together with the generalized Fourier
matrix $F_G$ are precomputed once and cached; (ii)~\emph{tensorial
featurization}, in which each input molecule is mapped to a tensor
$\calX \in \bbR^{n_f \times |G|}$ (or $\bbR^{n_f \times |G_1| \times \cdots
\times |G_d|}$ for product groups) by sampling a measurement basis at every
group element; (iii)~\emph{algebraic decomposition}, in which group-invariant
features are extracted from $\calX$ via the generalized Fourier transform and
the $\starG$-SVD; and (iv)~\emph{prediction}, in which a downstream regressor
(ridge regression for the linear pipeline; an MLP or a Neural-$\starG$
network for the neural pipelines) maps these features to the target property.
For the Wigner--Eckart experiment a fifth stage replaces the global
generalized Fourier power with a per-irrep decomposition
(Section~\ref{sec:wigner_eckart}). The
end-to-end procedure, the $\starG$ product computation, the $\starG$-SVD, the
invariant feature extractor, and the per-irrep decomposition are written as explicit
algorithm blocks in the Supplementary Information. Reference implementations are provided in
MATLAB (\texttt{core/StarGAlgebra.m}, \texttt{core/extractStarGFeatures.m},
\texttt{core/NeuralStarGFramework.m}) and in Python
(\texttt{python/StarGAlgebra.py}), the two passing a common regression-test
suite. The synthetic, data-scarce, and product-group results were generated
by the MATLAB scripts under \texttt{experiments/}; the flagship experiments
of Sections~\ref{sec:compounding}, \ref{sec:wigner_eckart},
\ref{sec:crystals}, \ref{sec:enn_comparison}, and \ref{sec:representation},
together with the band-limited $\mathrm{SO}(3)$ validation, were generated by
the deterministic Python scripts named in those sections (under
\texttt{python/}), each of which writes its result file directly.

\subsection{Symmetry-compounding protocol}
\label{sec:methods_compounding}

The compounding experiment (Section~\ref{sec:compounding};
\texttt{symmetry\_compounding.py}) is deterministic, offline, and seedless in
substance: one fixed draw (seed $0$) generates the generators, and every
reported quantity is a measured property of the resulting operators, not a
statistic over runs. The exact step is $T = \exp(S)$ with
$S = \tfrac{0.6}{\|A - A^{\!\top}\|_2}(A - A^{\!\top})$ and $A$ the Reynolds
average $\frac{1}{|G|}\sum_g P_g A_0 P_g^{\!\top}$ of a Gaussian matrix, so
$S$ commutes with the regular representation by construction; approximate
steps are $T_\varepsilon = \exp(S + \varepsilon B)$ with $B$ a fixed generic
antisymmetric generator normalized in spectral norm. The two trained
operators are fitted to the same exact step from the same $256$ Gaussian
input-output pairs. The approximate-class model is a two-hidden-layer
$\tanh$ perceptron ($24 \to 256 \to 256 \to 24$) trained by full-batch Adam
($5{,}000$ epochs, learning rate $10^{-3}$) with orbit augmentation, each
pair presented in five frames (the identity and four random group
translates); the exact-class model is the least-squares fit over the
$24$-dimensional commutant of the regular representation, an equivariant
linear layer in the sense of Corollary~\ref{cor:unification}, which recovers
the step to $10^{-15}$. Both are unrolled at inference exactly as the
synthetic operators are, and the perceptron's iterates remain on the training
scale throughout (state norm within $8\%$ of $\sqrt{24}$ at every
checkpoint), so its drift is not a norm artifact. The single-step
equivariance error is measured as the mean relative commutator residual over
$64$ Gaussian probe signals and all $24$ group elements; drift of the
$M$-fold iterate and the $A_2$ isotypic fraction are recorded at
logarithmically spaced checkpoints $M \le 400$. All invariants
(orthogonality, exact-step commutation, projector idempotence) are verified
at runtime and the result JSON is regenerable by a single script run.

\subsection{Crystal enforcement layer}
\label{sec:methods_enforcement}

The physicality layer of Section~\ref{sec:crystals} is the point-group
Reynolds projector assembled per crystal from its spglib-determined point
group, applied in the harmonic ($l = 0, 2, 4$) decomposition of the elastic
tensor; its composition with the nearest-positive-semidefinite projection
(eigenvalue clipping in Voigt form) yields the Born-stability repair. Models:
a tuned gradient-boosted regressor and a Ridge baseline
(\texttt{symmetry\_consistency.py}, \texttt{enforcement\_screening.py},
\texttt{anisotropy\_screening.py}; 3 seeds where train/test splits enter),
and a trained crystal graph network over a training-size sweep
(\texttt{crystal\_gnn\_data\_efficiency.py}; GPU job). Data: the $1{,}181$
crystals of the de Jong elastic-tensor set loaded through matminer.

\subsection{Feature Construction}
\label{sec:methods_featurizer}

For the single-group and product-group experiments, molecular features are inner
products with a rotating measurement basis at angles $\theta_g = 2\pi g/n$. For
the product group, axial features use periodic $z$-embeddings and coupled features
are angular$\times$axial products; the two actions commute because $z$-rotation
modifies $(x,y)$ not $z$, while $z$-translation modifies $z$ not $(x,y)$.

For the Wigner--Eckart experiment, features are computed under the 24 rotations
of the chiral octahedral group $O$ (6 face rotations at
$90^\circ/180^\circ/270^\circ$ about coordinate axes, 8 vertex rotations at
$120^\circ/240^\circ$ about body diagonals, 6 edge rotations at $180^\circ$
about edge midpoints, plus identity). This group is a subgroup of SO(3) whose
irreps (A$_1$, A$_2$, E, T$_1$, T$_2$) correspond to angular momentum channels
$l = 0, 0, 2, 1, 2$.

\subsection{Dipole Vector Computation}

The QM9 .xyz files include Mulliken partial charges $q_i$ as a fifth column.
The dipole vector is computed as $\boldsymbol{\mu} = \sum_i q_i \mathbf{r}_i$,
yielding components $\mu_x, \mu_y, \mu_z$ that transform as a rank-1 tensor
(vector) under rotation. This provides ground-truth targets with known
transformation properties for the Wigner--Eckart test.

\subsection{Per-Irrep Fourier Decomposition}

For each irrep $\rho$ of dimension $d_\rho$ with representation matrices
$\{\rho(g)\}_{g \in G}$, the Fourier transform of feature row $j$ is
$\hat{X}_j^\rho = \sqrt{d_\rho/|G|} \sum_{g} X(j,g) \, \rho(g)$, a
$d_\rho \times d_\rho$ matrix. The per-irrep power is $\|\hat{X}_j^\rho\|_F^2$.
Per-irrep features (one power value per feature row per irrep) are used
independently as predictors for each quantum property via ridge regression,
yielding per-irrep \Rsq{} values. Pseudocode is given in
Algorithm~\ref{alg:per_irrep}.

\subsection{Invariant Feature Extraction}
\label{sec:invariant_features}

Given a sample tensor $\calX \in \bbR^{n_f \times |G|}$, the invariant feature
vector concatenates seven complementary descriptors, all of which are exactly
invariant under the left action of $G$ (proofs in the SI Equivariance Proofs section):
(a)~the DC component $\bar{\calX}_j = \tfrac{1}{|G|} \sum_g \calX(j,g)$ for
each feature row $j$;
(b)~the AC energy $\sigma_j = \mathrm{std}_g\,\calX(j,g)$;
(c)~the total per-frequency power $\sum_j |\hat{\calX}(j,k)|^2$ for each
generalized Fourier bin $k$, where $\hat{\calX} = \calX F_G$ is the
generalized Fourier transform of $\calX$ along the group dimension;
(d)~per-row generalized Fourier power $|\hat{\calX}(j,k)|^2$ for the first
$K = 14$ equivariant rows;
(e)~the singular tube norms $\|\mathbf{s}_i\|_F$ for $i = 1, \ldots,
\min(p, q)$ obtained from the $\starG$-SVD of a reshaped $p \times q \times
|G|$ tensor (the reshape $(p, q)$ is chosen to maximize $\min(p, q)$);
(f)~the rows of $\calX$ identified as invariant by row-variance (constant
under the group action); and
(g)~four spectral statistics of the unfolded matrix (nuclear norm, spectral
norm, condition number, and entropy of the singular-value distribution).
Features are $z$-score normalized using statistics computed from training
data, and an unregularized intercept column is appended. Pseudocode is given
in Algorithm~\ref{alg:features}.

\subsection{Ridge Regression and Rank Selection}

The downstream linear model is ridge regression with hyperparameter
$\lambda$ selected from the geometric grid
$\{10^{-3}, 10^{-2}, \ldots, 10^{2}, 10^{3}\}$ by validation MSE. The
intercept is unregularized. The number of singular tubes retained in the
$\starG$-SVD feature block is set to $\min(p, q)$ where $(p, q)$ is the
optimal rectangular reshape of $n_f$ (Section~\ref{sec:invariant_features});
no further truncation is applied because the Eckart--Young theorem guarantees
a closed-form bound on the truncation error
(Theorem~\ref{thm:eckart_young}). Total parameter count for $\starG$-SVD +
Ridge is $1 + d_{\mathrm{feat}}$, where $d_{\mathrm{feat}}$ is the number of
non-degenerate feature columns retained after $z$-score normalization (for
example $143$ on full QM9 with the $48$-row featurizer and $63$ in the
data-scarce $14$-row setting, and $185$ on the product-group task).

\subsection{Baseline Architectures and Training}
\label{sec:baselines}

All four neural baselines share the same default hidden width
$[64, 32]$, ReLU activations on hidden layers, a linear output, He
initialization, full-batch validation, and the Adam optimizer
($\beta_1 = 0.9, \beta_2 = 0.999, \varepsilon = 10^{-8}$) with early stopping
on validation MSE (patience 20). They differ in three respects: (i)~the
input representation $X_{\mathrm{in}}$, (ii)~the training-set construction,
and (iii)~the parameter count. Table~\ref{tab:baselines_main} summarizes
these differences; the complete per-method specifications and pseudocode are
in SI~Section~\ref{sec:baseline_alg}, and the per-experiment hyperparameter
sheet is SI Table~1.

\textbf{Training recipe (all MLP baselines).} To ensure the baselines are
trained to a modern standard and the comparison is not confounded by an
unfavorable recipe, every MLP uses: (i)~target standardization on the
training split, with predictions de-standardized before scoring, matching the
implicit scaling of the closed-form $\starG$ ridge regressor; (ii)~L2 weight
decay selected on the validation split over the grid
$\{0, 10^{-4}, 10^{-3}, 10^{-2}\}$; and (iii)~a \texttt{ReduceLROnPlateau}
learning-rate schedule on validation loss. On the full-QM9 matched-input
comparison (SI~Section~\ref{sec:qm9_accuracy}) all molecule-level methods, both
$\starG$ variants and the three MLPs, consume the identical $48$-row angular
feature tensor of Section~\ref{sec:methods_featurizer}. The default hidden
width is $[64,32]$; the augmented-MLP capacity sweep
(SI~\S\ref{sec:aug_capacity}) varies it up to $[1024,512]$.

\begin{table}[h]
\centering
\caption{Baseline summary. Hidden $= [64, 32]$, ReLU, Adam, He init,
patience 20. Parameter counts are for QM9.}
\label{tab:baselines_main}
\begin{tabular}{@{}p{3.2cm}p{5.0cm}p{3.2cm}p{1.6cm}p{1.5cm}@{}}
\toprule
\textbf{Method} & \textbf{Input $X_{\mathrm{in}}$} & \textbf{Train data} &
\textbf{Equiv.} & \textbf{Params} \\
\midrule
Standard MLP   & raw slice $\calX(:, e)$, normalized        & native ($n$)        & approx & 5{,}249 \\
Invariant MLP  & $[\mathrm{mean}, \mathrm{std}, \min, \max]_g \calX$ & native ($n$) & exact  & 14{,}465 \\
Augmented MLP  & raw slice $\calX(:, e)$, normalized        & augmented ($|G|\,n$) & approx & 5{,}249 \\
Neural $\starG$ & $\starG$-features (Sec.~\ref{sec:invariant_features}) & native ($n$) & exact  & 62{,}625 \\
$\starG$-SVD + Ridge & $\starG$-features (Sec.~\ref{sec:invariant_features}) & native ($n$) & exact & \textbf{144} \\
\bottomrule
\end{tabular}
\end{table}

\subsection{Composed-symmetry task}
\label{sec:composed_methods}

Each sample is a point cloud of $8$ to $12$ points in $\bbR^3$, each carrying a
discrete label $c_i \in \{0, \ldots, C-1\}$ ($C=4$). For every point we build a
local rotation-covariant descriptor sampled over the $|O|=24$ octahedral frames and
tensor it with the label one-hot $\delta_{c_i}$, giving a per-point $\starG$ feature
on the product group $G = O \times \bbZ_C$; these features are mean-pooled over
points. The target applies a fixed random $\starG$ operator to the pooled feature
and reads out the power spectrum on the joint-nontrivial product irreps (spatial
irrep nontrivial \emph{and} color character nontrivial), weighted by fixed random
coefficients. Each joint power is a rotation-invariant pairwise coupling of point
environments modulated by $\cos\!\big(2\pi k (c_i - c_j)/C\big)$, so the target is
exactly invariant under any octahedral rotation and any global label shift and
vanishes if either the spatial or the label variation is removed. The
$\star_{O}$-only baseline uses the identical spatial descriptor with the label
folded into ordinary feature channels, which a global shift permutes and the
rotation-invariant readout does not undo. All models train for $300$ epochs (Adam,
learning-rate warmup, early stopping on validation loss); the SchNet baseline is
the PyTorch Geometric reference implementation with the label supplied as an
additional node feature. The construction, the four models, the validation suite,
and the sweep driver are in the code repository
(\texttt{composed\_symmetry\_knockout.py}).

\subsection{Reproducibility}

All \emph{learning} experiments use at least 3 random seeds (5 for the
full-QM9 $\starG$, MLP,
and SchNet rows and the data-scarce sweep) and a 70/15/15 train/validation/test
split unless a section states its own protocol; the deterministic
demonstrations (the compounding experiment, the enforcement projections, and
the band-limited $\mathrm{SO}(3)$ validation) involve no training and are
exactly reproducible by a single script run. The same random seeds (\texttt{seed}, $111 \cdot \texttt{seed}$ and
$31 \cdot \texttt{seed}$) drive train/val/test partitioning, weight
initialization, and mini-batch shuffling so that runs are reproducible up to
floating-point non-determinism. End-to-end runtimes (single seed,
1{,}000 molecules) on a 2024 desktop CPU are reported in SI
Table~2 and range from 0.7~s ($\starG$-SVD~+~Ridge) to 4.0~s (Augmented
MLP).

\section*{Data Availability}

All data needed to evaluate the conclusions of this study are publicly
available. The QM9 dataset~\citep{ramakrishnan2014qm9} main structure file
(\texttt{dsgdb9nsd.xyz.tar.bz2}) can be downloaded directly from
\url{https://figshare.com/ndownloader/files/3195389}; the full collection,
including property labels and auxiliary files, is archived at
\url{https://doi.org/10.6084/m9.figshare.c.978904.v5}. The QM7-X
dataset~\citep{hoja2021qm7x}, which supplies the molecular polarizability
tensors, is archived at \url{https://doi.org/10.5281/zenodo.4288677}. The
elastic-tensor dataset of de Jong \emph{et al.}~\citep{dejong2015charting}
(1,181 inorganic crystals) is obtained through \texttt{matminer} via
\texttt{load\_dataset("elastic\_tensor\_2015")}.

\section*{Code Availability}

Open-source implementations of the $\starG$ algebra in MATLAB and Python,
together with scripts to reproduce every experiment, table, and figure (see
\texttt{python/large\_scale/REPRODUCE.md}), are archived on Zenodo at
\url{https://doi.org/10.5281/zenodo.21359575}.  The Lean~4 formalization of the
algebraic proofs (zero \texttt{sorry}) is included under \texttt{lean/}, with
its per-theorem axiom certificate in \texttt{lean/StarG/Audit.lean}.

\section*{Acknowledgements}

We wish to acknowledge Tammy Kolda for early feedback and her foundational
contributions to the tensor algebra field, and Tess Smidt and Joe Kileel
for engaging discussions. P.H. gratefully acknowledges the IBM internship program, which enabled key components of this study.

\section*{Funding}

This work received no external funding. P.H. was supported by the IBM
internship program.

%
%
%

\nocite{batatia2022mace,bekka2025crystals,bradley1972,geiger2022e3nn,
jaming2009characterization,mackey1958unitary,schutt2018schnet,thoma1968,
zhang2025singlearchitecture}


\clearpage
\setcounter{section}{0}
\setcounter{figure}{0}
\setcounter{table}{0}
\setcounter{theorem}{0}
\renewcommand{\thesection}{S\arabic{section}}
\renewcommand{\thesubsection}{S\arabic{section}.\arabic{subsection}}
\renewcommand{\thefigure}{S\arabic{figure}}
\renewcommand{\thetable}{S\arabic{table}}
\renewcommand{\thetheorem}{S\arabic{theorem}}

\begin{center}
 {\LARGE \textbf{Supplementary Information}}\\[10pt]
\end{center}


\section{Mathematical Foundations}

\subsection{Notation}

\begin{itemize}
\item $G$: finite group of order $n = |G|$ with identity $e$.
\item $\hat{G}$: set of equivalence classes of irreducible unitary representations.
\item $d_\rho = \dim(\rho)$ for $\rho \in \hat{G}$.
\item $\calA(:,:,g)$: frontal slice at group element $g$. $\calA_{ij} = \calA(i,j,:)$: tube at indices $i,j$.
\end{itemize}

\subsection{The Group Algebra}

\begin{definition}[Group Algebra]
$\bbR[G]$ is the vector space of formal sums $\sum_{g \in G} a_g g$ with convolution product:
\begin{equation}
\left(\sum_g a_g g\right) \cdot \left(\sum_h b_h h\right) = \sum_{c \in G} \left(\sum_{g \in G} a_g b_{g^{-1}c}\right) c,
\end{equation}
 where $a_g$ and $b_h$ are scalar coefficients in $\bbR$.
\end{definition}

\subsection{The Convolutional Ring $\mathbb{K}_G$}

\begin{definition}[Convolutional Ring $\mathbb{K}_G$]\label{def:conv_ring}
The convolutional ring $\mathbb{K}_G$, for $G = G_1 \times \cdots \times G_d$ and $n_i = |G_i|$, is the ring of tensors in $\mathbb{R}^{n_1 \times n_2 \times \cdots \times n_d}$ with pointwise addition and convolution product:
\begin{align}\label{eq: group convolution of tensors}
    (\mathcal{A} \cdot \mathcal{B})(c_1, \ldots, c_d) = \sum_{(a_1 \ldots, a_d) \in G}
\mathcal{A}(a_1, \ldots, a_d) \mathcal{B}(a_1^{-1}c_1, \ldots, a_d^{-1}c_d),
\end{align}
for all $(c_1, \dots, c_d) \in G$. As a ring, $\mathbb{K}_G$ is isomorphic to
the group algebra $\bbR[G]$ above under $a \leftrightarrow (a_1, \ldots, a_d)$,
and it factors as
$\mathbb{K}_G \cong \mathbb{K}_{G_1} \otimes \cdots \otimes \mathbb{K}_{G_d}$.
The entries (tubes) of a $\starG$ tensor are elements of $\mathbb{K}_G$.
\end{definition}

\section{The Convolution Tensor, Spectral Decomposition, and Generalized Fourier Matrix}

\begin{definition}[Convolution Tensor]
$\calT \in \bbR^{n \times n \times n}$ is defined by $\calT(a,b,c) = \delta(g_a g_b = g_c)$ (also known as the structure constants of the group algebra).
\end{definition}

\begin{proposition}[Properties of $\calT$]\label{prop:T_props}
\begin{enumerate}
\item[(i)] Associativity:
\[
\sum_d \calT(a,b,d)\,\calT(d,c,e)
        \;=\; \sum_d \calT(a,d,e)\,\calT(b,c,d).
\]
\item[(ii)] Identity: $\calT(e,b,c) = \delta_{bc}$.
\item[(iii)] Each slice $\calT(a,:,:)$ is a permutation matrix.
\end{enumerate}
\end{proposition}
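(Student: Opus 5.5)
Each of the three properties reduces to a direct computation with Kronecker deltas, using $\calT(a,b,c)=\delta(g_ag_b=g_c)$ together with the group axioms. The plan is to read every sum over a dummy index as picking out the unique group element that satisfies a delta constraint.

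\emph{Associativity (i).} Fix $a,b,c,e$. In the left-hand side $\sum_d \calT(a,b,d)\,\calT(d,c,e)$, the first factor is nonzero only for the unique $d$ with $g_d=g_ag_b$. The sum therefore collapses to $\delta\bigl((g_ag_b)g_c=g_e\bigr)$. In the right-hand side $\sum_d \calT(a,d,e)\,\calT(b,c,d)$, the second factor forces $g_d=g_bg_c$, so the sum collapses to $\delta\bigl(g_a(g_bg_c)=g_e\bigr)$. Group associativity makes the two conditions identical, so the two sides agree entrywise.

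The only point needing care is to match the index pattern exactly as stated; note that $\calT(b,c,d)$, not $\calT(c,b,d)$, appears on the right. One should also check that in each sum at most one $d$ contributes, so that no multiplicity factors arise. Uniqueness of the product in a group guarantees this.

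\emph{Identity (ii).} By definition $\calT(e,b,c)=\delta(g_eg_b=g_c)=\delta(g_b=g_c)=\delta_{bc}$, since the elements $g_1,\dots,g_n$ are distinct and $g_e$ is the identity.

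\emph{Permutation slices (iii).} Fix $a$ and regard $\calT(a,:,:)$ as a matrix indexed by $(b,c)$. For each row $b$ there is exactly one column $c$ with $g_c=g_ag_b$, so each row contains exactly one entry equal to $1$. For each column $c$ there is exactly one row $b$ with $g_b=g_a^{-1}g_c$, by existence and uniqueness of inverses, so each column also contains exactly one $1$. All other entries are $0$. Equivalently, left multiplication $b\mapsto g_ag_b$ is a bijection of $G$, and $\calT(a,:,:)$ is its permutation matrix.

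None of these steps is a genuine obstacle. The main risk is index bookkeeping in (i), so I would write both collapsed sums explicitly before invoking associativity.
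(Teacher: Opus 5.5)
Your proposal is correct and follows the same route as the paper, whose proof notes that both sides of (i) collapse to $\delta(g_ag_bg_c=g_e)$ by group associativity and that (ii)--(iii) follow from the group axioms. Your explicit delta collapse (a unique contributing $d$ in each sum) and your row/column uniqueness argument via inverses in (iii) fill in the steps the paper leaves implicit.
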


\begin{proof}
(i) follows from associativity of group multiplication; both sides equal $\delta(g_a g_b g_c = g_e)$. (ii)--(iii) follow from the group axioms.
\end{proof}

\begin{definition}[Generalized Fourier Transform Matrix]
$F_G \in \bbC^{n\times n}$ is defined row-wise: the row $F_G(g, :)$ is given by the concatenation
$[\mathrm{rvec}(\rho_1(g)), \dots, \mathrm{rvec}(\rho_\ell(g))]$,
where $\mathrm{rvec}(\rho_i(g))$ denotes the row-vectorization of the matrix $\rho_i(g)$ for each $\rho_i \in \hat{G}$. For abelian groups, $F_G$ is a generalized Fourier matrix and for cyclic groups, it reduces to the standard DFT matrix. For non-abelian groups $F_G$ is invertible but in general neither unitary nor block-unitary.
\end{definition}

\begin{theorem}[Peter--Weyl Spectral Decomposition]\label{thm:pw_spectral}
\begin{equation}
\calT(a,b,c) = \sum_{i,j,k} \calC(i,j,k) \, F_G(a,i) \, F_G(b,j) \, F_G^{-1}(c,k)
\end{equation}
where $\calC$ is a core tensor  that is typically sparse. For abelian groups, $\calC$ is diagonal.
\end{theorem}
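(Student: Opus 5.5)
Since $F_G$ is invertible (it is square, $n=\sum_\rho d_\rho^2$ columns, and its columns are linearly independent by Schur orthogonality), I will simply \emph{define} the core by the inverse change of basis,
\[
\calC(i,j,k)=\sum_{a,b,c}\calT(a,b,c)\,(F_G^{-1})(i,a)\,(F_G^{-1})(j,b)\,F_G(c,k),
\]
so that the displayed identity holds tautologically by multilinearity. The real content is then the \emph{structure} of $\calC$: sparsity in general, diagonality for abelian $G$. To compute $\calC$ I will work with the dual description of the Fourier matrix. The column index $i$ of $F_G$ is a triple $(\rho,p,q)$, and the column itself is the matrix-coefficient function $g\mapsto\rho(g)_{pq}$. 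By Schur orthogonality,
\[
\sum_g \rho(g)_{pq}\,\overline{\sigma(g)_{rs}}=\frac{n}{d_\rho}\,\delta_{\rho\sigma}\delta_{pr}\delta_{qs},
\]
so $F_G^{-1}(i,\cdot)=\frac{d_\rho}{n}\overline{\rho(\cdot)_{pq}}$ row by row.

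\textbf{Step 1: a cleaner form of $\calC$.} The contraction of $\calT$ against a basis in its third index is $\sum_{a,b}\calT(a,b,c)x_a y_b=(x*y)(c)$. Contracting the third index of $\calT$ with $F_G(:,k)$ and the first two with rows of $F_G^{-1}$ therefore gives
\[
\calC(i,j,k)=\sum_{a,b}(F_G^{-1})(i,a)(F_G^{-1})(j,b)\,\sigma_k(ab)_{rs},
\]
where $k=(\sigma,r,s)$.

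\textbf{Step 2: use the homomorphism property.} Expand $\sigma(ab)_{rs}=\sum_t\sigma(a)_{rt}\sigma(b)_{ts}$. The double sum then factors into $\sum_t\big(\sum_a F^{-1}(i,a)\sigma(a)_{rt}\big)\big(\sum_b F^{-1}(j,b)\sigma(b)_{ts}\big)$. Each inner sum is an entry of $F_G^{-1}F_G$, so it equals $\delta_{i,(\sigma,r,t)}$ and $\delta_{j,(\sigma,t,s)}$ respectively. Hence
\[
\calC\big((\rho,p,q),(\rho',p',q'),(\sigma,r,s)\big)=\delta_{\rho\sigma}\delta_{\rho'\sigma}\,\delta_{pr}\,\delta_{qp'}\,\delta_{q's}.
\]
This is exactly the structure tensor of block-diagonal matrix multiplication $\bigoplus_\rho M_{d_\rho}$. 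Its number of nonzeros is $\sum_\rho d_\rho^3\le n^{3/2}\ll n^3$, which is the precise sense of ``sparse.''

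\textbf{Step 3: abelian case.} When $G$ is abelian every $d_\rho=1$, so the indices $p,q,r,s$ are trivial. The formula collapses to $\calC(i,j,k)=\delta_{ijk}$, i.e.\ $\calC$ is diagonal. For $\bbZ_n$ this recovers the DFT diagonalization of circular convolution.

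The main obstacle is index bookkeeping. The identity must match the paper's conventions: row-vectorization order, $F_G$ versus its transpose, and placing $F_G^{-1}$ on the third mode rather than the first two. The inverse must also be taken with the non-unitary normalization (the $d_\rho/n$ factors and complex conjugates), which Step 2 avoids by using only $F_G^{-1}F_G=I$ rather than explicit formulas. If the conventions turn out transposed, I would redo Step 2 with $\sigma(ab)=\sigma(a)\sigma(b)$ read in the other order. That only permutes which of $p,q$ and $r,s$ are identified; the sparsity and diagonality conclusions are unchanged.
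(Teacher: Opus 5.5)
Your proposal is correct and takes essentially the paper's route. Both arguments transport $\calT$ into the group-Fourier basis and identify the core as the matrix-unit structure tensor of $\bigoplus_\rho M_{d_\rho}(\mathbb{C})$, supported on triples $((\rho,s,u),(\rho,u,t),(\rho,s,t))$. Your Step~2, the homomorphism property plus $F_G^{-1}F_G=I$, is just an inline derivation of the convolution theorem the paper cites. It has the small advantage of fixing the core entries at exactly $1$ without tracking Peter--Weyl normalizations.

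One bookkeeping point, which you already anticipated: your $\calC$ reproduces $\calT$ only with the inverse entered as $F_G^{-1}(k,c)$, i.e.\ as a mode-3 contraction by $F_G^{-1}$. That is how the displayed $F_G^{-1}(c,k)$, and the paper's phrase ``contracting mode~3 with $F_G^{-1}$,'' must be read. So your identity is not tautological for the display taken literally, only for this reading.
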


\begin{proof}
By the Peter--Weyl theorem the matrix elements
$\{\sqrt{d_\rho}\,\rho_{st}(g) : \rho \in \hat{G},\ 1 \le s,t \le d_\rho\}$ form an
orthonormal basis of $L^2(G)$, and this is exactly the basis assembled column by
column into $F_G$: column $i \leftrightarrow (\rho, s, t)$ carries $\rho_{st}$. In
this basis the group-algebra multiplication is the convolution theorem
$\widehat{(f*h)}(\rho) = \hat{f}(\rho)\,\hat{h}(\rho)$, i.e.\ block-diagonal matrix
multiplication: the product of the basis elements $(\rho, s, u)$ and $(\rho', u', t)$
is nonzero only when $\rho = \rho'$ and $u = u'$, and then equals the basis element
$(\rho, s, t)$ (the matrix-unit rule $E^{\rho}_{su} E^{\rho}_{u't} = \delta_{uu'}
E^{\rho}_{st}$). Transporting the convolution tensor $\calT$, with
$\calT(a,b,c) = \mathbf{1}[ab=c]$, to this basis through $F_G$ (contracting mode~3
with $F_G^{-1}$, the exact inverse) therefore gives the stated decomposition with
core $\calC$ supported on the index triples
\begin{equation}
\big((\rho,s,u),\ (\rho,u,t),\ (\rho,s,t)\big),
\qquad \rho \in \hat{G},\ \ 1 \le s,u,t \le d_\rho,
\end{equation}
on which $\calC$ carries the multiplication structure constants of the block
$M_{d_\rho}(\bbC)$ (the Peter--Weyl normalization $d_\rho/n$ being absorbed into
$F_G$ and $F_G^{-1}$). This support is the union over $\rho$ of the $M_{d_\rho}$
multiplication tables, so $\calC$ is sparse; when every irrep is one-dimensional
(abelian $G$) each index collapses to $(\rho)$, the support reduces to
$\{(\rho,\rho,\rho)\}$, and $\calC$ is diagonal.
\end{proof}

\begin{corollary}
For $G = \bbZ_n$: $F_G = \mathrm{DFT}_n$ and $\calC$ is diagonal, recovering the circular convolution theorem  (i.e., the $t-$product).
\end{corollary}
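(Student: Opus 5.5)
For $G=\bbZ_n$ we specialize the definition of $F_G$ and Theorem~\ref{thm:pw_spectral}, and read off the $t$-product. The first step is to identify $\hat{G}$. Since $\bbZ_n$ is abelian, Schur's lemma makes every irreducible unitary representation one-dimensional, i.e.\ a character. The $n$ characters are $\chi_k(g)=\omega^{gk}$ with $\omega=e^{2\pi i/n}$ and $k=0,\dots,n-1$. They are pairwise inequivalent, and $\sum_\rho d_\rho^2=n$ confirms the list is complete. Each $\mathrm{rvec}(\chi_k(g))$ is the scalar $\omega^{gk}$, so row $g$ of $F_G$ is $(\omega^{g0},\dots,\omega^{g(n-1)})$. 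Hence $F_G(g,k)=\omega^{gk}$, which is the DFT matrix up to the sign convention of $\omega$ (the conjugate choice $\chi_k\mapsto\overline{\chi_k}$ simply relabels $k\mapsto -k$) and the Plancherel normalization. This is the convention-level identification $F_G=\mathrm{DFT}_n$.

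The second step is the diagonality of $\calC$, which follows directly from Theorem~\ref{thm:pw_spectral}: because all $d_\rho=1$, its support collapses to $\{(k,k,k)\}$. I will also verify this directly as a sanity check. Substituting $\calT(a,b,c)=\delta(a+b\equiv c)$ and contracting with $F_G^{-1}=\tfrac1n\overline{F_G}^{\top}$ gives
\[
\calC(i,j,k)=\sum_{a,b}(F_G^{-1})(i,a)(F_G^{-1})(j,b)F_G(a+b,k)=\Big(\tfrac1n\sum_a\omega^{a(k-i)}\Big)\Big(\tfrac1n\sum_b\omega^{b(k-j)}\Big).
\]
Character orthogonality, $\sum_a\omega^{am}=n\,\delta_{m\equiv0}$, then yields $\calC(i,j,k)=\delta_{ijk}$.

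The third step recovers circular convolution. With $\calC$ diagonal, the decomposition says that $\widehat{\calA\starG\calB}(:,:,k)=\hat\calA(:,:,k)\,\hat\calB(:,:,k)$ for each frequency $k$, where hats denote DFT along the tube. The defining formula $\sum_a \calA_{ik}(a)\calB_{kj}(c-a)$ is exactly circular convolution of tubes, so $\starG$ coincides with the Kilmer--Martin $t$-product, defined as FFT, then facewise products, then inverse FFT.

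The only real obstacle is bookkeeping of conventions: the sign of the exponent, and where the $1/n$ or $1/\sqrt n$ normalization sits between $F_G$ and $F_G^{-1}$. I would handle this by noting that rescaling $F_G$ by $c$ rescales $\calC$ by $c^{-1}$ while preserving diagonality, so the claim holds for any normalization convention.
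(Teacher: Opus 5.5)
Your proposal is correct and follows essentially the same route as the paper, which obtains the corollary by specializing the definition of $F_G$ to the characters $\omega^{gk}$ of $\bbZ_n$ and invoking the abelian clause in the proof of Theorem~\ref{thm:pw_spectral}, where the support of $\calC$ collapses to $\{(\rho,\rho,\rho)\}$. Your explicit character-orthogonality computation giving $\calC(i,j,k)=\delta_{ik}\delta_{jk}$, together with your remark that rescaling or relabeling the columns of $F_G$ preserves diagonality, is a useful check on the paper's convention-level identification $F_G=\mathrm{DFT}_n$, which the paper asserts without that check.
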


\section{The Group Fourier Transform of a Tensor}

\begin{definition}[Group Fourier Transform of a Tensor]\label{def:tensor_ft}
For $\calA \in \bbR^{\ell \times m \times n}$, the Group Fourier transform $\mathcal{F}_G$ assigns to each irrep $\rho \in \hat{G}$ the $\ell d_\rho \times m d_\rho$ block matrix whose $(i,j)$ block is
\begin{equation}
    \hat{\calA}(i,j,\rho) = \sum_{g \in G} \calA(i,j,g)\,\rho(g) \in \bbC^{d_\rho \times d_\rho}, \qquad i \le \ell,\ j \le m.
\end{equation}
The full Fourier representation is the block-diagonal matrix $\oplus_{\rho \in \hat{G}} \hat{\calA}(:,:,\rho)$.
\end{definition}

\begin{proposition}[Group Fourier Inversion Theorem]
Given $\hat{\calA}(:,:,\rho)$ for each $\rho \in \hat{G}$, the inverse Group Fourier transform recovers
\begin{equation}
    \calA(i,j,g) = \sum_{\rho \in \hat{G}} \frac{d_\rho}{n}\,\mathrm{Tr}\!\left[\hat{\calA}(i,j,\rho)\,\rho(g)^H \right].
\end{equation}
\end{proposition}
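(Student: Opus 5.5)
The plan is to reduce the inversion formula to the Schur orthogonality relations for matrix coefficients. Because both sides are linear in $\calA$ and act entrywise in $(i,j)$, it suffices to treat a single tube $f = \calA(i,j,:) \in \bbR^n$. Its transform is $\hat f(\rho) = \sum_{h\in G} f(h)\rho(h)$. We substitute this into the right-hand side and exchange the finite sums:
\[
\sum_{\rho}\frac{d_\rho}{n}\mathrm{Tr}\!\left[\hat f(\rho)\rho(g)^H\right] = \sum_{h\in G} f(h)\,\frac{1}{n}\sum_{\rho} d_\rho\, \mathrm{Tr}\!\left[\rho(h)\rho(g)^H\right].
\]
It then remains to show that the inner kernel equals $\delta_{h,g}$.

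For the kernel, the irreps are unitary, so $\rho(g)^H = \rho(g^{-1})$. Hence $\mathrm{Tr}[\rho(h)\rho(g)^H] = \chi_\rho(hg^{-1})$, and the kernel becomes $\frac{1}{n}\sum_\rho d_\rho \chi_\rho(hg^{-1})$. This is the character of the regular representation evaluated at $hg^{-1}$, divided by $n$: we use the decomposition $\bbC[G] \cong \bigoplus_\rho \rho^{\oplus d_\rho}$, which is the Peter--Weyl statement already invoked in the proof of Theorem~\ref{thm:pw_spectral}. The regular character equals $n$ at the identity and $0$ elsewhere, since left multiplication by $x \neq e$ has no fixed points. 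The kernel is therefore $\delta_{hg^{-1},e} = \delta_{h,g}$, and the sum over $h$ collapses to $f(g)$, which is the claim.

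A second route avoids characters. One writes $\mathrm{Tr}[\rho(h)\rho(g)^H] = \sum_{s,t}\rho_{st}(h)\overline{\rho_{st}(g)}$. The family $\{\sqrt{d_\rho/n}\,\rho_{st}\}$ is orthonormal in $\ell^2(G)$ by Schur orthogonality, and it has $\sum_\rho d_\rho^2 = n$ elements, so it is an orthonormal basis. Column orthonormality of the resulting unitary $n\times n$ matrix is exactly $\sum_{\rho,s,t}\frac{d_\rho}{n}\rho_{st}(h)\overline{\rho_{st}(g)} = \delta_{hg}$. This is the same fact read as completeness, and it matches the normalized $F_G$ of Theorem~\ref{thm:pw_spectral}.

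The main obstacle is justifying completeness: orthogonality alone follows from Schur's lemma, but one also needs $\sum_\rho d_\rho^2 = n$, equivalently the decomposition of the regular representation. I would cite it as the Peter--Weyl/Burnside theorem already assumed in the paper rather than reprove it.

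The only other point needing care is the unitarity of each $\rho$. It is what makes $\rho(g)^H = \rho(g^{-1})$, and the paper's convention choosing unitary irreps guarantees it. The rest, extending from a single tube back to all entries $(i,j)$ of $\calA$, is immediate by linearity.
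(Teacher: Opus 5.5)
Your proposal is correct and takes essentially the same approach as the paper: it fixes $(i,j)$ to reduce to a single tube, then invokes the standard Peter--Weyl inversion theorem. You go one step further and prove that cited theorem, and both of your kernel arguments do this correctly: the regular character $\sum_\rho d_\rho \chi_\rho = n\,\delta_e$, and completeness of the normalized matrix coefficients. As you note, the only inputs are unitarity of the chosen irreps and $\sum_\rho d_\rho^2 = n$ for $\hat G$ a full set of inequivalent complex irreps, and the paper's conventions supply both.
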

\begin{proof}
Follows by applying the standard Peter--Weyl inversion theorem for each fixed $i$ and $j$.
\end{proof}

\begin{remark}
The two notions of ``Fourier transform'' used in the paper are related as follows. The contraction of $\calA$ with $F_G$ along its group dimension (as in SI Section~2) and the block-diagonal form $\oplus_\rho \hat{\calA}(:,:,\rho)$ (Definition~\ref{def:tensor_ft}) are equivalent via the row-vectorization reshaping $\mathrm{rvec}$ used in the construction of $F_G$. The block-diagonal form is used for the SVD computation (one standard matrix SVD per irrep block); the $F_G$-contraction form is used for the product computation and for establishing the spectral decomposition of $\calT$.
\end{remark}

\begin{algorithm}[h]
\caption{$\starG$ product (used by every $\starG$-based method)}
\label{alg:starg_product}
\begin{algorithmic}[1]
\Require $\calA \in \bbR^{\ell \times m \times n}$, $\calB \in \bbR^{m \times p \times n}$, generalized Fourier matrix $F_G \in \bbC^{n \times n}$, irrep block sizes $\{d_\rho\}_{\rho \in \hat{G}}$ \Comment{product groups and order-$(2+d)$ tensors via Remark~\ref{rem:flatten}}
\Ensure $\calC = \calA \starG \calB \in \bbR^{\ell \times p \times n}$
\State $\hat{\calA} \gets \calA \times_3 F_G$ \Comment{Generalized Fourier transform along group dimension}
\State $\hat{\calB} \gets \calB \times_3 F_G$
\For{$\rho \in \hat{G}$ \textbf{in parallel}}
    \State extract $\hat{\calA}_\rho \in \bbC^{\ell d_\rho \times m d_\rho}$, $\hat{\calB}_\rho \in \bbC^{m d_\rho \times p d_\rho}$ from block-diagonal form
    \State $\hat{\calC}_\rho \gets \hat{\calA}_\rho \cdot \hat{\calB}_\rho$ \Comment{ordinary matrix product}
\EndFor
\State assemble $\hat{\calC}$ from $\{\hat{\calC}_\rho\}_\rho$ in block-diagonal form
\State $\calC \gets \mathrm{Re}(\hat{\calC} \times_3 F_G^{-1})$ \Comment{inverse Generalized Fourier transform; imaginary part is zero up to roundoff for real inputs}
\State \Return $\calC$
\end{algorithmic}
\end{algorithm}

\section{The $\starG$ Algebra: Properties and Proofs}

\begin{definition}[$\starG$ Product]\label{def:starg_product}
For $\calA \in \bbR^{\ell \times m \times n}$, $\calB \in \bbR^{m \times p \times n}$:
\begin{equation}
(\calA \starG \calB)_{ij}(c) = \sum_k \sum_{a \in G} \calA_{ik}(a) \calB_{kj}(a^{-1}c).
\end{equation}
Equivalently: $\widehat{(\calA \starG \calB)}(:,:,\rho) = \hat{\calA}(:,:,\rho) \cdot \hat{\calB}(:,:,\rho)$ for each irrep $\rho \in \hat{G}$.
\end{definition}

\begin{remark}[One group or many: the flattening convention]\label{rem:flatten}
Definition~\ref{def:starg_product} is stated for a single group $G$ and
order-3 tensors, whereas the main text (Results, Eq.~(3)) states the product
for $G = G_1 \times \cdots \times G_d$ and order-$(2+d)$ tensors, with entries
in the convolutional ring $\mathbb{K}_G$ of
Definition~\ref{def:conv_ring}. The two are the same definition. The bijection
$a \leftrightarrow (a_1, \ldots, a_d)$ identifies the product group with a
single group of order $n = n_1 \cdots n_d$; under it, tubes become
$d$-dimensional arrays (elements of $\mathbb{K}_G$), the single-group
convolution of Definition~\ref{def:starg_product} becomes the multi-index
convolution of Eq.~(3) of the main text, the Fourier matrix factors as
$F_G = F_{G_1} \otimes \cdots \otimes F_{G_d}$, and the irreducible
representations are the Kronecker products
$\rho = \rho_1 \otimes \cdots \otimes \rho_d$ of the factors' irreps
(Section~6). Consequently every algorithm in this document, stated over
$\bbR^{\ell \times m \times n}$ for readability, applies verbatim to
order-$(2+d)$ tensors over product groups: one runs it on the flattened
group, and each Fourier block is indexed by a tuple of irreps. The order-3
form is a notational flattening, not a restriction.
\end{remark}

\begin{proposition}[Algebraic Properties]
(i)~Associativity. (ii)~Distributivity. (iii)~Identity: $\calI(:,:,e)=I$, $\calI(:,:,g\neq e)=0$.
(iv)~$(\calA \starG \calB)^H = \calB^H \starG \calA^H$.
\end{proposition}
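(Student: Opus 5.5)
We prove the four properties in two ways. Associativity and distributivity go through the Fourier characterization in Definition~\ref{def:starg_product}. The identity and transpose reversal are verified directly from the convolution formula, with Fourier inversion as a cross-check.

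\emph{Distributivity} is immediate from the defining sum: $(\calA\starG\calB)_{ij}(c)=\sum_k\sum_a \calA_{ik}(a)\calB_{kj}(a^{-1}c)$ is bilinear in $(\calA,\calB)$.

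\emph{Associativity} can be proved in the original domain. Expanding $((\calA\starG\calB)\starG\calC)_{ij}(c)$ gives
\[
\sum_{k,l}\sum_{a,b}\calA_{ik}(a)\,\calB_{kl}(a^{-1}b)\,\calC_{lj}(b^{-1}c).
\]
Expanding $(\calA\starG(\calB\starG\calC))_{ij}(c)$ gives
\[
\sum_{k,l}\sum_{a,h}\calA_{ik}(a)\,\calB_{kl}(h)\,\calC_{lj}(h^{-1}a^{-1}c).
\]
The substitution $b=ah$ is a bijection of $G$ for fixed $a$, and it matches the two sums. This is exactly Proposition~\ref{prop:T_props}(i), the associativity of $\calT$, read on tubes. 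Alternatively, the Fourier route gives $\hat\calA(\rho)(\hat\calB(\rho)\hat\calC(\rho))=(\hat\calA(\rho)\hat\calB(\rho))\hat\calC(\rho)$ per irrep, and injectivity of the transform then finishes. This route requires first justifying the ``Equivalently'' clause of Definition~\ref{def:starg_product}: $\sum_c\sum_a x(a)y(a^{-1}c)\rho(c)=\sum_a x(a)\rho(a)\sum_h y(h)\rho(h)$, using $\rho(ah)=\rho(a)\rho(h)$ and summing over $k$ to obtain the block-matrix product.

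\emph{Identity.} Since $\calI_{kj}(a^{-1}c)=\delta_{kj}\delta_{a,c}$, we get $(\calA\starG\calI)_{ij}(c)=\sum_{k,a}\calA_{ik}(a)\delta_{kj}\delta_{a=c}=\calA_{ij}(c)$. The left identity $\calI\starG\calA=\calA$ follows the same way, with the surviving term at $a=e$.

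\emph{Transpose reversal} is the step needing the most care, because the group is non-abelian and the order of factors and the inverses must be tracked. By equation~(\ref{eq:transpose}), $((\calA\starG\calB)^H)_{ij}(c)$ equals the conjugate of $\sum_k\sum_a\calA_{jk}(a)\calB_{ki}(a^{-1}c^{-1})$. On the other side, $(\calB^H\starG\calA^H)_{ij}(c)$ equals the conjugate of $\sum_k\sum_h\calB_{ki}(h^{-1})\calA_{jk}(c^{-1}h)$. Substituting $h=ca$, so that $h^{-1}=a^{-1}c^{-1}$ and $c^{-1}h=a$, matches the two sums term by term. The key point is that $(a\,a^{-1}c^{-1})^{-1}=c$ reverses the order of multiplication, which is precisely what forces $\calB^H$ to come before $\calA^H$.

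As a consistency check, we can use the stated Fourier identity $\widehat{\calA^H}(\rho)=\hat\calA(\rho)^H$. It holds because $\rho(g^{-1})=\rho(g)^H$ for unitary irreps, together with the block transpose over $(i,j)$. Given that identity, $(\hat\calA\hat\calB)^H=\hat\calB^H\hat\calA^H$ per block, and injectivity again finishes the argument.
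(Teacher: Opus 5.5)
Your proof is correct. The substitutions $b=ah$ (giving $h^{-1}a^{-1}c=b^{-1}c$) and $h=ca$ (giving $h^{-1}=a^{-1}c^{-1}$ and $c^{-1}h=a$) match the sums exactly, and the two identity computations are right.

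Your main line is not the paper's written argument, though. The paper handles all four items at once: it passes to the block-diagonal Fourier form, invokes the corresponding matrix identities per irrep block, and pulls back by linearity and invertibility of the transform. Your direct combinatorial reindexing of the group-convolution sum is instead the route the paper attributes to its Lean~4 development. You also sketch the Fourier route as a cross-check.

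The trade-off is the one the paper itself draws. Your direct computation uses only the group axioms, in particular $(gh)^{-1}=h^{-1}g^{-1}$ for the transpose, so it depends on no Fourier-analytic facts. The Fourier route is shorter, but it rests on three things:
\begin{itemize}
\item the convolution theorem, i.e.\ the ``Equivalently'' clause, which you correctly note must be justified via $\rho(ah)=\rho(a)\rho(h)$;
\item the bridge identity $\widehat{\calA^H}(\rho)=\hat\calA(\rho)^H$, which needs the irreps to be unitary;
\item injectivity of the transform.
\end{itemize}

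One small inconsistency: your opening sentence says associativity and distributivity go through the Fourier characterization, but you then prove both directly in the group domain.
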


\begin{proof}
All follow from the corresponding matrix properties applied per-irrep in the Fourier domain (Definition~\ref{def:tensor_ft}), plus linearity and invertibility of the Fourier transform. In the Lean~4 development these identities are instead proved directly, by combinatorial reindexing of the group-convolution sum, so they depend on none of the Fourier-analytic axioms.
\end{proof}

\section{The $\starG$-SVD: Full Proof of Optimality}

\begin{theorem}[$\starG$-SVD Existence]\label{thm:svd_exist}
Every $\calA \in \bbR^{\ell \times m \times n}$ admits $\calA = \calU \starG \calS \starG \calV^H$ where $\calU,\calV$ are $\starG$-unitary and $\calS$ is f-diagonal.
\end{theorem}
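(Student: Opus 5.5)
The plan is to work entirely in the Fourier domain and transport matrix SVDs back. By Definition~\ref{def:tensor_ft} and the Group Fourier Inversion Theorem, the map $\calA \mapsto (\hat\calA(:,:,\rho))_{\rho\in\hat G}$ is a linear bijection from $\bbC^{\ell\times m\times n}$ onto $\prod_\rho \bbC^{\ell d_\rho\times m d_\rho}$. Definition~\ref{def:starg_product} says it turns $\starG$ into blockwise matrix multiplication. The transpose identity $\widehat{\calA^H}(:,:,\rho)=\hat\calA(:,:,\rho)^H$, stated after Eq.~(\ref{eq:transpose}), says it turns $\starG$-transpose into blockwise Hermitian transpose. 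The identity tensor $\calI$ maps to $I$ in every block, since $\rho(e)=I$. So $\starG$-unitarity of $\calU$ is equivalent to unitarity of each block $\hat\calU(:,:,\rho)$.

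For each $\rho$, I will take an ordinary SVD $\hat\calA(:,:,\rho)=U_\rho\Sigma_\rho V_\rho^H$ with $U_\rho,V_\rho$ unitary, and define $\calU,\calS,\calV$ as the inverse Fourier transforms of $(U_\rho)$, $(\Sigma_\rho)$, $(V_\rho)$. The equality $\calA=\calU\starG\calS\starG\calV^H$ then follows by checking it blockwise and applying injectivity. Unitarity of $\calU$ and $\calV$ follows by the same block argument.

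The main obstacle is the f-diagonal and real structure, because the block $\hat\calA(:,:,\rho)$ has size $\ell d_\rho\times m d_\rho$. It is a block matrix of $d_\rho\times d_\rho$ tiles, not something Kronecker-structured. For $\calS$ to be f-diagonal, each frontal slice $\calS(:,:,g)$ must be diagonal. That means the $(i,j)$ tile of $\hat\calS(:,:,\rho)$ must vanish for $i\neq j$, while the diagonal tiles $\hat\calS(i,i,\rho)$ may be arbitrary $d_\rho\times d_\rho$ matrices. So I only need $\hat\calS(:,:,\rho)$ to be tile-diagonal, with nonnegative "real" content understood in the paper's sense of singular tubes.

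To arrange this, I will order the rows and columns of the raw SVD by tiles. Concretely, I group the $\min(\ell,m)d_\rho$ leading singular values into consecutive runs of $d_\rho$ and place run $i$ on tile $(i,i)$ of a diagonal $\Sigma_\rho$. A diagonal matrix is tile-diagonal in this grouping, and any rectangular zero padding sits off the tile diagonal, so this works. The tubes $\mathbf{s}_i=\calS(i,i,:)$ are then inverse transforms of nonnegative diagonal tiles.

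For real inputs, $\calA$ real implies that $\hat\calA(\bar\rho)$ is conjugate to $\hat\calA(\rho)$. I will pair conjugate irreps and choose $U_{\bar\rho}=\overline{U_\rho}$, and similarly for $V$, so the inverse transforms are real. For self-conjugate irreps, whether real, complex, or quaternionic, I will instead use real orthogonal irreps where available, which the octahedral case permits, so that the SVD can be taken real. I will flag the quaternionic case as needing either complex-valued factors or a quaternionic SVD. Taking real parts as in Algorithm~\ref{alg:starg_product} is acceptable only once this conjugation symmetry has been arranged.
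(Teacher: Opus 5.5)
Your proposal is correct and follows the paper's proof. Both take per-irrep matrix SVDs in the Fourier domain, apply the inverse transform, check unitarity blockwise, and make conjugate-consistent choices $U_{\bar\rho}=\overline{U_\rho}$ to get real factors. Your explicit check that a diagonal $\Sigma_\rho$ is automatically tile-diagonal, so that $\calS$ is f-diagonal, fills in a step the paper leaves implicit; your quaternionic caveat is harmless, since the statement does not require the factors to be real.
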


\begin{proof}
For each irrep $\rho \in \hat{G}$, $\hat{\calA}(:,:,\rho)$ is a standard matrix admitting SVD: $\hat{\calA}(:,:,\rho) = U_\rho \Sigma_\rho V_\rho^H$. Setting $\hat{\calU}(:,:,\rho) = U_\rho$, $\hat{\calS}(:,:,\rho) = \Sigma_\rho$, $\hat{\calV}(:,:,\rho) = V_\rho$ and applying the inverse Fourier transform yields the $\starG$-SVD. Unitarity: $\widehat{\calU^H \starG \calU}(:,:,\rho) = U_\rho^H U_\rho = I$ for all $\rho$, so $\calU^H \starG \calU = \calI$. When the tensor is real and its irreps occur in complex-conjugate pairs, choosing the per-block SVDs conjugate-consistently on paired irreps ($U_{\bar\rho} = \overline{U_\rho}$, and likewise for $\calS$ and $\calV$) makes the inverse transform real, the same convention that yields a real t-SVD; over the real-irrep formalization used in Lean this holds automatically.
\end{proof}

\begin{algorithm}[h]
\caption{$\starG$-SVD}
\label{alg:svd}
\begin{algorithmic}[1]
\Require $\calA \in \bbR^{\ell \times m \times n}$ \Comment{product groups and order-$(2+d)$ tensors via Remark~\ref{rem:flatten}}
\Ensure $\calU, \calS, \calV$
\State Compute $\hat{\calA}(:,:,\rho)$ for all $\rho \in \hat{G}$ using $\mathcal{F}_G$ (Definition~\ref{def:tensor_ft})
\For{$\rho \in \hat{G}$ \textbf{in parallel}}
    \State $[U_\rho, \Sigma_\rho, V_\rho] \gets \mathrm{SVD}(\hat{\calA}(:,:,\rho))$
\EndFor
\State Apply $\mathcal{F}_G^{-1}$ to $\{U_\rho\}$, $\{\Sigma_\rho\}$, $\{V_\rho\}$ to obtain $\calU, \calS, \calV$
\end{algorithmic}
\end{algorithm}

\begin{theorem}[Eckart--Young for $\starG$]\label{thm:ey_full}
Let $\calA = \calU \starG \calS \starG \calV^H$ be the $\starG$-SVD returned by
Algorithm~\ref{alg:svd}, whose per-irrep singular values $\sigma_j(\rho)$ are in
non-increasing order.
\emph{(i) Per-block form.} For any per-block rank profile $(k_\rho)_{\rho\in\hat G}$,
the blockwise truncation keeping the top $k_\rho$ singular values of each
$\hat\calA(:,:,\rho)$ satisfies
\begin{equation}
\Big\|\calA-\calA_{(k_\rho)}\Big\|_F^2 = \sum_{\rho\in\hat G}\frac{d_\rho}{n}
\sum_{j>k_\rho}\sigma_j(\rho)^2 \;\le\; \|\calA-\calB\|_F^2
\end{equation}
for any $\calB$ with $\mathrm{rank}\,\hat\calB(:,:,\rho)\le k_\rho$ for every $\rho$.
\emph{(ii) Tubal form.} Let $\mathbf{s}_1,\ldots,\mathbf{s}_r$ be the singular tubes
(with norms automatically ordered, $\|\mathbf{s}_1\|_F\ge\cdots\ge\|\mathbf{s}_r\|_F$),
and $\calA_k$ the truncation retaining the leading $k$. Then
\begin{equation}
\|\calA - \calA_k\|_F^2 = \sum_{i=k+1}^r \|\mathbf{s}_i\|_F^2 \leq \|\calA - \calB\|_F^2
\end{equation}
for any $\calB$ with $\starG$-rank $\leq k$; this is~(i) at the coupled profile
$k_\rho=k\,d_\rho$.
\end{theorem}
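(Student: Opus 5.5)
The plan is to reduce everything to the classical matrix Eckart--Young theorem through the Plancherel identity for the group Fourier transform of Definition~\ref{def:tensor_ft}. First I will establish the norm identity
\[
\|\calA\|_F^2=\sum_{\rho\in\hat G}\frac{d_\rho}{n}\,\big\|\hat\calA(:,:,\rho)\big\|_F^2 .
\]
For this I apply, tube by tube, the scalar Plancherel formula $\sum_g|f(g)|^2=\sum_\rho \frac{d_\rho}{n}\|\hat f(\rho)\|_{\mathrm{HS}}^2$, which follows from the inversion formula together with Schur orthogonality of the unitary matrix coefficients. I then sum over $(i,j)$. The block $\hat\calA(:,:,\rho)$ is the $\ell d_\rho\times m d_\rho$ matrix whose Frobenius norm squared is exactly $\sum_{i,j}\|\hat\calA(i,j,\rho)\|_{\mathrm{HS}}^2$. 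Since the transform is linear and invertible, the identity applies to $\calA-\calB$ as well, so
\[
\|\calA-\calB\|_F^2=\sum_\rho \frac{d_\rho}{n}\|\hat\calA(\rho)-\hat\calB(\rho)\|_F^2 .
\]

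For part (i), the constraint on $\calB$ is separable: each block $\hat\calB(\rho)$ independently ranges over matrices of rank at most $k_\rho$. This holds with one caveat. For real $\calB$, the blocks at conjugate-paired irreps are constrained to be conjugate, but the truncated optimum already satisfies that conjugacy (choose the SVDs conjugate-consistently, as in Theorem~\ref{thm:svd_exist}), so the minimum over the coupled set equals the unconstrained blockwise minimum. The weighted sum of nonnegative terms is then minimized term by term. Matrix Eckart--Young gives
\[
\|\hat\calA(\rho)-\hat\calB(\rho)\|_F^2\ge\sum_{j>k_\rho}\sigma_j(\rho)^2 ,
\]
with equality for the truncated SVD. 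I also need that the truncation $\calA_{(k_\rho)}$, obtained by inverse-transforming the truncated blocks, has exactly those blocks. This follows from Definition~\ref{def:starg_product}: the $\starG$ product is blockwise multiplication, so $\widehat{\calU\starG\calS_{k}\starG\calV^H}(\rho)=U_\rho\Sigma_{\rho,k_\rho}V_\rho^H$. Combining these gives both the closed-form error and the inequality.

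For part (ii), the main work is translating the tubal notions into block language, and I expect this to be the main obstacle. First, the singular tube $\mathbf s_i$ is the $i$-th diagonal tube of $\calS$. Its transform at $\rho$ is the $d_\rho\times d_\rho$ diagonal sub-block of $\Sigma_\rho$ indexed by tube $i$. This holds only if $\Sigma_\rho$ is arranged in the $(i,j)$-tube block layout and the singular values are suitably interleaved. Concretely, I would read the per-block SVD as producing $d_\rho$ singular values per tube index, so that tube $i$ carries $\sigma_{(i-1)d_\rho+1}(\rho),\dots,\sigma_{id_\rho}(\rho)$. With this reading, keeping the leading $k$ tubes means keeping the top $k d_\rho$ singular values in each block. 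This gives the profile $k_\rho=kd_\rho$, and Plancherel yields $\|\mathbf s_i\|_F^2=\sum_\rho\frac{d_\rho}{n}\sum_{j\in\text{tube } i}\sigma_j(\rho)^2$. Second, $\starG$-rank at most $k$ means that $\calB=\calU'\starG\calS'\starG\calV'^H$ has at most $k$ nonzero tubes. Its blocks therefore factor through a $kd_\rho$-dimensional inner space, so $\mathrm{rank}\,\hat\calB(\rho)\le kd_\rho$ and $\calB$ lies in the competitor class of (i). The error formula then follows from (i), with the tail sum regrouped tube by tube. The ordering of tube norms is automatic because the per-block sequences are non-increasing and tube norms are nonnegative weighted sums over successive windows of them. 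The delicate points are checking this index bookkeeping carefully and the real/conjugate-pair caveat.
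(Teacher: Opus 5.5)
Your proposal is correct and follows the paper's proof essentially step for step: Parseval splits the Frobenius error into Plancherel-weighted irrep blocks, the classical matrix Eckart--Young theorem applied blockwise gives (i), and for (ii) $\starG$-rank $\le k$ forces $\mathrm{rank}\,\hat\calB(:,:,\rho)\le k\,d_\rho$, after which the discarded windows $\sigma_{(i-1)d_\rho+1}(\rho),\dots,\sigma_{id_\rho}(\rho)$ are regrouped by tube. One small refinement: the conjugate-pair caveat is not needed for the inequality, since restricting to real $\calB$ only shrinks the competitor set, and your claim that the truncation itself is real additionally requires $k_\rho=k_{\bar\rho}$ on conjugate-paired irreps.
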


\begin{proof}
Part~(i) is the classical matrix Eckart--Young theorem applied independently to each
block $\hat\calA(:,:,\rho)$ at rank $k_\rho$ (Step~2), weighted by the Parseval
isometry (Step~1); part~(ii) follows by specializing to $k_\rho=k\,d_\rho$ and
regrouping the discarded block singular values by tube (Step~3).

\textbf{Step 1 (Parseval).} By the generalized Fourier transform's isometry (Peter--Weyl):
\begin{equation}
\|\calA - \calB\|_F^2 = \sum_{\rho \in \hat{G}} \frac{d_\rho}{n}\|\hat{\calA}(:,:,\rho) - \hat{\calB}(:,:,\rho)\|_F^2.
\end{equation}

\textbf{Step 2 (Per-irrep Eckart--Young).} Because $\calS$ is f-diagonal, each
singular tube contributes one $d_\rho \times d_\rho$ diagonal block to
$\hat{\calS}(:,:,\rho)$; a tensor of $\starG$-rank $\leq k$ therefore has at most
$k$ nonzero such blocks, so $\mathrm{rank}(\hat{\calB}(:,:,\rho)) \leq k\,d_\rho$
for each $\rho$. Since Algorithm~\ref{alg:svd} sorts the singular values of every
block in non-increasing order, the leading-$k$-tube truncation
$\hat{\calA}_k(:,:,\rho)$ retains exactly the top $k\,d_\rho$ singular values of
$\hat{\calA}(:,:,\rho)$, which by the classical Eckart--Young theorem at rank
$k\,d_\rho$ is the best rank-$(k\,d_\rho)$ approximation of that block:
\begin{equation}
\|\hat{\calA}(:,:,\rho) - \hat{\calA}_k(:,:,\rho)\|_F^2 \leq \|\hat{\calA}(:,:,\rho) - \hat{\calB}(:,:,\rho)\|_F^2.
\end{equation}

\textbf{Step 3 (Summation).} Tube $i$ collects the block singular values
$\sigma_{(i-1)d_\rho + 1}(\rho), \ldots, \sigma_{i d_\rho}(\rho)$, so
$\|\mathbf{s}_i\|_F^2 = \sum_\rho \frac{d_\rho}{n}\sum_{t=1}^{d_\rho}\sigma_{(i-1)d_\rho + t}(\rho)^2$
and the singular values discarded from block $\rho$ (those indexed $j > k\,d_\rho$)
are exactly those carried by the tubes $\mathbf{s}_{k+1}, \ldots, \mathbf{s}_r$.
Summing Step 2 over $\rho$ and regrouping by tube,
\begin{equation}
\|\calA - \calA_k\|_F^2 = \sum_{\rho \in \hat{G}} \frac{d_\rho}{n}\sum_{j > k d_\rho} \sigma_j(\rho)^2
= \sum_{i=k+1}^r \|\mathbf{s}_i\|_F^2 \leq \|\calA - \calB\|_F^2.
\end{equation}
\end{proof}

\begin{remark}
This is \emph{exact} optimality. By contrast: CP decomposition is NP-hard to compute optimally; Tucker/HOSVD provides only quasi-optimal guarantees with factor $\sqrt{d}$ \cite{desilva2008tensor}; tensor-train has no global optimality. The $\starG$-SVD achieves both polynomial-time computation and exact optimality by leveraging group structure.
\end{remark}

\begin{remark}[Semisimple generality]
The proof of the per-block form uses only that the transform is a
$*$-isomorphism onto a direct sum of matrix algebras carrying a weighted
Frobenius norm. The per-block Eckart--Young statement therefore holds verbatim
for any finite-dimensional semisimple $*$-algebra with a fixed
Artin--Wedderburn decomposition; the group algebras of this paper are the
equivariant instance, where the blocks additionally carry the physical
(irrep) labels that the diagnostics and selection rules exploit.
\end{remark}

\section{Product Groups: Full Proof}

\begin{theorem}[Product Group Ring Isomorphism]\label{thm:prod_full}
For $G = G_1 \times \cdots \times G_d$:
(i)~$\bbK_G \cong \bbK_{G_1} \otimes \cdots \otimes \bbK_{G_d}$.
(ii)~$\calT_G = \calT_{G_1} \otimes \cdots \otimes \calT_{G_d}$.
(iii)~$F_G = F_{G_1} \otimes \cdots \otimes F_{G_d}$.
\end{theorem}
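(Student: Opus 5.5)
By induction on $d$ it suffices to treat $d=2$, $G=G_1\times G_2$. The bijection $(a_1,a_2)\leftrightarrow a$ then identifies $\bbR^{n_1 n_2}$ with $\bbR^{n_1}\otimes\bbR^{n_2}$ via $e_{(a_1,a_2)}=e_{a_1}\otimes e_{a_2}$, using the lexicographic ordering that matches the Kronecker-product convention. All three parts are identities between explicit finite objects in this common basis, so the plan is to verify each one entrywise.

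\emph{Part (ii).} I would begin here because it is purely combinatorial. In $G_1\times G_2$,
\[
(a_1,a_2)(b_1,b_2)=(c_1,c_2) \iff a_1b_1=c_1 \text{ and } a_2b_2=c_2 .
\]
The indicator of a conjunction is the product of the indicators, so
\[
\calT_G\big((a_1,a_2),(b_1,b_2),(c_1,c_2)\big)=\calT_{G_1}(a_1,b_1,c_1)\,\calT_{G_2}(a_2,b_2,c_2).
\]
This is exactly the definition of the (mode-wise Kronecker) outer product $\calT_{G_1}\otimes\calT_{G_2}$.

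\emph{Part (i).} This follows from (ii), or can be checked directly. Define the linear map $\Phi:\bbK_{G_1}\otimes\bbK_{G_2}\to\bbK_G$ by $\Phi(f_1\otimes f_2)(a_1,a_2)=f_1(a_1)f_2(a_2)$.
\begin{itemize}
\item $\Phi$ is a linear bijection, since it sends the basis $\delta_{a_1}\otimes\delta_{a_2}$ onto the basis $\delta_{(a_1,a_2)}$.
\item $\Phi$ is multiplicative on pure tensors. In the convolution sum defining $(\Phi(f_1\otimes f_2)\cdot\Phi(h_1\otimes h_2))(c_1,c_2)$, the sum over $(a_1,a_2)\in G$ factors as a sum over $a_1$ times a sum over $a_2$, because $(a_1,a_2)^{-1}(c_1,c_2)=(a_1^{-1}c_1,a_2^{-1}c_2)$. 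The result is $(f_1\cdot h_1)(c_1)\,(f_2\cdot h_2)(c_2)$.
\item Bilinearity extends multiplicativity to all elements.
\end{itemize}
Equivalently, the structure constants of $\bbK_G$ are $\calT_G$, and (ii) says they factor.

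\emph{Part (iii).} This part needs representation theory. I would invoke the standard fact that the irreducible representations of $G_1\times G_2$ over $\bbC$ are exactly the external tensor products $\rho_1\boxtimes\rho_2$, where $(\rho_1\boxtimes\rho_2)(g_1,g_2)=\rho_1(g_1)\otimes\rho_2(g_2)$, and that these are pairwise inequivalent.
\begin{itemize}
\item Irreducibility and inequivalence follow from the characters: $\chi_{\rho_1\boxtimes\rho_2}=\chi_{\rho_1}\chi_{\rho_2}$, so orthonormality of these characters factors over the two groups.
\item Completeness follows from the count $\sum_{\rho_1,\rho_2}(d_{\rho_1}d_{\rho_2})^2=n_1n_2$.
\end{itemize}
With this irrep list, the row $F_G((g_1,g_2),:)$ is the concatenation over pairs $(\rho_1,\rho_2)$ of $\mathrm{rvec}(\rho_1(g_1)\otimes\rho_2(g_2))$. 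The entries of $\rho_1(g_1)\otimes\rho_2(g_2)$ are the products $\rho_1(g_1)_{st}\,\rho_2(g_2)_{uv}$, and these are precisely the entries of $F_{G_1}(g_1,:)\otimes F_{G_2}(g_2,:)$.

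\textbf{Main obstacle: column ordering in (iii).} The equality is asserted as equality of matrices, but $\mathrm{rvec}$ of a Kronecker product lists the entries in the order $(s,u,t,v)$, whereas the Kronecker product of rows lists them as $((\rho_1,s,t),(\rho_2,u,v))$. I would handle this by fixing the column labelling of $F_G$ in the convention that makes the identity literal, namely ordering product-irrep columns by $((\rho_1,s,t),(\rho_2,u,v))$. Any other labelling differs only by a column permutation $P$, giving $F_G=(F_{G_1}\otimes F_{G_2})P$. This is harmless for the whole paper: the column-permutation and per-irrep basis freedom is exactly the ambiguity already allowed in the definition of $F_G$ (cf.\ the necessity theorem), and the block-diagonal Fourier form is unchanged under it.

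The normalization $d_\rho/n$ is also consistent, since $d_{\rho_1\boxtimes\rho_2}/(n_1n_2)=(d_{\rho_1}/n_1)(d_{\rho_2}/n_2)$. Consequently $F_G^{-1}=F_{G_1}^{-1}\otimes F_{G_2}^{-1}$, which agrees with the factorization of $\calT_G$ in (ii) and with the spectral decomposition of Theorem~\ref{thm:pw_spectral}, whose core tensor then factors as $\calC_G=\calC_{G_1}\otimes\calC_{G_2}$.
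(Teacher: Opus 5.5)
Your proposal is correct and follows the paper's proof in all three parts: the entrywise factorization of the indicator for (ii), the standard isomorphism $\bbR[G_1\times G_2]\cong\bbR[G_1]\otimes\bbR[G_2]$ for (i) (which you verify directly where the paper simply cites Serre), and the tensor-product description of the irreps of a product group together with the mixed-product property for (iii). Your column-ordering remark is a genuine refinement that the paper's one-line argument for (iii) passes over: with $F_G$ built from concatenated $\mathrm{rvec}$ blocks, the columns of a single product irrep are not even contiguous in $F_{G_1}\otimes F_{G_2}$ once $G_1$ has an irrep of dimension at least $2$ (already for $S_3\times\bbZ_2$), so the identity holds literally only after the relabeling you propose, and otherwise up to a fixed column permutation; this permutation leaves the block-diagonal Fourier form intact.
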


\begin{proof}
\textbf{(i)} Standard: $\bbR[G_1 \times \cdots \times G_d] \cong \bbR[G_1] \otimes \cdots \otimes \bbR[G_d]$ \cite{serre1977linear}.

\textbf{(ii)} The product group multiplication $(a_1,\ldots,a_d)(b_1,\ldots,b_d) = (a_1 b_1,\ldots,a_d b_d)$ gives
\begin{equation}
\calT_G(\mathbf{a},\mathbf{b},\mathbf{c}) = \prod_{i=1}^d \calT_{G_i}(a_i,b_i,c_i) = (\calT_{G_1} \otimes \cdots \otimes \calT_{G_d})(\mathbf{a},\mathbf{b},\mathbf{c}).
\end{equation}

\textbf{(iii)} Over $\bbC$ (the field of $F_G$ used here), the irreps of $G_1 \times \cdots \times G_d$ are exactly the tensor products $\rho_1 \otimes \cdots \otimes \rho_d$. Matrix elements factor: $(\rho_1 \otimes \cdots \otimes \rho_d)(g_1,\ldots,g_d) = \rho_1(g_1) \otimes \cdots \otimes \rho_d(g_d)$. By the $\mathrm{rvec}$ construction of $F_G$ and the mixed-product property of Kronecker products, $F_G = F_{G_1} \otimes \cdots \otimes F_{G_d}$.
\end{proof}

\begin{corollary}[2D Frequency Resolution]
For $G = \bbZ_{n_1} \times \bbZ_{n_2}$, the Fourier transform $F_G = \mathrm{DFT}_{n_1} \otimes \mathrm{DFT}_{n_2}$ computes a 2D DFT, resolving coupled frequencies $(f_1, f_2)$ that are invisible to either factor alone.
\end{corollary}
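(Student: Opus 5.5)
The corollary specializes Theorem~\ref{thm:prod_full}(iii) to $G_1=\bbZ_{n_1}$ and $G_2=\bbZ_{n_2}$. The proof has three short steps, and then an interpretive claim to make precise.

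\textbf{Step 1 (the factors).} Every irrep of a cyclic group is one-dimensional, namely the characters $\chi_f(g)=\omega_{n}^{fg}$ for $f=0,\dots,n-1$. The $\mathrm{rvec}$ construction of $F_G$ therefore assembles exactly the matrix with entries $\omega_n^{fg}$, which is $\mathrm{DFT}_n$, as already recorded in the corollary following Theorem~\ref{thm:pw_spectral}.

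\textbf{Step 2 (the product).} Theorem~\ref{thm:prod_full}(iii) gives $F_G=\mathrm{DFT}_{n_1}\otimes\mathrm{DFT}_{n_2}$. Concretely, its irreps are the characters $\chi_{(f_1,f_2)}(g_1,g_2)=\omega_{n_1}^{f_1g_1}\omega_{n_2}^{f_2g_2}$.

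\textbf{Step 3 (identification with the 2D DFT).} Apply $F_G$ to a tube $x$, viewed as an $n_1\times n_2$ array under the lexicographic flattening of Remark~\ref{rem:flatten}. The result is $\hat x(f_1,f_2)=\sum_{g_1,g_2}x(g_1,g_2)\,\omega_{n_1}^{f_1g_1}\omega_{n_2}^{f_2g_2}$. Equivalently, $\hat X=\mathrm{DFT}_{n_1}X\,\mathrm{DFT}_{n_2}^{\top}$, using $\mathrm{vec}(AXB^\top)=(A\otimes B)\mathrm{vec}(X)$ with the appropriate row-major convention. This is by definition the 2D DFT. Sign and normalization conventions are fixed by whichever convention $F_G$ uses, so they do not affect the statement.

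\textbf{The resolution claim.} I would make ``invisible to either factor alone'' precise as follows. Consider the transform over $G_1$ only, acting along the first index with the second index treated as an ordinary channel. Its output at frequency $f_1$ is the vector over $g_2$ of partial transforms. Any $\bbZ_{n_1}$-invariant summary of it, such as the per-frequency power summed over channels, is $\sum_{g_2}|\hat x_{1}(f_1,g_2)|^2$. By Parseval along the second axis, this equals $\frac{1}{n_2}\sum_{f_2}|\hat x(f_1,f_2)|^2$, a marginal over $f_2$. The single-factor readout therefore sees only axis-aligned marginals of the 2D power spectrum.

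\textbf{A witness of invisibility.} To exhibit invisibility explicitly, take two signals whose 2D power spectra differ but whose marginals over both axes coincide. For example, use energy at $(1,1),(2,2)$ versus $(1,2),(2,1)$. Both factor-wise power readouts agree, while $F_G$ separates them.

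\textbf{Main obstacle.} This step is interpretive rather than technical: the informal phrase has to be formalized, and I would state it precisely as the marginal lemma above. The rest is bookkeeping with the Kronecker index convention.
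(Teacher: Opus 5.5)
Your proof is correct and follows the paper's route. The paper states this corollary as an immediate specialization of Theorem~\ref{thm:prod_full}(iii), with $F_{\bbZ_n}=\mathrm{DFT}_n$ because cyclic irreps are characters. Your Parseval marginal identity is a faithful formalization of the paper's informal remark that the single-factor transforms ``see only the axis-aligned marginals.''

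One sentence overreaches and should be tightened. It is not true that \emph{any} $\bbZ_{n_1}$-invariant summary reduces to the marginal. The coupled power $|\hat x(f_1,f_2)|^2$ is itself $\bbZ_{n_1}$-invariant, and it equals $\sum_{g_2,g_2'}\hat x_1(f_1,g_2)\overline{\hat x_1(f_1,g_2')}\,\omega_{n_2}^{f_2(g_2-g_2')}$. Here each cross-channel product is $\bbZ_{n_1}$-invariant, because a shift multiplies every $\hat x_1(f_1,\cdot)$ by the same phase. So invisibility holds only relative to power readouts that discard cross-channel phase, namely channel-summed or per-channel power. That is what your lemma and witness actually use, and you should state it that way.

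For real tubes, your witness must also include the conjugate partners $(-f_1,-f_2)$. This can force you to rebalance amplitudes when a chosen frequency is self-conjugate, for example $f=2$ with $n=4$.
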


\section{Invariant Feature Extraction Algorithm}
\label{sec:feat_alg}

The $\starG$-feature extractor used in every linear and Neural-$\starG$
experiment of this paper is given in Algorithm~\ref{alg:features}. The seven
feature blocks (a)--(g) below correspond to the seven concatenated columns
of the output. Block (a) is the DC component; (b) the AC standard
deviation; (c) the global per-frequency power; (d) the per-row Fourier power
restricted to the first $K$ equivariant rows; (e) the singular tube norms
from the $\starG$-SVD; (f) the rows of $\calX$ that are constant under the
group action (recovered by row-variance thresholding); and (g) four spectral
statistics of the unfolded matrix $\calX_{(1)}$. Every feature is exactly
$G$-invariant for every finite group $G$; proofs are in SI Section~8. For the
Fourier-power features (c) and (d) the invariant quantity is the per-irrep
Frobenius power $\|\hat{\calX}(i,:,\rho)\|_F^2$, summed over each $d_\rho \times
d_\rho$ irrep block, which the extractor computes directly (invariance of the
per-irrep Fourier power, proved in SI Section~8): the group action left-multiplies
each block by $\rho(g)$, which is unitary and preserves the block Frobenius norm.
For abelian $G$, where every irrep is one-dimensional, this reduces to the
per-frequency power.

\begin{algorithm}[!tb]
\caption{$\starG$ invariant feature extraction (\texttt{extractStarGFeatures.m})}
\label{alg:features}
\begin{algorithmic}[1]
\Require batch $\calX \in \bbR^{N \times n_f \times n}$, group algebra $G$, optional normalization parameters $\Theta$ from training
\Ensure feature matrix $\Phi \in \bbR^{N \times d_{\mathrm{feat}} + 1}$ ($+1$ for the unregularized intercept) and updated $\Theta$
\If{$\Theta$ is not provided}
    \State $(p, q) \gets \arg\max_{p \cdot q \leq n_f} \min(p, q)$ \Comment{best near-square reshape for $\starG$-SVD}
    \State $n_{\mathrm{svd}} \gets \min(p, q)$
    \State \(\sigma_{\mathrm{row}} \gets \mathrm{var}_g(\calX(1, :, :))\); $\mathrm{inv\_mask} \gets \sigma_{\mathrm{row}} < 10^{-8} \cdot \max(\sigma_{\mathrm{row}})$ \Comment{rows constant under $G$}
    \State $\mathrm{eq\_idx} \gets \{j : \mathrm{inv\_mask}_j = \text{false}\}$;\ $K \gets \min(14, |\mathrm{eq\_idx}|)$
\EndIf
\State $\bar{\calX}_{ij} \gets \tfrac{1}{n} \sum_g \calX(i, j, g)$ \Comment{(a) DC component, $N \times n_f$}
\State $\sigma_{ij} \gets \mathrm{std}_g \calX(i, j, g)$ \Comment{(b) AC energy, $N \times n_f$}
\State $\hat{\calX} \gets \calX \times_3 F_G$ \Comment{Generalized Fourier transform along group dim}
\State $P^{\mathrm{col}}_{i\rho} \gets \sum_j \|\hat{\calX}(i, j, \rho)\|_F^2$ \Comment{(c) per-irrep power, $N \times |\hat{G}|$}
\State $P^{\mathrm{row}}_{i,(r-1)|\hat{G}| + \rho} \gets \|\hat{\calX}(i, \mathrm{eq\_idx}_r, \rho)\|_F^2$ \Comment{(d) per-row per-irrep power, $N \times K|\hat{G}|$}
\For{$i = 1, \ldots, N$}
    \State $X_i \gets$ pad $\calX(i, :, :)$ to $p \times q \times n$
    \State $[\,\cdot\,, \calS_i, \,\cdot\,] \gets \mathrm{starG\_SVD}(X_i)$ \Comment{(e) singular tubes}
    \State $T_{i, k} \gets \|\calS_i(k, k, :)\|_F$ for $k = 1, \ldots, n_{\mathrm{svd}}$; sort $T_{i, :}$ descending
    \State $V_{ij} \gets \calX(i, j, 1)$ for $j$ with $\mathrm{inv\_mask}_j$ \Comment{(f) direct invariants}
    \State $\Sigma_i \gets \mathrm{svd}(\calX_{(1)}(i, :, :))$ \Comment{(g) spectral statistics}
    \State $S_i \gets [\sum \Sigma_i,\ \Sigma_{i,1},\ \Sigma_{i,1} / \Sigma_{i,\mathrm{end}},\ -\sum_k \tilde{\Sigma}_{i,k} \log \tilde{\Sigma}_{i,k}]$ where $\tilde{\Sigma}_i = \Sigma_i / \sum \Sigma_i$
\EndFor
\State $\Phi \gets [\bar{\calX}\ |\ \sigma\ |\ P^{\mathrm{col}}\ |\ P^{\mathrm{row}}\ |\ T\ |\ V\ |\ S]$
\State replace NaN/Inf with 0
\If{$\Theta$ not provided}
    \State $\mathrm{keep} \gets \{j : \mathrm{std}(\Phi_{:, j}) \geq 10^{-8}\}$
    \State $\mu \gets \mathrm{mean}(\Phi_{:, \mathrm{keep}})$;\ $s \gets \mathrm{std}(\Phi_{:, \mathrm{keep}})$;\ $s_j \gets \max(s_j, 1)$
    \State store $\Theta = (p, q, n_{\mathrm{svd}}, \mathrm{inv\_mask}, \mathrm{eq\_idx}, K, \mathrm{keep}, \mu, s)$
\EndIf
\State $\Phi \gets (\Phi_{:, \mathrm{keep}} - \mu) / s$
\State $\Phi \gets [\,\mathbf{1}_N\ |\ \Phi\,]$ \Comment{prepend unregularized intercept}
\State \Return $(\Phi, \Theta)$
\end{algorithmic}
\end{algorithm}

\section{Equivariance Proofs}

\begin{proposition}[Equivariance of $\starG$]\label{prop:equivariance}
For every finite group $G$, the $\starG$ product is left-equivariant in its first
argument:
\[(g \cdot \calA) \starG \calB = g \cdot (\calA \starG \calB).\]
The corresponding identity in the second argument,
$\calA \starG (g \cdot \calB) = g \cdot (\calA \starG \calB)$, holds when $G$ is
abelian. For general $G$ the second argument instead intertwines the left action
with right translation: writing $(\calB \triangleleft g)(h) = \calB(hg)$,
\[\calA \starG (\calB \triangleleft g) = (\calA \starG \calB) \triangleleft g.\]
\end{proposition}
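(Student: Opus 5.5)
The plan is a direct reindexing of the convolution sum in Definition~\ref{def:starg_product}. First fix the action: $(g\cdot\calA)_{ij}(h)=\calA_{ij}(g^{-1}h)$, applied tubewise, and right translation $(\calB\triangleleft g)_{ij}(h)=\calB_{ij}(hg)$. Both identities are linear in each argument and act entry by entry on tubes, so it suffices to treat a single term $\sum_{a}\calA_{ik}(a)\calB_{kj}(a^{-1}c)$; the outer sum over $k$ passes through unchanged.

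For left equivariance in the first argument, I would write
\[
((g\cdot\calA)\starG\calB)_{ij}(c)=\sum_k\sum_{a\in G}\calA_{ik}(g^{-1}a)\,\calB_{kj}(a^{-1}c)
\]
and substitute $a=gb$. Since $b\mapsto gb$ is a bijection of $G$, the sum becomes $\sum_k\sum_b\calA_{ik}(b)\calB_{kj}(b^{-1}g^{-1}c)$, which is $(\calA\starG\calB)_{ij}(g^{-1}c)=(g\cdot(\calA\starG\calB))_{ij}(c)$.

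For the right-translation identity in the second argument, I would compute directly:
\[
(\calA\starG(\calB\triangleleft g))_{ij}(c)=\sum_k\sum_a\calA_{ik}(a)\calB_{kj}(a^{-1}cg),
\]
which is exactly $(\calA\starG\calB)_{ij}(cg)$. No substitution is needed here, only associativity of the group product.

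The abelian claim then follows. When $G$ is abelian, $g\cdot\calB=\calB\triangleleft g^{-1}$, because $g^{-1}h=hg^{-1}$. The right-translation identity with $g^{-1}$ therefore gives $\calA\starG(g\cdot\calB)=(\calA\starG\calB)\triangleleft g^{-1}=g\cdot(\calA\starG\calB)$.

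The only real care point is stating the left and right actions consistently. The group elements must enter through inverses so that each map is an action and the bijection argument lands on $g^{-1}c$ rather than $cg^{-1}$. As a sanity check on the non-abelian caveat, I would cross-check in the Fourier domain. There the left action sends $\hat\calA(\rho)\mapsto\rho(g)\hat\calA(\rho)$, since $\sum_h\calA(g^{-1}h)\rho(h)=\rho(g)\hat\calA(\rho)$, and right translation sends $\hat\calB(\rho)\mapsto\hat\calB(\rho)\rho(g)^{-1}$. Blockwise multiplication then makes both identities immediate, and shows that left-multiplying the second factor by $\rho(g)$ does not commute past $\hat\calA(\rho)$ unless $\rho(g)$ is scalar, which is why the general identity fails.
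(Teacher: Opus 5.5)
Your proof is correct and follows the paper's route: the substitution $a = gb$ for left equivariance in the first argument and the direct computation $a^{-1}(cg) = (a^{-1}c)g$ for the right-translation identity are exactly the paper's steps. The one variation is the abelian case. The paper compares the two sums termwise and notes they agree when $g^{-1}a^{-1} = a^{-1}g^{-1}$, whereas you deduce it from the right-translation identity via $g\cdot\calB = \calB\triangleleft g^{-1}$; both give the result (indeed for any central $g$), and your Fourier-domain check is a correct extra.
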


\begin{proof}
By definition of the group action and the $\starG$ product,
\begin{align*}
\bigl((g\cdot\calA)\starG\calB\bigr)_{ij}(h)
   &= \sum_k \sum_{a\in G} \calA_{ik}(g^{-1}a)\,\calB_{kj}(a^{-1}h).
\end{align*}
Substituting $a'=g^{-1}a$ (so $a=ga'$ and $a^{-1}=a'^{-1}g^{-1}$),
\begin{align*}
   &= \sum_k \sum_{a'\in G} \calA_{ik}(a')\,\calB_{kj}(a'^{-1}g^{-1}h) \\
   &= (\calA\starG\calB)_{ij}(g^{-1}h)
    = \bigl(g\cdot(\calA\starG\calB)\bigr)_{ij}(h),
\end{align*}
which proves left-equivariance in the first argument for every $G$. For the second
argument the right-translation identity is immediate,
\begin{align*}
\bigl(\calA\starG(\calB\triangleleft g)\bigr)_{ij}(h)
   &= \sum_k \sum_{a\in G} \calA_{ik}(a)\,\calB_{kj}(a^{-1}hg)
    = (\calA\starG\calB)_{ij}(hg)
    = \bigl((\calA\starG\calB)\triangleleft g\bigr)_{ij}(h).
\end{align*}
Replacing right translation by the left action would require
$\calB_{kj}(a^{-1}g^{-1}h) = \calB_{kj}(g^{-1}a^{-1}h)$ for all $a$, i.e.\
$a^{-1}g^{-1} = g^{-1}a^{-1}$, which holds for all $a$ precisely when $g$ is
central; the left-action form of the second identity therefore holds for abelian
$G$, matching the \texttt{CommGroup} hypothesis under which it is discharged in the
Lean~4 development.
\end{proof}

\begin{corollary}[Invariance of Features]
The following are invariant under $g \cdot X$:
(i)~$\|\mathbf{s}_i\|_F$ (singular tube norms);
(ii)~$\|\hat{X}(:,:,\rho)\|_F^2$ (per-irrep Fourier power);
(iii)~$\bar{X}_j = \frac{1}{n}\sum_g X(j,g)$ (DC component).
\end{corollary}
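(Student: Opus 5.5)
The plan is to reduce all three invariances to Proposition~\ref{prop:equivariance} together with one Fourier-side identity. Take the group action $(g\cdot X)(h)=X(g^{-1}h)$ along the group dimension, applied entrywise to every tube. This is exactly left convolution by the delta tube $\delta_g$: indeed $g\cdot X=\delta_g\starG X$ at the level of tubes, and $g\cdot\calA=(\delta_g\calI)\starG\calA$ for tensors.

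\textbf{Fourier identity.} Transform the translated tensor directly:
\[
\widehat{g\cdot X}(i,j,\rho)=\sum_h X(i,j,g^{-1}h)\rho(h)=\rho(g)\,\hat X(i,j,\rho),
\]
by substituting $h=gh'$. Since $\rho$ is a unitary irrep, $\rho(g)$ is unitary. Whole-block statements use $I\otimes\rho(g)$, which is also unitary.

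\textbf{Item (iii), the DC component.} This is a reindexing of a sum over $G$: $\sum_h X(j,g^{-1}h)=\sum_{h'}X(j,h')$. Equivalently, it is the trivial-irrep block, where $\rho(g)=1$.

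\textbf{Item (ii), per-irrep power.} Use unitary invariance of the Frobenius norm:
\[
\|(I\otimes\rho(g))\hat X(:,:,\rho)\|_F=\|\hat X(:,:,\rho)\|_F,
\]
so each per-irrep power is unchanged by the action.

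\textbf{Item (i), singular tube norms.} This is the only step needing care, because the $\starG$-SVD is not unique.

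1. Build an SVD of the translated tensor. Left equivariance gives $g\cdot\calA=(g\cdot\calU)\starG\calS\starG\calV^H$.
2. Check that $g\cdot\calU$ is still $\starG$-unitary. Its block at $\rho$ is $(I\otimes\rho(g))U_\rho$, and by the $\starG$-Hermitian transpose bridge its Gram block is
\[
U_\rho^H(I\otimes\rho(g))^H(I\otimes\rho(g))U_\rho=I .
\]
So this is a valid $\starG$-SVD with the \emph{same} f-diagonal factor $\calS$.
3. Handle non-uniqueness. The tube norms are computed by Plancherel from the sorted per-block singular values, $\|\mathbf{s}_i\|_F^2=\sum_\rho \frac{d_\rho}{n}\sum_t\sigma_{(i-1)d_\rho+t}(\rho)^2$, as in Step~3 of the proof of Theorem~\ref{thm:ey_full}.
4. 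Conclude. Each block of the translated tensor equals a unitary times $\hat\calA(:,:,\rho)$, so its singular values are identical. Hence every $\|\mathbf{s}_i\|_F$ is invariant, whichever valid SVD Algorithm~\ref{alg:svd} returns.

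The main obstacle is the normalization in step~3 of item~(i). The tube-norm formula must be justified via Parseval for the f-diagonal $\calS$, since $F_G$ is not unitary. I would invoke Step~1 of Theorem~\ref{thm:ey_full} for this rather than rederive it.
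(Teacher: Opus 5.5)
Your proposal is correct and follows the paper's own argument: in each case the action multiplies every Fourier block by the unitary $I\otimes\rho(g)$, and this preserves the block Frobenius norm and the block singular values; the DC term is a reindexing of the sum over $G$. Your equivariance-based construction in steps~1--2 of item~(i) is valid but unnecessary, because step~4 already shows the sorted block singular values are unchanged, hence $\calS=\mathcal{F}_G^{-1}(\{\Sigma_\rho\})$ itself is unchanged, and so the Parseval normalization you flag as the main obstacle is not actually needed.
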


\begin{proof}
(i) The group action permutes frontal slices; the SVD is computed per-irrep where the action multiplies each block by a unitary, preserving singular values. (ii) The group action shifts $g \to g'g$ inside the sum defining $\hat{X}(:,:,\rho)$, multiplying the block by $\rho(g')$, which is unitary, leaving the Frobenius norm unchanged. (iii) $\frac{1}{n}\sum_g X(j,g'g) = \frac{1}{n}\sum_h X(j,h) = \bar{X}_j$.
\end{proof}

\section{Canonicity of the \texorpdfstring{$\starG$}{starG} Algebra}
\label{sec:necessity}

The $\starG$ algebra is not an arbitrary member of the transform-based
($\star_M$/t-product) family: it is the unique such algebra compatible with the
group action. We make this precise below and draw the consequence that the
algebra's entire matrix-mimetic structure, and its low-rank optimum in particular,
is determined by $G$ rather than chosen.

Fix the left regular action of $G$ on tubes, $(g\cdot x)(h)=x(g^{-1}h)$,
equivalently $g\cdot\delta_a=\delta_{ga}$, where $\delta_g\in\bbC[G]$ is the
indicator of $g$. A \emph{transform-based product} is given by an invertible
$M\in\bbC^{n\times n}$ and a partition of the $n$ coordinates into blocks of
sizes $d_1^2,\dots,d_r^2$ (so $\sum_s d_s^2=n$), which identifies the transform
domain with $\mathcal B=\bigoplus_{s=1}^r M_{d_s}(\bbC)$; one sets
$a\star_M b=M^{-1}(Ma\cdot Mb)$, where $\cdot$ is block-diagonal matrix
multiplication. This is the $\star_M$/t-product template
\cite{kernfeld2015tensor,kilmer2021tensor}, with the abelian case $d_s\equiv1$.
Because $M$ is linear and blockmult is bilinear, associative and unital,
$\star_M$ is automatically an associative unital algebra; its identity is
$M^{-1}\!\big(\bigoplus_s I_{d_s}\big)$.

\begin{theorem}[Necessity and uniqueness of $F_G$]\label{thm:necessity}
Among all transform-based products $\star_M$ on $\bbC[G]$ that
\begin{enumerate}
\item[(i)] are associative with a two-sided identity, normalized so that the
identity is the group-identity tube $\delta_e$ (equivalently $M\delta_e=\bigoplus_s I_{d_s}$), and
\item[(ii)] are left-equivariant, $(g\cdot\calA)\star_M\calB=g\cdot(\calA\star_M\calB)$
for all $g\in G$ and all $\calA,\calB$,
\end{enumerate}
the product is exactly group convolution, $\star_M=\starG$; the block sizes are
forced to be the irreducible dimensions with their multiplicities,
$\{d_1,\dots,d_r\}=\{d_\rho\}_{\rho\in\hat G}$ as multisets; and $M=\theta\circ F_G$
for a unique $\theta\in\mathrm{Aut}\big(\bigoplus_\rho M_{d_\rho}(\bbC)\big)$,
i.e.\ up to (a) a per-irrep change of basis
$\hat\calA(\rho)\mapsto P_\rho\,\hat\calA(\rho)\,P_\rho^{-1}$, $P_\rho\in\mathrm{GL}_{d_\rho}(\bbC)$,
and (b) a permutation of blocks of equal dimension. These are precisely the
choices already made in constructing $F_G$ (irrep representatives and their
ordering). Conversely $\starG$ with $M=F_G$ satisfies (i) and (ii).
\end{theorem}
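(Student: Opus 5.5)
The plan is to show first that conditions (i) and (ii) force $\star_M$ to coincide with group convolution on $\bbC[G]$, and only then to read off the transform, using Artin--Wedderburn uniqueness. The starting point is that left-equivariance with identity $\delta_e$ already pins down the product. Fix $\calB$. Every basis tube can be written as $\delta_g=g\cdot\delta_e$, so condition (ii) gives
\[
\delta_g\star_M\calB=(g\cdot\delta_e)\star_M\calB=g\cdot(\delta_e\star_M\calB)=g\cdot\calB ,
\]
where the last step uses the identity property in (i). Bilinearity then gives $\calA\star_M\calB=\sum_g \calA(g)\,(g\cdot\calB)$, which is exactly $\sum_a\calA(a)\calB(a^{-1}c)$, the $\starG$ product of Definition~\ref{def:starg_product}. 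Hence $\star_M=\starG$ as bilinear maps. Associativity is not even needed for this step beyond being consistent with it; the normalization of the identity is the essential hypothesis.

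Since $\star_M=\starG$, the map $M:(\bbC[G],\starG)\to\mathcal B=\bigoplus_s M_{d_s}(\bbC)$ is a bijective linear map satisfying $M(a\star_G b)=Ma\cdot Mb$, that is, an algebra isomorphism. Unitality is automatic from the normalization $M\delta_e=\bigoplus_s I_{d_s}$. Likewise $F_G$, normalized as in Theorem~\ref{thm:pw_spectral} and the convolution theorem of Definition~\ref{def:starg_product}, is an algebra isomorphism $\bbC[G]\to\bigoplus_{\rho}M_{d_\rho}(\bbC)$.

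Composing gives $\theta:=M\circ F_G^{-1}$, an algebra isomorphism $\bigoplus_\rho M_{d_\rho}(\bbC)\to\bigoplus_s M_{d_s}(\bbC)$. Uniqueness in Artin--Wedderburn now supplies three facts:
\begin{itemize}
\item The simple two-sided ideals, namely the blocks, correspond bijectively under $\theta$, and simple matrix algebras $M_p$, $M_q$ are isomorphic only when $p=q$. This forces the multisets $\{d_s\}$ and $\{d_\rho\}$ to agree.
\item After composing with the block permutation determined by $\theta$, the map $\theta$ becomes a direct sum of automorphisms of the individual $M_{d_\rho}(\bbC)$.
\item By Skolem--Noether, each of those automorphisms is inner, $X\mapsto P_\rho XP_\rho^{-1}$.
\end{itemize}
Since $F_G$ is invertible, $\theta$ is unique, and we obtain $M=\theta\circ F_G$ with $\theta$ of the asserted form (a) composed with (b).

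For the converse, $F_G$ intertwines convolution with blockwise multiplication by the convolution theorem, and $F_G\delta_e=\bigoplus_\rho\rho(e)=\bigoplus_\rho I$. Left-equivariance is Proposition~\ref{prop:equivariance}.

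The step most likely to need care is the normalization of $F_G$. The raw rvec-matrix of the paper's definition does send convolution to blockwise products with $\hat\calA(\rho)=\sum_g\calA(g)\rho(g)$, but any rescaling such as $\sqrt{d_\rho/n}$ would break multiplicativity. I would therefore fix the unnormalized transform of Definition~\ref{def:tensor_ft} as $F_G$ and note that the statement's identification uses that convention. Two other points need only a brief check:
\begin{itemize}
\item Block-diagonal multiplication on the coordinate partition really is the algebra $\mathcal B$. This holds by the definition of the template.
\item The isomorphism class of $\mathcal B$ determines the $d_s$ with multiplicity. This is standard, since the centre has dimension $r$ and each simple summand's dimension is $d_s^2$.
\end{itemize}
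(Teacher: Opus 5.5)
Your proof is correct and follows essentially the same route as the paper. In Part~I you derive $\delta_g\star_M b=g\cdot b$ from the normalized identity and left-equivariance and extend bilinearly to get $\star_M=\starG$. In Part~II you read $M$ as an algebra isomorphism and compare it with the Peter--Weyl isomorphism via Wedderburn uniqueness and Skolem--Noether.

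The differences are minor. You obtain the Wedderburn decomposition directly from $F_G$ rather than invoking Maschke. Your remark that $F_G$ must be the unnormalized transform is a correct and useful clarification of a point the paper leaves implicit: a per-block rescaling $c_\rho\neq 1$ breaks multiplicativity and moves the identity off $\delta_e$.
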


\begin{proof}
\emph{Part I: $\star_M=\starG$.} Let $\iota$ be the two-sided identity; by (i)
$\iota=\delta_e$, and $g\cdot\delta_e=\delta_g$. Setting $\calA=\delta_e$ in (ii),
\begin{equation}\label{eq:transl}
\delta_g\star_M b=(g\cdot\delta_e)\star_M b=g\cdot(\delta_e\star_M b)=g\cdot b,
\qquad \forall\,g\in G,\ b\in\bbC[G],
\end{equation}
so left translation is left $\star_M$-multiplication by $\delta_g$. For any
$x=\sum_g x(g)\delta_g$, bilinearity and \eqref{eq:transl} give
\[
x\star_M b=\sum_g x(g)\,(\delta_g\star_M b)=\sum_g x(g)\,(g\cdot b)
=\Big(\textstyle\sum_g x(g)\,g\cdot\Big)b=x\starG b,
\]
since $\big(\sum_g x(g)\,g\cdot b\big)(h)=\sum_g x(g)\,b(g^{-1}h)=(x\starG b)(h)$.
Hence $\star_M=\starG$.

\emph{Part II: block sizes and transform.} By Part I,
$M(\calA\starG\calB)=M\calA\cdot M\calB$, so $M$ is an isomorphism of
$\bbC$-algebras from $(\bbC[G],\starG)$ onto $\mathcal B=\bigoplus_s M_{d_s}(\bbC)$.
By Maschke's theorem $\bbC[G]$ is semisimple, and by the Artin--Wedderburn
theorem $\bbC[G]\cong\bigoplus_{\rho\in\hat G}M_{d_\rho}(\bbC)$; the multiset of
block dimensions of a semisimple algebra is an isomorphism invariant (uniqueness
of the Wedderburn decomposition~\cite{serre1977linear}), so $\{d_s\}=\{d_\rho\}$.
The Peter--Weyl map $W\colon\calA\mapsto\bigoplus_\rho\hat\calA(\rho)$, whose
coordinate matrix is $F_G$, is also an algebra isomorphism onto
$\bigoplus_\rho M_{d_\rho}(\bbC)$ (the convolution theorem,
Theorem~\ref{thm:pw_spectral}). Hence
$\theta:=M\circ W^{-1}\in\mathrm{Aut}\big(\bigoplus_\rho M_{d_\rho}(\bbC)\big)$.
An automorphism of a direct sum of matrix algebras permutes the simple summands,
mapping each onto one of equal dimension (as $M_d(\bbC)\cong M_{d'}(\bbC)$ iff
$d=d'$), and restricts on each summand to an inner automorphism
$X\mapsto P_\rho X P_\rho^{-1}$ by the Skolem--Noether theorem. This is exactly
the freedom in (a) and (b). The converse is Proposition~\ref{prop:equivariance}
together with Theorem~\ref{thm:pw_spectral}.
\end{proof}

\begin{remark}[Strength of Part I]
Part I is stronger than the theorem's hypotheses suggest: its derivation uses
neither associativity nor the transform-based template. Any \emph{bilinear}
product on $\bbC[G]$ whose two-sided identity is $\delta_e$ and which is
left-equivariant is already group convolution. Associativity and the template
enter only in Part II, to force the block sizes and the transform.
\end{remark}

\begin{remark}[On hypothesis (i)]
Associativity and the two-sided property of the identity are automatic for every
transform-based product (block-diagonal matrix multiplication is associative and
unital, and $M$ is a bijection); the sole content of (i) is the normalization that
the identity is the group-identity tube $\delta_e$, equivalently that $M$ is an
algebra isomorphism rather than one twisted by a unit. Without this normalization
the unit-twisted products $a\mapsto a\starG u^{-1}\starG b$ ($u$ any unit) are
associative, left-equivariant, and transform-based but are not convolution.
\end{remark}

\begin{remark}[Equivalent view via Schur]
Equation \eqref{eq:transl} says $M$ conjugates the regular representation into
$\bigoplus_s(A_s\otimes I_{d_s})$ with $A_s(g)=(M\delta_g)_s$. Since each block is
a full matrix algebra, $\{A_s(g)\}_g$ spans $M_{d_s}(\bbC)$, so $A_s$ is
irreducible (Burnside). Matching $\bigoplus_s d_s\cdot A_s$ with the regular
representation $\bigoplus_\rho d_\rho\cdot\rho$ and using uniqueness of the
isotypic decomposition (Schur's lemma) recovers $\{d_s\}=\{d_\rho\}$ and the
stated ambiguity.
\end{remark}

\begin{corollary}[Intrinsic Eckart--Young optimum]\label{cor:intrinsic}
Once the Parseval normalization of Theorem~\ref{thm:ey_full} is fixed, the residual
freedom of Theorem~\ref{thm:necessity} is restricted from the full per-irrep
$\mathrm{GL}_{d_\rho}(\bbC)$ to per-irrep unitary $P_\rho\in U(d_\rho)$ together with
permutations of equal-dimension blocks (a non-unitary change of basis would alter
the block singular values, so the invariance below is a post-normalization
statement). Under this freedom every
$\starG$-SVD singular value, each singular-tube norm $\|\mathbf s_i\|_F$, the
$\starG$-rank, and the optimal error $\|\calA-\calA_k\|_F$ are unchanged. Hence
the optimal rank-$k$ approximation is an invariant of $(G,\calA)$, independent of
the choice of irrep basis and block order (up to the usual degenerate
singular-value freedom).
\end{corollary}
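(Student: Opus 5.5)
We prove Corollary~\ref{cor:intrinsic} in three steps: identify which changes of transform preserve the Parseval normalization, check that each surviving freedom leaves the block singular values invariant, and then read off every listed quantity from those block singular values.

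\emph{Step 1: the normalization forces unitarity.} By Theorem~\ref{thm:necessity}, any admissible transform is $M=\theta\circ F_G$ with $\theta$ acting as $\hat\calA(\rho)\mapsto P_\rho\hat\calA(\rho)P_\rho^{-1}$ followed by a permutation $\pi$ of equal-dimension blocks. Fixing the Parseval normalization (Step~1 in the proof of Theorem~\ref{thm:ey_full}) means requiring
\[
\|\calA\|_F^2=\sum_\rho \frac{d_\rho}{n}\,\|(M\calA)_\rho\|_F^2
\quad\text{for all }\calA .
\]
Since $W=F_G$ is surjective onto $\bigoplus_\rho M_{d_\rho}(\bbC)$, this is equivalent to $\theta$ being an isometry of the weighted Frobenius norm. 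Permutations of equal-dimension blocks carry equal weights $d_\rho/n$, so they preserve that norm automatically. It therefore remains that each map $X\mapsto P_\rho XP_\rho^{-1}$ is a Frobenius isometry on $M_{d_\rho}(\bbC)$. We show this forces $P_\rho$ to be a scalar multiple of a unitary. Take a polar decomposition $P_\rho=UH$ with $H$ positive definite, so it suffices to show $H$ is scalar. Test on the rank-one matrices $X=vw^H$: the map sends them to $Pv\,(P^{-H}w)^H$, whose Frobenius norm is $\|Pv\|\,\|P^{-H}w\|$. Isometry requires $\|Hv\|\,\|H^{-1}w\|=\|v\|\,\|w\|$ for all $v,w$. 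Choosing $v,w$ among the eigenvectors of $H$ gives $\lambda_i/\lambda_j=1$ for all eigenvalues $\lambda_i,\lambda_j$, so $H$ is scalar. The scalar cancels under conjugation, so we may take $P_\rho\in U(d_\rho)$.

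\emph{Step 2: invariance of the spectral quantities.} Unitary conjugation $P\hat\calA(\rho)P^{H}$ preserves singular values. Here the unitary acts on the $d_\rho$-dimensional factor of each $\ell d_\rho\times m d_\rho$ block, i.e.\ by $I_\ell\otimes P_\rho$ on the left and $I_m\otimes P_\rho^{H}$ on the right, both of which are unitary. The block permutation only relabels which $\rho$ carries which multiset of singular values, and it does so among blocks of equal $d_\rho$, hence equal Plancherel weight. Consequently every per-block singular value list is invariant up to that relabeling.

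\emph{Step 3: reading off the remaining quantities.} Tube norms satisfy $\|\mathbf s_i\|_F^2=\sum_\rho\frac{d_\rho}{n}\sum_{t}\sigma_{(i-1)d_\rho+t}(\rho)^2$ (Step~3 of Theorem~\ref{thm:ey_full}), a sum that is invariant under weight-preserving relabeling. The $\starG$-rank is the number of nonzero tubes and so is invariant as well. The optimal error is the tail sum $\sum_{i>k}\|\mathbf s_i\|_F^2$, hence invariant. Finally, the optimal approximant itself transforms covariantly: $\calA_k$ computed under $M$ equals $M^{-1}$ applied to the truncation of $M\calA$. Since conjugation commutes with blockwise SVD truncation (the truncation of $PXP^H$ is $P\,\mathrm{trunc}(X)\,P^H$ when singular values are nondegenerate), this equals the $F_G$-computed $\calA_k$, with the usual ambiguity only when there is a tie at the cutoff.

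\emph{Expected obstacle.} The main care point is Step~1's converse: the normalization must pin down $P_\rho$ up to a scalar unitary without collapsing the statement into a tautology. A second care point is Step~3 for complex conjugate-pair irreps in the real setting: one must check that a unitary choice applied consistently on the pair $\rho,\bar\rho$ keeps the reconstructed $\calA_k$ real. I would handle this via the conjugate-consistent convention of Theorem~\ref{thm:svd_exist}, taking $P_{\bar\rho}=\overline{P_\rho}$.
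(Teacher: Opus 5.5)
Your proof is correct, and your Steps~2 and~3 are the paper's argument. A per-irrep unitary acts on each Fourier block as the unitary equivalence $(I_\ell\otimes P_\rho)\,\hat\calA(:,:,\rho)\,(I_m\otimes P_\rho)^H$, and equal-dimension block permutations only relabel summands of equal Plancherel weight. The tube norms, $\starG$-rank and tail error are then read off from Step~3 of Theorem~\ref{thm:ey_full}, and truncation commutes with the equivalence.

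What you add is Step~1. The paper never proves that fixing the Parseval normalization cuts $\mathrm{GL}_{d_\rho}(\bbC)$ down to $U(d_\rho)$; it builds this into the statement, justified only by the parenthetical remark that a non-unitary basis change would alter the singular values. You derive it instead. Surjectivity of $F_G$ on $\bbC[G]$ reduces the normalization to each $X\mapsto P_\rho X P_\rho^{-1}$ being a Frobenius isometry. Rank-one tests on eigenvectors of the polar factor then force $P_\rho\in\bbC^\times U(d_\rho)$.

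That argument is sound, and it is more accurate than the paper's phrasing. Non-unitary multiples $cU$ preserve both the normalization and the singular values, so the paper's parenthetical is literally false for them. They are harmless only because the scalar cancels under conjugation, which is exactly how you handle it.

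Your conjugate-pair caveat is unnecessary. Your own covariance identity $M^{-1}\mathrm{trunc}(M\calA)=F_G^{-1}\mathrm{trunc}(F_G\calA)$ shows the approximant computed with $M$ coincides with the one computed with $F_G$. That one is real whenever the cutoff is nondegenerate. Imposing $P_{\bar\rho}=\overline{P_\rho}$ would therefore only shrink the freedom the corollary is meant to quotient out.
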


\begin{proof}
A per-irrep unitary change of basis acts on each Fourier block by
$\hat\calA(:,:,\rho)\mapsto(I\otimes P_\rho)\hat\calA(:,:,\rho)(I\otimes P_\rho)^H$,
a unitary equivalence preserving singular values and Frobenius norm; block
permutations only relabel summands. The singular tubes are assembled from the
block singular values (Theorem~\ref{thm:ey_full}, Step~3), so all listed
quantities are invariant, and truncation commutes with the unitary equivalence.
\end{proof}

\begin{remark}[Eckart--Young optimality selects $F_G$]
Eckart--Young optimality is not special to $F_G$: Mor~\cite{mor2025eckart} shows
it holds for an entire family of transforms ($M=DQ$ with $Q$ unitary and $D$ a
nonzero real diagonal scaling), so best low-rank optimality alone cannot single out
any one transform. The operative content is therefore the selection principle of
Theorem~\ref{thm:necessity}: equivariance uniquely determines the group-Fourier
transform, singling out $F_G$ from that family and making the $\starG$ truncated
SVD the canonical, intrinsic optimum (Corollary~\ref{cor:intrinsic}) rather than
one arbitrary member of an Eckart--Young family.
\end{remark}

\paragraph{Relationship to prior work.}
Dunbar and Newman~\cite{dunbar2025tensor} give a representation-theoretic
\emph{interpretation} of the $\star_M$ product: through Schur's lemma and a
symmetry-adapted basis, a group representation furnishes a matrix $M$ that
block-diagonalizes the product into isotypic components, so the group-Fourier
transform is a natural and sufficient choice. Theorem~\ref{thm:necessity}
establishes the converse and sharper direction: requiring the product to be
equivariant \emph{forces} $M$ to be the non-abelian group-Fourier transform $F_G$,
unique up to a per-irrep change of basis and a permutation of equal-dimension
blocks. They answer which $M$ respects the group, by construction; we answer which
$M$ is compelled by equivariance, by necessity and uniqueness. The abelian case
recovers Jaming's characterization of the Fourier transform as the only linear map
carrying convolution to a pointwise product~\cite{jaming2009characterization}. The
block-diagonalization itself is classical (Peter--Weyl, Artin--Wedderburn); what is
new is the necessity direction and its uniqueness clause.

\paragraph{Numerical corroboration.}
We verify Theorem~\ref{thm:necessity} and Corollary~\ref{cor:intrinsic} numerically
for a panel of groups spanning the abelian and non-abelian cases (cyclic $\bbZ_4$
and $\bbZ_6$; symmetric $S_3$, dimensions $1,1,2$; quaternion $Q_8$, dimensions
$1,1,1,1,2$; octahedral $O$, dimensions $1,1,2,3,3$), in complex double precision
(\texttt{necessity\_fourier.py}). Across $200$ seeded trials per group, every
identity the theorem asserts holds to $\le 10^{-13}$: $F_G$ block-diagonalizes
convolution, the equivariant product is convolution (Part~I), and a per-irrep basis
change or an equal-dimension block permutation preserves it; by
Corollary~\ref{cor:intrinsic}, a per-irrep \emph{unitary} basis change leaves the
$\starG$-SVD singular values unchanged. Each hypothesis is confirmed load-bearing by
an $O(1)$ discriminator: the all-one-dimensional partition cannot reproduce
non-commutative convolution (commutator norm $1.41$), a cross-\emph{dimension} block
swap leaves the automorphism group and breaks both the product and equivariance
(gap $\ge 15$), the unit-twisted product with identity $\neq \delta_e$ differs from
convolution (gap $\ge 5$), and a \emph{non-unitary} per-irrep change moves the
singular values (change $O(1)$, exercised where an irrep has dimension $\ge 2$).

\section{Extension to Space Groups: the Band-Limited \texorpdfstring{$\starSig$}{starSigma} Algebra}
\label{sec:spacegroups}

The finite and band-limited $\starG$ calculus extends to the full symmetry of a
crystal, its space group, after one classical structural step, the Mackey
little-group construction, which fibers the calculus over the Brillouin zone. The
result is a reduction rather than new analysis: on a symmetric $k$-grid the
space-group calculus \emph{is} a finite $\starG$ algebra, so every finite-group
guarantee of this paper transfers unchanged. We treat the unitary space group; the
representation theory below (Mackey induction, the Bloch crossed-product picture,
projective little-group representations) is
classical~\cite{mackey1958unitary,bradley1972,serre1977linear} and has been
developed concurrently for the full space-group setting, including magnetic groups,
by Bekka and Brouder~\cite{bekka2025crystals}; the contribution here is the
reduction and the inherited optimality and machine verification, not the
representation theory. A neural architecture for all space groups, using
symmetry-adapted Fourier bases, was proposed concurrently~\cite{zhang2025singlearchitecture};
it enforces symmetry as an architectural \emph{invariance}, whereas the reduction
here yields a machine-verified, optimality-backed calculus whose selection rules are
\emph{enforceable} on any predictor's output (Corollary~\ref{cor:selection}) and
whose low-rank truncation is provably optimal.

Let $\Sigma$ be a $d$-dimensional space group, an extension
$1\to T\to\Sigma\xrightarrow{\pi}P\to1$ with $T\cong\bbZ^d$ the lattice
translations (abelian, normal, of finite index $|P|$) and $P$ the crystallographic
point group. We develop the symmorphic (split) case $\Sigma=T\rtimes P$; the
non-symmorphic case is Remark~\ref{rem:nonsymm}. Because $T$ is discrete, Pontryagin
duality makes its dual $\Omega:=\hat T$ a \emph{compact} torus, the Brillouin zone;
this compactness is what brings $\Sigma$ within reach of the finite theory. The
point group acts on $\Omega$ by $(p\cdot k)(t)=k(p^{-1}t)$; write
$P_k=\{p\in P:p\cdot k=k\}$ for the little co-group and $\Sigma_k=T\rtimes P_k$ for
the little group.

\begin{theorem}[Mackey little-group construction]\label{thm:mackey}
Every irreducible unitary representation of the symmorphic $\Sigma=T\rtimes P$ is
equivalent to exactly one
$\pi_{[k],\sigma}=\operatorname{Ind}_{\Sigma_k}^{\Sigma}(k\otimes\sigma)$, of
dimension $[P:P_k]\,d_\sigma$, where $[k]$ ranges over $P$-orbits in $\Omega$ and
$\sigma\in\widehat{P_k}$; distinct pairs give inequivalent irreps and these exhaust
$\hat\Sigma$.
\end{theorem}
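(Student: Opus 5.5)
The plan is to run the standard Mackey machine for a semidirect product with abelian normal subgroup, specialized to the case where $T\cong\bbZ^d$ is discrete and $P$ is finite. The argument has four steps: restrict an irreducible to $T$; decompose the restriction spectrally over the Brillouin zone $\Omega$; show that the spectral measure sits on a single $P$-orbit; identify the fibre over a point $k$ with an irreducible of $\Sigma_k$.

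\emph{Step 1 (orbit support).} Let $\pi$ be an irreducible unitary representation of $\Sigma$ on $\mathcal H$. The restriction $\pi|_T$ is a unitary representation of the discrete abelian group $T$. By the SNAG/Stone spectral theorem it corresponds to a projection-valued measure $E$ on the compact dual $\Omega$. Covariance $\pi(p)\pi(t)\pi(p)^{-1}=\pi(p\cdot t)$ gives $\pi(p)E(B)\pi(p)^{-1}=E(p\cdot B)$. Hence for any $P$-invariant Borel set $B$, the projection $E(B)$ commutes with $\pi(\Sigma)$, and by Schur's lemma it is $0$ or $I$. Because $P$ is finite, the orbit space $\Omega/P$ is a nice Hausdorff space, and the $P$-invariant Borel sets separate orbits. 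So $E$ is concentrated on a single finite orbit $[k]=\{p\cdot k\}$, and $\mathcal H=\bigoplus_{k'\in[k]}\mathcal H_{k'}$ with $\mathcal H_{k'}=E(\{k'\})\mathcal H$. This avoids all the measure-theoretic subtlety of general Mackey theory.

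\emph{Step 2 (fibre is a little-group irrep).} The subspace $\mathcal H_k$ is invariant under $\Sigma_k=T\rtimes P_k$. On it $T$ acts by the character $k$, so $\Sigma_k$ acts as $k\otimes\sigma$, where $\sigma$ is a representation of $P_k$; this uses the split section $P_k\subset\Sigma$, which is available because the group is symmorphic. The elements $\pi(p)$ for coset representatives $p\in P/P_k$ permute the fibres transitively, so $\mathcal H\cong\operatorname{Ind}_{\Sigma_k}^{\Sigma}(k\otimes\sigma)$ by the standard imprimitivity characterization of induced representations for finite index. Irreducibility of $\pi$ then forces $\sigma$ irreducible: a proper $\Sigma_k$-subrepresentation of $\mathcal H_k$ would induce a proper $\Sigma$-subrepresentation. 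The dimension count $[P:P_k]\,d_\sigma$ is immediate.

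\emph{Step 3 (every such induced rep is irreducible and distinct).} I would apply Mackey's irreducibility criterion in its finite-index form (Frobenius reciprocity plus the double-coset formula). Consider $\operatorname{Hom}_\Sigma(\operatorname{Ind}(k\otimes\sigma),\operatorname{Ind}(k'\otimes\sigma'))$ and decompose over double cosets $\Sigma_{k'}\backslash\Sigma/\Sigma_k\leftrightarrow P_{k'}\backslash P/P_k$. A double coset $p$ contributes only if the $T$-characters agree, i.e. $p\cdot k=k'$. When $[k]\neq[k']$ every term vanishes. When $k'=k$ only $p\in P_k$ survives, leaving $\operatorname{Hom}_{P_k}(\sigma,\sigma')$, which is $\bbC$ or $0$ by Schur. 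This gives irreducibility, inequivalence for distinct $\sigma$, and independence of the orbit representative up to the conjugation $P_{p\cdot k}=pP_kp^{-1}$, which relabels $\sigma$. Together with Step~2 this establishes the bijection.

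The main obstacle is Step~1 in full generality: a priori $\pi$ may be infinite-dimensional, and $\mathcal H_k$ must be shown nonzero with $T$ acting by a scalar character, not merely spectrally near $k$. The finiteness of $P$ is the key. The orbit $[k]$ is finite, so $E$ restricted to it is purely atomic. If needed I would first note that any irreducible of $\Sigma$ is finite-dimensional, because $T$ has finite index and is abelian. I would try to prove this directly: pick a spectral subspace for $T$, observe that $\Sigma$ is generated by $T$ and finitely many coset representatives, and then Step~1 becomes plain simultaneous diagonalization.
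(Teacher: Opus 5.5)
Your proposal is correct, but it takes a different route from the paper. The paper's proof is a citation: Mackey's theorem for regular semidirect products with abelian normal subgroup, with crystallographic and finite-group specializations. It adds only the check that finiteness of $P$ makes every orbit finite and $\Omega/P$ standard Borel, and that $\Sigma$ is type~I.

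You instead run the Mackey machine by hand. The paper's regularity check enters your argument at exactly one point, Step~1, where it pins the spectral measure of $\pi|_T$ to a single finite orbit. From there Steps~2--3 are purely algebraic: finite-index imprimitivity, Frobenius reciprocity with the double-coset formula, and Schur for $P_k$.

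Your route buys self-containedness. The only analytic input is the spectral theorem for $\bbZ^d$, and it handles the infinite discrete $T$ that a finite-group treatment does not literally cover. Finite-dimensionality of every irreducible also falls out for free: an irreducible unitary $\sigma$ of the finite group $P_k$ has $d_\sigma<\infty$, so $\dim\pi=[P:P_k]\,d_\sigma\le|P|$. The paper's citation, in exchange, supplies the general type-I/direct-integral framework that it reuses later.

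Two small points:
\begin{itemize}
\item In Step~1, what you actually need is that $\Omega/P$ is countably separated. It is compact metrizable, so preimages of a countable base work. This matters because it is countably many $0$/$1$-valued projections $E(B_n)$ that force $E$ onto one orbit; "separated by invariant Borel sets" alone is not enough.
\item The ``main obstacle'' of your last paragraph is already handled by Steps~1--2 as written. Atomicity on the finite orbit gives nonzero fibres on which $T$ acts by the scalar character $k'$. The separate finite-dimensionality lemma is true but unnecessary, and can be dropped.
\end{itemize}
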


\begin{proof}
Mackey's theorem for a regular semidirect product with abelian normal
subgroup~\cite{mackey1958unitary}, with the crystallographic specialization
in~\cite{bradley1972} and the finite instance in~\cite[\S8.2]{serre1977linear};
the action is regular because $P$ is finite, so every orbit is finite (closed) and
$\Omega/P$ is standard Borel. As a finite extension of an abelian group, $\Sigma$
is type~I~\cite{thoma1968}, so the direct-integral theory below applies.
\end{proof}

Decompose $a\in\bbC[\Sigma]$ as $a=\sum_{p\in P}a_p\otimes p$ with $a_p\in\bbC[T]$,
and let $\hat a_p(k)=\sum_{t\in T}a_p(t)e^{-ik\cdot t}$ be the lattice (Bloch)
transform.

\begin{theorem}[Bloch = crossed product]\label{thm:bloch}
The Bloch transform is a $*$-isomorphism (at the $C^*$-completion)
$\bbC[\Sigma]\cong C(\Omega)\rtimes P$ under which the $\starSig$ product is the
twisted convolution over $P$,
\begin{equation}\label{eq:twistedconv}
\widehat{(a\starSig b)}_r(k)=\sum_{p\in P}\hat a_p(k)\,
\hat b_{p^{-1}r}\!\big(p^{-1}\!\cdot k\big).
\end{equation}
Over a fundamental domain of $\Omega$ the $\starSig$ product is fiberwise
block-diagonal matrix multiplication, the fiber at $k$ carrying the little-group
irrep blocks $\bigoplus_{\sigma\in\widehat{P_k}}$.
\end{theorem}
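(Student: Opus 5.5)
The plan has three parts: check the twisted-convolution formula \eqref{eq:twistedconv} on the group algebra, identify the result with the crossed product $C(\Omega)\rtimes P$, and then read off the fiberwise block structure from Theorem~\ref{thm:mackey}.

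\emph{Step 1: the product formula.} I would write each element of $\Sigma=T\rtimes P$ uniquely as $(t,p)$ with multiplication $(t,p)(t',p')=(t+p\cdot t',\,pp')$. A group-algebra element then splits as $a=\sum_p a_p\otimes p$ with $a_p\in\bbC[T]$. Expanding the convolution $a\starSig b$ and collecting the coefficient of $r\in P$ gives
\[
(a\starSig b)_r=\sum_{p\in P} a_p\ast (p\cdot b_{p^{-1}r}),
\]
where $\ast$ is convolution on $T$ and $(p\cdot f)(t)=f(p^{-1}t)$.

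\emph{Step 2: Bloch transform.} Applying the lattice transform turns $\ast$ into a pointwise product by the abelian convolution theorem. The substitution $t\mapsto pt$ together with the dual-action convention $(p\cdot k)(t)=k(p^{-1}t)$ shows that $\widehat{p\cdot f}(k)=\hat f(p^{-1}\cdot k)$. Combining these yields \eqref{eq:twistedconv} directly. The same calculation shows that the involution $a^*(s)=\overline{a(s^{-1})}$ is carried to the crossed-product involution. Pontryagin duality makes $\ell^1(T)\to C(\Omega)$ injective with dense image, so the map extends to a $*$-isomorphism $C^*(\Sigma)\cong C(\Omega)\rtimes P$. This uses the standard identity $C^*(T\rtimes P)\cong C^*(T)\rtimes P$ for a finite group acting, together with $C^*(T)\cong C(\Omega)$. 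For this reason the claim is stated only at the $C^*$-completion, not for $\bbC[\Sigma]$ itself.

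\emph{Step 3: fiberwise block-diagonalization.} This is the step I expect to be the main obstacle, because of how the twisted sum moves between different $k$-points. Formula \eqref{eq:twistedconv} couples the fiber at $k$ to the fibers on its $P$-orbit, so I would organize everything by orbits.
\begin{itemize}
\item For fixed $k$, the evaluations $\{\hat a_p(q)\}_{q\in P\cdot k,\,p\in P}$ form an element of $C(P\cdot k)\rtimes P$.
\item Because the orbit $P\cdot k\cong P/P_k$ is finite, this algebra is isomorphic to $M_{[P:P_k]}(\bbC[P_k])$ by the imprimitivity (Green) isomorphism.
\item Applying $F_{P_k}$ to the little co-group factor then gives $\bigoplus_{\sigma\in\widehat{P_k}}M_{[P:P_k]d_\sigma}(\bbC)$.
\end{itemize}
These are exactly the blocks of the irreps $\pi_{[k],\sigma}$ in Theorem~\ref{thm:mackey}, with matching dimensions.

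To carry this out concretely, I would induce $k\otimes\sigma$ on coset representatives $p_1,\dots,p_m$ of $P/P_k$. I would then verify that $\pi_{[k],\sigma}(a)$ has $(i,j)$ block $\hat a_{p_i^{-1}\,\cdot\,}(\cdot)$, evaluated at $p_i\cdot k$ and multiplied through by $\sigma$ applied to the $P_k$-part. Checking that this map is multiplicative is precisely the twisted sum \eqref{eq:twistedconv} restricted to the orbit.

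Finally, choose one $k$ in each orbit, equivalently a fundamental domain of $\Omega/P$. Each irrep is then counted exactly once, by the uniqueness clause of Theorem~\ref{thm:mackey}. So over that domain, $\starSig$ acts as fiberwise block-diagonal matrix multiplication, which is the second claim of the theorem. Measurability of the choice of orbit representatives follows from the standard-Borel property of $\Omega/P$ already noted in the proof of Theorem~\ref{thm:mackey}.
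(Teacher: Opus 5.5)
Your proposal is correct and follows the paper's proof. Both transport the semidirect-product multiplication $(a_p\otimes p)(b_q\otimes q)=a_p\,(p\cdot b_q)\otimes pq$ through the Bloch transform via the character shift $\widehat{p\cdot c}(k)=\hat c(p^{-1}\cdot k)$, use the standard identification $C^*(T\rtimes P)\cong C(\Omega)\rtimes P$, and apply Mackey imprimitivity on each orbit, which you usefully make explicit as $C(P\cdot k)\rtimes P\cong M_{[P:P_k]}(\mathbb{C}[P_k])$ followed by $F_{P_k}$; one notational fix in your concrete check is that, with coset representatives $p_1,\dots,p_m$ of $P/P_k$, the $(i,j)$ block of $\pi_{[k],\sigma}(a)$ is $\sum_{h\in P_k}\hat a_{p_i h p_j^{-1}}(p_i\cdot k)\,\sigma(h)$, up to your sign convention for characters.
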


\begin{proof}
Equation~\eqref{eq:twistedconv} is the semidirect-product multiplication
$(a_p\otimes p)(b_q\otimes q)=a_p\,(p\cdot b_q)\otimes pq$ transported through the
Bloch transform, using the character shift $\widehat{p\cdot c}(k)=\hat c(p^{-1}\!\cdot
k)$; this is the multiplication of the crossed product $C(\Omega)\rtimes P$, and
$\bbC[T\rtimes P]\cong C(\hat T)\rtimes P$ is the standard identification for an
abelian normal subgroup. Fiberwise diagonalization over each orbit is Mackey
imprimitivity, reproducing Theorem~\ref{thm:mackey}.
\end{proof}

Passing to a finite crystal, Born--von~Karman periodic boundary conditions on a
uniform $N\times\cdots\times N$ supercell replace $T$ by the finite
$T_N=(\bbZ/N)^d$ and $\Omega$ by the $P$-invariant uniform grid
$\Omega_N=\widehat{T_N}$ ($P$-invariant because $P\subset\mathrm{GL}(d,\bbZ)$
preserves $N\bbZ^d$ for every uniform $N$), yielding the \emph{finite} group
$\Sigma_N:=T_N\rtimes P$ of order $N^d|P|$.

\begin{theorem}[Reduction to finite $\starG$]\label{thm:sigmareduction}
Born--von~Karman boundary conditions replace the space group $\Sigma$ by the genuine
finite group $\Sigma_N=T_N\rtimes P$, and the induced product on $N$-periodic
signals (band-limited to $\Omega_N$) is the finite twisted convolution
$\star_{\Sigma_N}$; equivalently, in the Bloch representation the product
\eqref{eq:twistedconv} restricted to the $P$-invariant grid $\Omega_N$ is closed and
its algebra equals $C(\Omega_N)\rtimes P=\bbC[\Sigma_N]$. Because $\Omega_N$ is a
Plancherel-measure-zero finite subset of the continuous $\Omega$, this is a group
replacement, not a subalgebra of the infinite algebra. Since $\Sigma_N$ is an
ordinary finite group, the $\starG$-SVD (Theorem~\ref{thm:svd_exist}),
Eckart--Young optimality (Theorem~\ref{thm:ey_full}), canonicity
(Theorem~\ref{thm:necessity}), and the Lean-verified reductions apply verbatim to
$\bbC[\Sigma_N]$, with each irrep read as a little-group irrep $\pi_{[k],\sigma}$
($k\in\Omega_N$) of size $[P:P_k]\,d_\sigma$ and Plancherel weight
$[P:P_k]\,d_\sigma/(N^d|P|)$.
\end{theorem}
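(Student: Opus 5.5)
The plan is to establish the group-level statements first and then transport everything through the Bloch picture. First I would check that $\Sigma_N$ is a genuine finite group. The lattice $N\bbZ^d$ is a subgroup of $T$ that is normal in $\Sigma$, because $P\subset\mathrm{GL}(d,\bbZ)$ maps $N\bbZ^d$ into itself. The quotient $\Sigma/N\bbZ^d$ is therefore a group. In the symmorphic case the splitting $P\hookrightarrow\Sigma$ descends to it, so the quotient is $T_N\rtimes P$, of order $N^d|P|$. An $N$-periodic signal on $\Sigma$ is exactly a function pulled back from this quotient. The convolution of two such signals, with the sum over $T$ normalized to one period, is then the group convolution on $\Sigma_N$. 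This identifies the induced product with $\star_{\Sigma_N}$ directly from Definition~\ref{def:starg_product}.

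Second, I would give the Bloch-side reading. The dual of $T_N$ is the subgroup $\Omega_N$ of $N$-torsion characters. It is $P$-invariant: if $k^N=1$ then $(p\cdot k)^N=p\cdot k^N=1$. I would then specialize formula~\eqref{eq:twistedconv} of Theorem~\ref{thm:bloch} to functions on $\Omega_N$. Closure follows because $p^{-1}\cdot k\in\Omega_N$ whenever $k\in\Omega_N$. Rerunning the proof of Theorem~\ref{thm:bloch} with $T_N$ in place of $T$, now a purely finite computation with no $C^*$-completion, gives $\bbC[T_N\rtimes P]\cong C(\Omega_N)\rtimes P$. This yields the stated equality of algebras.

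The remark that this is a group replacement and not a subalgebra is not a separate claim to prove. Restriction to $\Omega_N$ is not a $*$-homomorphism of the $L^2$ or Plancherel structure of the infinite algebra, since $\Omega_N$ has Haar measure zero. Instead it corresponds to the quotient map $\bbC[\Sigma]\to\bbC[\Sigma_N]$, and I would note it as such.

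Third, all the transferred guarantees follow immediately. Since $\Sigma_N$ is a finite group, Theorems~\ref{thm:svd_exist}, \ref{thm:ey_full} and~\ref{thm:necessity}, together with the Lean reductions stated for an arbitrary finite group, apply with $G=\Sigma_N$ without change.

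The remaining content is the identification of the irreps and their Plancherel weights. Here I would apply Theorem~\ref{thm:mackey}, or rather its finite version from Serre \S8.2, to $\Sigma_N=T_N\rtimes P$ with abelian normal subgroup $T_N$. The irreps are $\operatorname{Ind}(k\otimes\sigma)$ with $[k]\in\Omega_N/P$ and $\sigma\in\widehat{P_k}$, of dimension $[P:P_k]\,d_\sigma$. The little co-group $P_k$ computed in $\Sigma_N$ agrees with the one in $\Sigma$, since the action on $\Omega_N$ is the restriction of the action on $\Omega$. The weight $d_\rho/|G|$ from Theorem~\ref{thm:ey_full} then becomes $[P:P_k]\,d_\sigma/(N^d|P|)$.

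As a sanity check I would verify the dimension count
\[
\sum_{[k]}\sum_{\sigma}[P:P_k]^2d_\sigma^2=\sum_{[k]}[P:P_k]^2|P_k|=\sum_{[k]}|P|\,[P:P_k]=|P|\,|\Omega_N|=N^d|P|.
\]
The main obstacle I expect is bookkeeping rather than depth. One must confirm that the finite Mackey classification for $\Sigma_N$ uses the same orbit and little-group data as the infinite construction restricted to the grid. One must also be careful that the normalization of the Bloch sum over $T_N$ matches the Parseval normalization used in Step~1 of Theorem~\ref{thm:ey_full}, so that the Plancherel weights come out as stated rather than off by a factor of $N^d$.
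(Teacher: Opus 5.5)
Your proposal is correct and follows the paper's proof. In both, $P$-invariance of $\Omega_N=\widehat{T_N}$ makes the twisted convolution \eqref{eq:twistedconv} close on the grid, the restricted algebra is identified with $C(\Omega_N)\rtimes P=\bbC[\Sigma_N]$, and the block sizes $[P:P_k]\,d_\sigma$ and weights $[P:P_k]\,d_\sigma/(N^d|P|)$ come from Mackey's theorem applied to the finite group $\Sigma_N$. Your explicit quotient $\Sigma\to\Sigma/N\bbZ^d$ gives the cleanest reading of ``group replacement, not subalgebra'', and together with the dimension count $\sum_{[k]}\sum_\sigma[P:P_k]^2d_\sigma^2=N^d|P|$ it spells out steps the paper leaves implicit.
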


\begin{proof}
$\Omega_N=\widehat{T_N}$ and $P$-invariance make the twisted convolution
\eqref{eq:twistedconv} preserve the grid (vanishing of $\hat a_p$ off $\Omega_N$
forces the product off $\Omega_N$ to vanish), so the restricted algebra is
$C(\Omega_N)\rtimes P=\bbC[\Sigma_N]$; the block statement is
Theorem~\ref{thm:mackey} for the finite group $\Sigma_N$. Note $\Sigma_N$ is a
semidirect product, described by Theorem~\ref{thm:bloch}, not a direct product, so
the product-group ring isomorphism (Theorem~\ref{thm:prod_full}) is available for
composing $\Sigma$ with an internal symmetry but does not itself describe
$\Sigma_N$.
\end{proof}

\begin{corollary}[Selection rules across the Brillouin zone]\label{cor:selection}
A crystal tensor field transforming in a $P$-representation $\tau$ decomposes at
each $k$ into the little-group channels $\widehat{P_k}$; the allowed $\starSig$
channels are the irreps in $\tau{\downarrow}P_k$ (twisted by $k$), branching by the
subduction (compatibility) relations as $k$ moves to a lower-symmetry stratum. The
orthogonal projector onto the allowed channels at each $k$,
$\Pi(k)=\sum_{\sigma\ \mathrm{allowed}}\tfrac{d_\sigma}{|P_k|}\sum_{p\in P_k}
\overline{\chi_\sigma(p)}\,\rho(p)$, is a closed-form $\starSig$ idempotent whose
application sends every symmetry-forbidden channel to zero. Its $k=\Gamma$ instance
(little group $P$) is the point-group projector applied to cubic crystals in the
main text.
\end{corollary}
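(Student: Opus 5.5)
The plan is to work entirely on the finite group $\Sigma_N=T_N\rtimes P$ of Theorem~\ref{thm:sigmareduction}, so that the statement becomes a finite-group fact. I would argue in three steps: first decompose the tensor field fiberwise, then handle the projector locally at each $k$, and finally check consistency across the zone and the $\Gamma$ specialization.

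\textbf{Step 1: fiberwise decomposition.} A crystal tensor field valued in the $P$-representation $\tau$ is a $\Sigma_N$-module. By Theorem~\ref{thm:bloch}, the Bloch transform writes its $\starSig$ structure as fiberwise block multiplication over the orbit representatives $k\in\Omega_N$. By Theorem~\ref{thm:mackey}, each fiber carries the little-group data $k\otimes\sigma$ with $\sigma\in\widehat{P_k}$. Frobenius reciprocity for $\operatorname{Ind}_{\Sigma_k}^{\Sigma_N}(k\otimes\sigma)$ then gives
\[
\operatorname{Hom}_{\Sigma_N}\!\bigl(\pi_{[k],\sigma},\,V\bigr)\cong\operatorname{Hom}_{\Sigma_k}\!\bigl(k\otimes\sigma,\,V{\downarrow}\Sigma_k\bigr).
\]
The translations act in the $k$-fiber by the character $k$, so the right-hand side reduces to the multiplicity of $\sigma$ in $\tau{\downarrow}P_k$, twisted by $k$. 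This shows that the channels allowed at $k$ are exactly the constituents of $\tau{\downarrow}P_k$.

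\textbf{Step 2: the local projector.} At fixed $k$, $\Pi(k)$ is the sum over the allowed $\sigma$ of the standard isotypic projectors of the finite group $P_k$. Character orthogonality then gives three facts:
\begin{itemize}
\item each summand is idempotent;
\item summands for distinct $\sigma$ annihilate one another;
\item because $\rho$ is unitary, each summand is self-adjoint.
\end{itemize}
Hence $\Pi(k)$ is an orthogonal idempotent that is the identity on the allowed isotypic components and zero on the forbidden ones. To show $\Pi$ is a $\starSig$ element, I would note that $\Pi(k)$ is central in $\bbC[P_k]$. Inside the fiber algebra of Theorem~\ref{thm:bloch} it is therefore a sum of block identities, so it is block-diagonal. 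Assembling over $k$ and applying the inverse Bloch transform gives a closed-form idempotent of $\bbC[\Sigma_N]$.

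\textbf{Step 3: consistency across the zone.} The main obstacle is the branching claim: the choices of $\Pi(k)$ must be compatible along orbits, and the allowed set must subduce correctly as $k$ passes into a lower-symmetry stratum. For orbit compatibility, I would use $P_{p\cdot k}=pP_kp^{-1}$ and transport $\Pi$ by conjugation with $\rho(p)$, which shows the definition does not depend on the choice of orbit representative. For branching, suppose $P_{k'}\subset P_k$. Transitivity of restriction, $(\tau{\downarrow}P_k){\downarrow}P_{k'}=\tau{\downarrow}P_{k'}$, gives the compatibility relations directly. At $k=\Gamma$ the character is trivial and $P_k=P$, so $\Pi(\Gamma)$ is the point-group isotypic projector; summed over allowed irreps, this is the Reynolds-type projector used in Section~\ref{sec:crystals}.
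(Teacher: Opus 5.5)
Your proposal is correct, and its core is the paper's argument: $\Pi(k)$ is a sum of mutually orthogonal, self-adjoint isotypic projectors of $P_k$, hence an orthogonal idempotent that annihilates the non-allowed channels, and at $k=\Gamma$ one has $P_\Gamma=P$ because $P$ fixes $\Gamma$. The paper's proof stops there, so your other steps are correct additions that justify clauses the paper states without argument: the Frobenius-reciprocity identification of the allowed channels with $\tau{\downarrow}P_k$, the centrality argument placing the assembled projector in $\bbC[\Sigma_N]$, and the orbit-conjugation and transitivity-of-restriction checks for the branching claim.
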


\begin{proof}
The $\Pi(k)$ are sums of mutually orthogonal isotypic
projectors~\cite[\S2.6]{serre1977linear}, hence idempotent; $P$ fixes
$\Gamma=0\in\Omega$, so $P_\Gamma=P$.
\end{proof}

\begin{proposition}[Enforcement is a guaranteed error reduction]\label{prop:enforce_guarantee}
Let $P$ be the orthogonal projector onto the symmetry-allowed subspace $S$ (the
point-group Reynolds average, Corollary~\ref{cor:selection}), and let the true
tensor satisfy $C^\star\in S$. Then for any prediction $C$,
\[
\|C-C^\star\|^2 = \|PC-C^\star\|^2 + \|(I-P)C\|^2,
\]
so $\|PC-C^\star\|\le\|C-C^\star\|$, with the error reduction equal to the
symmetry-forbidden energy $\|(I-P)C\|^2$. Enforcement therefore never increases the
error and removes exactly the forbidden channel the paper drives to machine zero;
the benefit is largest precisely when the predictor violates symmetry most, that is,
in the low-data or extrapolation regime.
\end{proposition}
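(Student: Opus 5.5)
The argument is a Pythagorean decomposition. It uses only that $P$ is an orthogonal projector, i.e.\ self-adjoint and idempotent, with range $S$, in the Frobenius (Euclidean) inner product on tensors. First I will establish this property. By Corollary~\ref{cor:selection}, $P$ is a sum of mutually orthogonal isotypic projectors. Equivalently, $P=\frac{1}{|G|}\sum_{g}\rho(g)$ is the Reynolds average of a representation $\rho$ of the point group that acts by orthogonal matrices, since rotations act orthogonally on the tensor space with the Frobenius norm. Hence $P^2=P$, and $P^{\top}=\frac{1}{|G|}\sum_g\rho(g^{-1})=P$. The range of $P$ is exactly the invariant subspace $S$, so $C^\star\in S$ gives $PC^\star=C^\star$.

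Next I will split the error. Write
\[
C-C^\star=(PC-C^\star)+(I-P)C .
\]
The first summand equals $P(C-C^\star)$ and so lies in $\operatorname{ran}P$. The second lies in $\operatorname{ran}(I-P)=\ker P$. Self-adjointness gives
\[
\langle Px,(I-P)y\rangle=\langle x,(P-P^2)y\rangle=0 ,
\]
so the two summands are orthogonal. Expanding the squared norm then yields the stated identity. Dropping the nonnegative term $\|(I-P)C\|^2$ gives the inequality $\|PC-C^\star\|\le\|C-C^\star\|$. The same identity shows that the error reduction is exactly the forbidden energy.

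The only step needing care is confirming that the inner product for which $P$ is self-adjoint is the same norm in which errors are measured. In the harmonic ($l=0,2,4$) coordinates used in the Methods, the change of basis from Cartesian or Voigt form must be orthonormal, or the Voigt weights must be accounted for. Otherwise $P$ is an oblique projector and the Pythagorean split fails. I would handle this by defining the norm as the Frobenius norm of the full fourth-order tensor, on which $O(3)$ acts orthogonally. The closing sentence about low-data regimes is interpretive and follows from the identity: the gain equals $\|(I-P)C\|^2$, which grows with the violation.
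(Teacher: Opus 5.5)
Your proof is correct and is the same argument the paper gives: with $PC^\star=C^\star$ the error splits into $P(C-C^\star)$ and $(I-P)C$, which are orthogonal, and Pythagoras yields both the identity and the inequality. Two parts of your write-up go beyond the paper's one-line proof, which takes them for granted: your check that the Reynolds average is self-adjoint and idempotent, and your caveat that the norm must be one in which the group acts orthogonally (the full fourth-order Frobenius norm, not unweighted Voigt coordinates).
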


\begin{proof}
$P$ is an orthogonal projection and $C^\star=PC^\star$, so $PC-C^\star=P(C-C^\star)$
and $(I-P)C=(I-P)(C-C^\star)$ are orthogonal complements of $C-C^\star$; Pythagoras
gives the identity. Empirically (\texttt{enforcement\_screening.py}), on the de Jong
tensors, which are symmetry-consistent to a residual forbidden fraction $2\times
10^{-4}$, projection reduces the reconstruction error of $98$--$100\%$ of
individual predictions across a training-fraction sweep, and composing with the
nearest-PSD step recovers every otherwise-viable material a raw Born-stability screen
rejects.
\end{proof}

\begin{corollary}[Direct-integral Eckart--Young]\label{cor:directintegral}
For a Hilbert--Schmidt tensor field over the infinite crystal there is a measurable
field of fiberwise SVDs, and for any measurable rank \emph{profile} $r(\cdot)$ the
fiberwise truncation minimizes the Plancherel-weighted Frobenius error over fields
of that profile, the error being the Brillouin-zone integral of the fiberwise
singular-value tails.
\end{corollary}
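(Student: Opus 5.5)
The plan is to reduce the statement, fiber by fiber, to the finite per-block Eckart--Young theorem (Theorem~\ref{thm:ey_full}(i), or equivalently the matrix Eckart--Young theorem) and then integrate over the Brillouin zone. First, I would fix the Plancherel picture from Theorem~\ref{thm:bloch}. A Hilbert--Schmidt tensor field $\calA$ over $\Sigma$ corresponds under the Bloch transform to a measurable field $k\mapsto\hat\calA(k)$ of finite block matrices over a fundamental domain $D\subset\Omega$ of the $P$-action. At each $k$ the fiber is $\bigoplus_{\sigma\in\widehat{P_k}}$ little-group blocks of bounded size $\le |P|$. The squared Frobenius norm becomes
\[
\|\calA\|_F^2=\int_{D}\sum_{\sigma}w_\sigma(k)\,\|\hat\calA(k,\sigma)\|_F^2\,dk ,
\]
with Plancherel weights $w_\sigma(k)\propto[P:P_k]d_\sigma$. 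This is the continuum analogue of Step~1 of Theorem~\ref{thm:ey_full}. The set of $k$ with nontrivial $P_k$ is a finite union of lower-dimensional strata and hence null, so off a null set the fiber is one block of the regular type. A rank profile $r(\cdot)$ then means a measurable assignment of per-block ranks, and the competitors are fields $\calB$ with $\operatorname{rank}\hat\calB(k,\sigma)\le r_\sigma(k)$ a.e.

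Second, I would construct the measurable SVD field. The singular values $\sigma_j(\hat\calA(k))$ are continuous functions of the matrix entries (Weyl perturbation), so they are measurable in $k$. The truncation $T_r(X)$, which keeps the top $r$ singular values, is not continuous where $\sigma_r=\sigma_{r+1}$. However, the optimal error $\sum_{j>r}\sigma_j^2$ is measurable, and I would obtain a measurable selection of an optimal truncation by Kuratowski--Ryll-Nardzewski. The set-valued map $k\mapsto\{$rank-$\le r(k)$ matrices attaining the tail$\}$ has closed nonempty values, and its graph is measurable. This is exactly the measurable-selection input cited in the main text~\citep{franco2024measurability}. I expect this to be the main obstacle. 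The rest is pointwise, but the candidate truncation must be a genuine measurable, square-integrable field. Square-integrability is immediate because its fiber norms are dominated by those of $\hat\calA$.

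Third, I would prove optimality by integrating the pointwise inequality. For any admissible $\calB$, Theorem~\ref{thm:ey_full}(i) applied at a.e.\ $k$ gives
\[
\|\hat\calA(k)-\hat\calB(k)\|_w^2\ \ge\ \sum_\sigma w_\sigma(k)\sum_{j>r_\sigma(k)}\sigma_j(k,\sigma)^2 .
\]
Equality holds for the selected truncation. Integrating over $D$ and using the Plancherel identity on both sides yields the minimality claim, and the minimum error is the Brillouin-zone integral of the weighted singular-value tails. Finiteness of this integral follows from the Hilbert--Schmidt hypothesis, since the tails are bounded by the full fiber norms. Finally, I would observe that the finite Born--von~Karman statement of Theorem~\ref{thm:sigmareduction} is the special case in which the measure is counting measure on $\Omega_N$, which is a consistency check on the weights.
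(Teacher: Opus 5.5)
Your proposal is correct and follows the paper's proof. Both fiberize over the Brillouin zone, apply the matrix Eckart--Young theorem block by block, and integrate the pointwise inequality through the Plancherel identity, with measurability and integrability of the tail as the only non-pointwise inputs.

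The differences are in detail only. You prove the measurable selection via Kuratowski--Ryll-Nardzewski where the paper merely cites it. Applying that same selection to the compact set of SVD triples $(U,S,V)$ at each $k$ also yields the measurable SVD field asserted in the statement's first clause, and the truncation is then measurable automatically. You also read the fiberization directly off Theorem~\ref{thm:bloch}, rather than as the $N\to\infty$ limit of Theorem~\ref{thm:sigmareduction}.
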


\begin{proof}
Fiberwise this is the matrix Eckart--Young theorem; measurability of a jointly chosen
SVD field and integrability of the tail are classical measurable direct-integral
facts~\cite{folland2016course,franco2024measurability}, cited not reproved. The only
$\starSig$-specific step is the fiberization, the $N\to\infty$ limit of
Theorem~\ref{thm:sigmareduction}.
\end{proof}

\begin{remark}[Rank profile versus global rank]
Corollary~\ref{cor:directintegral} is stated for a rank \emph{profile}. Converting a
single global tubal-rank budget into a per-fiber profile is ill-posed when the
singular-value field has level sets of positive Plancherel measure, so the clean
global-rank optimality of Theorem~\ref{thm:ey_full} is asserted only in the
band-limited Theorem~\ref{thm:sigmareduction}; the continuous-zone statement is
per-profile.
\end{remark}

\begin{remark}[Values, not topology]
These statements concern singular \emph{values} and irrep content, gauge-invariant at
each $k$. Any band-topological content (Chern number, Wannier localizability) resides
in the winding of the singular \emph{vectors} across $\Omega$, to which the value
spectrum is blind: two fields with identical singular-value fields can carry different
Chern number. No topological diagnostic is claimed. Concretely, a $\starSig$
certificate can guarantee the irrep content and the optimal rank of a retained
subspace but cannot certify that the subspace admits an exponentially localized
symmetric Wannier basis: that existence question is precisely the vanishing of the
topological obstruction carried by the vector field, so any Wannier-localizability
guarantee would require singular-vector (bundle) data beyond the value spectrum, and
predictions passed through the enforcement layer inherit selection-rule exactness but
no topological guarantee.
\end{remark}

\begin{remark}[Non-symmorphic groups]\label{rem:nonsymm}
When $\Sigma$ does not split, the little-group blocks become projective
representations of $P_k$ with the $k$-dependent factor system
$\omega_k(R,R')=\exp[-i(R^{-1}k-k)\cdot\tau_{R'}]$ set by the fractional translations
$\tau_R$~\cite{bradley1972}. The twisted group algebra $\bbC_{\omega_k}[P_k]$ is still
finite-dimensional and semisimple over $\bbC$ (twisted Maschke), so
Theorems~\ref{thm:bloch}--\ref{thm:sigmareduction} and
Corollary~\ref{cor:selection} hold with ``irrep'' read as ``projective irrep'': after
Born--von~Karman, $\Sigma_N$ is a genuine finite group whether or not it splits, so
the finite-group theorems and their Lean formalization apply to $\bbC[\Sigma_N]$
directly, the twisted algebra merely naming the little-co-group blocks. The cocycle is
trivial at interior $k$ and nontrivial only on parts of the zone boundary, the
algebraic origin of non-symmorphic band stickings; the same twisted-algebra mechanism,
with a different cocycle, covers spin double groups (spin--orbit coupling without time
reversal). The twist carries no computational penalty. The twisted algebra
$\bbC_{\omega_k}[P_k]$ has the same dimension $|P_k|\le|P|\le 48$ as the untwisted
one, and its projective irreps still satisfy $\sum_\sigma d_\sigma^2=|P_k|$; a
nontrivial cocycle only reallocates that fixed total into fewer, larger blocks (the
band stickings), never into a larger algebra. The $\starSig$-SVD therefore factors,
exactly as in the symmorphic case, into $N^d$ independent little-group fibers whose
block sizes are bounded by the point-group order, so runtime is linear in the
Born--von~Karman volume $N^d$ (embarrassingly parallel over $k$), memory is the
Plancherel-weighted block storage, which equals the size of the signal itself, and no
object of dimension $|\Sigma_N|$ is ever formed.
\end{remark}

\begin{remark}[Antiunitary time reversal, a flagged extension]
Time-reversal symmetry, whether in magnetic (Shubnikov) groups or in ordinary
``grey'' time-reversal-symmetric crystals, adjoins an \emph{antiunitary} operator, so
representations become Wigner corepresentations and the fiber algebras become matrix
algebras over $\bbR$, $\bbC$, or $\bbH$ (the Dyson threefold way; Herring's rules and
Kramers degeneracy). A Frobenius-norm SVD and its Eckart--Young optimum still exist
fiberwise, but the reduction to the complex-linear finite $\starG$ machinery of
Theorem~\ref{thm:sigmareduction}, and its Lean formalization, are not verbatim in the
real or quaternionic case; we record this as a flagged extension.
\end{remark}

\section{Beyond Discrete Translations: the Wigner--Mackey Compact-Fiber Reduction}
\label{sec:wignermackey}

The space-group reduction of Section~\ref{sec:spacegroups} kept the base compact by
discretizing the translations: with $T\cong\bbZ^d$ discrete, its dual $\Omega=\hat T$
is a compact torus, and Born--von~Karman turns the whole calculus finite. We now
remove that discreteness and treat \emph{continuous} translations $N\cong\bbR^n$, the
setting of the relativistic (Poincar\'e) and Euclidean motion groups. The dual
$\widehat N\cong\bbR^n$ (momentum space) is then non-compact, so there is no finite
grid and no global finite $\starG$ algebra. The point of this section is that the
finite/compact calculus nonetheless reaches the physically relevant content: the
non-compactness lives entirely in the \emph{base} (the orbit of momenta), while the
\emph{fiber} (the little-group content) stays compact and finite-dimensional, and the
non-compact covariance is discharged symbolically rather than numerically. The
machinery is the same Mackey construction as for crystals, with $N\rtimes H$ in place
of $T\rtimes P$.

Let $\Sigma=N\rtimes H$ be a regular semidirect product with $N\cong\bbR^n$ abelian
and $H$ acting by automorphisms, hence on $\widehat N$ by
$(h\cdot\chi)(x)=\chi(h^{-1}x)$. For $\chi\in\widehat N$ write $H_\chi=\{h\in
H:h\cdot\chi=\chi\}$ for the little group and $O=H\cdot\chi\cong H/H_\chi$ for its
orbit. Mackey's theorem for a regular semidirect product with abelian normal
subgroup~\cite{mackey1958unitary} gives the irreducibles exactly as in
Theorem~\ref{thm:mackey}, now over a continuous orbit:
$\pi_{[\chi],\sigma}=\operatorname{Ind}_{N\rtimes
H_\chi}^{\Sigma}(\chi\otimes\sigma)$ with $\sigma\in\widehat{H_\chi}$, realized on the
sections $L^2(O,\mu_O)\otimes V_\sigma$ of a homogeneous vector bundle over the orbit
with fiber the little-group irrep $V_\sigma$.

We first record that band-limiting on a compact group is not only exact on
band-limited signals (the band-limited reduction of the main text) but quantitatively
controlled on smooth ones.

\begin{proposition}[Band-limited approximation bound]\label{prop:bandlimit_error}
Let $K$ be a compact group with Casimir eigenvalue $\lambda_\rho\ge0$ on
$\rho\in\widehat K$, and let $f\in L^2(K)$ have finite Sobolev norm
$\|f\|_{H^s}^2=\sum_{\rho}(1+\lambda_\rho)^s\,d_\rho\,\|\hat f(\rho)\|_{\mathrm{HS}}^2$
for some $s>0$. Let $f_L$ be the truncation of $f$ to $\{\rho:\lambda_\rho\le L\}$.
Then
\[
\|f-f_L\|_{L^2}^2\ \le\ (1+L)^{-s}\,\|f\|_{H^s}^2 ,
\]
so the finite $\star_K$ product and its $\star_K$-SVD approximate the continuous
ones with error $O(L^{-s/2})$ (geometric in $L$ when $f$ is analytic). Consequently
the exact band-limited statement extends to a controlled approximation for smooth
signals, with the fiber truncation below incurring exactly this error.
\end{proposition}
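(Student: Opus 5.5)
The argument is a weighted Parseval tail estimate on the compact-group Plancherel decomposition, followed by a perturbation step that transfers the signal-level bound to the $\star_K$ product and the $\star_K$-SVD. First, I would invoke the Plancherel identity stated in Section~\ref{sec:compact}, $\|f\|_{L^2}^2=\sum_\rho d_\rho\|\hat f(\rho)\|_{\mathrm{HS}}^2$. Truncation to $S_L=\{\rho:\lambda_\rho\le L\}$ is the orthogonal projection that zeroes the blocks outside $S_L$, so
\[
\|f-f_L\|_{L^2}^2=\sum_{\lambda_\rho>L} d_\rho\|\hat f(\rho)\|_{\mathrm{HS}}^2 .
\]
On every term of this sum, $\lambda_\rho>L$ gives $1\le (1+\lambda_\rho)^s(1+L)^{-s}$ because $s>0$. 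Inserting this factor termwise and then extending the sum to all of $\widehat K$ (every added term is nonnegative) yields $(1+L)^{-s}\|f\|_{H^s}^2$. Since $\|f\|_{H^s}<\infty$, the series converges absolutely and the rearrangement is legitimate. Taking square roots gives the $O(L^{-s/2})$ rate.

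For the analytic remark, I would use the standard fact that analyticity gives exponential decay $\|\hat f(\rho)\|_{\mathrm{HS}}\lesssim e^{-c\sqrt{\lambda_\rho}}$. The same tail sum, combined with Weyl-type polynomial growth of $\sum_{\lambda_\rho\le L} d_\rho^2$, then gives geometric decay in $L$ up to reparametrization. I would state this remark only at that level.

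The consequence for the finite algebra is the step needing the most care, since it requires a perturbation bound rather than an identity. For the product I would split
\[
f*h-f_L*h_L=(f-f_L)*h+f_L*(h-h_L).
\]
Each term is bounded by Young's inequality on a compact group with normalized Haar measure, $\|u*v\|_{L^2}\le\|u\|_{L^1}\|v\|_{L^2}\le\|u\|_{L^2}\|v\|_{L^2}$. This gives an $O(L^{-s/2})$ error with constants $\|h\|_{H^s}\|f\|_{L^2}$ and the symmetric counterpart. Note also that $(f*h)_L=f_L*h_L$, because block multiplication preserves truncation, as in Theorem~\ref{thm:bandlimited}. Hence the truncated product is exactly the band-limited $\star_K$ product of the truncations.

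For the $\star_K$-SVD, the per-block form of Theorem~\ref{thm:eckart_young} together with Plancherel makes the full SVD blockwise. Dropping the blocks outside $S_L$ therefore changes the reconstruction, and every optimal truncation error, by at most $\|f-f_L\|$. By Weyl/Mirsky-type perturbation, singular values inside $S_L$ are untouched, and the rank-$k$ optimal error differs by at most the tail energy. The one point I would flag is that, for the tubal form, the ordering of tubes could involve discarded blocks. I would avoid this by stating the SVD consequence per block profile, restricted to $S_L$, where it is exact.
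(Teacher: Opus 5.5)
Your proposal is correct and follows the same route as the paper: the weighted Parseval tail estimate $\|f-f_L\|_{L^2}^2=\sum_{\lambda_\rho>L}d_\rho\|\hat f(\rho)\|_{\mathrm{HS}}^2\le(1+L)^{-s}\|f\|_{H^s}^2$, followed by submultiplicativity of the product (your Young's inequality) and Weyl-type singular-value perturbation, which transfer the bound to the $\star_K$ product and $\star_K$-SVD and which the paper states in a single line. Your extra caution is unneeded in two places: $f_L*(h-h_L)=0$ identically, because the two Fourier supports are disjoint; and the tubal form raises no ordering issue, because it is the coupled profile $k_\rho=k\,d_\rho$, whose truncation of $f_L$ is exactly the restriction to $\{\lambda_\rho\le L\}$ of the truncation of $f$.
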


\begin{proof}
$\|f-f_L\|^2=\sum_{\lambda_\rho>L}d_\rho\|\hat f(\rho)\|^2\le(1+L)^{-s}
\sum_{\lambda_\rho>L}(1+\lambda_\rho)^s d_\rho\|\hat f(\rho)\|^2\le(1+L)^{-s}
\|f\|_{H^s}^2$ by Parseval. Submultiplicativity of the $\starG$ product and Weyl's
singular-value perturbation inequality propagate the bound to the product and its
SVD.
\end{proof}

\begin{remark}[Truncation error is approximation error, not symmetry error]
\label{rem:truncation_exactness}
Band-limiting does not move the calculus to the approximate side of the
exact/approximate dichotomy of the main text's compounding section. The
band-limited subspace is invariant under the group action and closed under the
$\star_K$ product (the band-limited reduction), and the truncation
$f \mapsto f_L$ is itself an equivariant projection; a band-limited operator
therefore commutes with $K$ exactly, its iterates are exactly equivariant at
every depth, and no energy ever enters a symmetry-forbidden channel. What
Proposition~\ref{prop:bandlimit_error} bounds is the \emph{resolution} error
against an un-band-limited target; under iteration that error accumulates the
way any discretization error does (one application of the proof's
submultiplicativity argument per step), but it accumulates entirely within the
symmetry-allowed channels. The two axes are orthogonal: $L$ buys resolution,
never equivariance. In practice $L$ is selected from the data's spectral
tail, which is observable: the empirical Fourier energy above $L$ estimates
$\|f-f_L\|^2$ directly, so one takes the smallest $L$ whose tail energy is
below the target tolerance, against a cost that grows polynomially, as
$\sum_{\lambda_\rho\le L} d_\rho^2$ block entries (for $\mathrm{SO}(3)$,
$\sum_{l\le L}(2l+1)^2 = O(L^3)$), while the tail falls as $O(L^{-s})$ for
$H^s$ signals and geometrically for analytic ones. The lattice-gauge
demonstration below makes the trade concrete: the ground-state energy
converges geometrically in the band limit, a wider band being needed only at
stronger coupling.
\end{remark}

We instantiate this band-limited compact reduction on $\mathrm{SU}(2)$, the gauge
group of a lattice-gauge toy, entirely in the finite block domain
(\texttt{su2\_lattice\_gauge\_demo.py} in the repository). Band-limiting to $j\le2$,
including the half-integer blocks that carry the gauge charge, the electric-energy
operator is exactly the $\star_{\mathrm{SU}(2)}$ Casimir (block-diagonal, eigenvalue
$j(j+1)$ to $9\times10^{-16}$); the $\star_{\mathrm{SU}(2)}$-SVD attains the per-block
Eckart--Young error, the discarded singular-tube mass, to $4\times10^{-16}$; and the
gauge-singlet isotypic projector annihilates every non-singlet block exactly. For the
single-plaquette Kogut--Susskind Hamiltonian in its class-function sector, the
electric Casimir plus the magnetic Wilson-loop term (multiplication by the
fundamental character, which selects $\Delta j=\pm\tfrac12$ exactly), the
ground-state energy converges geometrically in the band limit: successive corrections
fall by a factor of about $0.04$ at magnetic coupling $\lambda=2$ and about $0.17$ at
$\lambda=6$, the stronger-coupling case needing a wider band, exactly the analytic
regime of Proposition~\ref{prop:bandlimit_error}.

\begin{theorem}[Wigner--Mackey compact-fiber reduction]\label{thm:wignermackey}
Fix an orbit $O\subset\widehat N$ whose little group $H_\chi$ is \emph{compact}, and
band-limit the fiber to a finite set $S\subset\widehat{H_\chi}$. Then:
\begin{enumerate}
\item at each $\chi'\in O$ the $\starSig$ product restricted to the fiber content is
the finite $\star_{H_\chi}$ blockwise matrix multiplication
$\bigoplus_{\sigma\in S}\mathrm{End}(V_\sigma)$, so the $\starG$-SVD
(Theorem~\ref{thm:svd_exist}), Eckart--Young optimality
(Theorem~\ref{thm:ey_full}), and canonicity (Theorem~\ref{thm:necessity}) apply
\emph{verbatim} to the fiber algebra;
\item the fiberwise singular values are constant along the orbit: for
$\chi'=h\cdot\chi$ the fiber operator is conjugated by the unitary $\sigma(h')$
($h'\in H_\chi$ from the Mackey cocycle), so its singular values, and hence the
$\starG$-SVD certificate, are $H$-invariant, a frame-independent scalar field.
\end{enumerate}
The numerical content of the $\starSig$-SVD over a compact-little-group orbit is
therefore the finite $\star_{H_\chi}$-SVD computed once at the base point; the
non-compact directions (the boosts moving $\chi$ along $O$) act by unitary
conjugation and add no numerical content. This is Theorem~\ref{thm:sigmareduction}
with the roles exchanged: a compact fiber over a non-compact base, in place of a
compact base with finite fibers.
\end{theorem}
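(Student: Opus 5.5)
I will realize each Mackey irreducible $\pi_{[\chi],\sigma}$ concretely as induced sections, $L^2(O,\mu_O)\otimes V_\sigma$. The aim is to identify the fiber over each $\chi'\in O$ with the little-group representation space, and then to read off both claims from the Mackey cocycle.

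\textbf{Setting up the fibers.} First I fix a measurable cross-section $s\colon O\to H$ with $s(\chi')\cdot\chi=\chi'$, taking $s(\chi)=e$. Any covariant ($\Sigma$-equivariant) operator, or any element of the band-limited $\starSig$ algebra acting on this induced space, commutes with the $N$-action. On the induced space $N$ acts by multiplication by the character $\chi'(x)$, so by the spectral theorem for $N\cong\bbR^n$ such an operator is decomposable over $O$: it is a measurable field $\chi'\mapsto A(\chi')$ of operators on the fiber $V_\sigma$, or on $\bigoplus_{\sigma\in S}V_\sigma^{m_\sigma}$ after band-limiting.

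\textbf{Claim 1.} At the base point $\chi$ the stabilizer $H_\chi$ acts on the fiber, and equivariance under $H_\chi\subset\Sigma$ forces $A(\chi)$ to commute with $\bigoplus_{\sigma\in S}\sigma$. By Schur's lemma, exactly as in Corollary~\ref{cor:unification}, the fiber content is then the block algebra $\bigoplus_{\sigma\in S}\mathrm{End}(V_\sigma)$, with multiplicity spaces as in Definition~\ref{def:tensor_ft}. Products of decomposable fields are pointwise products, because the $\starSig$ product is operator composition on each irreducible. Hence the fiber product is the finite $\star_{H_\chi}$ blockwise multiplication. Since $H_\chi$ is compact and $S$ is finite, Theorem~\ref{thm:bandlimited} identifies this with a finite $\starG$ algebra. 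Theorems~\ref{thm:svd_exist}, \ref{thm:ey_full} and \ref{thm:necessity} then apply verbatim.

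\textbf{Claim 2.} For $\chi'=h\cdot\chi$, equivariance under $h$ gives $A(\chi')=U\,A(\chi)\,U^{-1}$. Here $U$ is the induced action of $h$, which maps the fiber at $\chi$ to the fiber at $\chi'$. In the trivialization given by $s$, $U$ is $\sigma(h')$ with the cocycle $h'=s(\chi')^{-1}h\,s(\chi)\in H_\chi$. Because $H_\chi$ is compact, $\sigma$ can be chosen unitary. Therefore $A(\chi')$ is unitarily equivalent to $A(\chi)$ blockwise, and its singular values are unchanged.

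This unitary conjugation is exactly the freedom of Corollary~\ref{cor:intrinsic}. So the singular values, the tube norms, the $\starG$-rank, and the optimal truncation error at $\chi'$ all coincide with those at $\chi$. The certificate is therefore an $H$-invariant constant along $O$, computed once at the base point.

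\textbf{Main obstacle.} The delicate step is justifying the decomposability. Elements of the full $\starSig$ algebra on a non-compact $N$ are not literally fields over a single orbit; convolution on $\bbR^n$ is diagonalized only in a direct-integral sense. I would handle this by restricting to operators acting on the fixed irreducible $\pi_{[\chi],\sigma}$ (or on a finite sum of them over the band $S$). On these the commutant of $N$ consists of $L^\infty(O)$-decomposable operators, by the standard fact that the multiplication algebra is maximal abelian, tensored with $\mathrm{End}$ of the fiber. Measurability of the SVD field I would cite, as in Corollary~\ref{cor:directintegral}, rather than reprove.

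A secondary point is the phrase ``from the Mackey cocycle.'' I would check that different cross-sections change $U$ only by unitaries $\sigma(h'')$ with $h''\in H_\chi$. This leaves the singular-value statement independent of the choice of $s$.
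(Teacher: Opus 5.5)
Your Claim~2 is fine and is the paper's own argument: transport along $O$ is conjugation by the unitary $\sigma(h')$ with $h'=s(\chi')^{-1}h\in H_\chi$, and this preserves singular values. The gap is in your setup and in Claim~1, where you conflate the $\starSig$ algebra with its commutant.

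First, elements of the $\starSig$ algebra acting on $\pi_{[\chi],\sigma}$ do \emph{not} commute with $\pi(N)$. For $h\notin H_\chi$, $\pi(h)$ carries the fiber at $\chi'$ to the fiber at $h\cdot\chi'$, and so does $\pi(a)$ for any $a$ with nontrivial $H$-dependence. On an irreducible, the weak closure of these operators is all of $B(L^2(O)\otimes V_\sigma)$. So they are not decomposable fields over $O$; only $\Sigma$-covariant operators are.

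Second, for covariant operators Schur's lemma gives the wrong algebra. The $H_\chi$-commutant of $\bigoplus_{\sigma\in S}\bbC^{m_\sigma}\otimes V_\sigma$ is $\bigoplus_{\sigma\in S}\mathrm{End}(\bbC^{m_\sigma})\otimes\mathrm{id}_{V_\sigma}$. These are endomorphisms of the multiplicity spaces, not $\bigoplus_{\sigma}\mathrm{End}(V_\sigma)$. For your multiplicity-one fiber $\bigoplus_{\sigma\in S}\sigma$ the commutant is the commutative algebra $\bigoplus_{\sigma\in S}\bbC$. On a single irreducible $\pi_{[\chi],\sigma}$ every covariant operator is a scalar, so Claim~1 fails there and Claim~2 holds only trivially.

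The block algebra $\bigoplus_{\sigma\in S}\mathrm{End}(V_\sigma)$ is the \emph{image} of the little-group algebra (Burnside, equivalently Artin--Wedderburn), not its commutant. That is how the paper obtains it. It restricts the $\starSig$ algebra itself, the crossed product $C(O)\rtimes H$ of Theorem~\ref{thm:bloch}, to the orbit. There each fiber is the little-group convolution algebra $\bbC[H_\chi]$, which Artin--Wedderburn block-decomposes, and the paper then band-limits to $S$.

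Your commutant route can be rescued only by choosing the fiber to be the band-limited regular representation $L^2(H_\chi)_S\cong\bigoplus_{\sigma\in S}V_\sigma\otimes V_\sigma^{*}$ (so $m_\sigma=d_\sigma$, times channels). Its $H_\chi$-commutant is the right-convolution algebra $\bigoplus_\sigma\mathrm{id}_{V_\sigma}\otimes\mathrm{End}(V_\sigma^{*})$, which is anti-isomorphic to the band-limited $\star_{H_\chi}$ algebra. That is the setting in which the identification you borrow from Corollary~\ref{cor:unification} is literally correct. Without that choice, Schur's lemma on your fiber does not yield the fiber algebra the theorem asserts.
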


\begin{proof}
Part~1 is Theorem~\ref{thm:bloch} read fiberwise: the crossed-product multiplication
$C(O)\rtimes H$ is, at each point of the orbit, the little-group convolution algebra
$\bbC[H_\chi]$, whose semisimple block form is
$\bigoplus_{\sigma\in\widehat{H_\chi}}\mathrm{End}(V_\sigma)$ by Artin--Wedderburn;
band-limiting to $S$ keeps finitely many blocks, on which the finite-group theorems
cited are exactly those of the finite $\starG$ calculus. Part~2 is the imprimitivity
conjugation of Theorem~\ref{thm:mackey}: transport along the orbit is implemented by
unitaries, which preserve singular values; it is the continuous-orbit form of the
gauge-invariance-of-values remark of Section~\ref{sec:spacegroups}.
\end{proof}

We validate part~2 numerically on the Poincar\'e massive orbit
(\texttt{wigner\_mackey\_orbit\_invariance.py} in the repository). A spin fiber,
band-limited to $j\le2$, is transported along $200$ points of the mass shell
$p^2=m^2$ by composed Lorentz boosts whose raw transport has spectral norm up to
$22$ and induces Wigner rotations of up to $0.85$~rad. The induced fiber action
nonetheless lands in the compact little group, the Wigner rotation is orthogonal
to $5\times10^{-14}$ and its spin representation unitary to $4\times10^{-16}$, so
the $\starG$-SVD singular-value field and the Eckart--Young rank-$k$ truncation
error are constant along the orbit to $7\times10^{-16}$. The identical transport
applied to a boost-covariant four-vector observable, moved by the raw non-unitary
boost, changes its singular values by a median factor of $5.8$: the numerical
signature of the residual of Remark~\ref{rem:noncompact_residual}, where
boost-covariant content is not orbit-invariant and needs the direct-integral
$\starSig$. The demonstration is thus a genuine test of the theorem's
\emph{scope}, separating the compact-fiber-reducible content from the residual,
not merely of the fact that unitaries preserve singular values.

\begin{corollary}[Linearization discharges covariance symbolically]\label{cor:linearization}
Let $F$ be a $\Sigma$-covariant map ($F(g\cdot x)=g\cdot F(x)$) and $b$ a background
with stabilizer $\Sigma_b\supseteq H_\chi$. The differential $DF_b$ intertwines the
isotropy representation of $\Sigma_b$ on the tangent space, so it (and, for an
invariant energy, its Hessian) is $H_\chi$-equivariant and is block-diagonalized
exactly by $\star_{H_\chi}$; when $H_\chi$ is compact the blocks are
finite-dimensional and the $\star_{H_\chi}$-SVD gives the optimal symmetry-preserving
low-rank model of the fluctuation operator. For a background $b'=g\cdot b$ on the same
orbit, $DF_{b'}=\rho(g)\,DF_b\,\rho(g)^{-1}$, so its spectrum and $\starG$-SVD
certificate are identical. Full $\Sigma$-covariance is thus realized by a symbolic
conjugation computed once: the non-compact (boost) covariance is discharged in the
exact symbolic layer, while the numerical $\starG$-SVD acts only on the compact
little-group fiber.
\end{corollary}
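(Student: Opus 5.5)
The plan is to derive both halves of the corollary from the covariance identity by differentiation. Only the chain rule, Schur's lemma in the form already used in Corollary~\ref{cor:unification}, and the finite $\starG$ theorems of Sections 5 and 8 will be needed.

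\emph{Step 1: differentiate the covariance identity at $b$.} For $h\in\Sigma_b$ we have $h\cdot b=b$, so differentiating $F(h\cdot x)=h\cdot F(x)$ at $x=b$ gives
\[
DF_b\,\rho(h)=\rho(h)\,DF_b ,
\]
where $\rho(h)$ denotes the linearized (isotropy) action on the tangent space, identical on source and target since $h$ fixes $b$. Restricting to $H_\chi\subseteq\Sigma_b$ shows that $DF_b$ commutes with the $H_\chi$ isotropy representation. For an invariant energy $E$ we take $F=\nabla E$, or equivalently differentiate $E(h\cdot x)=E(x)$ twice. This makes the Hessian an $H_\chi$-equivariant symmetric form, provided the isotropy action is orthogonal, and this is where compactness enters, via an averaged inner product.

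\emph{Step 2: block-diagonalize and truncate.} Decompose the tangent space into $H_\chi$-isotypic components, with the fiber band-limited to a finite $S$ as in Theorem~\ref{thm:wignermackey}. By Schur's lemma, as in Corollary~\ref{cor:unification}, the intertwiner $DF_b$ has the form $\bigoplus_\sigma (A_\sigma\otimes I_{d_\sigma})$, so it is a $\star_{H_\chi}$ operation with multiplicity blocks $A_\sigma$. When $H_\chi$ is compact each $d_\sigma$ is finite. Theorem~\ref{thm:svd_exist} then supplies the $\star_{H_\chi}$-SVD, and Theorem~\ref{thm:ey_full} gives its optimality among equivariant operators of the prescribed rank. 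This yields the ``optimal symmetry-preserving low-rank model'' claim.

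\emph{Step 3: transport along the orbit.} Put $b'=g\cdot b$ and differentiate $F(g\cdot x)=g\cdot F(x)$ at $x=b$. By the chain rule,
\[
DF_{b'}\,\rho(g)=\rho(g)\,DF_b ,
\]
where $\rho(g)$ is the differential of the action from $T_b$ to $T_{b'}$ (and from $T_{F(b)}$ to $T_{F(b')}$). This gives the conjugation formula. Its inverse is explicit, so the formula is a symbolic identity computed once. The spectrum is preserved by any conjugation. Singular values and the certificate, however, are preserved only if the conjugating map is unitary, and this is the main obstacle: a raw boost is not unitary, as the four-vector residual in the numerical test shows.

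I would resolve this as in part~2 of Theorem~\ref{thm:wignermackey}. Choose the tangent-space metrics at $b$ and $b'$ so that the transport acts on the fiber content by the induced little-group unitary $\sigma(h')$ from the Mackey cocycle, i.e., work in the induced-representation frame in which the boost realizes the Wigner rotation. In that frame the SVD certificate is invariant. Where no such frame exists, I would state the claim only for the spectrum (eigenvalues) and flag the singular-value invariance as the noncompact residual.
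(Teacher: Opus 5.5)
Your proposal has the same structure as the paper's proof. The paper also differentiates $F(g\cdot x)=g\cdot F(x)$ at the fixed background to get $DF_b\,\rho(h)=\rho(h)\,DF_b$ for $h\in\Sigma_b$. It then cites Theorems~\ref{thm:svd_exist}--\ref{thm:ey_full} for the block-diagonalization and per-block optimality, and gets the orbit relation by evaluating covariance at $b'=g\cdot b$.

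You diverge only in Step~3, and there you are more careful than the paper. The paper closes with the bare assertion that ``conjugation preserves singular values.'' That holds only when $\rho(g)$ is an isometry $T_b\to T_{b'}$ and $T_{F(b)}\to T_{F(b')}$. A general similarity preserves eigenvalues but not singular values, as the paper's own raw-boost four-vector test illustrates (singular values moved by a median factor of $5.8$).

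Your remedy is right, but it needs neither the Mackey-cocycle frame nor the spectrum-only fallback. Push the $H_\chi$-invariant metrics on $T_b$ and $T_{F(b)}$ forward along $\rho(g)$:
\begin{itemize}
\item This makes $\rho(g)$ an isometry by construction.
\item The transported metric is invariant under the conjugated little group $gH_\chi g^{-1}$ that fixes $b'$, so it is the metric in which the certificate at $b'$ is meaningful.
\item Every $g'$ with $g'\cdot b=b'$ satisfies $\rho(g')^{-1}DF_{b'}\,\rho(g')=DF_b$. Hence the resulting singular values equal those of $DF_b$ whichever such $g'$ is used, even when $\Sigma_b$ itself does not act isometrically.
\end{itemize}
So the certificate claim holds verbatim in the transported metric, while in a fixed ambient metric only the spectrum is invariant. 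State that convention explicitly in place of the fallback.

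A minor point in Step~1: what lets $\rho(h)$ act on the target is $h\cdot F(b)=F(b)$, which follows from covariance. The source and target isotropy representations coincide only when $F(b)=b$, and that is already implicit in speaking of a spectrum.
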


\begin{proof}
$g\cdot b=b$ for $g\in\Sigma_b$ gives $F(g\cdot x)=g\cdot F(x)\Rightarrow DF_b\,\rho(g)
=\rho(g)\,DF_b$, i.e.\ $H_\chi$-equivariance; block-diagonalization and per-block
optimality are Theorems~\ref{thm:svd_exist}--\ref{thm:ey_full}. The orbit relation is
covariance evaluated at $b'=g\cdot b$, and conjugation preserves singular values.
\end{proof}

\begin{remark}[The physical spectrum lies on compact-little-group orbits]
For the Poincar\'e group $\Sigma=\bbR^{1,3}\rtimes\mathrm{SO}(3,1)^{\uparrow}$
(or, to include half-integer spin, its universal cover
$\bbR^{1,3}\rtimes\mathrm{SL}(2,\bbC)$, for which every statement below holds
verbatim with $H_\chi=\mathrm{SU}(2)$ and the double cover of
$\mathrm{SO}(2)$ as the compact little groups) the
orbits in $\widehat N$ are the mass shells. The physical one-particle representations
sit on orbits with \emph{compact} little group: massive ($p^2=m^2>0$,
$H_\chi=\mathrm{SO}(3)$, spin) and massless with finite helicity ($p^2=0$, the
physical representations of $H_\chi=\mathrm{ISO}(2)$ factoring through the compact
$\mathrm{SO}(2)$, helicity). Theorem~\ref{thm:wignermackey} covers exactly these. The
non-compact-little-group orbits, Wigner's continuous-spin representations
(infinite-dimensional $\mathrm{ISO}(2)$) and the tachyonic $p^2<0$
($H_\chi=\mathrm{SO}(2,1)$), have never been observed in nature and fall to
the residual below. The same
organization, finite little-group (spin) labels over a continuous spectral fiber, is
the shape of the conformal bootstrap.
\end{remark}

\begin{remark}[The honest residual, and what is excluded]\label{rem:noncompact_residual}
Two regimes lie beyond the compact-fiber reduction. First, when the observable itself
transforms under the non-compact factor (a boost-covariant quantity with no fixed
frame), or when the whole non-compact block is needed numerically (conformal blocks of
$\mathrm{SO}(d,2)$ in the bootstrap; full-covariance nonlinear analysis with no
background to linearize about), no single little group suffices and one needs the
direct-integral $\starSig$ over the non-compact orbit, with Eckart--Young becoming a
generalized-$s$-number (Hilbert--Schmidt) statement. This is the non-compact-base
analogue of Corollary~\ref{cor:directintegral}, and unlike that corollary it is
genuine new analysis rather than a measurable-selection formality; we flag it as the
principal open extension. Second, genuinely infinite-dimensional symmetry groups,
$\mathrm{Diff}(M)$, the BMS group, or a gauge group taken as an
infinite-dimensional Lie group, lie outside the tensor-algebra frame entirely (no
local compactness, no Haar or Plancherel measure, as discussed in the main-text scope
analysis) and are kept symbolic or excluded. The reduction here is thus a bounded,
well-shaped extension, compact little group over a continuous fiber, not a claim over
arbitrary non-compact groups.
\end{remark}

\section{Extended Data}

\begin{figure}[!ht]
\centering
\includegraphics[width=0.7\textwidth]{fig_ablation.pdf}
\caption{\textbf{Extended Data Figure~1: Ablation of symmetry components.} Removing group structure from the product group experiment causes systematic degradation.}
\label{fig:ext_ablation}
\end{figure}

\begin{figure}[!ht]
\centering
\includegraphics[width=0.85\textwidth]{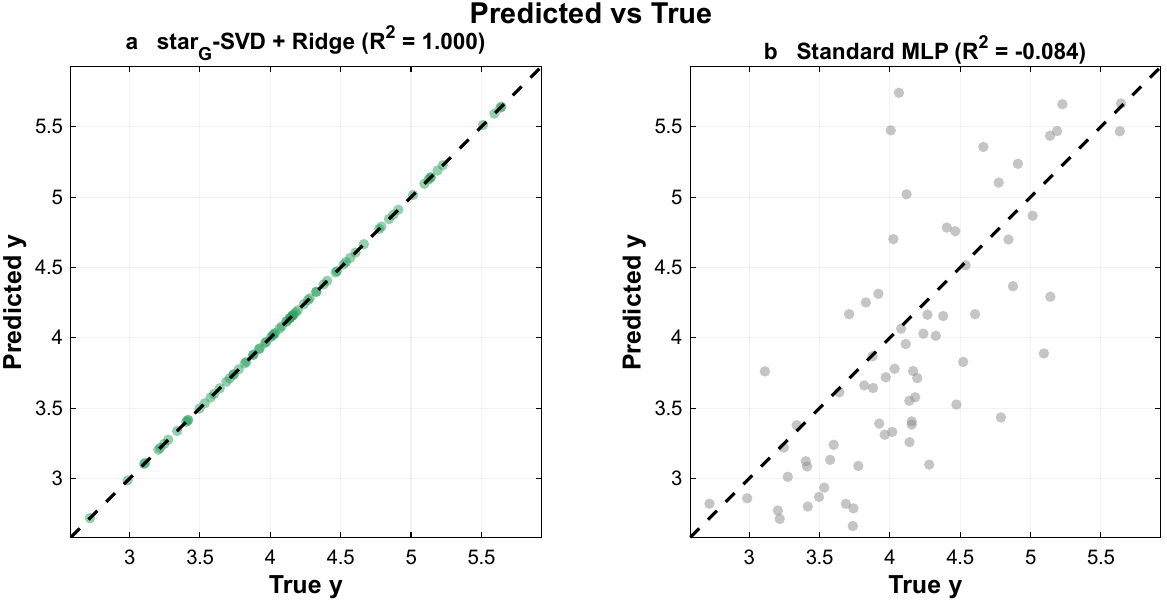}
\caption{\textbf{Extended Data Figure~2: Predicted vs.\ true (synthetic).} (a)~$\starG$-SVD: perfect diagonal. (b)~Standard MLP: scattered.}
\label{fig:ext_scatter}
\end{figure}

\begin{table}[h]
\centering
\caption{Per-method hyperparameter settings used in all experiments. Hidden
widths $[64, 32]$, ReLU activations, He initialization, and an unregularized
bias per layer are common to all neural baselines. ``Native'' = original
training set of size $n$; ``$|G|$-aug'' = original training set augmented by
applying every $g \in G$ to each input, yielding $|G| \cdot n$ samples.}
\label{tab:hyperparams_full}
\setlength{\tabcolsep}{4pt}
\begin{tabular}{@{}p{2.8cm}p{4.0cm}p{2.6cm}p{1.6cm}p{1.1cm}p{1.4cm}p{1.3cm}@{}}
\toprule
\textbf{Method} & \textbf{Input} & \textbf{Architecture} &
\textbf{Train set} & \textbf{lr} & \textbf{Epochs} & \textbf{Batch} \\
\midrule
$\starG$-SVD + Ridge & $\starG$-features & Linear, ridge $\lambda \in \{10^{-3},\ldots,10^3\}$ (val.) & native & --  & --  & full \\
Standard MLP        & raw $\calX(:, e)$ ($z$-norm)             & $[n_f, 64, 32, 1]$  & native     & 0.003 & 300\textsuperscript{a} & 256\textsuperscript{b} \\
Invariant MLP       & $[\mathrm{mean}, \mathrm{std}, \min, \max]_g$ $\calX$ ($z$-norm) & $[4 n_f, 64, 32, 1]$ & native & 0.003 & 300 & 256 \\
Augmented MLP       & raw $\calX(:, e)$ ($z$-norm on aug.\ set) & $[n_f, 64, 32, 1]$  & $|G|$-aug & 0.003\textsuperscript{c} & 80--300\textsuperscript{d} & 256 \\
Neural $\starG$     & $\starG$-features                        & $\starG$ layers $[\cdot, 64, 32, 1]$ & native & 0.003 & 300 & 32 \\
\bottomrule
\end{tabular}
\medskip

\noindent\footnotesize
\textsuperscript{a}~80 epochs in the synthetic $\bbZ_{12}$ experiment;
300 elsewhere.\quad
\textsuperscript{b}~32 in the synthetic experiment.\quad
\textsuperscript{c}~0.005 in the synthetic experiment.\quad
\textsuperscript{d}~80 in the synthetic experiment, 200 in the
product-group experiment, 300 elsewhere; the smaller epoch budget for
heavily augmented training reflects the $|G|$-fold larger gradient budget per
epoch. Optimizer is Adam ($\beta_1 = 0.9, \beta_2 = 0.999, \varepsilon =
10^{-8}$) with early-stopping patience 20 on validation MSE for every neural
baseline.
\end{table}

\begin{table}[h]
\centering
\caption{Computational cost (wall-clock, single seed, 1{,}000 molecules).}
\begin{tabular}{@{}lccc@{}}
\toprule
\textbf{Method} & \textbf{Feature (s)} & \textbf{Train (s)} & \textbf{Total (s)} \\
\midrule
$\starG$-SVD + Ridge & 0.6 & $<$0.1 & 0.7 \\
Neural $\starG$ & 0.6 & 0.5 & 1.1 \\
Standard MLP & -- & 0.4 & 0.4 \\
Invariant MLP & -- & 0.5 & 0.5 \\
Augmented MLP & -- & 4.0 & 4.0 \\
\bottomrule
\end{tabular}
\end{table}

\section{Wigner--Eckart Discovery: Extended Data}

\begin{figure}[!ht]
\centering
\includegraphics[width=\textwidth]{irrep_bars.pdf}
\caption{\textbf{Extended Data Figure~4: Per-irrep predictive power.} Grouped bar chart showing $R^2$ from each irrep's features alone, for all 9 quantum properties. The qualitative pattern shift between scalar properties (A$_1$-dominated) and dipole vector components (T$_1$-dominated) is the data-driven signature of the Wigner--Eckart selection rules.}
\end{figure}

\begin{figure}[!ht]
\centering
\includegraphics[width=\textwidth]{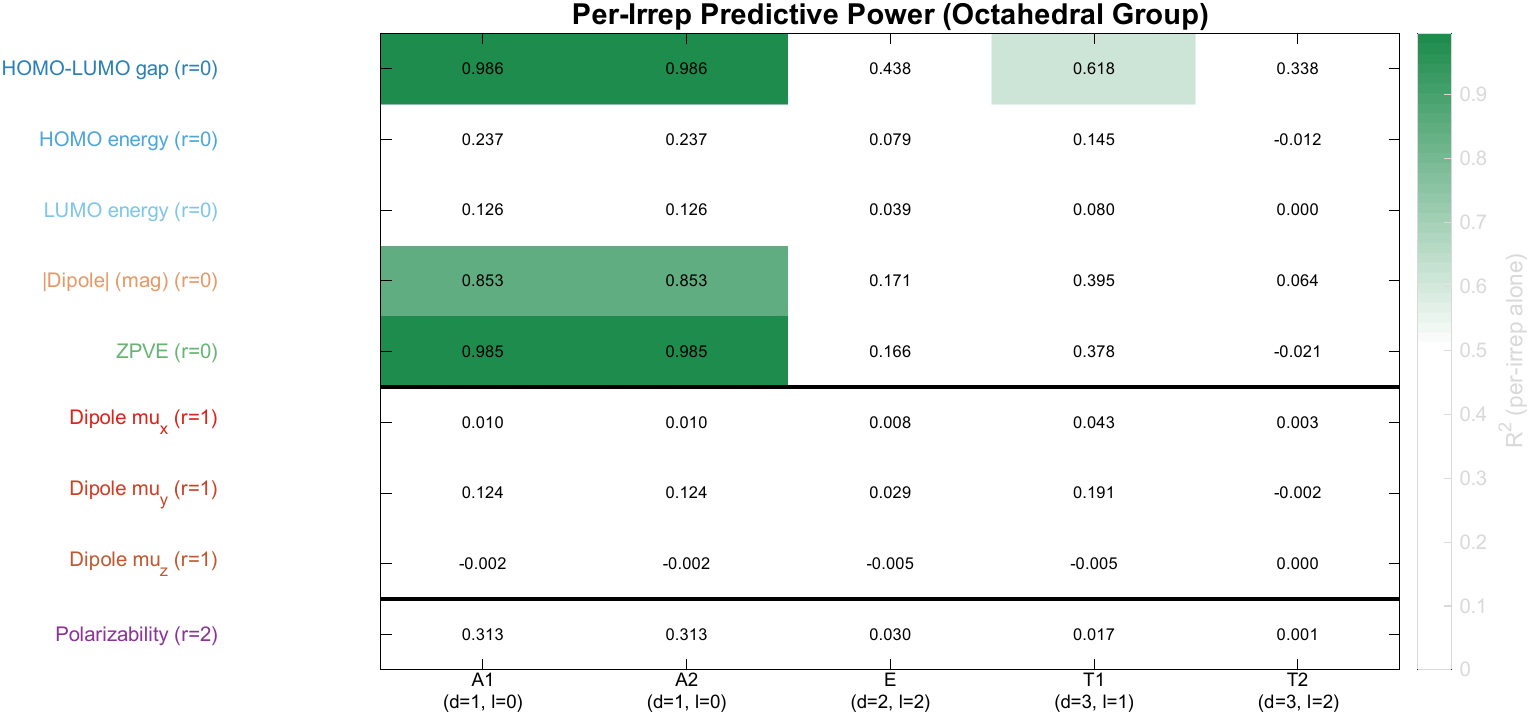}
\caption{\textbf{Extended Data Figure~5: Irrep decomposition heatmap.} $R^2$ for each (property, irrep) pair, sorted by tensor rank. The block structure separating rank-0 from rank-1 properties is visible as a qualitative change in the A$_1$ and T$_1$ columns across the horizontal separator.}
\end{figure}

The octahedral group $O$ was constructed programmatically from its 24 rotation matrices (6 face, 8 vertex, 6 edge rotations plus identity). The multiplication table was verified to satisfy the group axioms. The five irreducible representations were constructed as: A$_1$ (trivial), A$_2$ (determinant), T$_1$ (the rotation matrices themselves, 3D), and E + T$_2$ (from the rank-2 symmetric traceless tensor representation, decomposed into the 2D and 3D invariant subspaces). All representations were verified to be closed under the group multiplication.

\begin{algorithm}[h]
\caption{Per-irrep Fourier decomposition for Wigner--Eckart analysis}
\label{alg:per_irrep}
\begin{algorithmic}[1]
\Require feature batch $\calX \in \bbR^{N \times n_f \times |G|}$, irreps $\hat{G} = \{\rho_1, \ldots, \rho_M\}$ with dimensions $d_\rho$ and matrices $\rho(g)$, target vector $y \in \bbR^N$, ridge grid $\Lambda$
\Ensure per-irrep predictive scores $\{R^2_\rho\}_{\rho \in \hat{G}}$
\For{$\rho \in \hat{G}$}
    \For{$i = 1, \ldots, N$ and $j = 1, \ldots, n_f$ \textbf{in parallel}}
        \State $\hat{X}^\rho_{ij} \gets \sqrt{d_\rho / |G|} \sum_{g} \calX(i, j, g) \, \rho(g) \in \bbC^{d_\rho \times d_\rho}$
        \State $P^\rho_{ij} \gets \|\hat{X}^\rho_{ij}\|_F^2$ \Comment{$G$-invariant power}
    \EndFor
    \State assemble $\Phi^\rho \in \bbR^{N \times n_f}$ from $\{P^\rho_{ij}\}$
    \State split $\Phi^\rho, y$ into train/val/test (70/15/15); standardize $\Phi^\rho$
    \State $\lambda^\star \gets \arg\min_{\lambda \in \Lambda} \mathrm{MSE}_{\mathrm{val}}(\Phi^\rho, y; \lambda)$
    \State $w_\rho \gets (\Phi^{\rho \top}_{\mathrm{train}} \Phi^\rho_{\mathrm{train}} + \lambda^\star I)^{-1} \Phi^{\rho \top}_{\mathrm{train}} y_{\mathrm{train}}$
    \State $R^2_\rho \gets 1 - \mathrm{SS}_{\mathrm{res}}(\Phi^\rho_{\mathrm{test}} w_\rho, y_{\mathrm{test}}) / \mathrm{SS}_{\mathrm{tot}}(y_{\mathrm{test}})$
\EndFor
\State \Return $\{R^2_\rho\}$, $\{T_1 / A_1\text{ ratio}\}$, irrep heatmap data
\end{algorithmic}
\end{algorithm}

\section{Baseline Implementation Algorithms}
\label{sec:baseline_alg}

This section gives the complete per-method specifications and explicit
pseudocode for the neural baselines used in the main paper.

\paragraph{Standard MLP.} Input is the un-rotated raw feature vector
$X_{\mathrm{in}} = \calX(:, e) \in \bbR^{n_f}$ (frontal slice at the
identity), $z$-score normalized using training-set statistics. The model is
$[n_f \to 64 \to 32 \to 1]$. No symmetry information is used during training.
Trained for up to 300 epochs (synthetic and QM9 / product-group)
at learning rate $0.003$, batch size 32 (synthetic) or 256 (QM9). This is the
``no-symmetry'' control.

\paragraph{Invariant MLP.} Input is the concatenation of four hand-crafted
group-invariant pooling statistics applied along the group dimension:
$X_{\mathrm{in}} = [\,\mathrm{mean}_g \calX,\ \mathrm{std}_g \calX,\
\min_g \calX,\ \max_g \calX\,] \in \bbR^{4 n_f}$, $z$-score normalized. The
model is $[4 n_f \to 64 \to 32 \to 1]$. By construction the input is exactly
$G$-invariant so the model is invariant by composition; this is the ``manual
invariance'' control. Same optimizer schedule as the Standard MLP.

\paragraph{Augmented MLP.} Input is the un-rotated raw feature vector
$X_{\mathrm{in}} = \calX(:, e) \in \bbR^{n_f}$, but the training set is
expanded by including $X_{\mathrm{aug}} = \calX(:, g)$ for every $g \in G$,
with the same target label, yielding a $|G|$-fold augmented training set.
$z$-score statistics are computed on the augmented set. The model is
$[n_f \to 64 \to 32 \to 1]$. Test-time prediction uses the un-rotated
slice. This is the standard data-augmentation strategy and serves as a
proxy for invariance learned from data rather than enforced algebraically;
it is the closest non-equivariant baseline to an ENN. Trained for 80--300
epochs at learning rate $0.003$--$0.005$, batch size 32 (synthetic) or 256
(QM9 / product-group); the lower epoch count for the synthetic experiment
reflects the $|G|$-fold larger effective batch budget. A precise pseudocode
specification is given in Algorithm~\ref{alg:augmented_mlp}.

\paragraph{Neural $\starG$.} A symmetry-aware feed-forward network whose linear
layers are $\starG$ products with weight tensors $W^{(\ell)} \in \bbR^{n_{\ell+1}
\times n_\ell \times |G|}$ rather than ordinary matrix products. Forward
pass: $\calA^{(\ell+1)} = \mathrm{ReLU}\bigl(W^{(\ell)} \starG \calA^{(\ell)}
+ \mathbf{b}^{(\ell)}\bigr)$ for hidden layers, with a linear output and an
invariant pooling $y = \mathrm{mean}_{g, j} \calA^{(L)}(j, g)$. The hidden
widths are $[64, 32]$, matching the MLPs. Input is the invariant
$\starG$-feature vector $X_{\mathrm{in}}$ of Section~\ref{sec:feat_alg}
(Materials and Methods of the main text). Trained for 300 epochs at learning
rate $0.003$, batch size 256, with the same Adam settings and early
stopping. Equivariance is exact by construction (the per-layer rotation
variance is $\sim 10^{-28}$, at floating-point noise) so this baseline
isolates the cost of replacing a closed-form ridge regressor with a non-linear
trainable model on top of the same algebraic representation. Pseudocode is
given in Algorithm~\ref{alg:neural_starg}.

The Standard MLP and Invariant MLP follow the standard MLP training loop
that wraps Algorithm~\ref{alg:augmented_mlp} once augmentation is removed.

\begin{algorithm}[h]
\caption{Augmented MLP training}
\label{alg:augmented_mlp}
\begin{algorithmic}[1]
\Require training tensor $\calX^{\mathrm{tr}} \in \bbR^{n \times n_f \times |G|}$, target $y^{\mathrm{tr}} \in \bbR^n$, group $G$, validation $(\calX^{\mathrm{va}}, y^{\mathrm{va}})$, hidden widths $h = [64, 32]$, learning rate $\eta$, max epochs $E$, batch size $B$, patience $P$
\Ensure trained weights $W = \{W^{(1)}, W^{(2)}, W^{(3)}\}$, biases $b$
\State $\tilde{X} \gets \mathrm{reshape}(\mathrm{permute}(\calX^{\mathrm{tr}}, [1, 3, 2]), [\,n |G|, n_f\,])$ \Comment{stack all $|G|$ orbit copies}
\State $\tilde{y} \gets \mathrm{repmat}(y^{\mathrm{tr}}, |G|, 1)$ \Comment{labels are $G$-invariant; replicate}
\State $(\mu, s) \gets (\mathrm{mean}(\tilde{X}), \mathrm{std}(\tilde{X}) + 10^{-8})$ \Comment{$z$-norm on augmented set}
\State $\tilde{X} \gets (\tilde{X} - \mu) / s$
\State initialize $W^{(\ell)} \sim \mathcal{N}(0, 2 / \mathrm{fan\_in}_\ell)$ (He init); $b^{(\ell)} \gets 0$
\State Adam state: $m^{(\ell)}, v^{(\ell)} \gets 0$, step counter $t \gets 0$
\State best-$W \gets W$; $\mathrm{wait} \gets 0$; $\mathrm{best\_val} \gets +\infty$
\For{$\mathrm{epoch} = 1, \ldots, E$}
    \State shuffle $\tilde{X}$
    \For{each minibatch of size $B$}
        \State $t \gets t + 1$
        \State forward: $A^{(0)} \gets X_{\mathrm{batch}}$;\ $A^{(\ell)} \gets \mathrm{ReLU}(W^{(\ell)} A^{(\ell-1)} + b^{(\ell)})$ for $\ell < L$;\ $A^{(L)} \gets W^{(L)} A^{(L-1)} + b^{(L)}$
        \State $\mathcal{L} \gets \tfrac{1}{B} \|A^{(L)} - y_{\mathrm{batch}}\|^2$
        \State backprop $\nabla_W \mathcal{L}, \nabla_b \mathcal{L}$
        \State Adam update: $m, v$ exponential moving averages with $\beta_1 = 0.9, \beta_2 = 0.999, \varepsilon = 10^{-8}$
        \State $W^{(\ell)} \gets W^{(\ell)} - \eta \cdot \hat{m}^{(\ell)} / (\sqrt{\hat{v}^{(\ell)}} + \varepsilon)$
    \EndFor
    \State $\mathcal{L}_{\mathrm{val}} \gets$ MSE on $((\calX^{\mathrm{va}}(:, :, e) - \mu) / s, y^{\mathrm{va}})$
    \If{$\mathcal{L}_{\mathrm{val}} < \mathrm{best\_val}$}
        \State $\mathrm{best\_val} \gets \mathcal{L}_{\mathrm{val}}$;\ best-$W \gets W$;\ $\mathrm{wait} \gets 0$
    \Else
        \State $\mathrm{wait} \gets \mathrm{wait} + 1$;\ \textbf{if} $\mathrm{wait} \geq P$ \textbf{break}
    \EndIf
\EndFor
\State \Return best-$W$, best-$b$
\end{algorithmic}
\end{algorithm}

\begin{algorithm}[h]
\caption{Neural $\starG$ forward pass and gradient}
\label{alg:neural_starg}
\begin{algorithmic}[1]
\Require batch $\calX \in \bbR^{N \times n_f \times |G|}$, $\starG$-weights $\{W^{(\ell)}\}_{\ell=1}^{L}$ with $W^{(\ell)} \in \bbR^{n_{\ell+1} \times n_\ell \times |G|}$, biases $\{b^{(\ell)}\} \in \bbR^{n_{\ell+1}}$ (one scalar per output channel, broadcast constant along the group axis), group $G$
\Ensure scalar predictions $\hat{y} \in \bbR^N$
\State $\calA^{(0)} \gets \calX$
\For{$\ell = 1, \ldots, L$}
    \State $\calZ^{(\ell)} \gets W^{(\ell)} \starG \calA^{(\ell-1)} + b^{(\ell)}$ \Comment{Algorithm~\ref{alg:starg_product}}
    \If{$\ell < L$}
        \State $\calA^{(\ell)} \gets \mathrm{ReLU}(\calZ^{(\ell)})$
    \Else
        \State $\calA^{(\ell)} \gets \calZ^{(\ell)}$ \Comment{linear output}
    \EndIf
\EndFor
\State $\hat{y}_i \gets \tfrac{1}{n_L |G|} \sum_{j, g} \calA^{(L)}(i, j, g)$ \Comment{$G$-invariant pooling}
\State \Return $\hat{y}$
\end{algorithmic}
\end{algorithm}

The network is trained for 300 epochs (Adam, $\eta = 0.003$, batch size 32,
early-stopping patience 20), backpropagating through the $\starG$ product
(Algorithm~\ref{alg:starg_product}) layer by layer. Equivariance is preserved
exactly, to floating-point precision, for \emph{any} trained weights: the $\starG$
product factors through the per-irrep block multiplication, and the bias is
constrained to the trivial isotypic component (constant along the group axis), so
it acts only on the invariant channel and commutes with the group action. A free
per-group-element bias, by contrast, would break equivariance as soon as training
moved it off its zero initialization; the group-constant (trivial-isotypic) bias is
what keeps the trained network exactly equivariant.

\subsection{ENN baselines: SchNet, e3nn, MACE}
\label{sec:enn_baselines}

The three ENN baselines used in Section~2.9 are not reimplemented from
scratch: each is a published reference implementation, executed on the
same train/val/test splits and the same seeds as the $\starG$ and MLP
baselines. The configuration we used is recorded explicitly so that the
comparison is reproducible.

\begin{itemize}
\item \textbf{SchNet}~\cite{schutt2017schnet}.
  Reference implementation: \texttt{schnetpack} v2.0.4 (pinned in the
  repository's \texttt{requirements.txt}).
  Configuration: $128$ atom-basis features, $6$ interaction blocks,
  $20$-Gaussian radial basis, cosine cutoff at $5.0$\,\AA, MSE loss with
  L1 monitoring, Adam $\eta = 5{\times}10^{-4}$, batch $64$, max $200$
  epochs, early-stop patience $20$. We run the standard
  \texttt{spk.datasets.QM9} loader with \texttt{remove\_uncharacterized
  =True} to match PyG's $130{,}831$-molecule subset, the
  \texttt{ASENeighborList(cutoff=5.0)} transform, and a per-target
  $z$-norm via \texttt{RemoveOffsets}/\texttt{AddOffsets}.
  Scalar targets only ($\mu$, $\alpha$, gap, ZPVE).

\item \textbf{e3nn-based SE(3)-equivariant model}~\cite{thomas2018tensor,geiger2022e3nn}.
  A compact equivariant message-passing network built directly from
  \texttt{e3nn} v0.5.4 primitives. Three layers, hidden irreps
  \texttt{32x0e+16x1o+8x2e}, edge spherical harmonics
  \texttt{1x0e+1x1o+1x2e}, RBF $16$ Gaussians on the $0\!-\!5$\,\AA{}
  cutoff, gated equivariant non-linearities, sum-pool over atoms,
  $\texttt{FullyConnectedTensorProduct}$ head with output irreps matched
  to target rank (\texttt{1x0e} for scalars, \texttt{1x1o} for
  $\boldsymbol{\mu}$ vector, \texttt{1x2e+1x0e} for $\alpha$ tensor).
  Adam $\eta = 5{\times}10^{-4}$, batch $32$, $200$ epochs, patience
  $20$. Used for tensor-rank-matched comparison rather than as the SOTA
  ENN target.

\item \textbf{MACE}~\cite{batatia2022mace}.
  Reference implementation: \texttt{mace-torch} v0.3.15 (pinned in
  \texttt{requirements.txt}). Configuration:
  \texttt{ScaleShiftMACE} with $r_{\max} = 5.0$\,\AA, $8$ Bessel radial
  features, $5$-th order polynomial cutoff, $\ell_{\max}=3$,
  correlation $3$, two interaction blocks
  (\texttt{RealAgnosticInteractionBlock} first;
  \texttt{RealAgnosticResidualInteractionBlock} second), hidden irreps
  \texttt{128x0e+128x1o}, MLP irreps \texttt{16x0e}, $5$ elements
  (H/C/N/O/F), per-target shift/scale $(\bar y, \mathrm{std}\,y)$.
  Optimizer: Adam (\texttt{amsgrad}), $\eta=10^{-3}$, batch $32$,
  \texttt{ReduceLROnPlateau} (factor $0.5$, patience $15$), max $200$
  epochs, early-stop patience $25$ on validation MSE. Total parameter
  count: $945{,}168$. Scalar targets only.
\end{itemize}

\noindent
\emph{Implementation notes.} Three non-trivial integration adjustments
were required that may be useful to anyone reproducing the comparison.
(i)~In \texttt{schnetpack}'s \texttt{ModelOutput}, every metric must be
an \texttt{nn.Module} (lambda functions are silently rejected by
\texttt{nn.ModuleDict}); we use only \texttt{L1Loss} as the metric and
recompute RMSE/$R^2$ ourselves from saved test predictions.
(ii)~In \texttt{mace-torch} ${\geq}\,0.3.10$, \texttt{interaction\_cls}
and \texttt{interaction\_cls\_first} are mandatory, and
\texttt{hidden\_irreps}/\texttt{MLP\_irreps} must be \texttt{o3.Irreps}
objects, not strings; per-molecule \texttt{AtomicData} construction
requires a single shared \texttt{z\_table} for the full element set
(H/C/N/O/F) so that the per-graph \texttt{node\_attrs} have uniform
width when batched. (iii)~Loading \texttt{e3nn 0.4.4}'s pickled
\texttt{constants.pt} (transitively imported by \texttt{mace-torch})
fails under PyTorch ${\geq}\,2.6$'s default
\texttt{weights\_only=True}; we set
\texttt{TORCH\_FORCE\_NO\_WEIGHTS\_ONLY\_LOAD=1} for the comparison
runs only.

\section{Angular Resolution: Octahedral versus Icosahedral}
\label{sec:ico_resolution}

The main text notes that the quantitative accuracy ceiling of a finite point
group is set by its angular resolution, and that larger polyhedral groups lift
that ceiling.  We demonstrate this directly in a controlled setting that does
not depend on rare molecular symmetry (\texttt{icosahedral\_resolution.py}).
We draw random functions band-limited to angular degree $\ell \le \ell_{\max}$
on the rotation group, observe each function only on the finite orbit of a
generic base direction under a point group $G$ (24 directions for the chiral
octahedral group $O$, 60 for the chiral icosahedral group $I$), fit a
harmonic model on that orbit, and measure the test $R^2$ on a dense set of
$4{,}000$ held-out rotations.  A test $R^2$ below one is aliasing: the finite
orbit cannot resolve the continuous signal.

\begin{table}[h]
\centering
\caption{Angular resolution by group: test $R^2$ on $4{,}000$ held-out
rotations for random signals band-limited to degree $\ell_{\max}$ (mean over
300 signals; seed 0).  The chiral octahedral group ($24$ frames) resolves
signals through $\ell=3$ but aliases $\ell=4$; the chiral icosahedral group
($60$ frames) resolves $\ell=4$ exactly.  The $\ell=4$ gap is stable across
seeds (per-seed octahedral $R^2$ = 0.908 / 0.943 / 0.896 for seeds 0/1/2,
each a mean over 300 signals).}
\label{tab:ico_resolution}
\begin{tabular}{@{}ccc@{}}
\toprule
\textbf{Signal degree $\ell_{\max}$} & \textbf{Octahedral $O$ (24 frames)} & \textbf{Icosahedral $I$ (60 frames)} \\
\midrule
1 & $1.000$ & $1.000$ \\
2 & $1.000$ & $1.000$ \\
3 & $1.000$ & $1.000$ \\
4 & $0.908$ & $1.000$ \\
\bottomrule
\end{tabular}
\end{table}

The pattern matches representation theory.  The octahedral group's largest
irreducible representation has dimension $3$ ($T_1$, $T_2$), so a generic
$24$-point orbit cannot linearly separate the nine degree-$4$ harmonics from
lower degrees and aliases them; the icosahedral group has a dimension-$5$
irreducible representation ($H$), and its $60$-point orbit resolves degree $4$
completely.  This confirms, quantitatively, that moving from $O$ to $I$ lifts
the angular-resolution ceiling, and it exercises the machine-validated
icosahedral $\starG$ algebra (\texttt{starg\_torch/icosahedral.py}: five
irreps $A, T_1, T_2, G, H$ with $\sum_\rho d_\rho^2 = 60$, group-Fourier
matrix unitary to $2 \times 10^{-15}$, $\starG$-SVD reconstruction to
$8 \times 10^{-6}$).

\section{Error Metrics in Physical Units and Literature Anchoring}
\label{sec:mae_anchor}

The main text reports $R^2$ throughout because the contribution of the
$\starG$ rows is representational (closed form, exact equivariance, per-irrep
attribution) rather than benchmark accuracy, and $R^2$ makes the per-irrep
attribution comparable across targets with different units.  For
cross-referencing with the QM9 literature, which conventionally reports mean
absolute error (MAE) in physical units, Table~\ref{tab:mae_anchor} restates
the full-QM9 results of the main text as MAE, and places our self-trained
SchNet baseline next to the published SchNet
figures~\cite{schutt2018schnet}.  Two facts are visible.  First, our
self-trained baselines are healthy: our SchNet is comparable to the published
SchNet, improving on the reported MAE for gap and $\alpha$ and lying within a
small factor for $\mu$ and ZPVE, so the ENN baselines in this paper are
trained to community standard.  Second, the $\starG$ absolute errors sit
far above the message-passing state of the art, exactly as the pooled-$R^2$
concession in the main text states; the $\starG$ value proposition is the
closed-form equivariant representation at roughly a hundred parameters, not
benchmark accuracy.

\begin{table}[h]
\centering
\caption{Full QM9 test MAE in physical units (mean $\pm$ std; same runs as the
main-text $R^2$ table: 5 seeds for the $\starG$ and SchNet rows, 3 for MACE).
Published SchNet values from
\cite{schutt2018schnet}.}
\label{tab:mae_anchor}
\begin{tabular}{@{}lcccc@{}}
\toprule
\textbf{Method} & \textbf{gap (eV)} & \textbf{$\alpha$ ($a_0^3$)} & \textbf{$\mu$ (D)} & \textbf{ZPVE (eV)} \\
\midrule
$\starG$-SVD + Ridge & $0.746 \pm 0.001$ & $1.746 \pm 0.045$ & $0.782 \pm 0.006$ & $0.029 \pm 0.000$ \\
$\starG$ neural      & $0.671 \pm 0.004$ & $1.333 \pm 0.059$ & $0.690 \pm 0.013$ & $0.027 \pm 0.002$ \\
SchNet (this work)   & $0.055 \pm 0.002$ & $0.127 \pm 0.069$ & $0.060 \pm 0.041$ & $0.0019 \pm 0.0002$ \\
SchNet (published)~\cite{schutt2018schnet} & $0.063$ & $0.235$ & $0.033$ & $0.0017$ \\
MACE (this work)     & $0.116 \pm 0.005$ & $0.317 \pm 0.004$ & $0.060 \pm 0.003$ & $0.0097 \pm 0.0004$ \\
\bottomrule
\end{tabular}
\end{table}

\section{Molecular Accuracy Benchmarks on QM9}
\label{sec:qm9_accuracy}

The main text (Scope: pooled predictive accuracy) concedes pooled predictive
accuracy on QM9 and confines the details to this section: the full-QM9
head-to-head against graph equivariant networks and matched-input MLP
baselines, the within-isomer audit that attributes most of the graph-network
margin to an input-representation gap, and the data-scarce sample-size sweep.
None of these numbers carries a capability claim of the paper; they delimit
its scope.

\paragraph{Full-QM9 head-to-head and matched-input comparison.}
Table~\ref{tab:fullqm9} reports test $R^2$ on the four scalar QM9 targets
(130{,}831 characterized molecules; 91{,}581 / 19{,}624 / 19{,}626
train / validation / test split; 5 seeds, 3 for MACE). The five molecule-level
methods (both $\starG$ variants and the three MLP baselines) consume the
\emph{identical} $48$-row angular feature tensor, carrying angular moments,
heavy-atom rows, atom-pair Coulomb couplings, and distance-distribution rows
under cyclic Z$_{12}$, and every MLP is trained with a modern recipe (target
standardization, validation-tuned weight decay, learning-rate schedule;
Methods of the main text), so that comparison is genuinely matched-input.
On it, plain standard and invariant MLPs exceed both $\starG$ variants on
gap, $\alpha$, and $\mu$ and tie on ZPVE; on shared input a well-trained
unconstrained network is the stronger pooled predictor. SchNet and MACE
consume the full atomic graph, a different-input comparison that sets the
pooled ceiling. Parameter counts compare only the trained predictor; the
$\starG$ features are a fixed, untrained preprocessing step.

\begin{table}[h]
\centering
\caption{Full QM9, four scalar targets, mean test $R^2 \pm$ std
(130{,}831 molecules; 91{,}581 / 19{,}624 / 19{,}626 train / val
/ test split; $\starG$, MLP, and SchNet rows 5 seeds, MACE row 3 seeds).
The five molecule-level methods consume the identical $48$-row angular
feature tensor (matched input); SchNet and MACE consume the full atomic
graph and set the pooled-$R^2$ ceiling, motivating the within-isomer audit in
Table~\ref{tab:isomer_audit}. The augmented MLP collapses regardless
of capacity (Section~\ref{sec:aug_capacity}).
$^{\dagger}$SchNet uses the PyTorch Geometric reference implementation
(\texttt{torch\_geometric.nn.models.SchNet}) with $128$ hidden channels,
$6$ interactions, $50$-Gaussian RBF, cutoff $10$\,\AA, $455{,}809$
trainable parameters. Full configurations for all three ENN baselines are in
Section~\ref{sec:enn_baselines}.}
\label{tab:fullqm9}
\begin{tabular}{@{}lrcccc@{}}
\toprule
\textbf{Method} & \textbf{Params} & \textbf{gap} & \textbf{alpha} & \textbf{mu} & \textbf{ZPVE} \\
\midrule
$\starG$-SVD + Ridge & $144$    & $0.481 \!\pm\! 0.002$ & $0.907 \!\pm\! 0.003$ & $0.472 \!\pm\! 0.006$ & $0.998 \!\pm\! 0.000$ \\
$\starG$ neural       & $62{,}625$ & $0.571 \!\pm\! 0.006$ & $0.945 \!\pm\! 0.005$ & $0.563 \!\pm\! 0.016$ & $0.998 \!\pm\! 0.002$ \\
MLP standard         & 5{,}249  & $0.687 \!\pm\! 0.008$ & $0.973 \!\pm\! 0.001$ & $0.672 \!\pm\! 0.007$ & $0.999 \!\pm\! 0.000$ \\
MLP invariant        & 14{,}465  & $0.686 \!\pm\! 0.004$ & $0.972 \!\pm\! 0.001$ & $0.642 \!\pm\! 0.009$ & $0.999 \!\pm\! 0.000$ \\
MLP augmented        & 5{,}249  & $0.002 \!\pm\! 0.002$ & $0.002 \!\pm\! 0.001$ & $0.001 \!\pm\! 0.001$ & $0.003 \!\pm\! 0.003$ \\
SchNet$^{\dagger}$   & $455{,}809$  & $0.996 \!\pm\! 0.000$ & $0.999 \!\pm\! 0.001$ & $0.995 \!\pm\! 0.005$ & $1.000 \!\pm\! 0.000$ \\
MACE                 & $945{,}168$ & $0.985 \!\pm\! 0.001$ & $0.997 \!\pm\! 0.000$ & $0.995 \!\pm\! 0.001$ & $1.000 \!\pm\! 0.000$ \\
\bottomrule
\end{tabular}
\end{table}

\paragraph{Within-isomer audit.}
On QM9 the HOMO--LUMO gap, the polarizability $\alpha$, and the dipole
magnitude $\mu$ are strongly size-correlated, so a model that captures gross
size and composition already explains the bulk of the pooled variance. The
within-isomer $R^2$ (restricted to molecular formulas with $\geq 5$
constitutional isomers, sample-weighted across formulas, same test
predictions) removes that signal (Table~\ref{tab:isomer_audit}). Every method
on the molecule-level summary caps near within-isomer $R^2 \approx 0.35$
(the nonlinear MLPs), with the closed-form linear $\starG$-SVD at $0.01$,
whereas SchNet ($0.991$) and MACE ($0.963$) consume the full atomic graph
carrying the bond topology that distinguishes constitutional isomers. Most of
the pooled-$R^2$ advantage of the graph networks is therefore an
input-representation gap rather than a purely algorithmic one; the per-atom
$\starG$ descriptor of the main text (Symmetry in the representation, not the
architecture) reaches within-isomer $R^2 = 0.785$ with a symmetry-blind
learner, confirming that the plateau is a property of the molecule-level
featurization.

\begin{table}[h]
\centering
\caption{Pooled vs.\ within-isomer test $R^2$ for the HOMO--LUMO gap
target. Within-isomer $R^2$ is computed over molecular formulas with
$\geq 5$ constitutional isomers and reported as the sample-weighted
mean across formulas. The collapse from pooled to within-isomer
quantifies how much of the headline $R^2$ is size-prediction signal.}
\label{tab:isomer_audit}
\begin{tabular}{@{}lcc@{}}
\toprule
\textbf{Method} & \textbf{Pooled $R^2$} & \textbf{Within-isomer $R^2$} \\
\midrule
$\starG$-SVD + Ridge & $0.481$  & $\phantom{-}0.011$ \\
$\starG$ neural       & $0.571$  & $\phantom{-}0.101$ \\
MLP standard         & $0.687$  & $\phantom{-}0.350$ \\
MLP invariant        & $0.686$  & $\phantom{-}0.345$ \\
MLP augmented        & $0.002$  & $-2.119$ \\
SchNet               & $0.996$ & $\phantom{-}0.991$ \\
MACE                 & $0.985$ & $\phantom{-}0.963$ \\
\bottomrule
\end{tabular}
\medskip

\footnotesize
\noindent\emph{Within-isomer $R^2$ averaged over $\sim$230 molecular
formulas (per seed, $\geq 5$ constitutional isomers each); covered
test molecules range from $\sim$10{,}500 (PyG-split SchNet) to
$\sim$19{,}250 ($\starG$ / MLP / MACE). The sample-weighted mean
across formulas is robust to the per-method test-set partitioning,
so the numerical comparison is unaffected. }
\end{table}

\paragraph{Data-scarce sample-size sweep.}
With $\bbZ_{12}$ rotational featurization and the same fair recipe for every
method, a sweep from $100$ to $1{,}000$ training molecules (5 seeds) shows a
regime crossover (Table~\ref{tab:qm9}, Figure~\ref{fig:learning_curves}): at
$100$ molecules the closed-form $\starG$-SVD ridge regressor leads every
trained baseline ($R^2 = 0.415$ at $64$ parameters, versus $0.199$ for the
standard MLP), and by $1{,}000$ molecules the unconstrained MLPs overtake
($0.558$ versus $0.493$); the crossover falls between $316$ and $1{,}000$
training molecules (Figure~\ref{fig:learning_curves}). The advantage is real
but regime-specific, largest where data is scarcest. The augmented MLP never becomes competitive at any
sample size, consistent with its full-scale collapse
(Section~\ref{sec:aug_capacity}).

\begin{table}[h]
\centering
\caption{QM9 HOMO--LUMO gap ($\bbZ_{12}$), test $R^2$ (mean $\pm$ std over 5
seeds) at the smallest and largest sample sizes of the sweep, with the same
fair training recipe for all methods. $\starG$-SVD leads at $n=100$; the
unconstrained MLPs overtake by $n=1{,}000$.}
\label{tab:qm9}
\begin{tabular}{@{}lccc@{}}
\toprule
\textbf{Method} & \textbf{Params} & \textbf{$R^2$ ($n{=}100$)} & \textbf{$R^2$ ($n{=}1{,}000$)} \\
\midrule
$\starG$-SVD + Ridge & \textbf{64}    & $\mathbf{0.415 \pm 0.187}$ & $0.493 \pm 0.095$ \\
$\starG$ neural       & $36{,}513$ & $0.074 \pm 0.196$          & $0.479 \pm 0.108$ \\
MLP standard         & $5{,}249$  & $0.199 \pm 0.176$          & $\mathbf{0.558 \pm 0.110}$ \\
MLP invariant        & $5{,}761$  & $-0.041 \pm 0.367$         & $0.505 \pm 0.094$ \\
MLP augmented        & $5{,}249$  & $-0.726 \pm 1.209$         & $0.025 \pm 0.027$ \\
\bottomrule
\end{tabular}
\end{table}

\begin{figure}[!ht]
\centering
\includegraphics[width=0.72\textwidth]{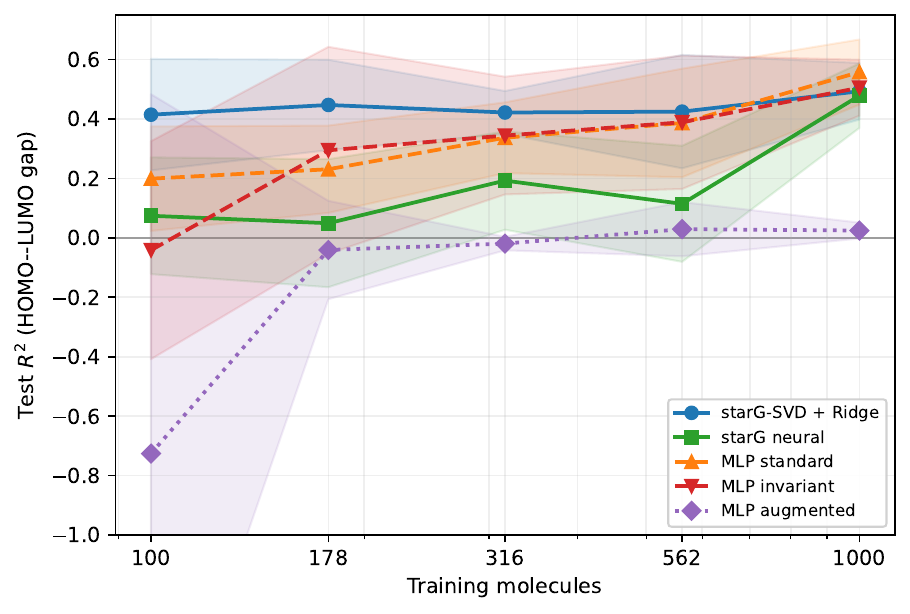}
\caption{\textbf{Data-scarce learning curves on the QM9 HOMO--LUMO gap
($\bbZ_{12}$), all methods trained with the same fair recipe (5 seeds; bands
$\pm 1$ s.d.).} The closed-form $\starG$-SVD ridge regressor ($64$ parameters)
is the most sample-efficient method in the very-low-data regime, leading at
$100$--$316$ molecules, where the unconstrained MLPs overfit their thousands of
parameters. The standard and invariant MLPs catch up and overtake $\starG$ by
$1{,}000$ molecules; the augmented MLP never becomes competitive. The
$\starG$ advantage is real but regime-specific, concentrated where data is
scarcest.}
\label{fig:learning_curves}
\end{figure}

\begin{figure}[!ht]
\centering
\includegraphics[width=\textwidth]{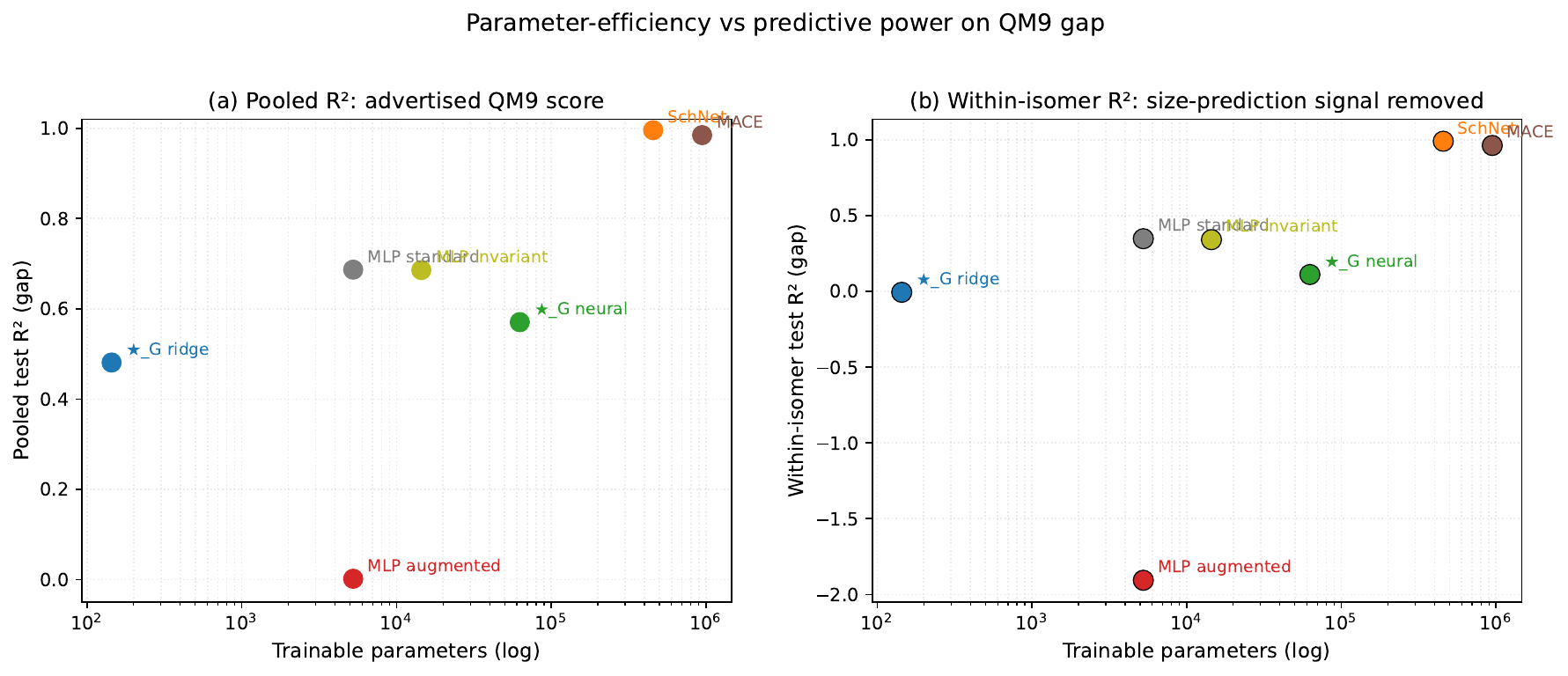}
\caption{\textbf{Parameter-efficiency vs predictive power on QM9 HOMO--LUMO
gap.}
(\textbf{a}) Pooled test $R^2$ vs trainable parameters (error bars $\pm$std).
MACE occupies the upper-right ($R^2 = 0.985$ at $945{,}168$ parameters);
$\starG$-SVD~+~Ridge occupies the far left at $144$ parameters
($R^2 = 0.481$), roughly $6{,}600\times$ fewer parameters than MACE, and the
matched-input MLPs sit above it ($R^2 \approx 0.69$ at a few thousand
parameters). The augmented MLP sits at the bottom ($R^2 \approx 0.00$),
illustrating the structural collapse of orbit-augmented learning at
full QM9 scale.
(\textbf{b}) Within-isomer test $R^2$ on the same axes (formulas with
$\geq 5$ constitutional isomers, sample-weighted mean). Every
molecule-level-summary method is capped well below the graph networks:
$\starG$-SVD at $\approx 0.01$, $\starG$ neural at $\approx 0.10$, and the
nonlinear MLPs at $\approx 0.35$, versus SchNet ($\approx 0.99$) and MACE
($\approx 0.96$). The large offset to the ENNs is the input-information gap
(bond topology in the full graph); the spread within the molecule-summary
methods reflects model nonlinearity.}
\label{fig:pareto}
\end{figure}

\section{Augmented-MLP Capacity Sweep}
\label{sec:aug_capacity}

The main text reports that the Augmented MLP collapses at full QM9 scale
($R^2 \approx 0.002$ on gap, $\alpha$, $\mu$) even on the matched $48$-row
feature tensor and with the modern training recipe.  To confirm this is not a
capacity limitation, we sweep the hidden width from the default $[64,32]$ up
to $[1024,512]$, a $110\times$ increase in parameters, holding the fair recipe
fixed (three seeds; \texttt{submit\_aug\_capacity.bsub}).

\begin{table}[h]
\centering
\caption{Augmented MLP at full QM9 on the matched $48$-row features: test
$R^2$ versus capacity (mean over 3 seeds). More capacity does not recover the
collapse; the standard MLP on the identical features reaches $0.687$ (gap) and
$0.973$ ($\alpha$).}
\label{tab:aug_capacity}
\begin{tabular}{@{}lrcc@{}}
\toprule
\textbf{Hidden widths} & \textbf{Params} & \textbf{gap $R^2$} & \textbf{$\alpha$ $R^2$} \\
\midrule
$[64, 32]$     & $5{,}249$   & $0.002$ & $0.002$ \\
$[256, 128]$   & $45{,}569$  & $0.009$ & $0.021$ \\
$[512, 256]$   & $156{,}673$ & $0.016$ & $0.020$ \\
$[1024, 512]$  & $575{,}489$ & $0.011$ & $0.031$ \\
\bottomrule
\end{tabular}
\end{table}

Across a $110\times$ parameter range the augmented model never exceeds
$R^2 = 0.031$, two orders of magnitude below the standard MLP on identical
input.  Orbit augmentation presents the network with $|G|$ distinct rotated
feature vectors mapped to one label; absent an architectural invariance, the
model regresses toward the label mean rather than learning the invariant, and
capacity does not change this.  This is the concrete failure that algebraic
equivariance avoids by construction.

\section{Probing Trained ENN Internals per Irrep}
\label{sec:enn_probe}

The main text states that standard ENN workflows do not \emph{expose} a
per-irrep predictive decomposition, not that the information is absent from
their learned representations.  To make that distinction concrete we probe
the trained e3nn baseline directly.  The e3nn hidden representation is
\texttt{32x0e + 16x1o + 8x2e}; for each molecule we form a rotation- and
permutation-invariant per-multiplet power (squared multiplet norm summed over
atoms), mirroring the per-irrep Fourier power that $\starG$ uses, then fit
RidgeCV per $\ell$-block and report test $R^2$
(\texttt{enn\_irrep\_probe.py}).

\begin{table}[h]
\centering
\caption{Per-$\ell$ probe of the trained e3nn baseline (seed 0): test $R^2$
of RidgeCV on invariant block powers of the hidden representation. The pattern is
reproduced across three seeds (gap and ZPVE): the scalar block stays dominant, with
$\ell=1,2$ recoverable but weaker.}
\label{tab:enn_probe}
\begin{tabular}{@{}lcccc@{}}
\toprule
\textbf{Target} & \textbf{$\ell=0$ (scalar)} & \textbf{$\ell=1$ (vector)} & \textbf{$\ell=2$ (rank-2)} & \textbf{all $\ell$} \\
\midrule
gap   & 0.944 & 0.625 & 0.413 & 0.948 \\
alpha & 0.974 & 0.772 & 0.751 & 0.983 \\
ZPVE  & 0.990 & 0.904 & 0.758 & 0.996 \\
\bottomrule
\end{tabular}
\end{table}

The probe shows that a trained ENN's internals do carry $\ell$-resolved
structure (the scalar block dominates for scalar targets, as expected), and
that it can be recovered by a bespoke post-hoc analysis of a trained model.
The contrast with $\starG$ is therefore precise, and it is threefold. The
probe presupposes the trained network and inherits its contingencies: the
recovered per-$\ell$ attribution is a property of one converged model (it
varies with seed and training budget) and carries no optimality or exactness
certificate. It is read-only: a diagnostic of what the model happened to
learn, with no mechanism for restricting or enforcing a channel. And it is
bespoke to one architecture's internal irrep typing. The $\starG$
decomposition is canonical for the group (the canonicity theorem), costs one
closed-form transform with no training, certifies its selection rules
exactly, and doubles as an input-side control: channels can be restricted,
enforced, or ablated \emph{before} any learner is trained.

\section{End-to-End Workflow}
\label{sec:workflow}

The complete pipeline used to produce every numerical result in this paper
is summarized in Algorithm~\ref{alg:pipeline}. The pipeline is identical
across the synthetic, QM9, product-group, symmetry-discovery, and
Wigner--Eckart experiments; only the group $G$, the featurization
$\phi: \mathrm{molecule} \mapsto \calX$, and the regression head differ.

\begin{algorithm}[h]
\caption{End-to-end $\starG$-SVD~+~ridge pipeline}
\label{alg:pipeline}
\begin{algorithmic}[1]
\Require dataset $\{(\mathrm{mol}_i, y_i)\}_{i=1}^{N}$, group $G$, featurizer $\phi$, ridge grid $\Lambda$
\Ensure trained predictor $\hat{f}$, test scores
\State precompute $\calT_G$, $F_G$, irrep dimensions $\{d_\rho\}$
\For{$i = 1, \ldots, N$}
    \State $\calX_i \gets \phi(\mathrm{mol}_i; G)$ \Comment{tensorial featurization}
\EndFor
\State assemble $\calX \in \bbR^{N \times n_f \times |G|}$
\State split $\calX, y$ into train/val/test (70/15/15)
\State $(\Phi_{\mathrm{tr}}, \Theta) \gets$ Algorithm~\ref{alg:features}$(\calX_{\mathrm{tr}}, G)$
\State $\Phi_{\mathrm{va}}, \Phi_{\mathrm{te}} \gets$ Algorithm~\ref{alg:features}$(\cdot, G; \Theta)$
\State $\lambda^\star \gets \arg\min_{\lambda \in \Lambda} \mathrm{MSE}_{\mathrm{val}}(\Phi_{\mathrm{va}}, y_{\mathrm{va}}; \lambda)$
\State $w \gets (\Phi_{\mathrm{tr}}^\top \Phi_{\mathrm{tr}} + \lambda^\star I)^{-1} \Phi_{\mathrm{tr}}^\top y_{\mathrm{tr}}$
\State $R^2_{\mathrm{te}} \gets 1 - \mathrm{SS_{res}}(\Phi_{\mathrm{te}} w, y_{\mathrm{te}}) / \mathrm{SS_{tot}}(y_{\mathrm{te}})$
\State $\nu \gets \mathrm{Var}_{g \in G} \hat{y}(g \cdot \calX_{\mathrm{te}})$ \Comment{rotation-variance audit}
\State \Return $\hat{f}: \calX \mapsto \mathrm{Algorithm~\ref{alg:features}}(\calX; \Theta) \cdot w$, $R^2_{\mathrm{te}}$, $\nu$
\end{algorithmic}
\end{algorithm}

\section{Formal Verification in Lean~4}
\label{sec:lean}

The core algebraic results in this paper have been machine-verified in the
Lean~4 proof assistant~\cite{demoura2021lean4} using the Mathlib
library~\cite{mathlib2020}.  The formalization comprises roughly $2{,}200$
lines of Lean~4 across twelve modules, with zero unresolved proof obligations
(\texttt{sorry}).  It certifies the $\starG$ algebra laws, equivariance and
the invariance identities, the $\starG$-SVD and its Eckart--Young optimality,
the further matrix-mimetic operations ($\starG$-QR, symmetric
eigendecomposition, and least-squares), the product-group factorization, the
octahedral selection rules, and the concrete witness systems; the axiom
dependencies of every theorem are
exhibited explicitly, so the certificate is exactly as strong as the declared
axiom budget it rests on: six axioms in the real development, three
complex-local analogues confined to the complex module's cyclic witnesses,
and, for the \texttt{native\_decide} selection-rule computations only, Lean's
compiled-evaluation axiom \texttt{Lean.ofReduceBool}.

\subsection{Architecture}

The formalization is organized into twelve modules mirroring the paper's
logical structure (line counts from the repository at submission):

\begin{center}
\resizebox{\textwidth}{!}{
\begin{tabular}{@{}llrl@{}}
\toprule
\textbf{Module} & \textbf{Paper section} & \textbf{Lines} & \textbf{Content} \\
\midrule
\texttt{Basic.lean}         & SI~\S\S1--4   & 58  & Convolution tensor, $\starG$ product, transpose \\
\texttt{Algebra.lean}       & SI~\S4        & 92 & Associativity, distributivity, identity laws \\
\texttt{Equivariance.lean}  & SI~\S8        & 139 & Equivariance, Frobenius/Fourier-power invariance \\
\texttt{ProductGroup.lean}  & Theorem~2     & 146 & Product-group factorization, Kronecker irreps \\
\texttt{EckartYoung.lean}   & Theorem~1     & 183 & Matrix Eckart--Young, proved from two bridge lemmas \\
\texttt{SVD.lean}           & Theorem~1     & 228 & Irrep systems, $\starG$-SVD, Eckart--Young optimality \\
\texttt{MatrixMimetic.lean} & Methods       & 260 & $\starG$-QR, spectral, least-squares; transpose bridge lemma \\
\texttt{WignerEckart.lean}  & \S2.2         & 66  & Octahedral characters, derived selection rules \\
\texttt{ConcreteWitness.lean} & SI~\S\ref{sec:lean} & 163 & Axiom-free $C_2$ irrep system; optimality suite instantiated \\
\texttt{OctahedralWitness.lean} & \S2.3     & 396 & Four of five octahedral irreps, axiom-free, with irreducibility \\
\texttt{ComplexCore.lean}   & SI~\S\ref{sec:lean} & 390 & Complex-typed layer; cyclic (DFT) witnesses; convolution theorem \\
\texttt{Audit.lean}         & (all)         & 72  & \texttt{\#print axioms} certificate for every theorem \\
\bottomrule
\end{tabular}
}
\end{center}

\subsection{Axiom Budget}

Six standard results from finite-group harmonic analysis and matrix
linear algebra are axiomatized rather than derived here; most have no Mathlib
counterpart in the form we use. Axiomatizing them is a deliberate trust
boundary rather than a gap: each is a named, citable classical fact, stated in
a faithful form that cannot be satisfied trivially (\texttt{matrix\_qr}, for
instance, requires a genuinely upper-triangular factor, so the degenerate
$Q = I$, $R = M$ reading is excluded), and confining the trusted statements to
these classical facts keeps every $\starG$-specific reduction inside the
machine-checked kernel; discharging the Ky~Fan lemma and QR from Mathlib
primitives is mechanical but substantial formalization work, orthogonal to the
claims this paper certifies. Three former axioms are now proved outright: the
group-Fourier convolution theorem (\texttt{fourier\_multiplicative}), directly from
the homomorphism property by reindexing the group sum; the real symmetric spectral
theorem, from Mathlib's \texttt{Matrix.IsHermitian.spectral\_theorem} (giving the
$\starG$ symmetric eigendecomposition); and the least-squares minimizer, from the
normal equations over $\bbR$ (giving $\starG$ least-squares). Discharging these
dropped the custom-axiom count from nine to six. Every other
statement is derived from first principles.  The classical matrix
Eckart--Young theorem itself is \emph{not} among the axioms: it is proved in
\texttt{EckartYoung.lean} (\texttt{matrix\_eckart\_young}) from the two bridge
lemmas listed. The matrix-mimetic bridge lemma
(\texttt{fourierBlock\_starTranspose}: the Fourier transform sends the $\starG$
transpose to the transpose of each irrep block), on which the QR, spectral, and
least-squares $\starG$-orthogonality reductions rest, is likewise proved, not
assumed, and depends only on Lean's core axioms.

\begin{center}
\resizebox{\textwidth}{!}{
\begin{tabular}{@{}lll@{}}
\toprule
\textbf{Axiom} & \textbf{Content} & \textbf{Reference} \\
\midrule
\texttt{parseval\_group} & Plancherel identity, finite groups & \cite{serre1977linear}, \S2.4 \\
\texttt{fourier\_surjective} & Fourier inversion: every block field is attained & \cite{peterweil1927} \\
\texttt{fourier\_injective} & Fourier injectivity (used by $\starG$-QR/spectral) & \cite{peterweil1927} \\
\texttt{rank\_le\_proj\_capture} & rank-$\le k$ matrix is captured by a rank-$\le k$ projection & standard \\
\texttt{ky\_fan\_with\_witness} & Ky Fan (1949): optimal rank-$k$ projection for symmetric $M$ & standard \\
\texttt{matrix\_qr} & Matrix QR factorization ($Q$ orthonormal, $R$ triangular) & standard \\
\bottomrule
\end{tabular}
}
\end{center}

The complex-typed module \texttt{ComplexCore.lean}, which exists to build
concrete cyclic-group (DFT) witness systems over $\bbC$ that the real
development cannot express, declares three \emph{complex-local} analogues of
the Fourier axioms (\texttt{parseval\_groupC},
\texttt{matrix\_best\_rank\_k\_approxC}, \texttt{fourier\_surjectiveC}). They
are used only by that module's complex Eckart--Young corollaries; none of the
real-development theorems above depends on them, as the audit certificate
shows. Counting them, the repository contains nine \texttt{axiom}
declarations in total: six real, three complex-local.

In addition, the octahedral selection-rule theorems are decided by exhaustive
native computation (\texttt{native\_decide}) and therefore depend on Lean's
compiled-evaluation axiom \texttt{Lean.ofReduceBool}: they trust Lean's
compiler as an oracle for finite rational arithmetic over the $24$ group
elements.  The hypotheses of the optimality theorem are carried by an
\texttt{IrrepSystem} structure requiring completeness
($\sum_\rho d_\rho^2 = |G|$), irreducibility of each member (no proper
invariant subspace), \emph{absolute} irreducibility of each member (Schur
condition: every matrix commuting with the representation is scalar), and
pairwise inequivalence (no invertible intertwiner).  The Schur field is what
makes the Fourier axioms sound over $\mathbb{R}$: without it, a single
real-irreducible representation of complex type can satisfy
$\sum_\rho d_\rho^2 = |G|$ (for example, the two-dimensional rotation
representation of $C_4$, with $d^2 = 4 = |C_4|$) while spanning only part of
the group algebra, which would falsify Parseval and Fourier injectivity.
With absolute irreducibility, completeness, and inequivalence, the
Artin--Wedderburn decomposition forces the system to exhaust
$\mathbb{R}[G]$, so the axioms hold for every admissible instantiation.

\subsection{Key Proof Techniques}

\paragraph{Associativity of $\starG$ (Proposition~4.2(i)).}
Rather than fragile nested sum-exchange calls (\texttt{Finset.sum\_comm}),
we define an explicit \texttt{Equiv} on the 4-tuple product type
$\mathrm{Fin}\,p \times (G \times (\mathrm{Fin}\,m \times G))$ that
simultaneously permutes components and applies the bijection
$b \mapsto a^{-1}b$.  A single call to \texttt{Fintype.sum\_equiv} then
completes the proof, with the group-element arithmetic handled by the
\texttt{group} tactic.

\paragraph{Kronecker product of irreps (Theorem~2(iii)).}
The tensor-product representation $\rho_1 \otimes \rho_2$ is defined
entry-wise via \texttt{finProdFinEquiv.symm}, mapping
$\mathrm{Fin}(d_1 d_2)$ indices to pairs $\mathrm{Fin}\,d_1 \times
\mathrm{Fin}\,d_2$.  A \texttt{sum\_split} helper converts sums over
$\mathrm{Fin}(d_1 d_2)$ into double sums, after which the
\texttt{is\_hom} and \texttt{unitary} proofs factor naturally into
products of single sums via $\rho_i.\text{is\_hom}$ and
$\rho_i.\text{unitary}$.

\paragraph{Fourier power invariance (Corollary~7.2(ii)).}
The \texttt{fourierBlock\_leftAction} lemma shows that the group action
multiplies each Fourier block by $(I_\ell \otimes \rho(g))$.  An
\texttt{orthogonal\_preserves\_sum\_sq} lemma proves
$\sum_s (\sum_{s'} R_{s,s'} v_{s'})^2 = \sum_s v_s^2$ when
$R^\top R = I$, by expanding squares, exchanging sums, and applying
orthogonality.

\paragraph{Eckart--Young for $\starG$ (Theorem~1).}
The optimal rank-$k$ approximation is defined in the Fourier domain via
\texttt{fourier\_surjective}: its Fourier block at each irrep $\rho$ is, by
construction, the best rank-$k$ matrix approximation of
$\hat{\calA}(:,:,\rho)$, supplied by \texttt{matrix\_eckart\_young}.  That
matrix-level theorem is \emph{proved} in \texttt{EckartYoung.lean}: a
Pythagorean split over orthogonal projections reduces best rank-$\le k$
approximation to maximizing the captured trace over rank-$\le k$ projections
(\texttt{rank\_le\_proj\_capture}), and the Ky Fan lemma
(\texttt{ky\_fan\_with\_witness}) identifies the top-$k$ eigenprojection as
the maximizer.  The global bound follows by applying \texttt{parseval\_group}
to decompose the Frobenius error into per-irrep terms, multiplying each
per-irrep inequality by the positive Plancherel weight $d_\rho / |G|$, and
summing via \texttt{Finset.sum\_le\_sum}.

\paragraph{Wigner--Eckart selection rules (\S2.2).}
The octahedral rotation group is realised concretely as $O \cong S_4$
(\texttt{Equiv.Perm (Fin 4)}) through its action on the four body diagonals
of the cube.  The five irreducible characters are constructed rather than
hardcoded: $A_1$ trivial, $A_2$ the sign character, $T_2$ the standard
representation ($\chi(\sigma) = \mathrm{fix}(\sigma) - 1$), and $T_1 = T_2
\otimes \mathrm{sign}$ come from explicit representations, while $E$ is
recovered as a class function from the regular character and certified
irreducible by $\langle \chi_E, \chi_E \rangle = 1$, its dimension
$\chi_E(1) = 2$ entering the verified count \texttt{dim\_sum} below.
\texttt{char\_orthonormal} proves
$\langle \chi_\rho, \chi_\sigma \rangle = \delta_{\rho\sigma}$, certifying
that the five are pairwise-inequivalent irreducibles, and \texttt{dim\_sum}
checks $\sum_\rho d_\rho^2 = 24$.  Every selection rule is then
\emph{derived} as a character inner product, never asserted by definition:
a scalar ($A_1$) operator connects an irrep only to itself
(\texttt{scalar\_selection}); $T_1 \otimes T_1$ contains $A_1$ exactly once
(\texttt{vector\_couples\_scalar}); and the symmetric square decomposes as
$\mathrm{Sym}^2(T_1) = A_1 \oplus E \oplus T_2$ with zero $T_1$ multiplicity
(\texttt{polarizability\_no\_T1}, \texttt{polarizability\_decomposition}).
The finite rational arithmetic over the $24$ group elements is discharged by
\texttt{native\_decide}; the build fails if any multiplicity is wrong (for
example, if $T_1$ and $T_2$ are interchanged).

\subsection{Verification Status}

The formalization achieves:
\begin{itemize}
\item Zero \texttt{sorry} (unresolved proof obligations) across all nine
  modules.
\item Six declared axioms in the real development, each a standard textbook
  result of finite-group
  harmonic analysis or matrix linear algebra with no Mathlib counterpart in the
  form we use (three former axioms are now proved outright: the group-Fourier
  convolution theorem, the real symmetric spectral theorem from Mathlib's
  \texttt{Matrix.IsHermitian.spectral\_theorem}, and the least-squares minimizer
  from the normal equations); the complex witness module adds the three
  complex-local analogues tabulated above, nine \texttt{axiom} declarations in
  the repository in total.  Their
  exact usage, as certified by
  \texttt{\#print axioms} in \texttt{StarG/Audit.lean}: the Eckart--Young
  chain (\texttt{eckart\_young\_starG}, \texttt{eckart\_young\_error};
  Theorem~\ref{thm:ey_full}) uses \texttt{parseval\_group},
  \texttt{fourier\_surjective}, \texttt{rank\_le\_proj\_capture}, and
  \texttt{ky\_fan\_with\_witness}; the product-group Fourier factorization
  (Theorem~\ref{thm:prod_full}) now rests on the core axioms, its
  transform-multiplicativity being a theorem; the
  $\starG$-QR theorem uses \texttt{matrix\_qr}, while the $\starG$ symmetric
  eigendecomposition and $\starG$ least-squares minimizer carry no matrix axiom,
  being proved from Mathlib's \texttt{Matrix.IsHermitian.spectral\_theorem} and
  from the normal equations over $\bbR$ respectively; their
  $\starG$-orthogonality reductions rest on \texttt{fourier\_injective}
  together with the axiom-free transpose bridge lemma
  (\texttt{fourierBlock\_starTranspose}).  The octahedral selection-rule theorems
  additionally depend on \texttt{Lean.ofReduceBool} through
  \texttt{native\_decide}, as disclosed above.
\item Coverage of Theorems~\ref{thm:ey_full} and~\ref{thm:prod_full}, the
  algebraic identities of Proposition~\ref{prop:T_props} and the
  $\starG$-product properties, equivariance and Frobenius/Fourier-power
  invariance, the Kronecker product construction for product-group irreps,
  and the Wigner--Eckart selection rules from \S2.2.  Not formalized: the
  band-limited compact-group reduction (Theorem~4 of the main text) and its
  ENN corollary, whose mathematical content is the classical Peter--Weyl
  theorem.
\item All algebraic theorems (associativity, identity, distributivity,
  transpose, equivariance, Frobenius and per-irrep Fourier-power invariance)
  depend solely on Lean's three core axioms
  (\texttt{propext}, \texttt{Classical.choice}, \texttt{Quot.sound}); they
  introduce no project-level axioms.  (The formal \texttt{StarGSVD} structure
  records the unitarity of the factors but does not yet impose
  f-diagonality on $\calS$, so the accompanying existence lemma is a
  packaging statement rather than a verified factorization theorem; the
  optimality theorem above does not rely on it.)
\item A concrete, axiom-free \texttt{IrrepSystem} for the cyclic group $C_2$ (its
  trivial and sign representations) is constructed, and the full optimality suite
  is instantiated at it (\texttt{eckart\_young\_c2}, \texttt{starG\_qr\_c2},
  \texttt{starG\_spectral\_c2}, \texttt{starG\_lstsq\_c2}), each adding no custom
  axiom beyond its parent theorem. The theorems are therefore demonstrably
  non-vacuous rather than conditional over a never-inhabited hypothesis. The
  cyclic groups $\bbZ_n$ with $n\ge3$, whose complex irreducibles the real
  structure cannot express, are witnessed separately by the additive complex
  port described below; a concrete octahedral witness is the one remaining step.
\end{itemize}

\paragraph{Scope of the formalization: real versus complex.}
The formalization is stated over $\mathbb{R}$.  This faithfully covers groups
all of whose irreducible representations are of real type, such as the
octahedral group used in our Wigner--Eckart analysis (where
$\sum_\rho d_\rho^2 = 24$ holds over $\mathbb{R}$).  For cyclic groups the
complex DFT used in our experiments pairs conjugate one-dimensional complex
irreps into two-dimensional real blocks, so $\sum_\rho d_\rho^2 = |G|$ fails
over $\mathbb{R}$ and the cyclic groups $\bbZ_n$ with $n\ge3$ are not witnessed by
a real \texttt{IrrepSystem}.  The one cyclic group whose irreducible
representations are real, $C_2$, is witnessed concretely (above); for the octahedral
group $O\cong S_4$, four of its five irreducibles are formalized as genuine
irreducibles (the two one-dimensional representations and the three-dimensional
standard representation and its sign twist, all realized with rational matrix
entries and proven absolutely irreducible), so only the two-dimensional
representation, which requires irrational basis entries, and the assembled system
remain before the optimality suite instantiates at the flagship group.  To cover
the cyclic instantiations we carried out an additive complex port (module
\texttt{ComplexCore}): a complex-typed $\star_G$ layer whose unitarity is stated
with the conjugate transpose and whose Schur field is ordinary (over $\mathbb{C}$,
absolute irreducibility coincides with irreducibility).  In it we construct a
concrete complex \texttt{IrrepSystem} for every cyclic group directly from the DFT
characters $\rho_k(j)=\omega^{jk}$, instantiated at $\bbZ_3$, and prove the
group-Fourier convolution theorem and the resulting per-character
block-diagonalization; these results are axiom-free beyond Lean's core.  A complex
per-block Eckart--Young corollary is also instantiated, resting on three
complex-typed analogues of the classical Plancherel, matrix Eckart--Young, and
Fourier-surjectivity facts (declared as complex-local axioms, disjoint from and
leaving unchanged the six real axioms above).  This inhabits the cyclic
$\bbZ_n$ hypotheses that $\mathbb{R}$ cannot; the mathematics of the paper is
unaffected, and the remaining caveat concerns only the octahedral instantiation of
the real suite.

\noindent
To our knowledge, this is the first machine-checked derivation of an
Eckart--Young-type optimality theorem for symmetry-preserving tensor
approximation, conditional on the classical linear-algebra facts declared as
axioms above.

\end{document}